\crefname{section}{Sec.}{Secs.}
\Crefname{section}{Section}{Sections}
\Crefname{table}{Table}{Tables}
\crefname{table}{Tab.}{Tabs.}
\def\bfz{\boldsymbol{z}}
\def\bfR{\boldsymbol{R}}
\def\bfP{\boldsymbol{P}}
\def\bfI{\boldsymbol{I}}
\def\bfx{\boldsymbol{x}}
\def\bfe{\boldsymbol{e}}
\def\bfu{\boldsymbol{u}}
\def\bfy{\boldsymbol{y}}
\def\bfp{\boldsymbol{p}}
\def\bfq{\boldsymbol{q}}
\def\bfeta{\boldsymbol{\eta}}
\def\bftheta{\boldsymbol{\theta}}
\def\calO{\mathcal{O}}
\def\calF{\mathcal{F}}
\def\calI{\mathcal{I}}
\DeclareMathOperator{\erf}{erf}
\newcommand{\scaleMathLine}[2][1]{\resizebox{#1\linewidth}{!}{$\displaystyle{#2}$}}
\newcommand{\prl}[1]{\left(#1\right)}
\newcommand{\crl}[1]{\left\{#1\right\}}
\newtheorem{proposition}{Proposition}
\newtheorem{lemma}{Lemma}
\theoremstyle{definition}
\newtheorem{definition}{Definition}
\newtheorem*{assumption*}{Assumption}
\newtheorem*{problem*}{Problem}
\newtheorem{problem}{Problem}
\theoremstyle{remark}
\newtheorem*{solution*}{Solution}
\begin{document}

\title{A Deep Signed Directional Distance Function for Object Shape Representation}

\author{Ehsan Zobeidi\quad\quad Nikolay Atanasov\\
University of California, San Diego\\
{\tt\small \{ezobeidi,natanasov\}@ucsd.edu}
}
\maketitle


\begin{abstract}
Neural networks that map 3D coordinates to signed distance function (SDF) or occupancy values have enabled high-fidelity implicit representations of object shape. This paper develops a new shape model that allows synthesizing novel distance views by optimizing a continuous signed directional distance function (SDDF). Similar to deep SDF models, our SDDF formulation can represent whole categories of shapes and complete or interpolate across shapes from partial input data. Unlike an SDF, which measures distance to the nearest surface in any direction, an SDDF measures distance in a given direction. This allows training an SDDF model without 3D shape supervision, using only distance measurements, readily available from depth camera or Lidar sensors. Our model also removes post-processing steps like surface extraction or rendering by directly predicting distance at arbitrary locations and viewing directions. Unlike deep view-synthesis techniques, such as Neural Radiance Fields, which train high-capacity black-box models, our model encodes by construction the property that SDDF values decrease linearly along the viewing direction. This structure constraint not only results in dimensionality reduction but also provides analytical confidence about the accuracy of SDDF predictions, regardless of the distance to the object surface.
\end{abstract}


\section{Introduction}

Geometric understanding of object shape is a central problem for enabling task specification, environment interaction, and safe navigation for autonomous systems. Various models of object shape have been proposed to facilitate recognition, classification, rendering, reconstruction, etc. There is no universal 3D shape representation because different models offer different advantages. For example, explicit shape models based on polygonal meshes allow accurate representation of surfaces and texture and lighting properties. Generative mesh modeling \cite{tulsiani2017learning,cmrKanazawa18}, however, is very challenging as it requires predicting the mesh topology and the number of vertices. Impressive results have been achieved recently with implicit shape models, representing surfaces as the zero level set of a deep neural network approximation of signed distance function (SDF) \cite{park2019deepsdf} or occupancy field \cite{mescheder2019occupancy}. Many implicit surface techniques, however, require 3D shape supervision and post-processing in the form of surface extraction and distance computation \cite{han2019fiesta}. Deep view synthesis models \cite{mildenhall2020nerf} offer an alternative to directly synthesize texture, lighting, and distance, avoiding surface extraction or differentiable rendering.

This paper enables implicit shape description by learning a model capable of novel distance view synthesis. We represent an object shape as a continuous function $h(\bfp,\bfeta)$ which measures the (signed) distance to the object surface from a given 3D position $\bfp$ and unit-norm viewing direction $\bfeta$. We refer to such a function as a signed directional distance function (SDDF). Compared to an SDF, approximating an SDDF with a neural network appears more challenging due to the additional two-degree-of-freedom input $\bfeta$. With a naive model, there is no guarantee that the parameter optimization converges to the correct object geometry as the input degrees of freedom increase. This challenge is evident even in SDF models, where learned distances are only accurate close to the surface and surface extraction is necessary to predict distances far away from the object boundary. Inspired by Gropp et al.~\cite{gropp2020implicit} who observe that a valid SDF must satisfy an Eikonal differential equation, we obtain a differential equation capturing the fact that SDDF values decrease linearly along the viewing direction. We design a neural network architecture for learning SDDFs that ensures \emph{by construction} that the SDDF gradient property is satisfied. This not only results in dimensionality reduction but also provides analytical confidence that the accuracy of an SDDF model is independent of the distance of the training or testing points to the object surface. More precisely, training the model to accurately predict SDDF values everywhere does not need dense sampling of the domain. The particular distance of the training samples to the object surface is not important, leading to a reduction in the number of necessary training samples. Because an SDDF outputs distance to the object surfaces directly, our model can be trained without 3D supervision, using distance measurements from depth camera or Lidar sensors.

Nonetheless, training an SDDF model requires distance data from different orientations $\bfeta$ (unlike an SDF model). To avoid the need for a large training set with distance data from many views, 
we develop a data augmentation technique for distance data synthesis from new positions and orientations.
Given a point cloud observation of the object surface, we decide which points would be visible from a desired view using spherical projection and convex hull approximation. Our data augmentation technique ensures that we can train a multi-view consistency SDDF model even from a small training set with a few distance views.
%
%
%
%
%
%
%


Inspired by DeepSDF \cite{park2019deepsdf}, we extend the SDDF model to enable category-level shape description. Instances from the same object category posses similar geometric structure. Training a different shape model for each instance is inefficient and impractical. We introduce a category-level SDDF auto-decoder and an instance-level latent shape code to represent the geometry of a class of shapes. We show that a trained SDDF model is capable of interpolating between the latent shape codes of different instances, while generating valid shapes and distance views at intermediate points. Optimizing the latent code at test time also allows shape completion of previously unseen instances from a small set of distance samples.


In summary, we make the following contributions.
\begin{itemize}[nosep]
  \item We propose a new signed directional distance function to model continuous distance view synthesis.
  \item We derive structural properties satisfied by SDDFs and encode them in the design of a neural network architecture for SDDF learning.
  \item We propose a data augmentation technique to ensure that a multi-view consistent SDDF model can be trained from a small dataset.
  \item We demonstrate that an SDDF model, augmented with a latent shape code, is capable of representing a category of shapes, enabling shape completion and shape interpolation without 3D supervision.
\end{itemize}
Our model is demonstrated in qualitative and quantitative experiments using the ShapeNet dataset~\cite{chang2015shapenet}.

\section{Related Works}

This section reviews 3D shape modeling techniques.

%
 
{\bf \noindent Mesh models: } Several memory efficient explicit mesh representations of shape have been proposed \cite{gao2019sparse, kobbelt1998interactive, sorkine2004laplacian, yu2004mesh, au2006dual, zhou2007large, tan2018variational}. Object surfaces may be viewed as a collection of connected charts, parameterized by a neural network \cite{sinha2017surfnet, maron2017convolutional, williams2019deep}. AtlasNet \cite{groueix2018papier} parametrizes each chart with a multi-layer perceptron that maps a flat square to the real chart.
Deep geometric prior (DGP)~\cite{williams2019deep} improves the results using Wasserstein distance and enforcing a consistency condition to fit the charts.

{\bf \noindent Geometric primitive models: } PointNet \cite{qi2017pointnet} proposes a new architecture for point cloud feature extraction that respects the permutation invariance of point clouds.
Shape completion from partial point cloud data is investigated by \cite{xie2020grnet,groueix2018papier,yuan2018pcn,yang2018foldingnet,tchapmi2019topnet, liu2020morphing} using an encoder-decoder structure to estimate the point cloud of unseen shapes. Generative adversarial networks and adversarial auto-encoders have been employed recently for point cloud shape synthesis \cite{wu2016learning, achlioptas2018learning, shu20193d, li2019pu, yang2019pointflow, yu2020point}. Point cloud models, however, do not provide continuous shape representations. In cases where a coarse model is sufficient, 3D volumetric primitives can be used \cite{tulsiani2017learning}, including cuboids \cite{yang2019cubeslam} or quadrics \cite{nicholson2018quadricslam, paschalidou2019superquadrics}.


%

{\bf \noindent Grid-based models: } Discretizing 3D space into a regular or adaptive grid to store occupancy \cite{choy20163d,tatarchenko2017octree} is another popular representation. OctNet \cite{riegler2017octnet,riegler2017octnetfusion} defines convolution directly over octrees, exploiting the sparsity and hierarchical partitioning of 3D space. Octrees may be used to store a truncated signed distance function whose zero level set corresponds with the object surface \cite{curless1996volumetric, zeng20173dmatch, chen2019learning, mescheder2019occupancy}. Choosing voxels close to the surface and using a kernel to predict continuous truncated SDF values improves the accuracy \cite{Zobeidi_GPMapping_IROS20}. 
{\bf \noindent Deep signed distance models: } DeepSDF \cite{park2019deepsdf} develops an auto-decoder model for approximating continuous SDF values and enables learning category-level shape through a latent shape code. This work demonstrated that various object topologies can be captured as differentiable implicit functions, inspiring interest in learned SDF representations \cite{Genova_2020_CVPR,Jiang_2020_CVPR,lin2020sdfsrn,periodic,sitzmann2019srns,disn}. IGR \cite{gropp2020implicit} improves the method by incorporating a unit-norm gradient constraint on the SDF values in the training loss function. IDR \cite{idr} extends the SDF model to simultaneously learn geometry, camera parameters, and a neural renderer that approximates the light reflected towards the camera. MVSDF \cite{Zhang_2021_ICCV} optimizes an SDF and a light field appearance model jointly, supervised by image features and depth from a multi-view stereo network. A-SDF \cite{mu2021sdf} represents articulated shapes with a disentangled latent space, including separate codes for encoding shape and articulation.

{\bf \noindent View synthesis models: }
Niemeyer et al. \cite{niemeyer2020differentiable} enable differentiable rendering of implicit shape and texture representations by deriving the gradients of the predicted depth map with respect to the network parameters. NeRF \cite{mildenhall2020nerf,martin2020nerf, graf} learns to predict the volume density and radiance of a scene at arbitrary positions and viewing directions using RGB images as input. 
NeRF is trained as a high-capactiy black-box model and does not capture the property that distances decrease linearly along the viewing direction. IBRNet \cite{wang2021ibrnet} uses a multilayer perceptron and a ray transformer to estimate the radiance and volume density at continuous position and view locations from a sparse set of nearby views. GRF~\cite{grf2020} is a neural network model for implicit radiance field representation and rendering of 3D objects, trained by aggregating pixel features from multiple 2D views. MVSNeRF \cite{chen2021mvsnerf} extends deep multi-view stereo methods to reason about both scene geometry and appearance and output a neural radiance field. This radiance field model can be fine-tuned on novel test scenes significantly faster than a NeRF model. DietNeRF \cite{Jain_2021_ICCV} introduces an auxiliary semantic consistency loss that encourages realistic renderings at novel poses. This allows supervising DietNeRF from arbitrary poses leading to high-quality scene reconstruction with as few as 8 training views.

\section{Problem Statement}
\label{sec:problem}

We focus on learning shape representations for object instances from a known category, e.g., car, airplane, chair, etc. In contrast with most existing work for shape modeling which relies on 3D CAD models for training, we only consider distance measurement data, e.g., obtained from a depth camera or a Lidar scanner. We model a distance sensor measurement as a collection of rays (e.g, corresponding to depth camera pixels or Lidar beams) along which the distance from the sensor position (e.g., depth camera optical center or Lidar sensor frame origin) to the nearest surface is measured. Let $\bfeta_i \in S^{n-1} := \crl{\bfeta \in \mathbb{R}^n \mid \|\bfeta\|_2=1}$ denote a unit-vector in the direction of ray $i$ with associated distance measurement $d_i \in (\underline{d},\overline{d}) \cup \{\infty\}$ obtained from sensor position $\bfp_i \in \mathbb{R}^n$. In practice, the dimension $n$ is $2$ or $3$ and the measurements are limited by a minimum distance $\underline{d} > 0$ and a maximum distance $\overline{d} < \infty$. The measurements of rays that do not hit a surface are set to $\infty$. We consider the following shape representation problem.

\begin{problem} \label{problem}
Let $\mathcal{D}_l := \crl{(\bfp_{i,l}, \bfeta_{i,l}, d_{i,l})}_i$ be sets of distance measurements obtained from different instances $l$ from the same object category. Learn a latent shape encoding $\bfz_l \in \mathbb{R}^m$ for each instance $l$ and a function $h(\bfp,\bfeta,\bfz)$ that can predict the distance from any point $\bfp$ along any direction $\bfeta$ to the surface of any instance with shape $\bfz$.
\end{problem}



\section{Method}
\label{sec:method}

This section proposes a new signed directional distance representation of object shape (Sec.~\ref{sec:SDDF}), studies its properties (Sec.~\ref{sec:sddf_structure}, Sec.~\ref{sec:infinite_sddf}), and proposes a neural network architecture, cost function, and data augmentation technique for learning such shape representations (Sec.~\ref{sec:training_and_inference}, Sec.~\ref{sec:multiview}). 


\subsection{Signed Directional Distance Function}
\label{sec:SDDF}

We propose a signed directional distance function to model the data generated by distance sensors. 

\begin{definition}\label{SDDF}
The \emph{signed directional distance function} (SDDF) $h: \mathbb{R}^n \times S^{n-1} \mapsto \mathbb{R}$ of a set $\calO \subset \mathbb{R}^n$ measures the signed distance from a point $\bfp \in \mathbf{R}^n$ to the set boundary $\partial\calO$ in direction $\bfeta\in S^{n-1}$:
	\begin{equation}
		\begin{aligned} \label{eq:H_signed_distance}
			h(\bfp,\bfeta) &:= d_{\bfeta}(\bfp,\partial\calO),\\
			d_{\bfeta}(\bfp,\partial\calO) &:= \min \left\{ d \in \mathbb{R} \;\big\vert\; \bfp + d \bfeta \in  \partial\calO \right\}. 
		\end{aligned}
	\end{equation}
\end{definition}

Unlike an SDF, which measures the distance to the nearest surface in \emph{any} direction, an SDDF measures the distance to the nearest surface in a \emph{specific} direction. Also, unlike an SDF, which is negative inside the surface that it models, and SDDF is negative behind the observer's point of view. A key property is that, if the SDDF of a set is known, we can generate arbitrary distance views to the set boundary. In other words, we can image what a distance sensor would see from any point $\bfp$ in any viewing direction $\bfeta$.

We focus on learning SDDF representaions using distance measurements as in Problem~\ref{problem}. We propose a neural network architecture that, by design, captures the structure of an SDDF. Note that for a fixed viewing direction $\bfeta$, an SDDF satisfies $h(\bfp_1,\bfeta) - h(\bfp_2,\bfeta) = (\bfp_2 - \bfp_1)^\top \bfeta$ for points $\bfp_1$, $\bfp_2$ along the ray $\bfeta$ that are close to each other, in the sense that they see the same nearest point on the set surface. This property is formalized below.

\begin{lemma}\label{cond}
The gradient of an SDDF $h(\bfp,\bfeta)$ with respect to $\bfp$ projected to the viewing direction $\bfeta$ satisfies:
\begin{equation}\label{eq:cond}
\nabla_{\bfp} h(\bfp,\bfeta)^\top \bfeta = -1.
\end{equation}
\end{lemma}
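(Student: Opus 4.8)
The plan is to exploit the defining minimization in \eqref{eq:H_signed_distance} directly: first establish an exact translation identity for $h$ along the ray $\bfeta$, and then differentiate it. This turns the geometric observation preceding the lemma (moving forward along $\bfeta$ decreases the directional distance one-for-one) into an algebraic identity, after which \eqref{eq:cond} follows by the chain rule.

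First I would fix $\bfp$ and $\bfeta$ and, for a scalar $t$, substitute $\bfq = \bfp + t\bfeta$ into the definition of $h$. Then $h(\bfq,\bfeta) = \min\{d \in \mathbb{R} \mid \bfp + (t+d)\bfeta \in \partial\calO\}$, and reindexing the minimization with $d' = d + t$ gives exactly the set defining $h(\bfp,\bfeta)$, shifted by $-t$. Hence $h(\bfp + t\bfeta, \bfeta) = h(\bfp,\bfeta) - t$, valid for all $t$ such that the ray from $\bfp$ along $\bfeta$ still meets $\partial\calO$ so that the minimum exists. This step is purely set-theoretic and requires no smoothness.

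Second I would differentiate both sides of this identity in $t$ and evaluate at $t = 0$. By the chain rule the left-hand side becomes $\nabla_{\bfp} h(\bfp + t\bfeta, \bfeta)^\top \bfeta$, which at $t = 0$ equals $\nabla_{\bfp} h(\bfp,\bfeta)^\top \bfeta$; the right-hand side differentiates to $-1$. Equating the two yields \eqref{eq:cond}.

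The main obstacle — essentially the only nontrivial point — is the differentiability of $h$ in $\bfp$ needed to justify the chain rule in the second step. In general $h$ inherits non-smoothness from the $\min$ (for instance where the ray grazes $\partial\calO$, or where the nearest intersection jumps), so the statement should be understood at points $\bfp$ where $h(\cdot,\bfeta)$ is differentiable, i.e., where the nearest hit point varies smoothly with $\bfp$; this is precisely the regime the neural SDDF model targets. Alternatively, one can state the conclusion for the directional derivative of $h$ along $\bfeta$, which exists and equals $-1$ under the weaker assumption that the one-sided limits of the translation identity agree, sidestepping full differentiability.
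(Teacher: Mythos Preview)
Your proposal is correct and matches the paper's approach: the paper does not give a formal proof of this lemma, only the preceding sentence noting that $h(\bfp_1,\bfeta) - h(\bfp_2,\bfeta) = (\bfp_2 - \bfp_1)^\top \bfeta$ for nearby points along the ray, which is exactly your translation identity $h(\bfp + t\bfeta,\bfeta) = h(\bfp,\bfeta) - t$ written in finite-difference form. Your reindexing argument and subsequent differentiation in $t$ are a clean formalization of this, and your caveat about differentiability (the nearest hit point varying smoothly) is precisely the informal qualification the paper makes when it says the points must ``see the same nearest point on the set surface.''
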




\subsection{SDDF Structure}
\label{sec:sddf_structure}

In this section, we propose a neural network parameterization of a function $h(\bfp,\bfeta)$ that satisfies the condition in \eqref{eq:cond} by construction. First, we simplify the requirement that the gradient in \eqref{eq:cond} is non-zero by defining a function $g(\bfp, \bfeta) := h(\bfp, \bfeta) + \bfp^\top \bfeta$. Note that \eqref{eq:cond} is equivalent to:
\begin{equation}\label{eq:g-cond}
\nabla_{\bfp} g(\bfp,\bfeta)^\top \bfeta = 0.
\end{equation}
Next, we show that \eqref{eq:g-cond} implies that one degree of freedom should be removed from the domain of $g(\bfp, \bfeta)$. Our idea is to rotate $\bfp$ and $\bfeta$ so that viewing direction $\bfeta$ becomes the unit vector $\bfe_n = [0, \ldots, 0, 1]^\top$ along the last coordinate axis in the sensor frame. This rotation will show that the third element of the gradient of $g(\bfp, \bfeta)$ should be zero, implying that $g(\bfp, \bfeta)$ is constant along the third dimension in the rotated reference frame. The rotation matrix $\bfR \in SO(n)$ that maps a unit vector $\bfx \in S^{n-1}$ to another unit vector $\bfy \in S^{n-1}$ with $\bfy \neq - \bfx$ along the sphere geodesic (shortest path) is \cite{CodesidoRotation}:
\begin{equation}\label{eq:codesido-rotation}
	\bfR = \bfI + \bfy\bfx^\top - \bfx\bfy^\top + \frac{1}{1+\bfx^\top \bfy}(\bfy\bfx^\top - \bfx\bfy^\top)^2.
\end{equation}
Using~\eqref{eq:codesido-rotation}, we can obtain an explicit expression for the rotation matrix $\bfR_{\bfeta}$ that maps $\bfeta$ to $\bfe_n$.

\begin{lemma}
A vector $\bfeta = [a, b]^\top \in S^1$ can be mapped to $\bfe_2 \in S^1$ via the rotation matrix $\bfR_{\bfeta} := \begin{bmatrix} b & -a\\ a & b \end{bmatrix} \in SO(2)$.
\end{lemma}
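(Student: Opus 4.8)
The plan is to specialize the general geodesic rotation formula \eqref{eq:codesido-rotation} to the case $\bfx = \bfeta = [a,b]^\top$ and $\bfy = \bfe_2 = [0,1]^\top$, and then simplify everything using the unit-norm identity $a^2 + b^2 = 1$. First I would form the two rank-one outer products, $\bfe_2\bfeta^\top = \begin{bmatrix} 0 & 0 \\ a & b \end{bmatrix}$ and $\bfeta\bfe_2^\top = \begin{bmatrix} 0 & a \\ 0 & b \end{bmatrix}$, so that their difference is the skew-symmetric matrix $\bfe_2\bfeta^\top - \bfeta\bfe_2^\top = \begin{bmatrix} 0 & -a \\ a & 0 \end{bmatrix}$, whose square is $-a^2\bfI$. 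Substituting these into \eqref{eq:codesido-rotation} together with $\bfx^\top\bfy = b$ gives a matrix with off-diagonal entries $\mp a$ and equal diagonal entries $1 - \tfrac{a^2}{1+b}$.

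The one algebraic step that does any work is rewriting this diagonal entry: using $a^2 = 1 - b^2 = (1-b)(1+b)$, we get $1 - \tfrac{a^2}{1+b} = \tfrac{(1+b) - (1-b)(1+b)}{1+b} = b$, which yields exactly $\bfR_{\bfeta} = \begin{bmatrix} b & -a \\ a & b \end{bmatrix}$. Since \eqref{eq:codesido-rotation} already delivers an element of $SO(n)$, the membership $\bfR_\bfeta \in SO(2)$ is inherited for free; if one prefers a self-contained argument, it also follows from $\bfR_\bfeta^\top\bfR_\bfeta = (a^2+b^2)\bfI = \bfI$ and $\det\bfR_\bfeta = a^2 + b^2 = 1$.

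There is no genuine obstacle here — the statement is a routine specialization — but the one point to flag is the division by $1+b$, which is why the derivation via \eqref{eq:codesido-rotation} implicitly assumes $\bfeta \neq -\bfe_2$. To cover all of $S^1$, I would alternatively just verify the claim by direct multiplication: $\bfR_\bfeta\bfeta = [\,ab - ab,\ a^2 + b^2\,]^\top = \bfe_2$, together with the orthogonality and determinant computations above, all of which hold unconditionally on $S^1$ (at $\bfeta = -\bfe_2$ the formula gives $\bfR_\bfeta = -\bfI$, the rotation by $\pi$, which indeed maps $-\bfe_2$ to $\bfe_2$).
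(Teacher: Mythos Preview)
Your proposal is correct and matches the paper's intent: the paper states the lemma immediately after introducing the general geodesic rotation formula~\eqref{eq:codesido-rotation} and remarks that one obtains $\bfR_{\bfeta}$ by specializing it, but gives no explicit proof. Your computation carries out precisely that specialization, and your added direct verification (covering the $\bfeta=-\bfe_2$ case excluded by the $1/(1+b)$ factor) is a sensible completeness check that the paper omits.
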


\begin{lemma}
\label{lemma:R3}
A vector $\bfeta = [a, b, c]^\top \in S^2$ can be mapped to $\bfe_3 \in S^2$ via the rotation matrix $\bfR_{\bfeta} \in SO(3)$ below:
\begin{equation}
	\bfR_\eta := 
	\begin{cases}
		\begin{bmatrix}
			1 & 0 & 0\\
			0 & 1 & 0\\
			0 & 0 & -1
		\end{bmatrix}& \text{if } \bfeta = -\bfe_3,\\
		\begin{bmatrix}
			1-\frac{a^2}{1+c} & -\frac{ab}{1+c} & -a\\
			-\frac{ab}{1+c} & 1-\frac{b^2}{1+c} & -b\\
			a & b & c
		\end{bmatrix}& \text{otherwise}.
	\end{cases}
\end{equation}
\end{lemma}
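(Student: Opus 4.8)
The plan is to treat the two cases of the statement separately and, in each, check directly that the proposed matrix sends $\bfeta$ to $\bfe_3$. For the generic case $\bfeta \neq -\bfe_3$ I would instantiate the general geodesic rotation formula \eqref{eq:codesido-rotation} with $\bfx = \bfeta = [a,b,c]^\top$ and $\bfy = \bfe_3 = [0,0,1]^\top$; this is legitimate precisely because $\bfy \neq -\bfx$ in this case. The two rank-one terms combine into the skew-symmetric matrix $\bfy\bfx^\top - \bfx\bfy^\top$, whose only nonzero entries are $-a,-b$ down the last column and $a,b$ across the last row. Squaring it gives a matrix with the block $\bigl[\begin{smallmatrix} -a^2 & -ab\\ -ab & -b^2\end{smallmatrix}\bigr]$ in the top-left $2\times 2$ corner and $-(a^2+b^2)$ in the $(3,3)$ slot, everything else zero. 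Since $\bfx^\top\bfy = c$, the scalar coefficient in \eqref{eq:codesido-rotation} is $\tfrac{1}{1+c}$.

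The one simplification that is not purely mechanical is to use the unit-norm constraint $a^2+b^2+c^2 = 1$, i.e.\ $a^2+b^2 = (1-c)(1+c)$, so that $\tfrac{a^2+b^2}{1+c} = 1-c$. Substituting everything into $\bfR = \bfI + (\bfy\bfx^\top - \bfx\bfy^\top) + \tfrac{1}{1+c}(\bfy\bfx^\top - \bfx\bfy^\top)^2$ and reading off the nine entries reproduces exactly the matrix claimed in the statement; in particular the $(3,3)$ entry becomes $1 - \tfrac{a^2+b^2}{1+c} = c$. That this matrix lies in $SO(3)$ is inherited from \eqref{eq:codesido-rotation}, but for self-containment I would also verify the mapping property by computing $\bfR_{\bfeta}\bfeta$ row by row: the first row gives $a\bigl(1-\tfrac{a^2}{1+c}\bigr) - \tfrac{ab^2}{1+c} - ac = a - \tfrac{a(a^2+b^2)}{1+c} - ac = a - a(1-c) - ac = 0$, the second row vanishes by the $a\leftrightarrow b$ symmetry, and the third row gives $a^2+b^2+c^2 = 1$, so $\bfR_{\bfeta}\bfeta = \bfe_3$.

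For the degenerate case $\bfeta = -\bfe_3$ (that is, $a=b=0$, $c=-1$) the geodesic between $-\bfe_3$ and $\bfe_3$ is not unique and the denominator $1+\bfx^\top\bfy$ in \eqref{eq:codesido-rotation} vanishes, so a rotation must be picked by hand; the stated matrix $\mathrm{diag}(1,1,-1)$ visibly maps $[0,0,-1]^\top$ to $[0,0,1]^\top$, which is all that is needed for the later use of $\bfR_{\bfeta}$ in rotating the SDDF domain.

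I do not expect a real obstacle here: the argument is a finite computation whose only ``trick'' is the substitution $a^2+b^2 = (1-c)(1+c)$ coming from $\bfeta \in S^2$. The care required is bookkeeping — getting the signs of the skew term and of its square right, and recalling that \eqref{eq:codesido-rotation} is valid only away from the antipode, which is exactly the reason the case split appears in the statement.
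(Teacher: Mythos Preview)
Your proposal is correct and matches the paper's approach: the paper does not give a separate proof of this lemma, it simply states that the explicit form follows by instantiating the geodesic rotation formula~\eqref{eq:codesido-rotation}, which is exactly what you do. Your added verification that $\bfR_{\bfeta}\bfeta = \bfe_3$ and your observation that the degenerate case must be handled separately because $1+\bfx^\top\bfy$ vanishes are both appropriate; note, as you implicitly do, that $\mathrm{diag}(1,1,-1)$ has determinant $-1$ and so lies in $O(3)\setminus SO(3)$, but this does not affect the downstream use in~\eqref{eq:f-cond}.
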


Using $\bfR_{\bfeta}$, we can express the condition in \eqref{eq:g-cond} in a rotated coordinate frame where $\bfq = \bfR_{\bfeta}\bfp$. By the chain rule:
\begin{equation}\label{eq:f-cond}
0 = \frac{dg}{d\bfp} \bfeta = \frac{dg}{d \bfq} \frac{d\bfq}{d\bfp} \bfeta = \frac{dg}{d \bfq} \bfR_{\bfeta} \bfeta = \frac{dg}{d \bfq} \bfe_n = \frac{d g}{d q_n}.
\end{equation}
The set of functions that satisfy \eqref{eq:f-cond} do not depend on the last element of $\bfq$ or, in other words, can be expressed as $g(\bfp,\bfeta) = f(\bfP \bfR_{\bfeta} \bfp, \bfeta)$ for a projection matrix $\bfP := [\bfI\; \mathbf{0}] \in \mathbb{R}^{(n-1) \times n}$ and some function $f$. This elucidates the structure of signed directional distance functions.

\begin{proposition} \label{prop:SDDF-structure}
Learning a function $f: \mathbb{R}^{n-1} \times S^{n-1} \mapsto \mathbb{R}$ guarantees that $h(\bfp,\bfeta) := f(\bfP \bfR_{\bfeta} \bfp, \bfeta) - \bfp^\top\bfeta$ is an SDDF (Def.~\ref{SDDF}) and satisfies \eqref{eq:cond} in Lemma~\ref{cond}. 
\end{proposition}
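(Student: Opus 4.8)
The plan is to separate the claim into two parts: (i) that $h(\bfp,\bfeta) := f(\bfP\bfR_{\bfeta}\bfp,\bfeta) - \bfp^\top\bfeta$ satisfies \eqref{eq:cond} for \emph{every} choice of $f$, and (ii) that functions of this form are exactly the ones exhibiting the directional-distance structure of Definition~\ref{SDDF}. The substance is part (i), which is essentially the derivation leading to \eqref{eq:f-cond} read in reverse; part (ii) is then a matter of combining that derivation with Lemma~\ref{cond}.

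For part (i), I would set $g(\bfp,\bfeta) := h(\bfp,\bfeta) + \bfp^\top\bfeta = f(\bfP\bfR_{\bfeta}\bfp,\bfeta)$, so that proving \eqref{eq:cond} reduces to proving \eqref{eq:g-cond}, using $\nabla_{\bfp}(\bfp^\top\bfeta) = \bfeta$ and $\|\bfeta\|_2^2 = 1$. Since the gradient in \eqref{eq:cond} is with respect to $\bfp$ alone, the direction $\bfeta$ — and therefore the matrix $\bfR_{\bfeta}$ and the second argument of $f$ — is held fixed, so $\bfR_{\bfeta}$ acts as a constant linear map. Writing $\partial_1 f$ for the Jacobian of $f$ in its first ($\mathbb{R}^{n-1}$-valued) argument, the chain rule gives $\nabla_{\bfp} g(\bfp,\bfeta)^\top = \partial_1 f(\bfP\bfR_{\bfeta}\bfp,\bfeta)\,\bfP\bfR_{\bfeta}$, hence $\nabla_{\bfp} g(\bfp,\bfeta)^\top\bfeta = \partial_1 f(\bfP\bfR_{\bfeta}\bfp,\bfeta)\,\bfP\bfR_{\bfeta}\bfeta$. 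By the rotation lemmas of this section (in particular Lemma~\ref{lemma:R3}, together with the trivially checked $\bfeta=-\bfe_n$ branch) we have $\bfR_{\bfeta}\bfeta = \bfe_n$, and since $\bfP = [\bfI\;\mathbf{0}]$ we have $\bfP\bfe_n = \mathbf{0}$. Therefore $\nabla_{\bfp} g(\bfp,\bfeta)^\top\bfeta = 0$, i.e. \eqref{eq:g-cond}, and consequently $\nabla_{\bfp} h(\bfp,\bfeta)^\top\bfeta = 0 - \bfeta^\top\bfeta = -1$, which is \eqref{eq:cond}.

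For part (ii), I would note that the displayed family coincides with the set of all functions satisfying \eqref{eq:cond}: the derivation in \eqref{eq:f-cond} shows any $g$ obeying \eqref{eq:g-cond} is constant in the rotated coordinate $q_n$ and hence factors as $f(\bfP\bfR_{\bfeta}\bfp,\bfeta)$, while part (i) above is the converse; combined with Lemma~\ref{cond} this means the ansatz captures every SDDF. Restricting to a fixed $\bfeta$ and a line of sight $\bfp + t\bfeta$, integrating \eqref{eq:cond} yields $h(\bfp+t\bfeta,\bfeta) = h(\bfp,\bfeta) - t$, so $h$ decreases at unit rate along the ray and vanishes exactly where the ray crosses the modelled boundary $\partial\calO := \{\bfp : h(\bfp,\bfeta) = 0\}$ — precisely the behaviour in Definition~\ref{SDDF}.

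I do not expect an analytic obstacle; the computation in (i) is a one-line chain-rule argument. The bookkeeping points to be careful about are that $\bfR_{\bfeta}$ depends on $\bfeta$ (and would contribute extra terms under $\nabla_{\bfeta}$, but is inert under $\nabla_{\bfp}$), and that the $\bfeta = -\bfe_n$ case of $\bfR_{\bfeta}$ must be handled separately — where $\bfR_{\bfeta}\bfeta = \bfe_n$ still holds, so the conclusion is unaffected. The only genuinely delicate issue is expository: pinning down the precise sense in which $h$ ``is an SDDF,'' which I would resolve via the ray-restriction argument above (so that the claim is about the structural property \eqref{eq:cond}), noting that whether a particular $f$ additionally yields globally consistent first-hit distances is governed by the further structure discussed in Sec.~\ref{sec:infinite_sddf}, not by \eqref{eq:cond} alone.
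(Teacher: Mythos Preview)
Your proposal is correct and takes essentially the same approach as the paper: the paper does not give a separate proof environment but states the proposition as the conclusion of the chain-rule derivation culminating in \eqref{eq:f-cond}, and your part (i) is exactly that computation run in the direction the proposition requires (from the ansatz to \eqref{eq:cond}), via $\bfP\bfR_{\bfeta}\bfeta = \bfP\bfe_n = \mathbf{0}$. Your part (ii), the handling of the $\bfeta=-\bfe_n$ branch, and the care about what ``is an SDDF'' means go slightly beyond what the paper spells out, but are fully consistent with it.
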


\subsection{Infinite SDDF Values}
\label{sec:infinite_sddf}

Proposition~\ref{prop:SDDF-structure} allows learning SDDF representations of object shape from distance measurements without the need to enforce structure constrains explicitly. An additional challenge, however, is that real distance sensors have a limited field of view and, hence, the SDDF values $h(\bfp,\bfeta)$ at some sensor positions $\bfp$ and viewing directions $\bfeta$ (e.g., not directly looking toward the object) will be infinite. We cannot expect a regression model to predict infinite values directly. We introduce an invertible function $\phi$ to condition the distance data by squashing the values to a finite range.

\begin{lemma}\label{lem:infSol}
Let $\phi:\mathbb{R}\mapsto \mathbb{R}$ be a function with non-zero derivative, $\phi'(x) \neq 0$, for all $x \in \mathbb{R}$. Then, for any function $g:\mathbb{R}^n \times S^{n-1}\mapsto \mathbb{R}$ and vector $\bfeta \in S^{n-1}$, we have:
\begin{equation}
\nabla_{\bfp} g(\bfp,\bfeta)^\top \bfeta = 0 \quad\text{iff}\quad  \nabla_{\bfp} \phi(g(\bfp,\bfeta))^\top \bfeta = 0.
\end{equation}
\end{lemma}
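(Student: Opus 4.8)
The plan is to reduce the claim to a single application of the chain rule. Fix $\bfeta \in S^{n-1}$ and a point $\bfp \in \mathbb{R}^n$. Treating $\bfp \mapsto \phi(g(\bfp,\bfeta))$ as a composition of the scalar map $\phi$ with the scalar field $\bfp \mapsto g(\bfp,\bfeta)$, the chain rule gives
\begin{equation}\label{eq:infSol-chain}
\nabla_{\bfp} \phi(g(\bfp,\bfeta)) = \phi'(g(\bfp,\bfeta))\,\nabla_{\bfp} g(\bfp,\bfeta).
\end{equation}
Taking the inner product of both sides of \eqref{eq:infSol-chain} with $\bfeta$ and using that $\phi'(g(\bfp,\bfeta))$ is a scalar, we obtain
\begin{equation}\label{eq:infSol-dot}
\nabla_{\bfp} \phi(g(\bfp,\bfeta))^\top \bfeta = \phi'(g(\bfp,\bfeta)) \left( \nabla_{\bfp} g(\bfp,\bfeta)^\top \bfeta \right).
\end{equation}

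With \eqref{eq:infSol-dot} in hand, the equivalence follows immediately from the hypothesis $\phi'(x) \neq 0$ for all $x \in \mathbb{R}$. In particular $\phi'(g(\bfp,\bfeta)) \neq 0$, so the right-hand side of \eqref{eq:infSol-dot} vanishes if and only if the factor $\nabla_{\bfp} g(\bfp,\bfeta)^\top \bfeta$ vanishes. Since $\bfp$ was arbitrary, this proves both implications of the stated \emph{iff}: if $\nabla_{\bfp} g(\bfp,\bfeta)^\top \bfeta = 0$ everywhere then the product in \eqref{eq:infSol-dot} is $0$ everywhere, and conversely if $\nabla_{\bfp} \phi(g(\bfp,\bfeta))^\top \bfeta = 0$ everywhere then dividing by the nonzero scalar $\phi'(g(\bfp,\bfeta))$ forces $\nabla_{\bfp} g(\bfp,\bfeta)^\top \bfeta = 0$ everywhere.

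There is essentially no substantive obstacle here; the only point requiring a word of care is a mild regularity remark. The chain-rule step \eqref{eq:infSol-chain} presupposes that $g(\cdot,\bfeta)$ is differentiable at the points of interest and that $\phi$ is differentiable, which is exactly what the statement ``function with non-zero derivative $\phi'(x) \neq 0$'' grants for $\phi$; for $g$ we invoke the same smoothness already implicitly assumed wherever $\nabla_{\bfp} g$ appears. I would state this assumption explicitly at the start of the proof so that \eqref{eq:infSol-chain} is justified, and otherwise the argument is the two lines above. (As a side benefit, the computation also shows the two directional-derivative expressions differ only by the everywhere-nonzero factor $\phi'$, which is the precise sense in which applying $\phi$ ``preserves'' the structural constraint used later in Section~\ref{sec:training_and_inference}.)
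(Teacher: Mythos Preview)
Your argument is correct and matches the paper's own proof essentially line for line: both apply the chain rule to obtain $\nabla_{\bfp}\phi(g(\bfp,\bfeta))^\top\bfeta = \phi'(g(\bfp,\bfeta))\,\nabla_{\bfp} g(\bfp,\bfeta)^\top\bfeta$ and then use the nonvanishing of $\phi'$ to conclude the equivalence. Your version is slightly more explicit about the regularity needed for the chain rule, which is a reasonable addition.
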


\begin{proof}
The claim is concluded by the chain rule, $0 = \nabla_{\bfp} \phi(g(\bfp,\bfeta))^\top \bfeta = \phi'(g(\bfp,\bfeta)) \nabla_{\bfp} g(\bfp,\bfeta)^\top \bfeta$ and since $\phi'(g(\bfp,\bfeta))$ is never zero.
\end{proof}

Since $\phi'$ is never zero, $\phi$ is either strictly increasing or strictly decreasing by the mean value theorem. In both cases, it has an inverse $\phi^{-1}$. Useful examples of such functions, which can be used to squash the distance values to a finite range, include logistic sigmoid $\sigma(x) := (1+\exp(-x))^{-1}$, hyperbolic tangent $\tanh(x)$, and the Gaussian error function $\erf(x)$. Hereafter we assume $\phi$ is strictly increasing and define $q(\bfp,\bfeta) := \phi(f(\bfP\bfR_{\bfeta}\bfp, \bfeta))$ such that as in Proposition~\ref{prop:SDDF-structure}:
\begin{equation}\label{eq:main}
h(\bfp, \bfeta) = \phi^{-1}(q(\bfp, \bfeta)) - \bfp^\top \bfeta.
\end{equation}
This formulation allows training of and inference with a neural network parameterization of $q(\bfp,\bfeta)$ with possibly infinite distance values. Due to Lemma~\ref{lem:infSol}, \eqref{eq:main} is still guaranteed to satisfy the SDDF property $\nabla_{\bfp} h(\bfp,\bfeta)^\top \bfeta = -1$.

\subsection{SDDF Learning}
\label{sec:training_and_inference}

Fig.~\ref{fig:method} shows a neural network model for learning an SDDF representation.
%
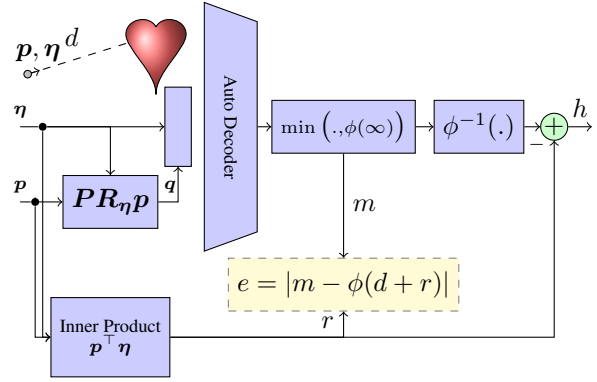
\begin{figure}[t]
\tikzstyle{innerproduct} = [draw, fill=blue!20, rectangle, 
    minimum height=3em, minimum width=3em]
\tikzstyle{dimreduction} = [draw, fill=blue!20, rectangle, 
    minimum height=2em, minimum width=2em]
\tikzstyle{decoderinput} = [draw, fill=blue!20, rectangle, 
    minimum height=3em, minimum width=1em]
\tikzstyle{decoder} = [draw, fill=blue!20, trapezium, trapezium angle=70, rotate =-90, 
    minimum height=2em, minimum width=4em]
\tikzstyle{phiInv} = [draw, fill=blue!20, rectangle, 
    minimum height=2em, minimum width=2em]
\tikzstyle{min} = [draw, fill=blue!20, rectangle, 
    minimum height=2em, minimum width=2em]
\tikzstyle{loss} = [draw=gray,dashed, fill=yellow!20, rectangle, 
    minimum height=2em, minimum width=3em]
\tikzstyle{input} = [coordinate]
\tikzstyle{output} = [coordinate]
\tikzstyle{branch}=[fill,shape=circle,minimum size=3pt,inner sep=0pt]
\tikzstyle{block} = [draw, fill=blue!20, rectangle, 
    minimum height=3em, minimum width=6em]
\tikzstyle{sum} = [draw, fill=green!20, circle, node distance=1cm, minimum size=3pt, inner sep=0pt]
\tikzstyle{input} = [coordinate]
\tikzstyle{output} = [coordinate]
\tikzstyle{pinstyle} = [pin edge={to-,thin,black}]

\begin{tikzpicture}[auto, node distance=1.8cm]
    \node [input, name=input_eta_pre] {};
    \node [input, below of=input_eta_pre, name=input_p_pre, node distance=10mm] {};
    \node [branch, right of=input_eta_pre,name=input_eta, node distance=3mm] {};
    \node [branch, right of=input_p_pre, node distance=2mm, name=input_p] {};
    \node [dimreduction, right of=input_p, node distance=10mm] (dimreduction) {$\bfP \bfR_{\bfeta} \bfp$};
    \node [innerproduct, below of=dimreduction] (innerproduct) {$\substack{\text{Inner Product}\\ \text{$\bfp^\top\bfeta$}}$};
    
    \node [decoderinput, right of=input_eta, node distance=18mm] (decoderinput) {};
     \node [decoder, above of=decoderinput, node distance=7mm] (decoder) {$\substack{\text{Auto Decoder}}$};
     \node [min, right of=decoder, node distance=15mm] (min) {$\scriptstyle\min\big(.,\scriptstyle{\phi(\infty)}\big)$};
     \node [phiInv, right of=min] (phiInv) {$\phi^{-1}(.)$};
     \node [sum, right of=phiInv] (sum) {$+$};
     \node [loss, below of=min, node distance=21mm] (loss) {$e = |m-\phi(d+r)|$};
     \node [output, right of=phiInv, name=out, node distance=15mm] {};
     \draw [draw,-, ] (input_eta_pre) |- node {$\scriptstyle\bfeta$} (input_eta);
     \draw [draw,-, ] (input_p_pre) |- node {$\scriptstyle\bfp$} (input_p);
     \draw [draw,->, ] (input_eta) |- node {} (innerproduct);
     \draw [draw,->] (input_p) |- node {} (innerproduct);
     \draw [draw,->, ] (input_eta) -| node {} (dimreduction);
     \draw [draw,->] (input_p) |- node {} (dimreduction);
     \draw [draw,->, ] (input_eta) -- node {} (decoderinput);
     \draw [draw,->, ] (dimreduction) -| node [pos=0.3] {$\scriptstyle \bfq$} (decoderinput);
     \draw [draw,->] (innerproduct) -| node[left, pos=0.99] {$\scriptstyle-$} (sum);
     \draw [draw,->] (decoder) -- node {} (min);
     \draw [draw,->] (min) -- node {} (phiInv);
     \draw [draw,->] (phiInv) -- node {} (sum);
     \draw [draw,->] (sum) -- node[above] {$h$} (out);
     \draw [draw,->] (min) -- node {$m$} (loss);
     \draw [draw,->] (innerproduct) -| node[above left] {$r$} (loss);
     \def\xpoi{0.1}
     \def\ypoi{0.7}
     \def\xobj{1.7}
     \def\yobj{1.2}
     \def\r{1}
     \def\xraydif{\xobj-\xpoi}
     \def\yraydif{\yobj-\ypoi}
     \def\xraydifs{\xraydif*\r}
     \def\yraydifs{\yraydif*\r}
     \def\xray{\xraydifs+\xpoi}
     \def\yray{\yraydifs+\ypoi}
     \draw [dashed,-] (\xpoi,\ypoi) -- node {$d$} (\xobj,\yobj);
     \draw [draw, ->] (\xpoi,\ypoi) -- node[above, pos=0.90] {$\bfp,\bfeta$} (0.26,0.75);
     \draw [draw, fill=gray!50, ellipse, minimum height=1em, minimum width=3em] (\xpoi,\ypoi) circle (0.5mm);
     \draw[ball color=red!60 , shading=ball, scale=0.1, yshift=35 0pt, xshift=40 0pt] (4,1) ..controls +(120:2cm)
        and +(90:2cm) .. (0,0) .. controls  +(-90:2cm) and +(90:3cm) ..
        (4,-8) .. controls +(90:3cm) and +(-90:2cm) ..(8,0)  .. controls
        +(90:2cm) and  +(60:2cm) .. (4,1);
     %
\end{tikzpicture}
\vspace{-0.7em}
\caption{Neural network model for SDDF approximation. Given a position $\bfp$, viewing direction $\bfeta$ and measured distance $d$, the model rotates $\bfp$ to new coordinates $\bfR_{\bfeta}\bfp$, whose last component does not effect the SDDF value. The projected input $\bfq$ is processed by an autodecoder to predict a squashed distance value $m$, which may be converted to an SDDF value $h$ or compared to a modified distance $d+\bfp^\top\bfeta$ in the error function.}
\vspace{-1.5em}
\label{fig:method}
\end{figure}

{\bf \noindent Single-Instance SDDF Training: }
Given distance measurements $\mathcal{D}_l$, as in Problem~\ref{problem}, from a single object instance $l$, we can learn an SDDF representation $h(\bfp,\bfeta)$ in \eqref{eq:main} of the instance shape by optimizing the parameters of a neural network model $q_{\bftheta}(\bfp,\bfeta)$ with structure described in Sec.~\ref{sec:exp}.

We split the training data $\mathcal{D}_l$ into two sets, distinguishing whether the distance measurements are finite or infinite:
\begin{equation}
\begin{aligned}
\calF_l &:= \crl{ (\bfp,\bfeta,d) \in \mathcal{D}_l \mid d < \infty}, \\
\calI_l &:= \crl{(\bfp,\bfeta,d) \in \mathcal{D}_l \mid d = \infty},
\end{aligned}
\end{equation}
and define an error function for training the parameters $\bftheta$:
\begin{align}
e&(\bftheta; \calF, \calI) := \frac{\alpha}{|\calF|}\sum_{(\bfp,\bfeta,d) \in \calF} \!\!|\phi(d+\bfp^\top\bfeta) - q_{\bftheta}(\bfp,\bfeta)|^p \notag\\
&+\frac{\beta}{|\calI|} \sum_{(\bfp,\bfeta,d) \in \calF} \!\!r\prl{\phi(\infty) - q_{\bftheta}(\bfp,\bfeta)}^p + \gamma \|\bftheta\|_p^p, \label{eq:error_function}
\end{align}
where $\alpha,\beta,\gamma > 0$ are weights, $p \geq 1$, and $r$ is a rectifier, such as ReLU $r(x) = \max\crl{0,x}$, GELU $r(x) = x\Phi(x)$, or softplus $r(x) = \log(1+\exp(x))$. In the experiments, we use $p = 1$ and $r(x)= \max\crl{0,x}$. The last term in \eqref{eq:error_function} regularizes the network parameters $\bftheta$, but in all experiments we set the $\gamma$ to zero. The first term encourages $q_{\bftheta}(\bfp,\bfeta)$ to predict the squashed distance values accurately. We introduced a rectifier $r$ in the second term in \eqref{eq:error_function} to allow the output of $q_{\bftheta}(\bfp,\bfeta)$ to exceed $\phi(\infty)$, which we observed empirically leads to faster convergence. To address that $q_{\bftheta}(\bfp,\bfeta)$ may exceed $\phi(\infty)$, we modify its conversion to an SDDF as:
\begin{equation}
h(\bfp, \bfeta) = \phi^{-1}(\min\crl{q_{\bftheta}(\bfp, \bfeta),\phi(\infty)}) - \bfp^\top \bfeta.
\end{equation}

{\bf \noindent Multi-Instance SDDF Training: }
Next, we consider learning an SDDF shape model for multiple instances $l$ from the same category with common parameters $\bftheta$. Inspired by DeepSDF~\cite{park2019deepsdf}, we introduce a latent code $\bfz_l \in \mathbb{R}^m$ to model the shape of each instance $l$ and learn it as part of the neural network parameters with structure $q_{\bftheta}(\bfp,\bfeta,\bfz_l)$ described in Sec.~\ref{sec:exp}. Given distance measurements $\calF_l$ and $\calI_l$, we optimize $\bfz_l$ independently, for each instance $l$, and $\bftheta$ jointly, across all instances using the same error as in \eqref{eq:error_function}:
\begin{equation*}
\scaleMathLine{\begin{aligned}
&\min_{\bftheta,\{\bfz_l\}_l} \frac{\alpha}{\sum_l |\calF_l|} \sum_l\!\!\sum_{(\bfp,\bfeta,d)\in \calF_l}\!\!|\phi(d + \bfp^\top\bfeta) - q_{\bftheta}(\bfp, \bfeta, \bfz_l)|^p\\ 
&\;\;+\frac{1}{\sum_l |\calI_l|} \sum_l\!\!\sum_{(\bfp,\bfeta,d)\in \calI_l} \!\!\beta  r(\phi(\infty)-q_{\bftheta}(\bfp,\bfeta,\bfz_l))^p + \sigma \|\bfz_l\|_p^p + \gamma \|\bftheta\|_p^p.
\end{aligned}}
\end{equation*}

{\bf \noindent Online Shape Optimization: }
Finally, we consider a shape completion task, where we predict the SDDF shape of a previously unseen instance from partial distance measurements $\calF$, $\calI$. In this case, we assume that the category-level neural network parameters $\bftheta$ are already trained offline and we have an average category-level shape encoding $\bar{\bfz} \in \mathbb{R}^m$ (e.g., can be obtained by using a fixed $\bfz$ for all instances $l$ during training or simply as the mean of $\crl{\bfz_l}_l$). We initialize the shape code for the new instance with $\bar{\bfz}$ and optimize it using $\calF$ and $\calI$ and the same error function as before:
\begin{equation}
\label{eq:codefinder}
\begin{aligned}
&\min_{\bfz} \frac{\alpha}{|\calF|} \sum_{(\bfp,\bfeta,d)\in \calF}\!\!|\phi(d + \bfp^\top\bfeta) - q_{\bftheta}(\bfp, \bfeta, \bfz)|^p\\ 
&\;\;+\frac{\beta}{|\calI|} \sum_{(\bfp,\bfeta,d)\in \calI} \!\!r(\phi(\infty)-q_{\bftheta}(\bfp,\bfeta,\bfz))^p + \sigma \|\bfz\|_p^p.
\end{aligned}
\end{equation}
The optimized latent shape code $\bfz^*$ captures all geometric information about the object and can be used to synthesize novel distance views $h(\bfp, \bfeta) = \phi^{-1}(\min\crl{q_{\bftheta}(\bfp, \bfeta,\bfz^*),\phi(\infty)}) - \bfp^\top \bfeta$ from any point $\bfp$ in any viewing direction $\bfeta$.

\subsection{Multi-view Consistency}
\label{sec:multiview}




Proposition~\ref{prop:SDDF-structure} reduces the input dimension of an SDDF function $h(\bfp,\bfeta)$ from $2n-1$ to $2n-2$. To model 3D shape, we need to represent a 4D SDDF function. In contrast, an SDF model \cite{park2019deepsdf} has a 3D input, which may even be reduced to a 2D surface using an Eikonal constraint \cite{gropp2020implicit}. Hence, training a multi-view consistent SDDF model might require a larger data set with distance measurements from many positions $\bfp$ and directions $\bfeta$. To reduce the necessary data, we develop an approach to synthesize additional data from the initial training set $\mathcal{D}_l := \crl{(\bfp_{i,l}, \bfeta_{i,l}, d_{i,l})}_i$. Given an arbitrary position $\hat{\bfp} \in \mathbb{R}^n$, we describe to how to synthesize both finite and infinite (no surface hit) distance measurements $\hat{d}$ along different view rays $\hat{\bfeta}$ originating at $\hat{\bfp}$. Let $\mathcal{P}_l := \crl{ \bfp + d \bfeta \mid (\bfp,\bfeta,d) \in \mathcal{D}_l, d < \infty}$ be a point cloud representation of the training data.

\begin{figure}[t]
\includegraphics[width=0.32\linewidth]{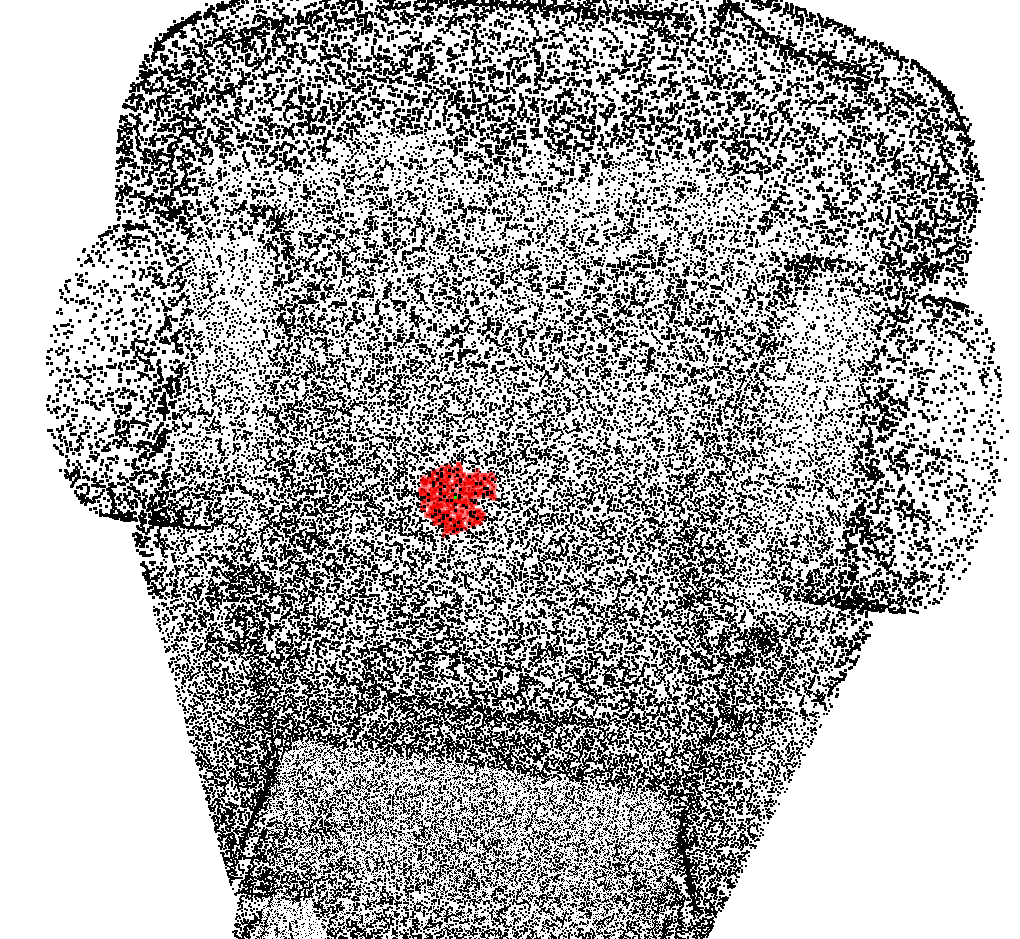}%
\hfill%
\includegraphics[width=0.32\linewidth]{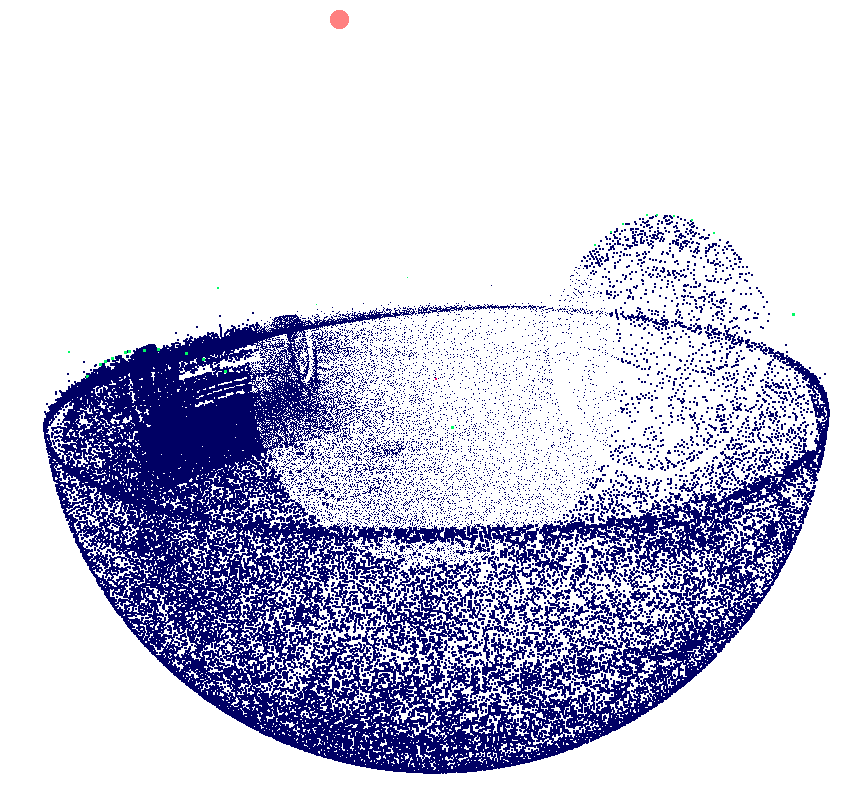}%
\hfill%
\includegraphics[width=0.35\linewidth]{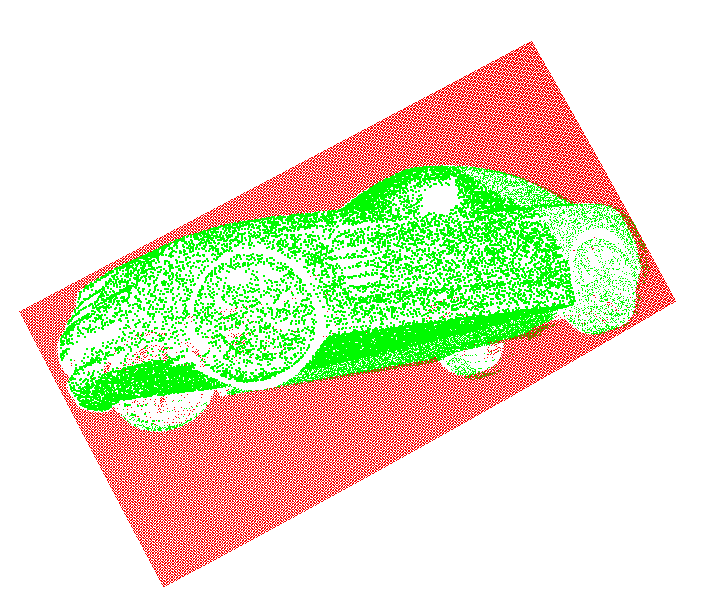}
\vspace{-1em}
\caption{A new distance view (right) is synthesized from a point cloud $\mathcal{P}_l$ (left) by deciding whether each point $\bfq$ (left, red) is visible from the new view $\hat{\bfp}$. The point cloud is projected on a sphere around $\bfq$ (middle) to judge the visibility from $\hat{\bfp}$ (middle, red).}
\label{fig:multiview}
 \vspace{-1em}
\end{figure}

{\bf \noindent Infinite Ray Synthesis: }
To synthesize infinite rays, we project the point cloud $\mathcal{P}_l$ to the desired image frame and select the directions $\hat{\bfeta}$ of all pixels that do not contain a projected point. These directions correspond to views with infinite distance. The points may be inflated with a finite radius to handle sparse point cloud data.

{\bf \noindent Finite Ray Synthesis: }
If a point $\bfq \in \mathcal{P}_l$ is observable from $\hat{\bfp}$, we can obtain a synthetic measurement with distance $\hat{d} = \|\bfq - \hat{\bfp}\|_2$ in direction $\hat{\bfeta} = \frac{1}{\hat{d}} \prl{\bfq - \hat{\bfp}}$. The challenge is to decide which points in $\mathcal{P}_l$ are visible from $\hat{\bfp}$.
For $\bfq \in \mathcal{P}_l$, let $\bfp$ be the start point of the ray that observed $\bfq$ originally. We know that $\bfq$ is observable from $\bfp$. In contrast, for all $\bfu \in \mathcal{P}_l\setminus \{\bfq\}$ and all $\epsilon > 0$, $\bfq$ is not observable from $\bfu - \epsilon (\bfq - \bfu)$, since $\bfu$ is in the way. Hence, $\bfq$ is observable when we look at it in the direction $\frac{\bfq - \bfp}{\|\bfq - \bfp\|_2}$ and unobservable in the direction $\frac{\bfq - \bfu}{\|\bfq - \bfu\|_2}$ for all $\bfu \in \mathcal{P}_l\setminus \{\bfq\}$. For convenience of representation, translate all points such that $\bfq$ is at the origin, project all points on a unit sphere around the origin, and rotate all of the points such that $\bfp$ maps to $\bfe_3 = [0,0,1]^\top$. Formally, this can be achieved with the transformation:
\begin{equation}
	T_{\bfq}(\bfx) := 
	\begin{cases}
		[0, 0, 0]^\top,& \text{if } \bfx = \bfq,\\
		\bfR_{\bfeta}\frac{\bfx - \bfq}{\|\bfx - \bfq\|_2}, & \text{otherwise},
	\end{cases}
\end{equation}
where $\bfeta = \frac{\bf p - \bf q}{\|\bf p - \bf q\|_2}$. To decide whether $\bfq$ is observable from $\hat{\bfp}$, equivalently we should decide whether the origin is observable from $T_{\bfq}(\hat{\bfp})$. Let $\mathcal{P}_{\bfq} := T_{\bfq}(\mathcal{P}_l \setminus \{\bfq\})$. The origin is observable from $\bfe_3$ and a region around it and unobservable from all $\bfu \in \mathcal{P}_{\bfq}$. See Fig.~\ref{fig:multiview} for an illustration.


%

\begin{figure*}[t]
\includegraphics[width=0.33\linewidth, trim=20mm 0mm 10mm 0mm, clip]{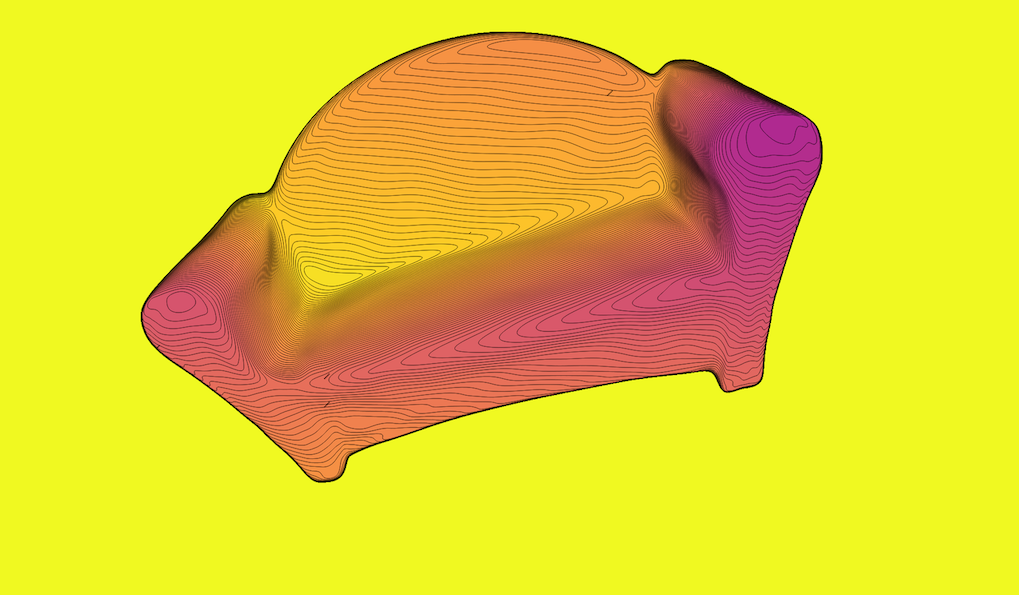}%
\hfill%
\includegraphics[width=0.33\linewidth, trim=20mm 0mm 10mm 0mm, clip]{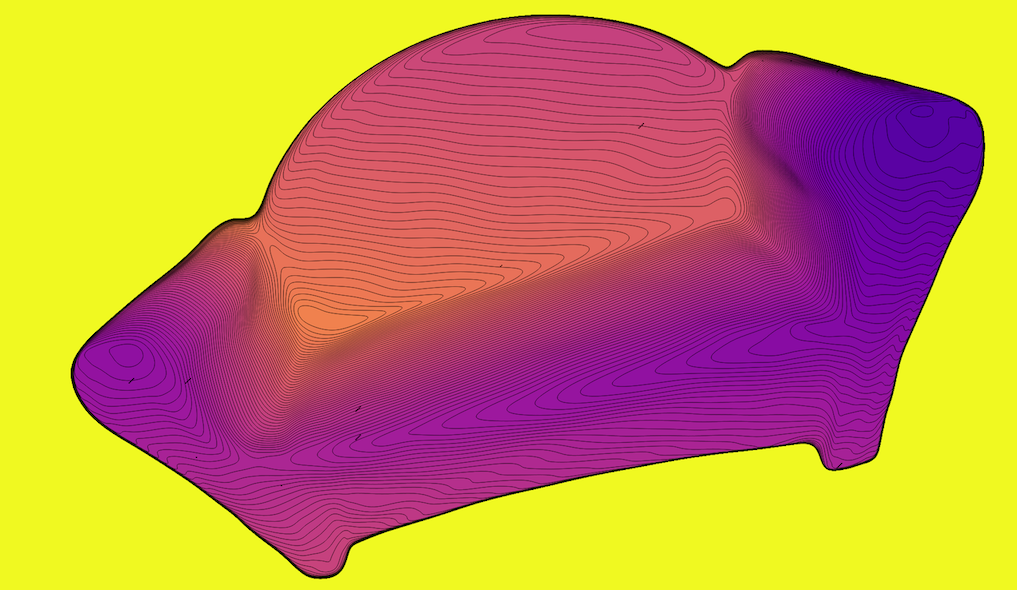}%
\hfill%
\includegraphics[width=0.33\linewidth, trim=0mm 50mm 0mm 60mm, clip]{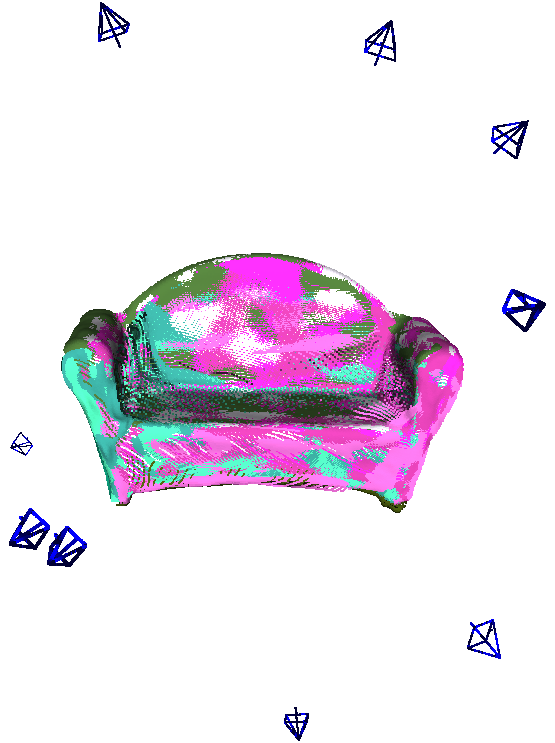}
%
\vspace{-1em}
\caption{SDDF shape representation of a sofa. Distance images synthesized by SDDF along the same view direction but different distance from the object are shown (left, middle). Point clouds synthesized from arbitrary camera views are multi-view consistent (right).}
\label{fig:single}
\vspace{-1em}
\end{figure*}

\begin{figure*}
\includegraphics[width=0.24\linewidth, trim=0mm 0mm 0mm 25mm, clip]{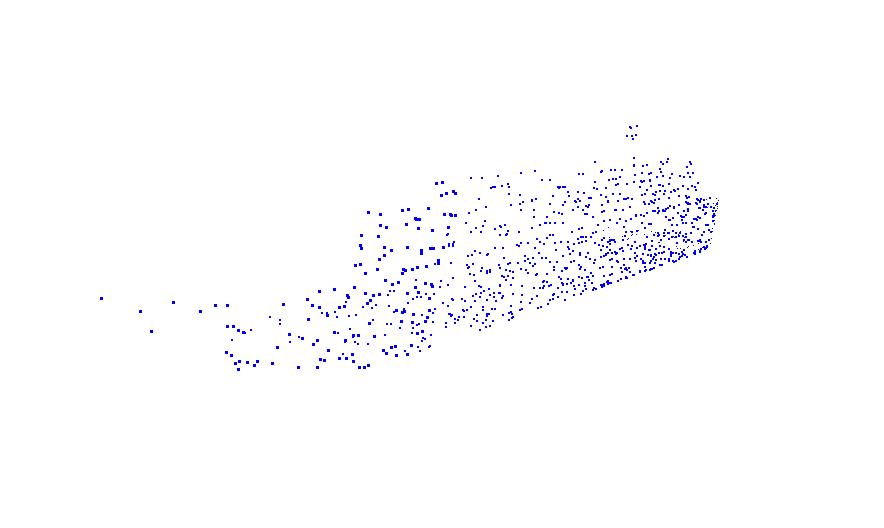}%
\hfill%
\includegraphics[width=0.24\linewidth, trim=10mm 20mm 40mm 40mm, clip]{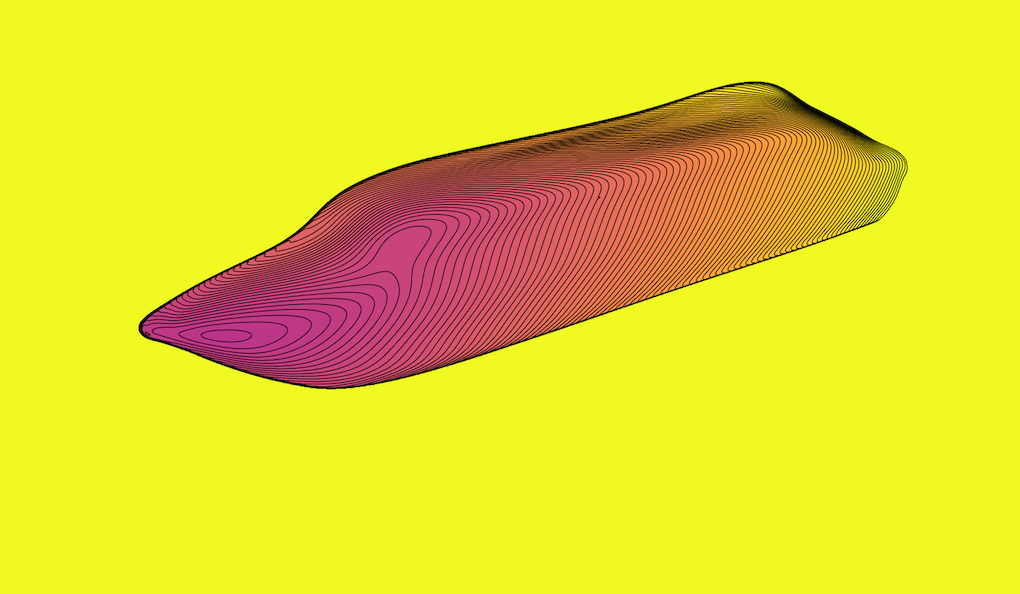}%
\hfill%
\includegraphics[width=0.24\linewidth, trim=10mm 20mm 40mm 40mm, clip]{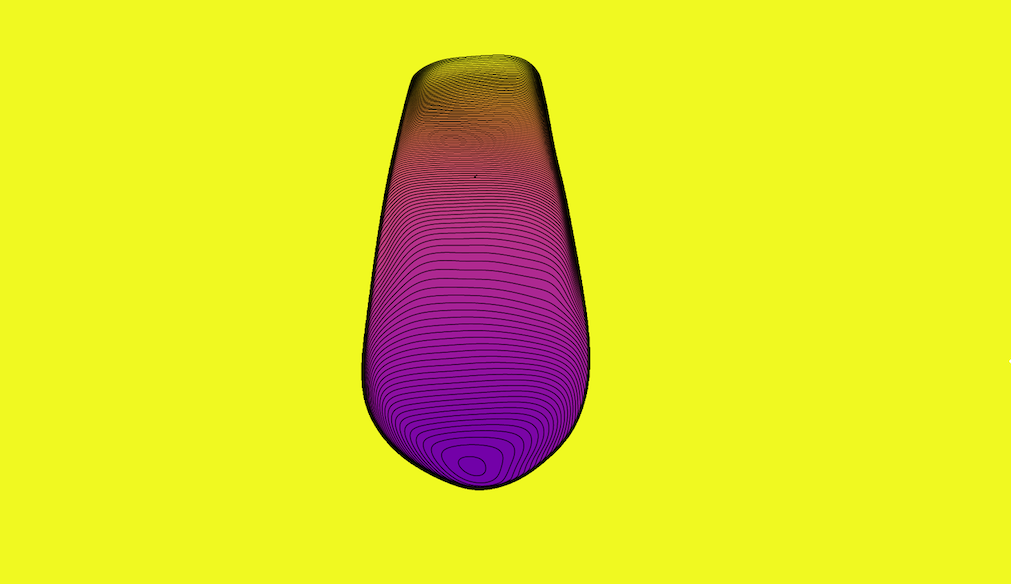}%
\hfill%
\includegraphics[width=0.24\linewidth, trim=0mm 0mm 0mm 80mm, clip]{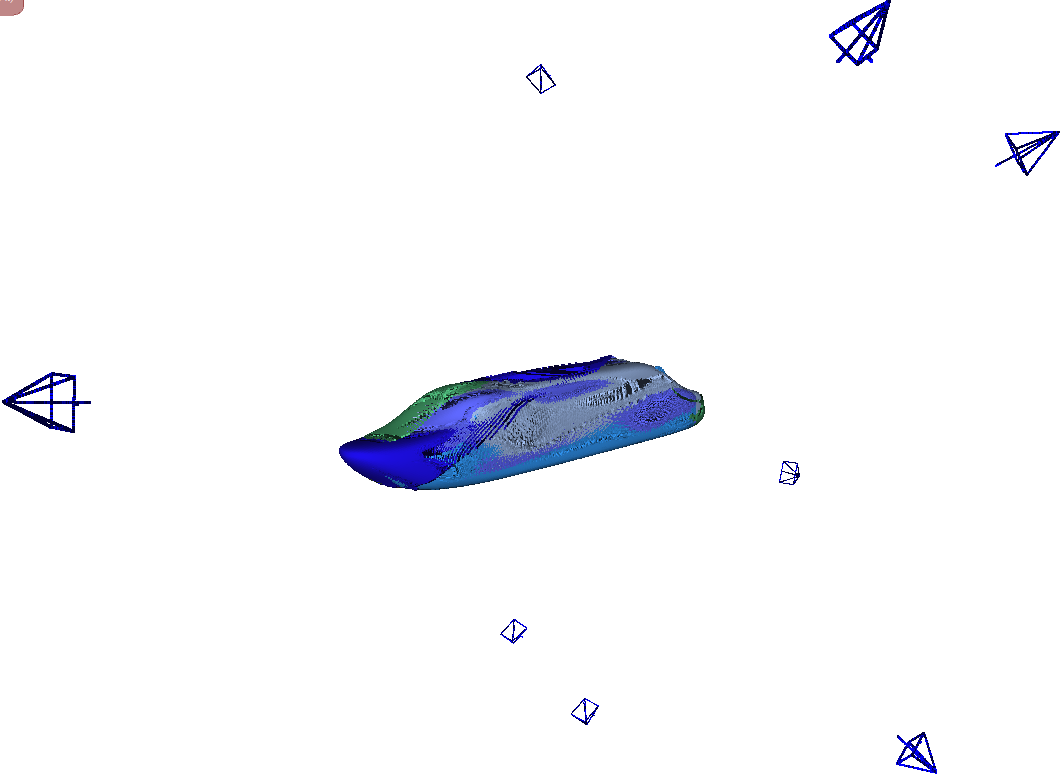}\\[-1.5ex]
\includegraphics[width=0.24\linewidth, trim=0mm 30mm 0mm 0mm, clip]{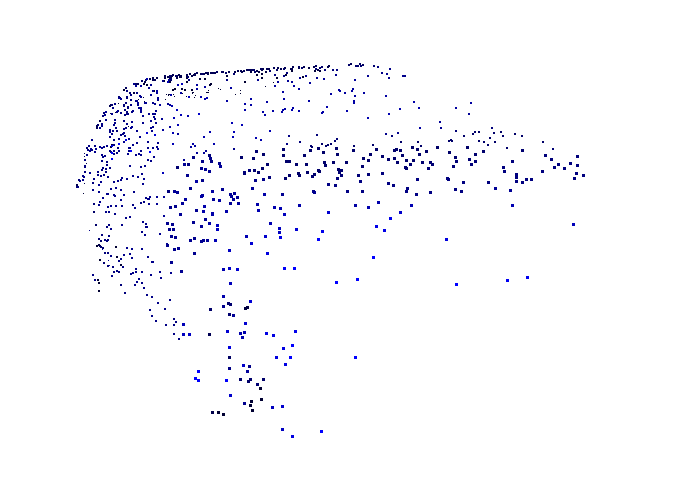}%
\hfill%
\includegraphics[width=0.24\linewidth, trim=10mm 0mm 40mm 10mm, clip]{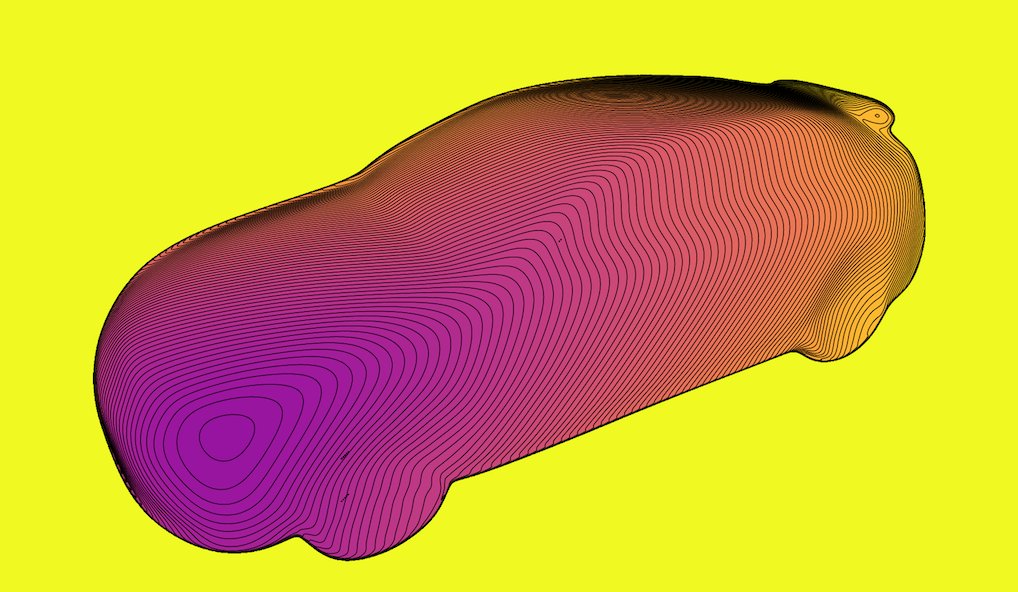}%
\hfill%
\includegraphics[width=0.24\linewidth, trim=10mm 10mm 40mm 0mm, clip]{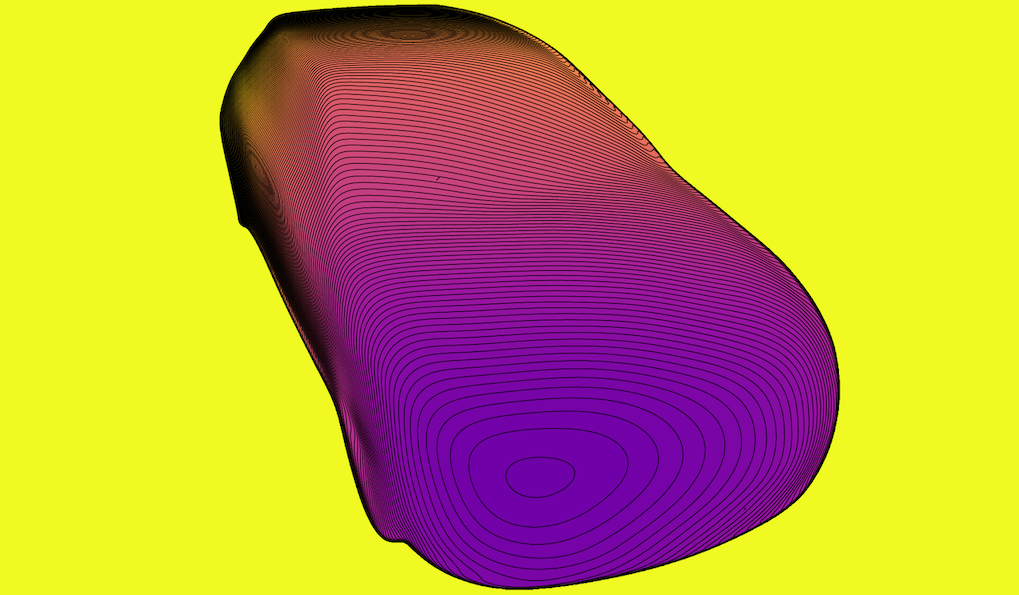}%
\hfill%
\includegraphics[width=0.24\linewidth, trim=10mm 50mm 60mm 30mm, clip]{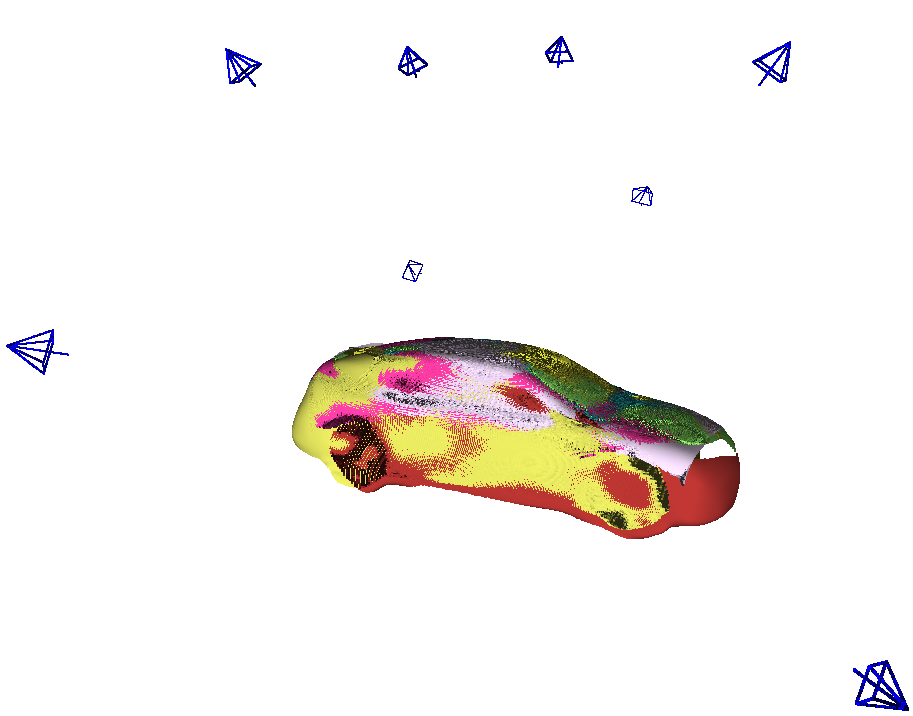}
\vspace{-1em}
\caption{SDDF shape completion using distance measurements (first column) from unseen boat (first row) and car (second row) instances. The trained SDDF model can synthesize novel distance views (second and third columns) or point clouds from arbitrary camera views which are multi-view consistent (fourth column).}
\label{fig:completion}
\vspace{-1em}
\end{figure*}

{\bf \noindent Spherical Convex Hull: } 
We approximate the region of points around $\bfe_3$ that can observe the origin. 
The vertices adjacent to $\bfe_3$ in the convex hull of $\mathcal{P}_{\bfq} \cup \{\bfe_3\}$ represent the boundary. We sort the boundary points based on their azimuth so that the geodesic among them represents the boundary. The origin is observable from the part of sphere that contains the $\bfe_3$. We provide a lemma that allows the convex hull computation to be performed in 2D.



%
%
\begin{lemma}\label{lem:cnv3D2D}
Let $\mathcal{P}$ be a set of points on the unit sphere. The boundary points of $\mathcal{P}$ with respect to $\bfe_3$ are points that are adjacent vertices to $\bfe_3$ in the convex hull of $\mathcal{P} \cup \{\bfe_3\}$. Let $m$ be a function that maps a point on the unit sphere to the plane $z=0$ with center $\bfe_3$, i.e., $m([x,y,z]^\top) := [\frac{x}{1-z}, \frac{y}{1-z}]^\top$. A point $\bfu \in \mathcal{P}$ is a boundary point if and only if $m(\bfu)$ is a vertex of the convex hull of $m(\mathcal{P})$.
\end{lemma}
\begin{proof}
See the Supplementary Material.
\end{proof}
%
%
%
%
To accelerate the convex hull computation further, we propose an approximation using discretization. We discretize the azimuth of the sphere into $N$ segments. Let $E_i = \max\{el({\bf u}) \mid {\bf u}\in {\cal{P}}_{\bf{q}}, az({\bf u}) \in [\frac{2\pi i}{N}, \frac{2\pi(i+1)}{N})\}$, $0\leq i<N$ be the maximum elevation of points in $\mathcal{P}_{\bfq}$ with azimuth in $[\frac{2\pi i}{N}, \frac{2\pi(i+1)}{N})$. Then, let $i$ determine the interval $[\frac{2\pi i}{N}, \frac{2\pi(i+1)}{N})$ that contains the azimuth of $T_{\bfq}(\hat{\bfp})$. We consider $\bfq$ observable from $\hat{\bfp}$ if the elevation of $T_{\bfq}(\hat{\bfp})$ is larger than $E_i$. In the experiments, we accelerate the computation further by sub-sampling $\mathcal{P}_{\bfq}$.

\section{Evaluation}
\label{sec:exp}

This section presents qualitative and quantitative evaluation of the SDDF model. Sec.~\ref{sec:instance_evaluation} presents results for single-instance shape modeling in comparison to the deep geometric prior (DGP) model~\cite{williams2019deep}. Sec.~\ref{sec:category_evaluation} applies the SDDF model to a class of shapes, demonstrating shape completion from a single distance view and shape interpolation between different instances. The accuracy of SDDF for shape completion is compared against the decoder-only deep SDF model, IGR~\cite{gropp2020implicit}. Both our model and IGR capture structural constraints for SDDF and SDF, respectively, making a quantitative comparison interesting. We also compare the results against a group of category-level shape modeling methods \cite{xie2020grnet,groueix2018papier,yuan2018pcn,yang2018foldingnet,tchapmi2019topnet, liu2020morphing} that utilize point-cloud data. Finally, we present SDDF shape interpolation results to demonstrate that our model captures the latent space of an object category shape continuously and meaningfully.

\begin{table}[t]
\centering
\caption{Quantitative comparison between DGP~\cite{williams2019deep} and SDDF with exact (SDDF (conv)) and approximate (SDDF (disc)) convex hull data augmentation on $5$ object instances from ShapeNet~\cite{chang2015shapenet}. At test time, the metrics from \cite{mescheder2019occupancy} are computed after the reconstructed meshes are normalized to a unit-length bounding box.}
\label{tab:single}
\vspace{-1em}
\resizebox{\linewidth}{!}{\begin{tabular}{l|c|c|c|c|c}\toprule
Class & Method & Chamfer-$L_2$ & Chamfer-$L_1$ & Completeness & Accuracy\\\midrule
\multirow{4}{0em}{Truck}& SDDF (conv) & 3.918e-05 & 3.475e-03 & 3.077e-03 & 3.874e-03 \\
&  SDDF (disc)      & 4.421e-05 & 3.681e-03 & 3.251e-03 & 4.111e-03 \\
&  DGP      & 8.842e-05 & 7.388e-03 & 6.842e-03 & 7.935e-03 \\\midrule
\multirow{4}{4em}{Airplane}& SDDF (conv) & 3.088e-05 & 2.135e-03 & 1.713e-03 & 2.557e-03 \\
&  SDDF (disc)      & 2.332e-05 & 2.741e-03 & 2.077e-03 & 3.406e-03 \\
&  DGP      & 4.900e-05 & 5.141e-03 & 4.512e-03 & 5.770e-03 \\\midrule
\multirow{4}{0em}{Sofa}& SDDF (conv) & 1.562e-05 & 2.428e-03 & 1.725e-03 & 3.130e-03 \\
&  SDDF (disc)      & 2.563e-05 & 2.610e-03 & 1.767e-03 & 3.454e-03 \\
&  DGP      & 18.822e-05 & 10.747e-03 & 9.546e-03 & 11.949e-03 \\\midrule
\multirow{4}{0em}{Boat}& SDDF (conv) & 0.425e-05 & 1.725e-03 & 1.446e-03 & 2.005e-03 \\
&  SDDF (disc)      & 0.489e-05 & 1.778e-03 & 1.481e-03 & 2.075e-03 \\
&  DGP      & 2.399e-05 & 4.015e-03 & 3.848e-03 & 4.182e-03 \\\midrule
\multirow{4}{0em}{Car}& SDDF (conv) & 2.889e-05 & 2.892e-03 & 2.858e-03 & 2.925e-03 \\
&  SDDF (disc)      & 2.765e-05 & 2.921e-03 & 2.824e-03 & 3.019e-03 \\
&  DGP      & 5.988e-05 & 5.927e-03 & 5.809e-03 & 6.046e-03 \\\bottomrule
\end{tabular}}
\end{table}

{\bf \noindent Network Architecture: }
We use an autodecoder with $16$ layers, $512$ hidden units per layer, and a skip connection from the input to layers $4,8,12$ to represent the SDDF model $q_{\bftheta}(\bfp,\bfeta,\bfz)$ introduced in Sec.~\ref{sec:training_and_inference}. The dimension of the latent shape code $\bfz$ is set to $256$ for category-level shape completion and interpolation and to $0$ for single instance shape modeling.


{\bf \noindent Data Preparation: }
We use the ShapeNet dataset \cite{chang2015shapenet}. Distance images with resolution $512\times512$ are generated as training data from $8$ camera views facing the object from azimuth $\frac{k\pi}{4}$ and elevation $\frac{(-1)^k\pi}{4}$ for $k = 0,\ldots,7$ on a sphere. Each distance image is subsampled to contain at most $100k$ finite and $100k$ infinite distance measurements. Both DGP and IGR were trained using the point cloud obtained from all points with finite distance measurements and augmented with normals obtained using the method of \cite{Zhou2018}. The results of the remaining baseline methods \cite{xie2020grnet,groueix2018papier,yuan2018pcn,yang2018foldingnet,tchapmi2019topnet, liu2020morphing} were obtained from the GRNet paper \cite{xie2020grnet}. 

\begin{table}[t]
\centering
\caption{Comparison between SDDF and IGR~\cite{gropp2020implicit}, over $5$ classes from ShapeNet \cite{chang2015shapenet}, using the metrics from \cite{mescheder2019occupancy}. The metrics are computed after the reconstructed meshes are normalized in a unit-length bounding box. Two versions of IGR are evaluated: IGR(1), using the same test points as SDDF, and IGR(2), producing a uniform point cloud from the reconstructed mesh.}
\label{tab:IGR}
\vspace{-1em}
\resizebox{\linewidth}{!}{\begin{tabular}{l|c|c|c|c|c}\toprule
Class & Method & Chamfer-$L_2$ & Chamfer-$L_1$ & Completeness & Accuracy\\\midrule
\multirow{4}{0em}{Car}& SDDF & 2.688e-04 & 8.365e-03 & 7.833e-03 & 8.897e-03 \\
&  IGR(1)      & 37.873e-04 & 39.682e-03 & 15.971e-03 & 63.394e-03 \\
&  IGR(2)      & 27.165e-04 & 30.465e-03 & 11.291e-03 & 49.639e-03 \\\midrule
\multirow{4}{3em}{Airplane}& SDDF & 3.539e-04 & 7.735e-03 & 6.790e-03 & 8.680e-03 \\
&  IGR(1)      & 182.028e-04 & 89.677e-03 & 21.411e-03 & 157.942e-03 \\
&  IGR(2)      & 139.607e-04 & 70.804e-03 & 10.752e-03 & 130.856e-03 \\\midrule
\multirow{4}{4em}{Watercraft}& SDDF & 7.869e-04 & 12.762e-03 & 10.772e-03 & 14.752e-03 \\
&  IGR(1)      & 72.551e-04 & 55.594e-03 & 25.620e-03 & 85.567e-03 \\
&  IGR(2)      & 69.437e-04 & 52.774e-03 & 22.409e-03 & 83.139e-03 \\\midrule
\multirow{4}{0em}{Sofa}& SDDF & 3.952e-04 & 12.245e-03 & 10.940e-03 & 13.551e-03 \\
&  IGR(1)      & 119.493e-04 & 71.506e-03 & 40.964e-03 & 102.048e-03 \\
&  IGR(2)      & 114.114e-04 & 67.189e-03 & 33.903e-03 & 100.475e-03 \\\midrule
\multirow{4}{0em}{Display}& SDDF & 9.318e-04 & 17.089e-03 & 14.051e-03 & 20.127e-03 \\
&  IGR(1)      & 93.326e-04 & 53.326e-03 & 35.082e-03 & 71.571e-03 \\
&  IGR(2)      & 86.551e-04 & 53.734e-03 & 36.470e-03 & 70.998e-03 \\\bottomrule
\end{tabular}}
\end{table}

\begin{table}[t]
\caption{Quantitative comparison between SDDF and several baseline methods reported in GRNet~\cite{xie2020grnet}, over $4$ classes from ShapeNet \cite{chang2015shapenet}. The errors are scaled by $\times 10^{-3}$.}
\label{tab:grnet}
\vspace{-1em}
\resizebox{\linewidth}{!}{\begin{tabular}{l|c|c|c|c|c}\toprule
Class & Metric & Car & Airplane & Watercraft & Sofa\\\midrule
\multirow{2}{0em}{AtlasNet}& Chamfer-$L_2$ & 0.3237 & 0.1753 & 0.4177 & 0.5990 \\
&  Chamfer-$L_1$      & 10.105 & 6.366 & 10.607 & 12.990 \\\midrule
\multirow{2}{3em}{PCN}& Chamfer-$L_2$ & 0.2445 & 0.1400 & 0.4062 & 0.5129 \\
&  Chamfer-$L_1$      & 8.696 & 5.502 & 9.665 & 11.676 \\\midrule
\multirow{2}{4em}{FoldingNet}& Chamfer-$L_2$ & 0.4676 & 0.3151 & 0.7325 & 0.8895 \\
&  Chamfer-$L_1$      & 12.611 & 9.491 & 14.987 & 15.969 \\\midrule
\multirow{2}{4em}{TopNet}& Chamfer-$L_2$ & 0.3513 & 0.2152 & 0.4359 & 0.6949 \\
&  Chamfer-$L_1$      & 10.898 & 7.614 & 11.124 & 14.779 \\\midrule
\multirow{2}{4em}{MSN}& Chamfer-$L_2$ & 0.4711 & 0.1543 & 0.3853 & 0.5894 \\
&  Chamfer-$L_1$      & 10.776 & 5.596 & 9.485 & 11.895 \\\midrule
\multirow{2}{4em}{GRNet}& Chamfer-$L_2$ & 0.2752 & 0.1531 & 0.2122 & 0.3613 \\
&  Chamfer-$L_1$      & 9.447 & 6.450 & 8.039 & 10.512 \\\midrule
\multirow{2}{0em}{SDDF}& Chamfer-$L_2$ & 0.17351 & 0.26172 & 0.39942 & 0.23767 \\
&  Chamfer-$L_1$      & 8.31331 & 5.90701 & 9.80779 & 9.8593 \\\bottomrule
\end{tabular}}
 \vspace{-1em}
\end{table}
\subsection{Geometric Model Evaluation}
\label{sec:instance_evaluation}

We first evaluate the performance of our SDDF model and DGP~\cite{williams2019deep} for single-instance shape reconstruction using $5$ objects from ShapeNet~\cite{chang2015shapenet}. The models are trained using point cloud data from $8$ views described above. The data is augmented with normals for DGP and the DGP radius is set to guarantee at least $128$ patches for each object. SDDF is trained with learning rate $0.005$ that decreases every $1k$ epoch by factor of $0.5$ for $10k$ epochs using additional synthesized data with exact (SDDF (conv)) and discretized (SDDF (disc)) convex hull computation (see Sec.~\ref{sec:multiview}). The results are presented in Table~\ref{tab:single}.

Given the trained SDDF model, we can generate a distance image at an arbitrary view by calculating an SDDF prediction for the rays corresponding to each pixel in the desired image. To visualize the learned model, we choose camera locations facing the object and show the predictions in Fig.~\ref{fig:single}. All generated distance images recognize the object shape and free space precisely. By inspecting the distance level sets, we can also conclude that the SDDF model successfully captures the shape details. The images get brighter as the view moves further away from the object because the measured distances at each pixel increase. On the other hand, the level sets in the distant view remain parallel to the close-up view. The two-view comparison shows that the directional condition of the SDDF model in \eqref{eq:cond} holds.

\begin{figure*}[t]
%
\includegraphics[width=0.198\linewidth, trim={100mm 80mm 40mm 85mm}, clip]{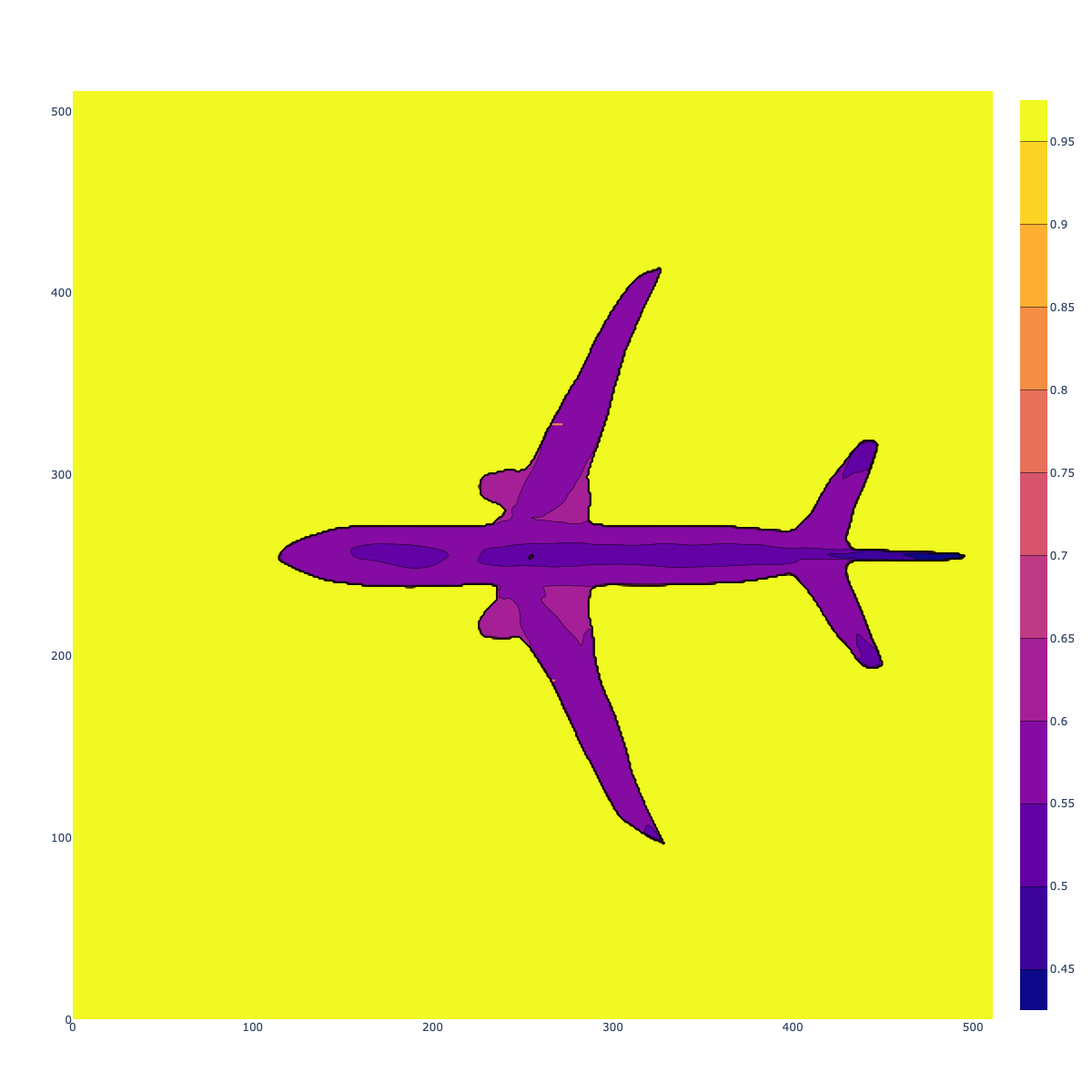}%
\hfill%
\includegraphics[width=0.198\linewidth, trim=100mm 80mm 40mm 85mm, clip]{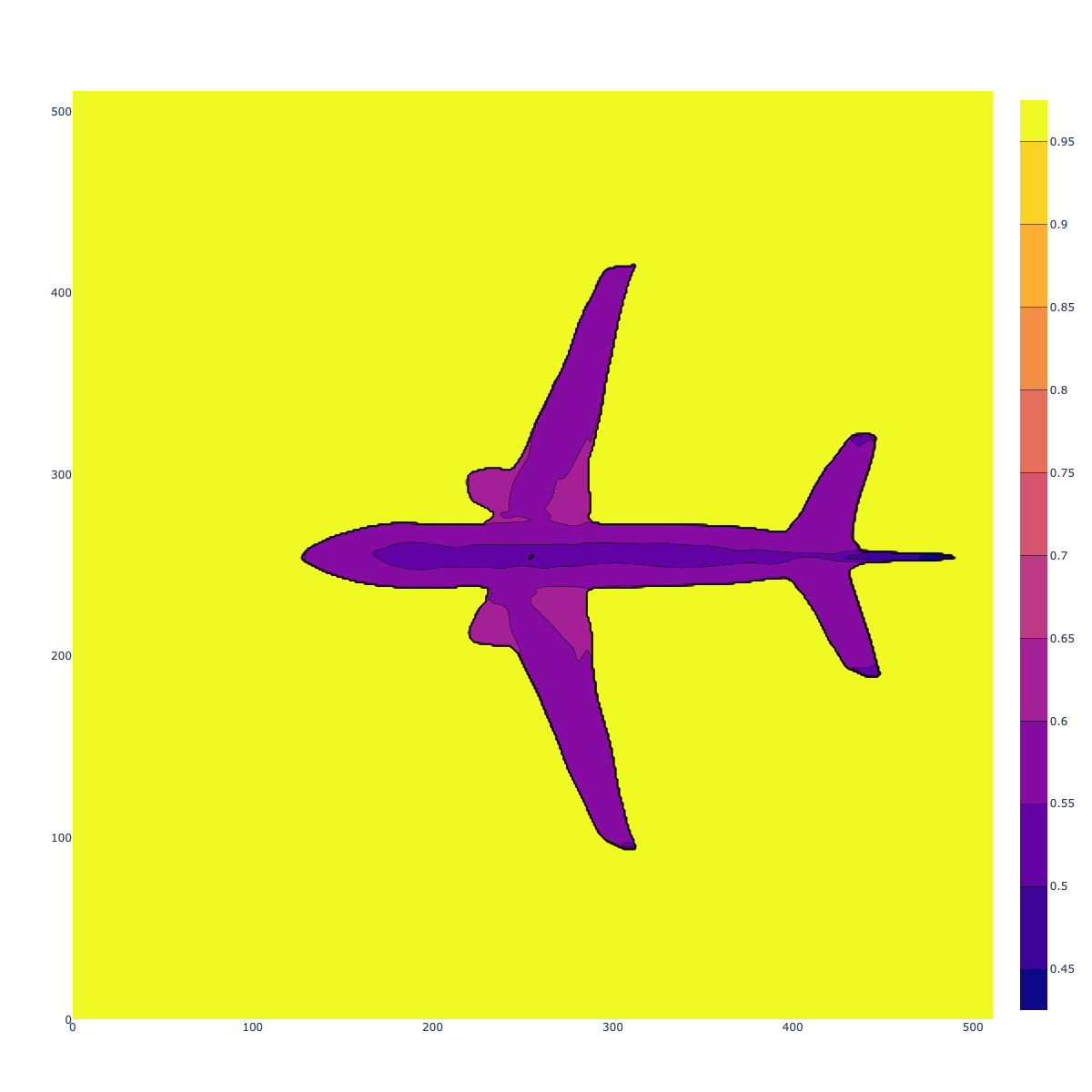}%
\hfill%
\includegraphics[width=0.198\linewidth, trim=100mm 80mm 40mm 85mm, clip]{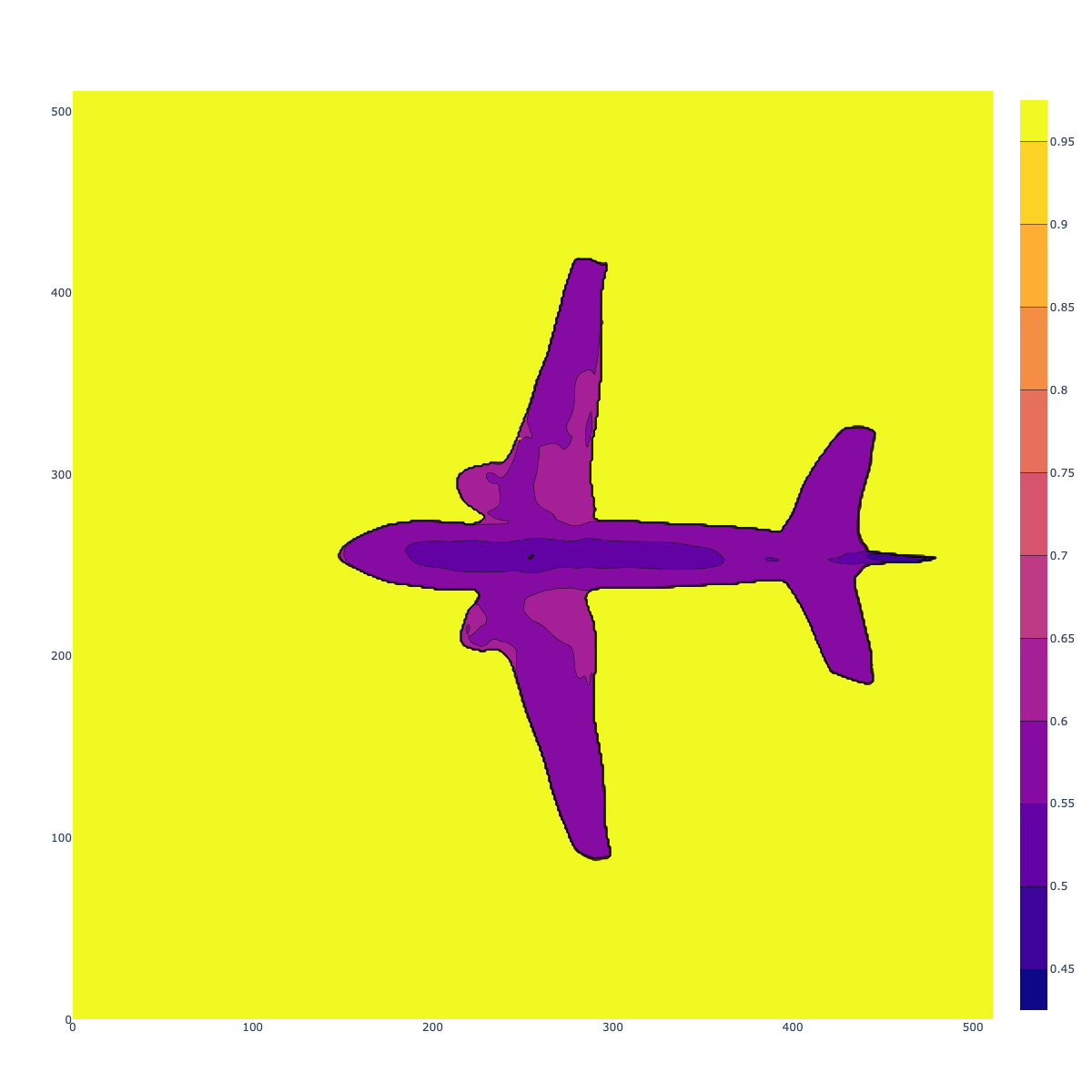}%
\hfill%
\includegraphics[width=0.198\linewidth, trim=100mm 80mm 40mm 85mm, clip]{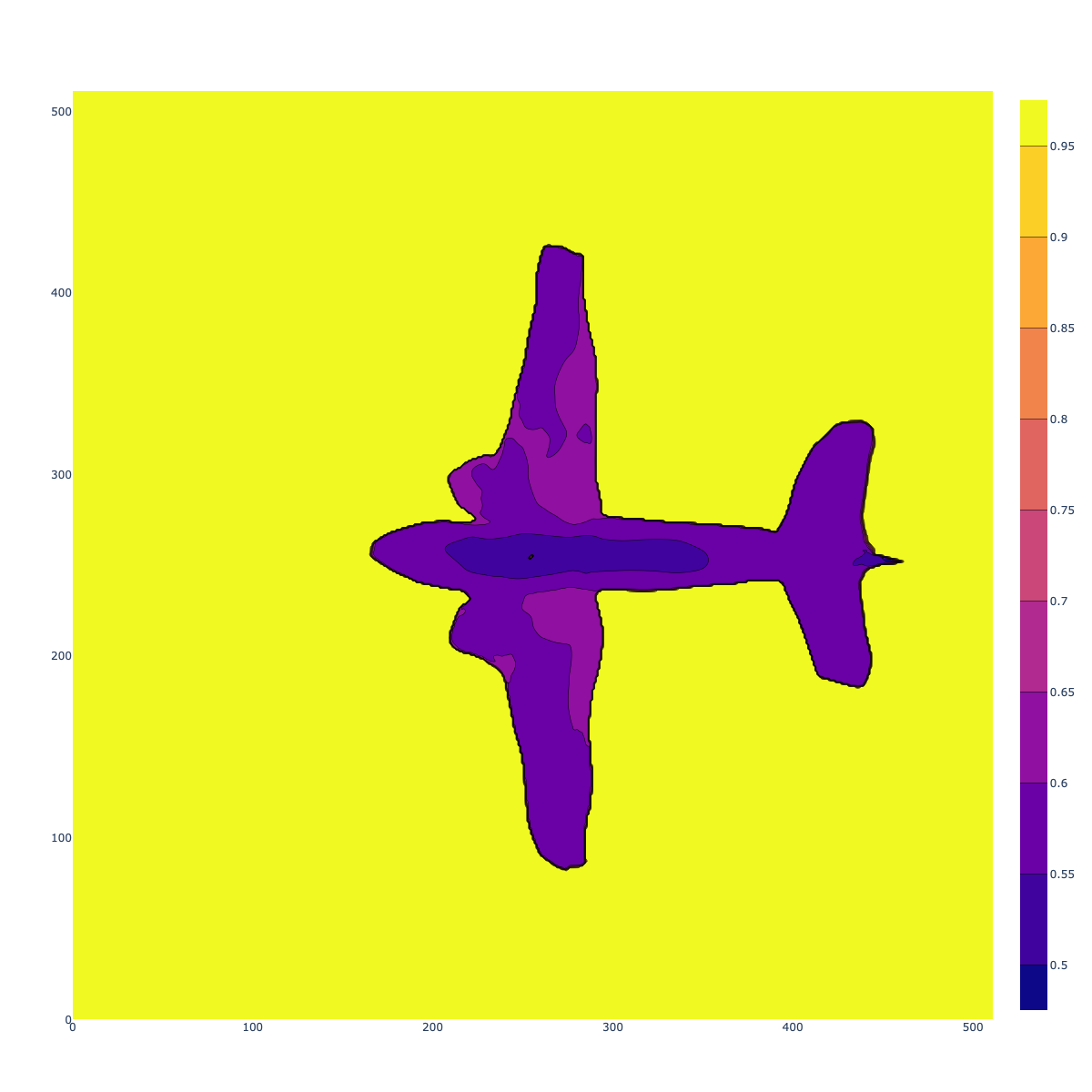}%
\hfill%
\includegraphics[width=0.198\linewidth, trim={100mm 80mm 40mm 85mm}, clip]{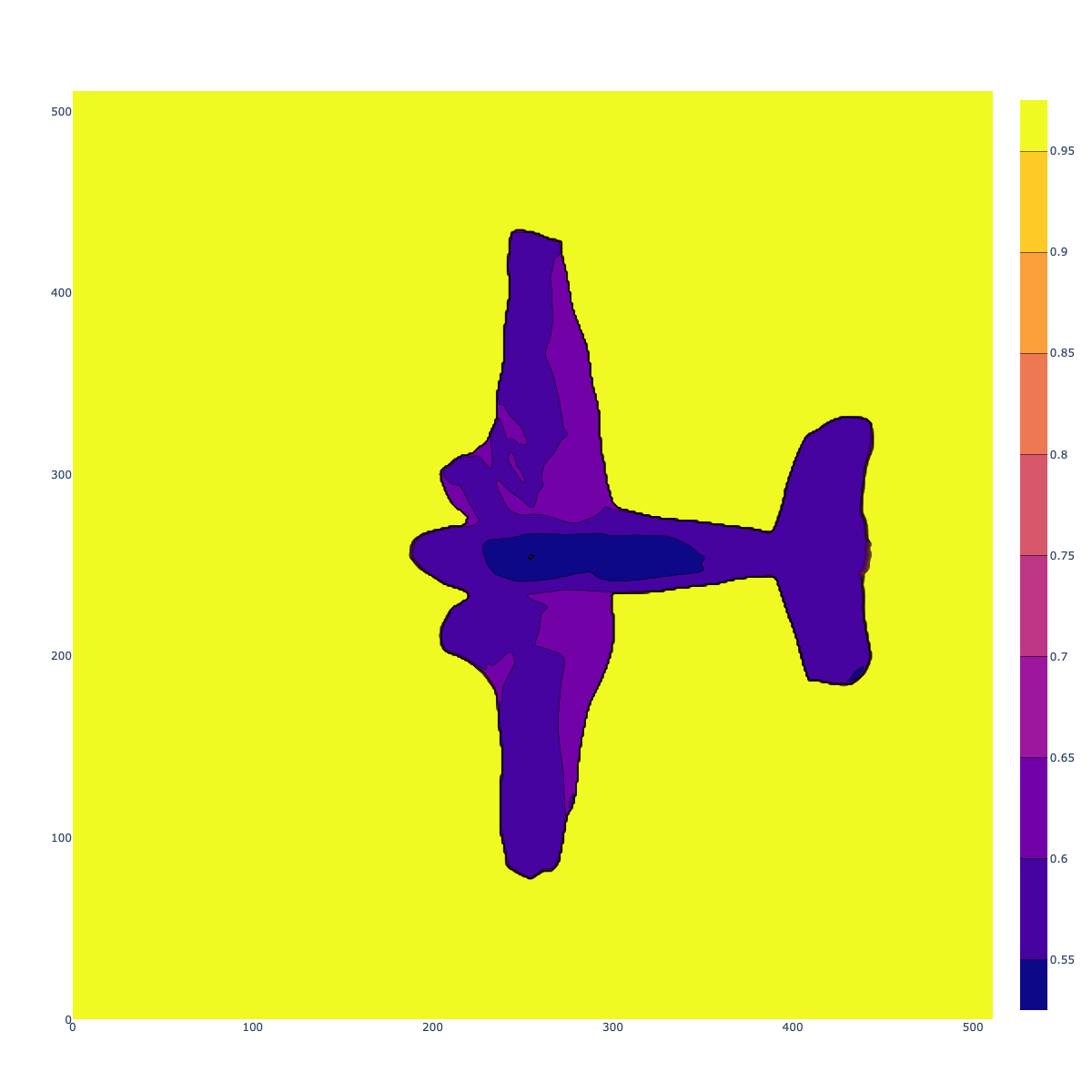}\\
\scalebox{1}{\includegraphics[width=0.245\linewidth, trim={10mm 10mm 10mm 10mm}, clip]{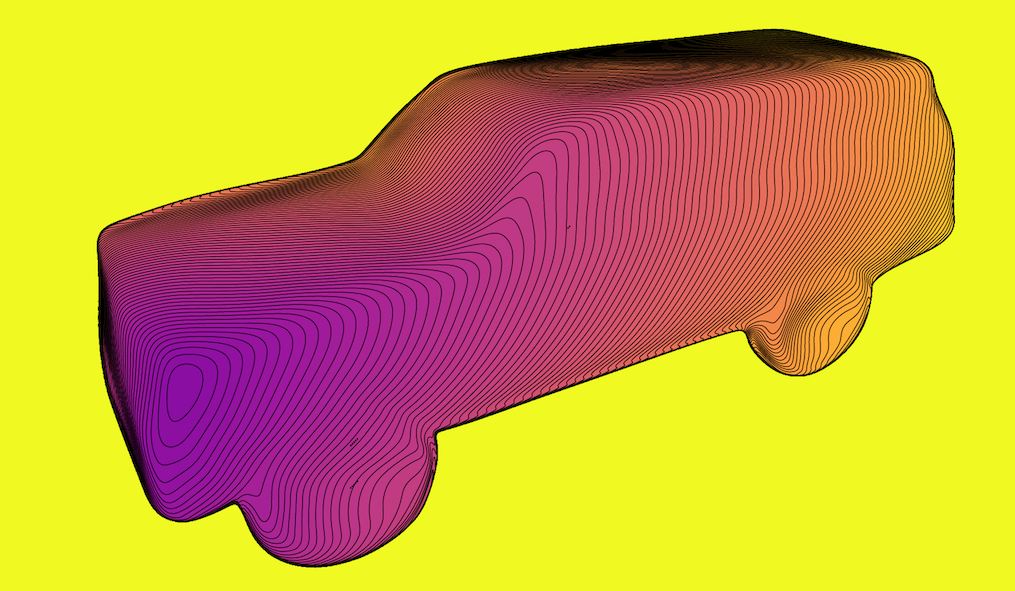}}%
\hfill%
\includegraphics[width=0.245\linewidth, trim=10mm 10mm 10mm 10mm, clip]{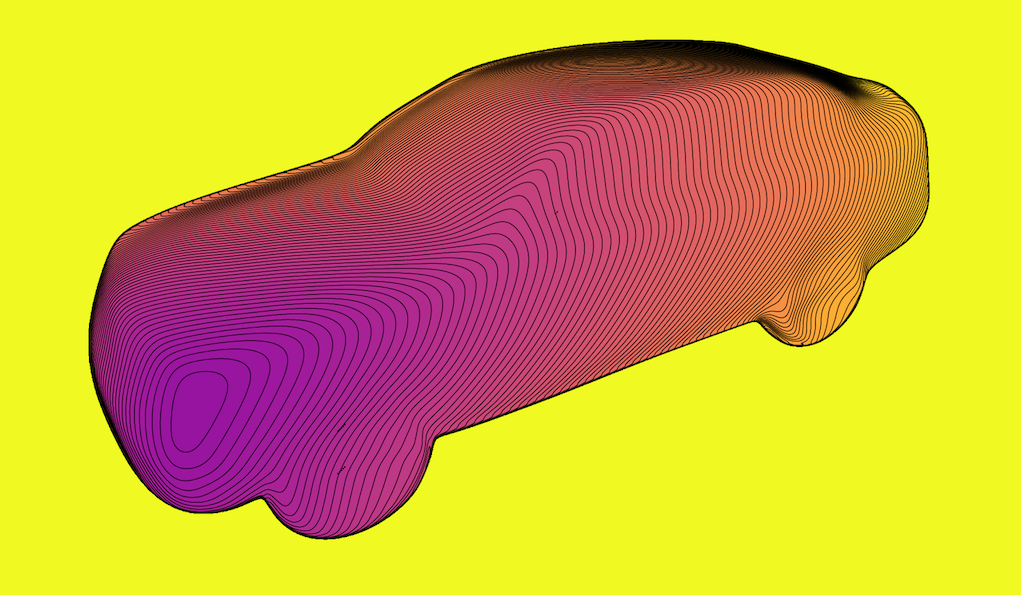}%
\hfill%
\includegraphics[width=0.245\linewidth, trim=10mm 10mm 10mm 10mm, clip]{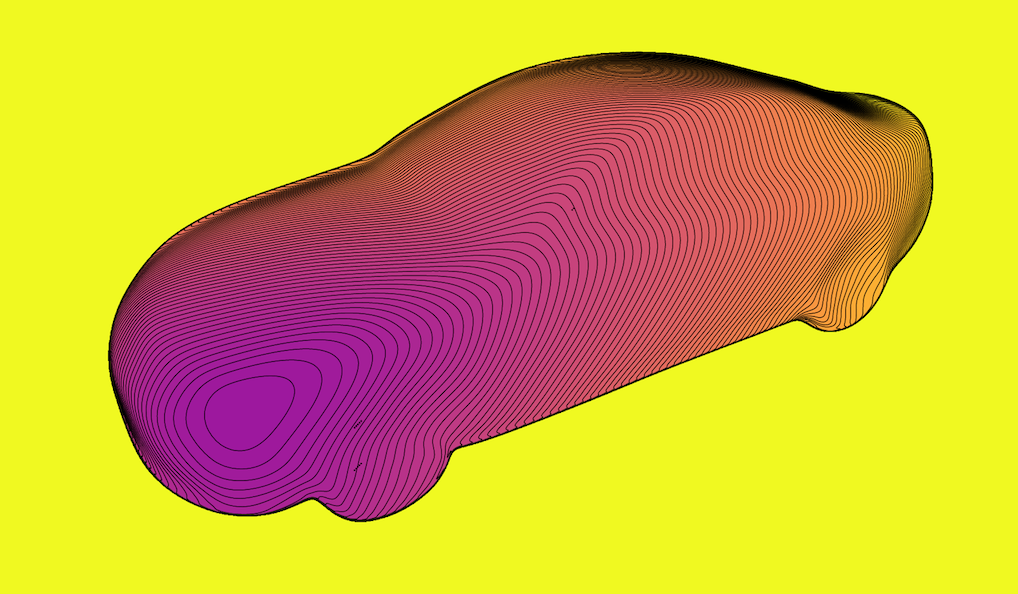}%
\hfill%
\includegraphics[width=0.245\linewidth, trim=10mm 10mm 0mm 100mm, clip]{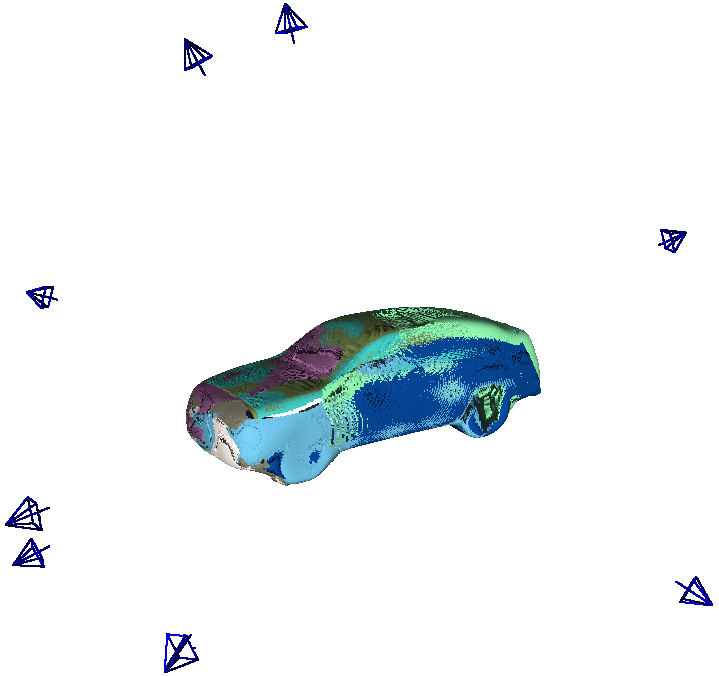}
\vspace{-1em}
\caption{SDDF shape interpolation between two instances.  The left-most and right-most column in the first row show the SDDF output from the same view for two different airplane instances from the training set. The three columns in the middle are generated by using a weighted average of the latent codes of the left-most and right-most instances as an input to the SDDF network. In each row, from left to right, the latent code weights with respect to the left-most instance are $1$, $0.75$, $0.5$, $0.25$, $0$, respectively. Note how the shapes transform smoothly from the left-most to the right-most instance with intermediate shapes looking like valid airplanes. The second row shows interpolation (in column $2$) between two learned car shapes (in columns $1$, $3$). The last column shows the point cloud reconstruction of the interpolated car instance from several different views.}
\label{fig:interpolation}
\vspace{-1em}
\end{figure*}

\subsection{Latent Space Learning and Shape Completion}
\label{sec:category_evaluation}

In this section, we explore the capability of our method to represent a whole category of object shapes.



{\bf \noindent Shape Completion: } First, we focus on recognizing the shape of an unseen instance based on a single distance image or point cloud observation.
Given a trained category auto-decoder $q_{\bftheta}$, we optimize the latent shape code $\bfz \in \mathbb{R}^{256}$ for the unseen instance using \eqref{eq:codefinder}. We train SDDF and IGR~\cite{gropp2020implicit} for $1000$ epochs, with $2000$ random samples, and with learning rate $0.0005, 0.0001$ for the network weights and latent code weights. The learning rate decreases by factor of $2$ every $500$ epochs for IGR (as suggested in the paper~\cite{gropp2020implicit}) and every $200$ epochs for SDDF. At test time, for each instance we take a distance image from one view and down sample it such that we have $1000$ finite rays and $1000$ infinite rays. We use this data for our method and the associated point cloud, augmented with normals, for IGR to optimize the latent code. 
The shape reconstruction accuracy is evaluated at $8$ views different from the one used to obtain first distance image. The SDDF model can directly generate point clouds for these query views, as shown in Fig.~\ref{fig:completion}. To obtain point cloud predictions from IGR, we used the Marching cubes algorithm~\cite{newman2006survey} to extract a mesh from the predicted SDF and used the same $8$ views to generate noiseless point clouds (IGR(1)). Additionally, we generated a uniform point cloud from the IGR mesh (IGR(2)) and compared it with ground truth point cloud. The results are presented in Table~\ref{tab:IGR}.


We also compare the SDDF reconstruction accuracy versus GRNet~\cite{xie2020grnet}, AtlasNet~\cite{groueix2018papier}, PCN~\cite{yuan2018pcn}, FoldingNet~\cite{yang2018foldingnet}, TopNet~\cite{tchapmi2019topnet}, and MSN~\cite{liu2020morphing} in Table~\ref{tab:grnet}. The reconstructed objects are not normalized to a unit-length bounding box in these experiments since the baseline methods did not do this. Our method learns a higher dimensional representation than these methods, so it needs more training data. The training data used by the baseline methods was insufficient to obtain reliable convex hull approximation results. To make the comparison as fair as possible, our method was trained on the same categories with the same train/test splits and was evaluated at view rays that collide with the points used for testing of the baseline methods.

{\bf \noindent Shape Interpolation: } Finally, we demonstrate that the SDDF model represents the latent shape space of an object category continuously and meaningfully. Fig.~\ref{fig:interpolation} presents results for linear interpolation between the latent shape codes of two object instances from the training set. 


\subsection{Limitations}
\label{sec:limitations}

Our method models the distance to an object from any location and orientation. This comes at a price of increased dimension compared to SDF models that only represent the object surface. Our result in Proposition~\ref{prop:SDDF-structure} reduces the SDDF input dimension from $5$ to $4$ for modeling 3D shapes. However, our method still requires more training data compared to SDF models to achieve multi-view consistency. We introduced a data augmentation technique to synthesize data from novel views and alleviate the data requirements. This requires a spherical convex hull computation, which increases the training time, but we introduced a reasonable approximation method using discretization. Our method currently does not utilize additional geometric information such as normals, which may improve the performance. It also cannot currently be trained from RGB images only due to its reliance on distance data.

\section{Conclusion}
This work proposed a signed directional distance function as an implicit representation of object shape. Any valid SDDF was shown to satisfy a gradient condition, which should be respected by neural network approximations. We designed an auto-decoder model that guarantees the gradient condition by construction and can be trained efficiently without 3D supervision using distance measurements from depth camera or Lidar sensors. The SDDF model offers a promising approach for scene modeling in applications requiring efficient visibility or collision checking. Future work will focus on extending the SDDF model to capture texture, color, and lighting and represent complete scenes.

\clearpage

{\small
\bibliographystyle{ieee_fullname}
\bibliography{egbib}
}

\clearpage

\section{Supplementary Material}
\label{sec:supplementary}
\subsection{Network Architecture and Training Details}

We present additional details about the network architecture for $q_{\bftheta}(\bfp,\bfeta,\bfz)$ and the training procedure. The results in Sec.~5 are generated with an $16$ layer fully connected network with 512 hidden units per layer and a skip connection from the input to layers $4$, $8$, $12$ (every $4$ layers). We use a soft-plus activation function $\frac{1}{\beta}\ln(1+\exp(\beta x))$ with $\beta=100$. The inputs are positions $\bfp \in \mathbb{R}^3$, view direction $\bfeta \in S^2$, and latent code $\bfz \in \mathbb{R}^{256}$. The third component of $\bfeta$ ($c$ in Lemma~3) may be very close to $-1$. To avoid numerical problems, we let $\bfeta = [\sin(\theta)\cos(\phi), \sin(\theta)\sin(\phi), \cos(\theta)]^\top$. Using $s_\theta := \sin(\theta)$, $c_\theta := \cos(\theta)$, $s_{\phi}:= \sin(\phi)$, $c_{\phi}:=\cos(\phi)$, and applying the fact that $\frac{{s_\theta}^2}{1+c_\theta} = \frac{1-{c_\theta}^2}{1+c_\theta} = 1-c_\theta$, the rotation matrix $\bfR_{\bfeta}$ in Lemma~3 used to map $\bfeta$ to the standard basis vector $\bfe_3$ becomes:
%
\begin{equation*}
\bfR_{\bfeta} = \begin{bmatrix}
1 - (1-c_\theta)c_\phi^2 & -(1-c_\theta)s_\phi c_\phi & -s_\theta c_\phi\\
-(1-c_\theta)s_\phi c_\phi & 1 -(1-c_\theta)s_\phi^2 & -s_\theta s_\phi\\
s_\theta c_\phi & s_\theta s_\phi & c_\theta
\end{bmatrix}.
\end{equation*}
Using the dimension reduction in Proposition~1, the  final network inputs are $\bfP \bfR_{\bfeta} \bfp \in \mathbb{R}^2$, $\bfeta \in S^2$, and $\bfz \in \mathbb{R}^{256}$.

%
%

%

All experiments are done on a single GTX $1080$ Ti GPU with the PyTorch deep learning framework \cite{paszke2017automatic} and the ADAM optimizer \cite{kingma2014adam}. For single shape estimation, the network is trained with initial learning rate of $0.005$, decreasing by a factor of $2$ every $1000$ steps for $10k$ iterations. In each iteration, we pick a batch of $100k$ samples randomly from the synthesized training data and use $\alpha = 1$, $\beta = 0.5$, $\gamma = 0$, $p = 1$, $r(x) = \max\{0,x\}$ in the error function in (10). For category-level shape estimation, the network is trained with initial learning rate of $0.0005$ for the network parameters $\bftheta$ and $0.0001$ for latent code $\bfz$, both decreasing by a factor of $2$ every $200$ steps for $1k$ iterations. In each iteration, for each object we pick a batch of $2k$ samples randomly from the union of the synthesized and original samples. We use $\alpha = 1$, $\beta = 1$, $p = 1$, $r(x) = \max\{0,x\}$, $\sigma = 0.001$ and Euclidean norm regularization $\|\bfz\|_2^2$ for the latent shape code in the error function (12).

To train the IGR network \cite{gropp2020implicit}, for each object we picked a batch of $2k$ samples randomly from the original point cloud and augmented them with normals. Note that for our method we pick a total of $2k$ samples for each object, including the original data and synthesized data as well as finite rays and infinite rays. 
We use the default training parameters for IGR as provided in the open-source implementation \cite{gropp2020implicit}. The only parameter we adjusted was the initial learning rate because there was a discrepancy between the open source code and the IGR paper. We chose the setting that provided better results, namely initial learning rate of $0.0005$ for network parameters $\bftheta$ and $0.0001$ for latent code $\bfz$, both decreasing by a factor of $2$ every $500$ steps for $1k$ iterations.

We use the default training parameters for the DGP network \cite{williams2019deep}, except increasing the upsamples-per-patch parameter from $8$ to $20$ and adjusting the radius parameter to guarantee at least 128 patches for each object. 

To produce training data for the category-level experiments, we normalize each instance to a unit box and generate $8$ distance images with resolution $512 \times 512$ using PyRenderer~\cite{pyrender}. Each distance image is subsampled to have at most $12500$ infinite rays and $12500$ finite rays. Hence, there are at most $100k$ finite and $100k$ infinite rays for each object. To accelerate the data augmentation procedure, we further subsample the point cloud produced by the finite rays in each view from $12500$ to $1250$ points, when computing the convex hull approximation in Sec. 4.5. This provides a subsampled point cloud across all views with at most $10k$ points, which was further subsampled to $2k$ points using the diversipy python package \cite{salomon2013psa,hardin2004discretizing}. To generate synthetic data, we choose $1k$ random azimuth and elevation views on a sphere around the point cloud. Infinite rays are obtained by projecting the original point cloud (with $100k$ points) to an image plane with resolution $128 \times 128$ for each imaginary camera and selecting the unoccupied pixel directions. Finite rays are generated using the procedure described in Sec. 4.5. 



Additional qualitative results for shape completion are presented in Fig.~\ref{fig:supp_comp} and for shape interpolation in Fig.~\ref{fig:sofainterpolation}, Fig.~\ref{fig:airinterpolation}, Fig.~\ref{fig:carinterpolation}, Fig.~\ref{fig:watercraft_display_interpolation}.




\begin{figure*}[h!]
    \centering
\begin{minipage}{\linewidth}
  \centering
\includegraphics[width=0.25\linewidth, trim={0mm 30mm 0mm 40mm}, clip]{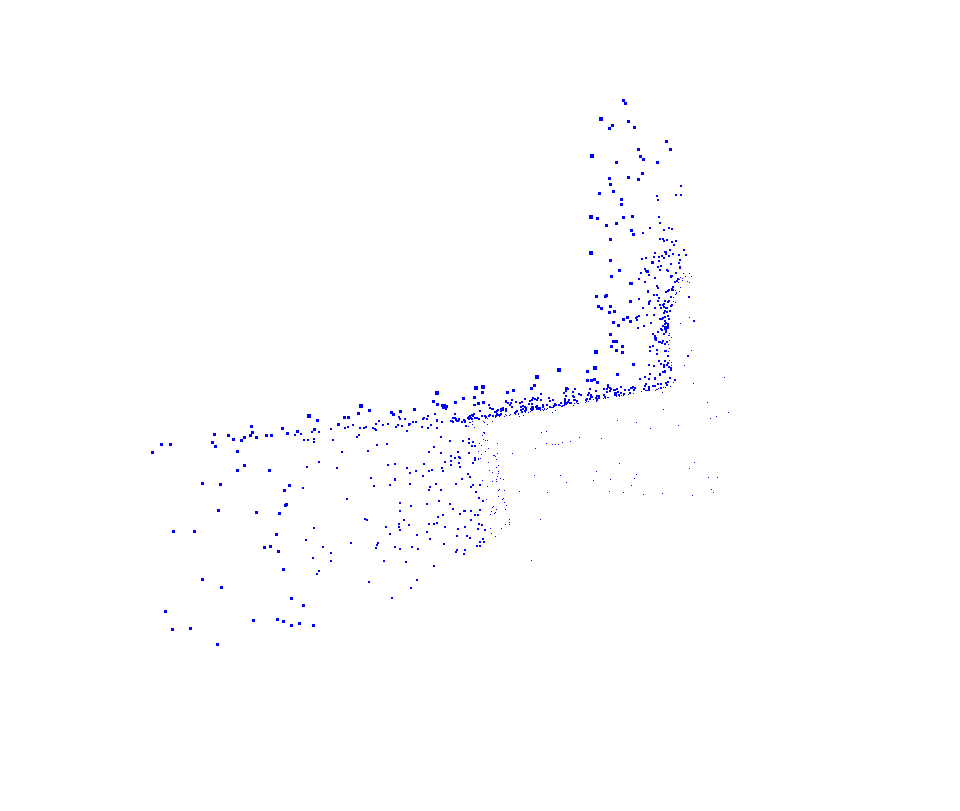}
\includegraphics[width=0.25\linewidth, trim={0mm 30mm 0mm 40mm}, clip]{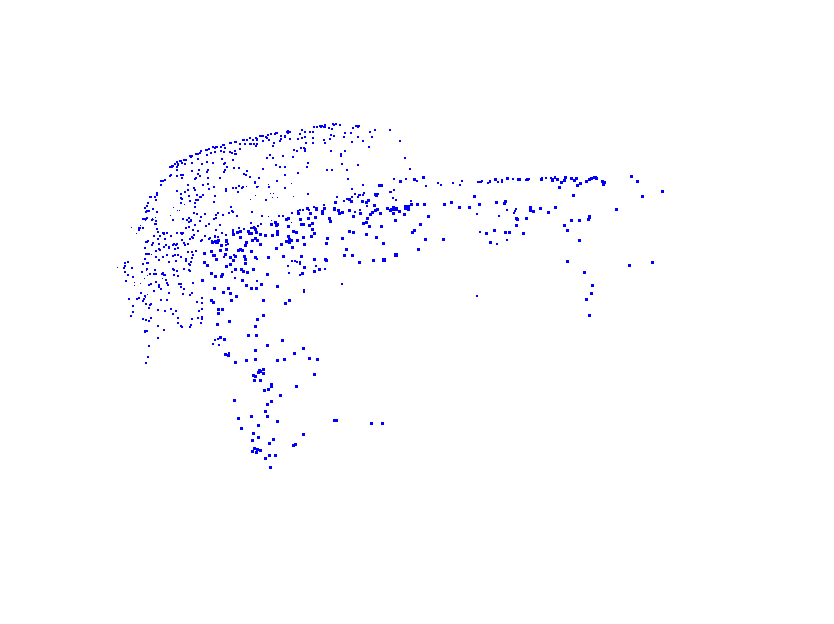}
\includegraphics[width=0.25\linewidth, trim={0mm 30mm 0mm 40mm}, clip]{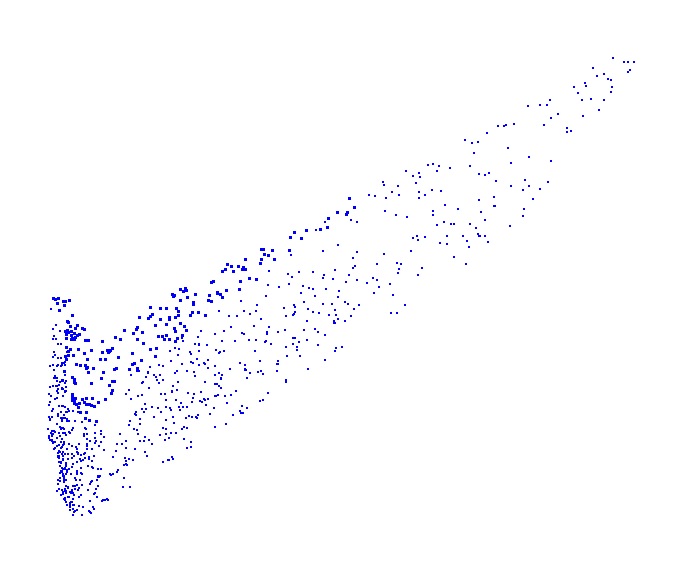}
\includegraphics[width=0.23\linewidth, trim={0mm 10mm 0mm 20mm}, clip]{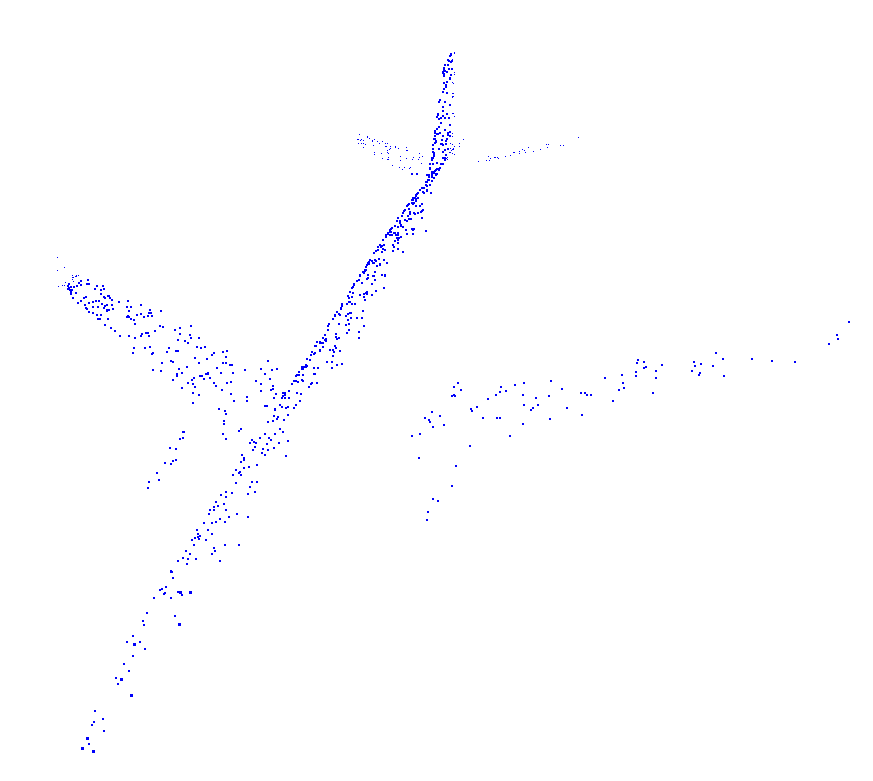}
\includegraphics[width=0.255\linewidth, trim={40mm 30mm 20mm 40mm}, clip]{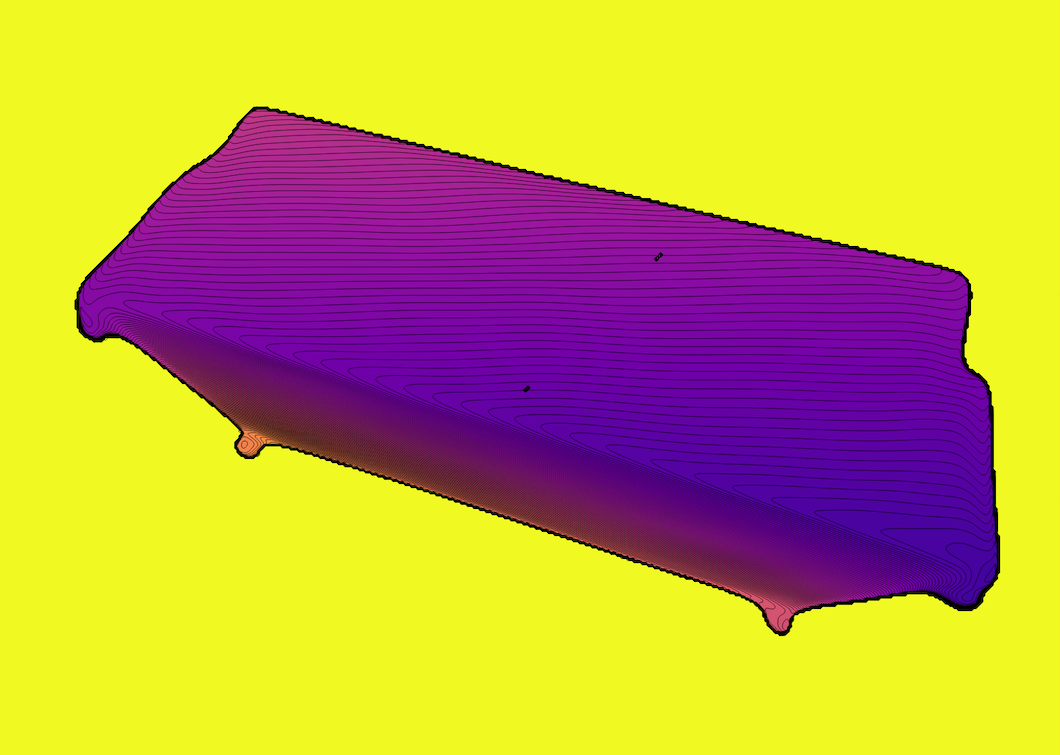}
\includegraphics[width=0.255\linewidth, trim={0mm 30mm 0mm 67mm}, clip]{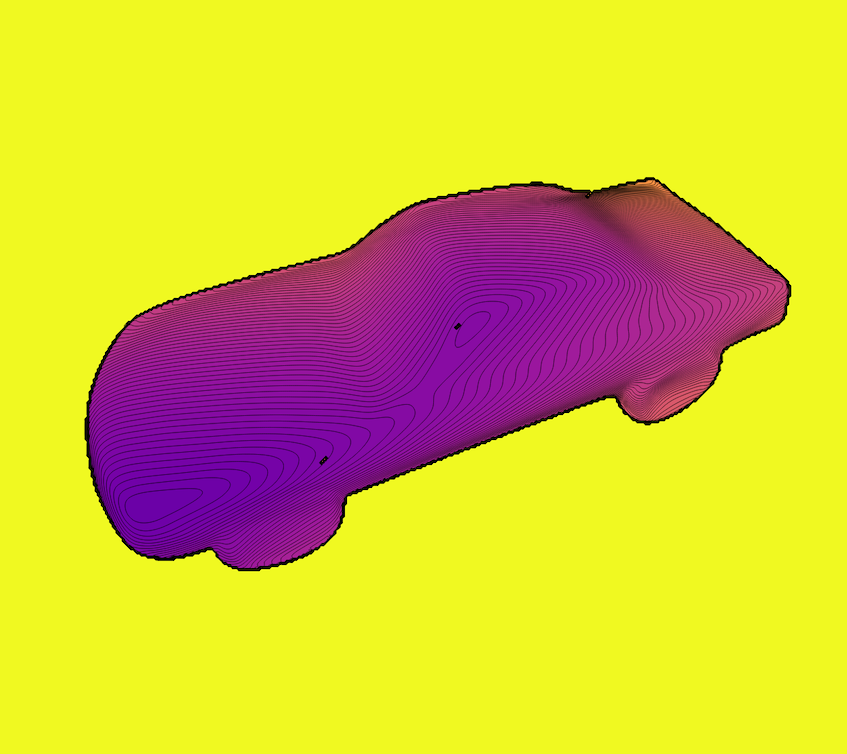}
\includegraphics[width=0.255\linewidth, trim={40mm 80mm 60mm 50mm}, clip]{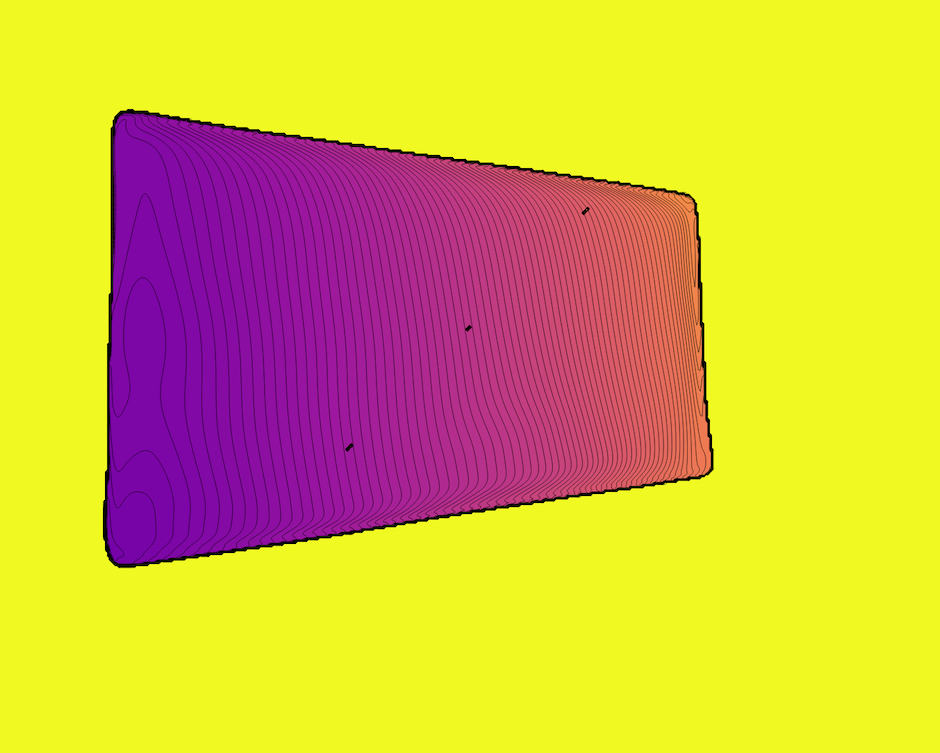}
\includegraphics[width=0.22\linewidth, trim={40mm 10mm 20mm 40mm}, clip]{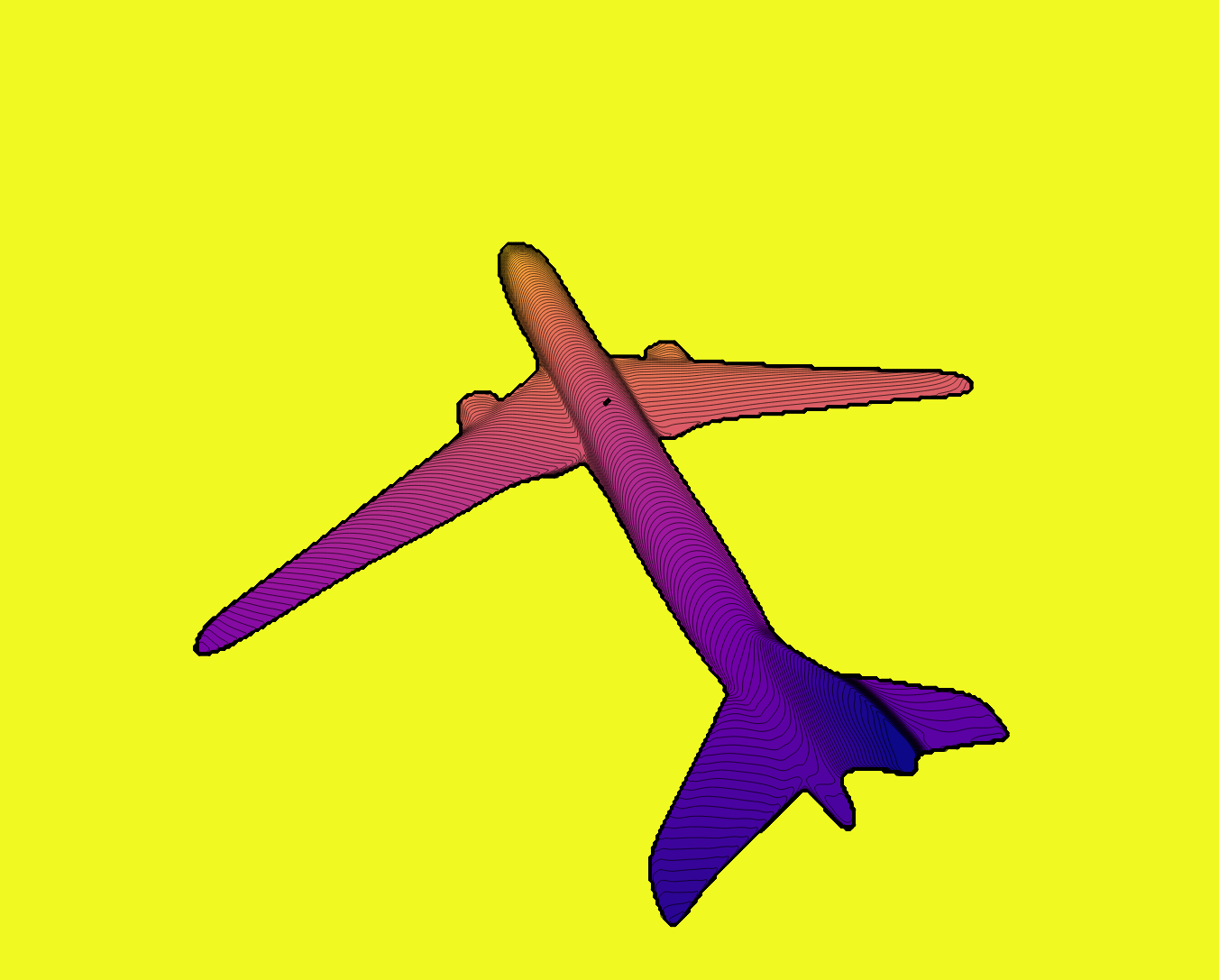}
\includegraphics[width=0.255\linewidth, trim={40mm 30mm 20mm 40mm}, clip]{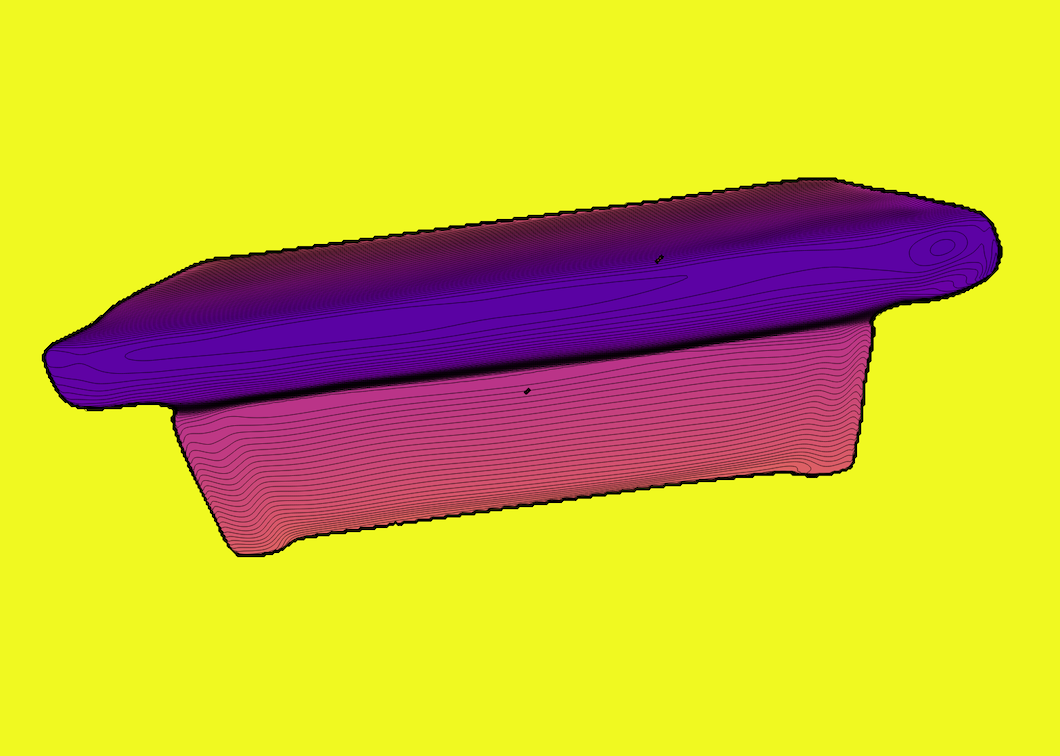}
\includegraphics[width=0.255\linewidth, trim={0mm 30mm 0mm 67mm}, clip]{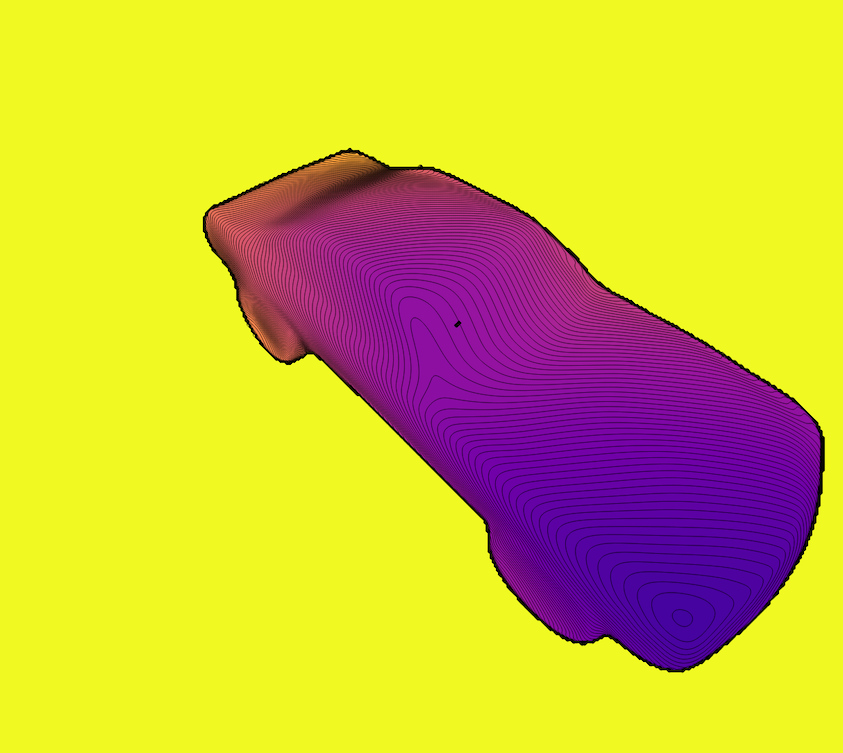}
\includegraphics[width=0.255\linewidth, trim={40mm 80mm 60mm 50mm}, clip]{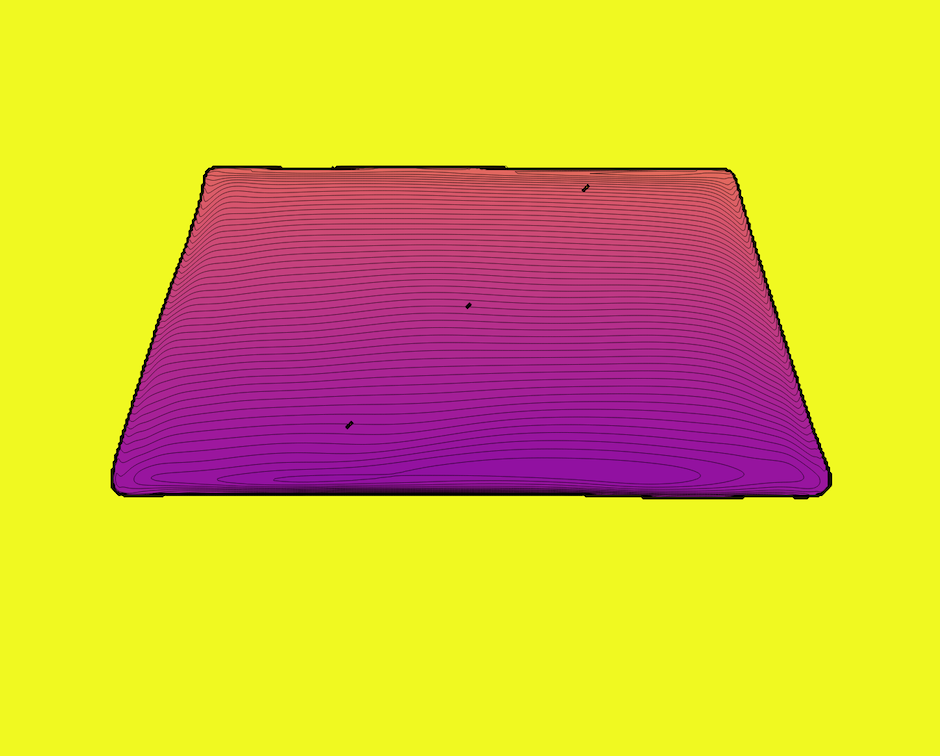}
\includegraphics[width=0.22\linewidth, trim={40mm 20mm 20mm 30mm}, clip]{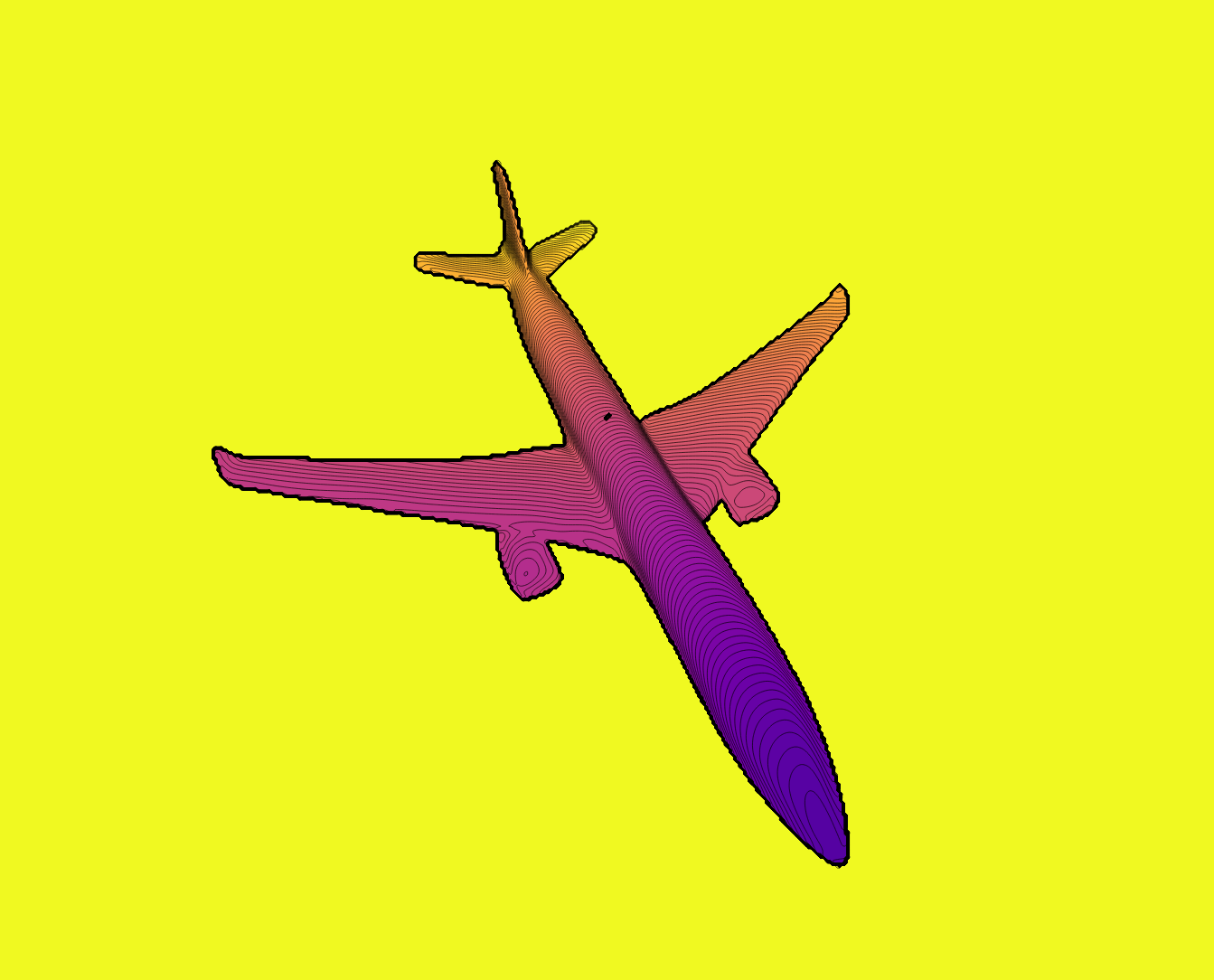}
\includegraphics[width=0.255\linewidth, trim={40mm 30mm 20mm 40mm}, clip]{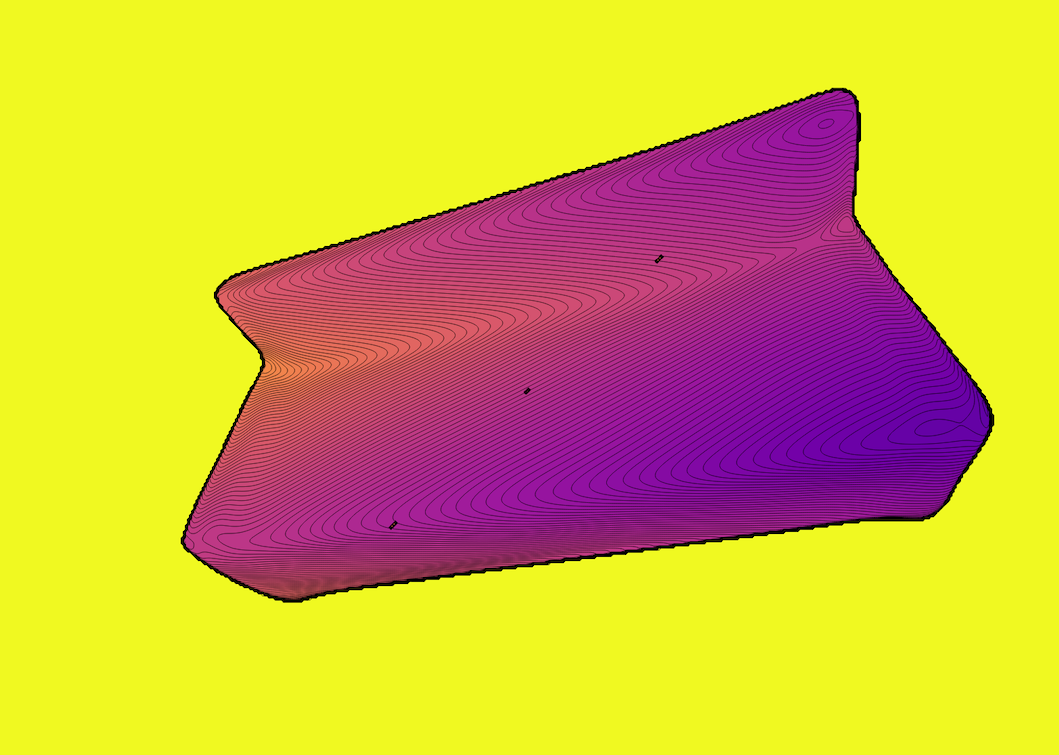}
\includegraphics[width=0.255\linewidth, trim={0mm 30mm 0mm 67mm}, clip]{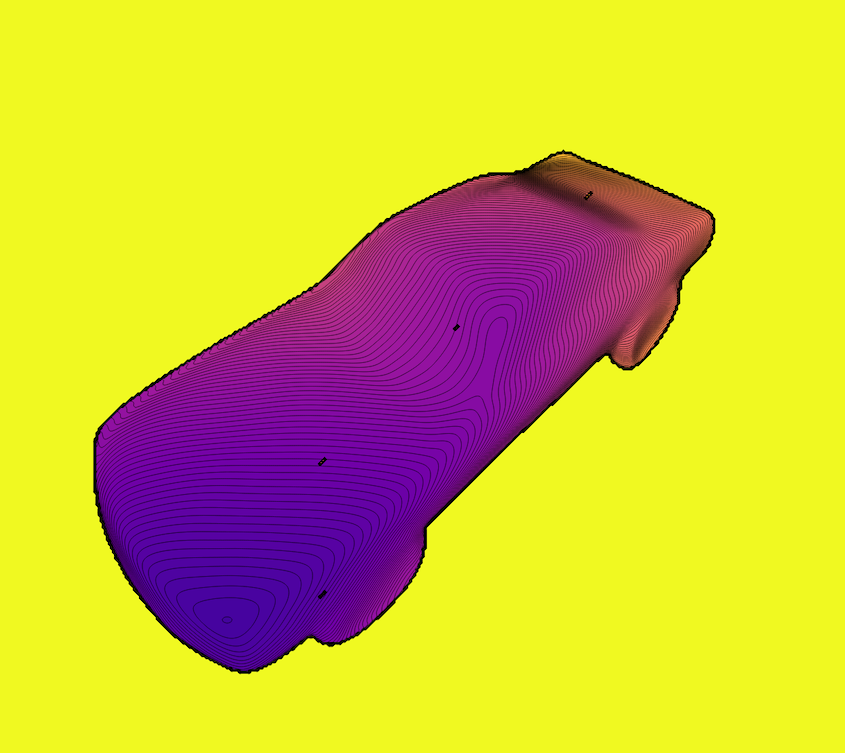}
\includegraphics[width=0.255\linewidth, trim={40mm 80mm 60mm 50mm}, clip]{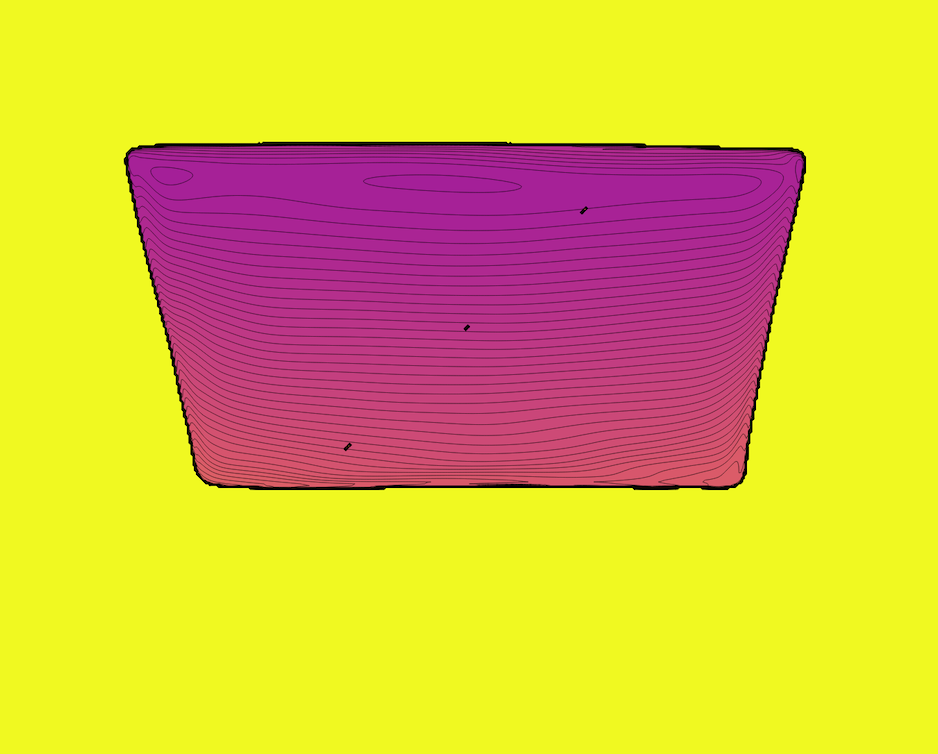}
\includegraphics[width=0.22\linewidth, trim={40mm 10mm 20mm 40mm}, clip]{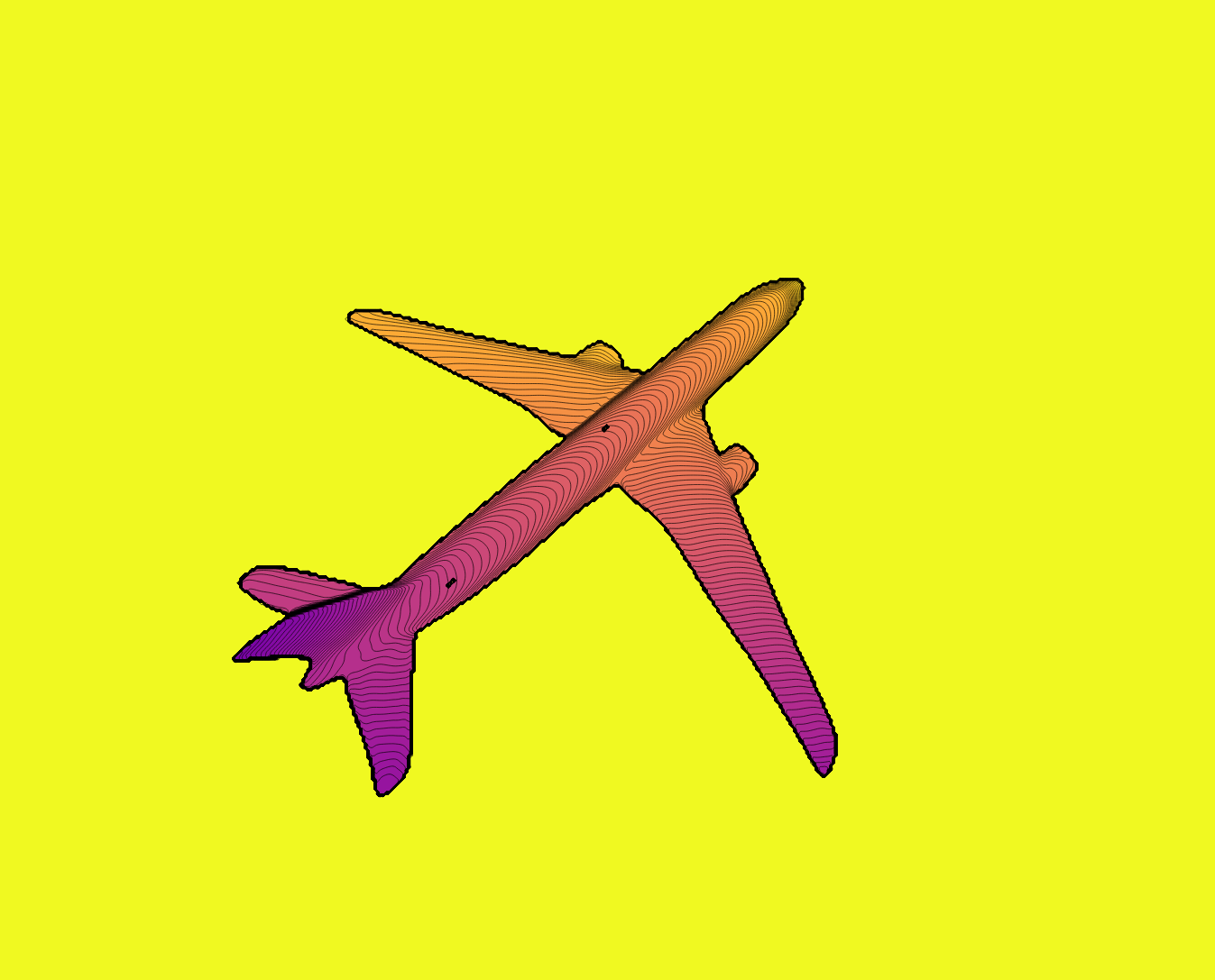}
\includegraphics[width=0.255\linewidth, trim={40mm 30mm 20mm 40mm}, clip]{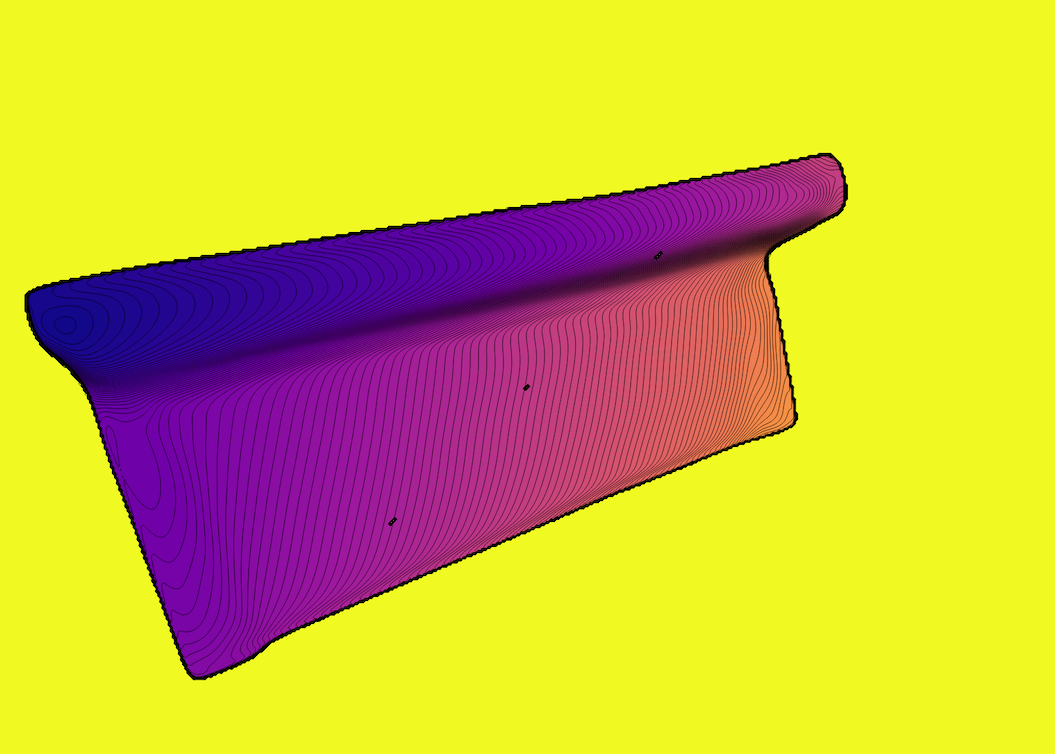}
\includegraphics[width=0.255\linewidth, trim={0mm 30mm 0mm 67mm}, clip]{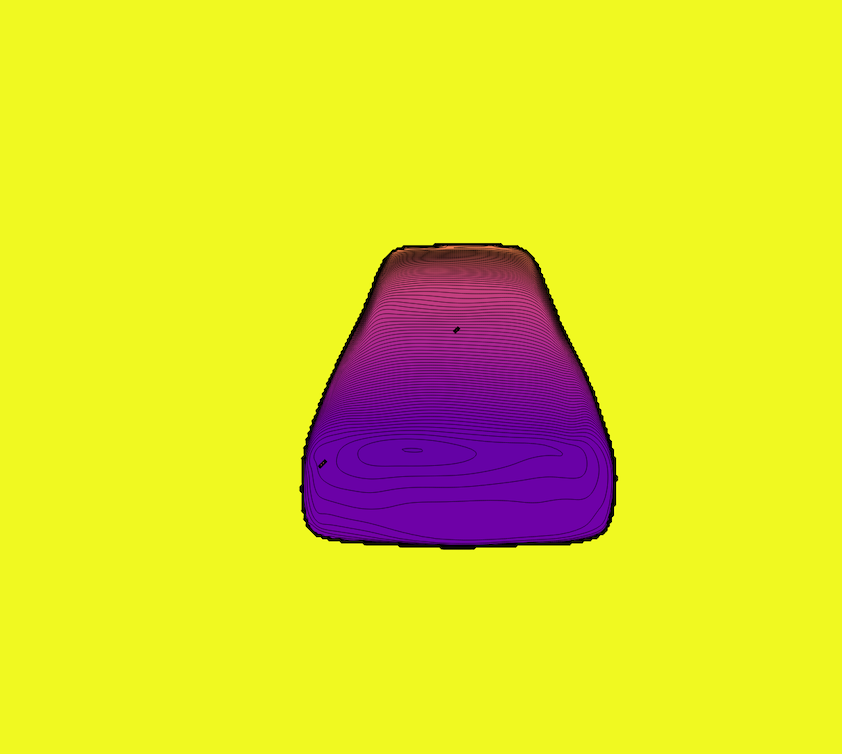}
\includegraphics[width=0.255\linewidth, trim={40mm 80mm 60mm 50mm}, clip]{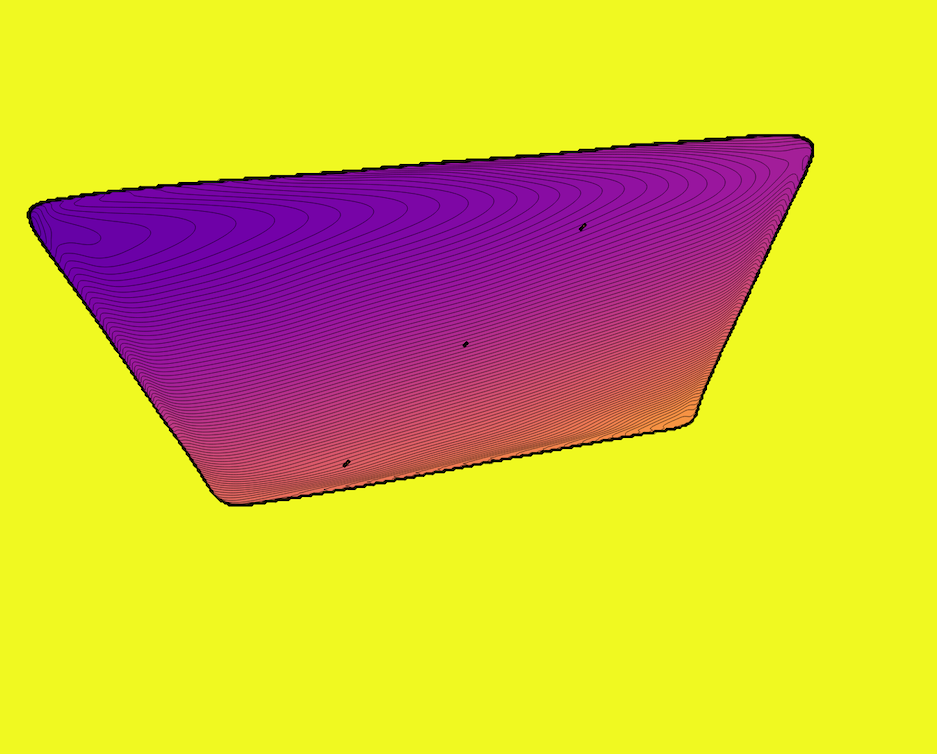}
\includegraphics[width=0.22\linewidth, trim={40mm 20mm 20mm 20mm}, clip]{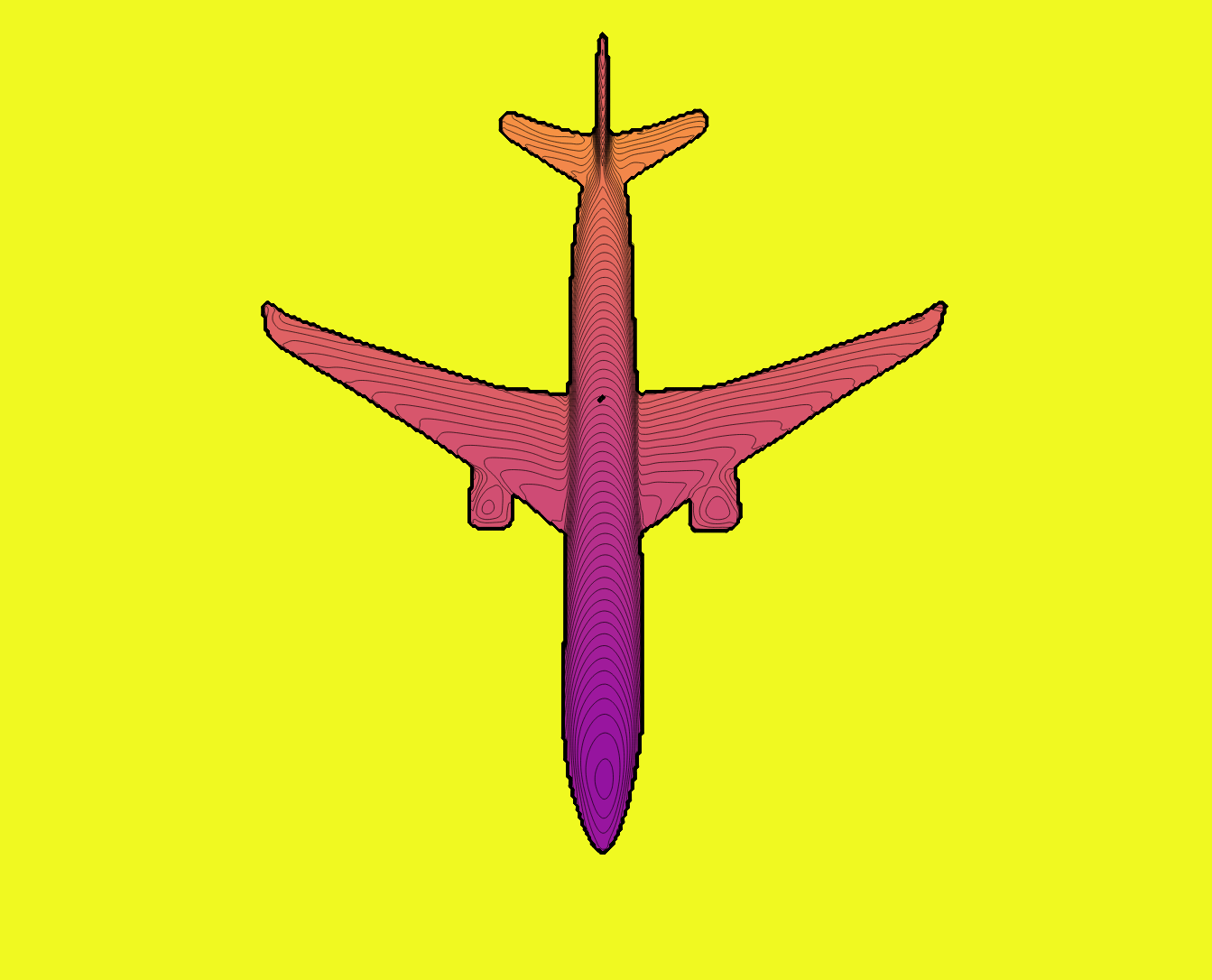}
\includegraphics[width=0.24\linewidth, trim={60mm 30mm 60mm 70mm}, clip]{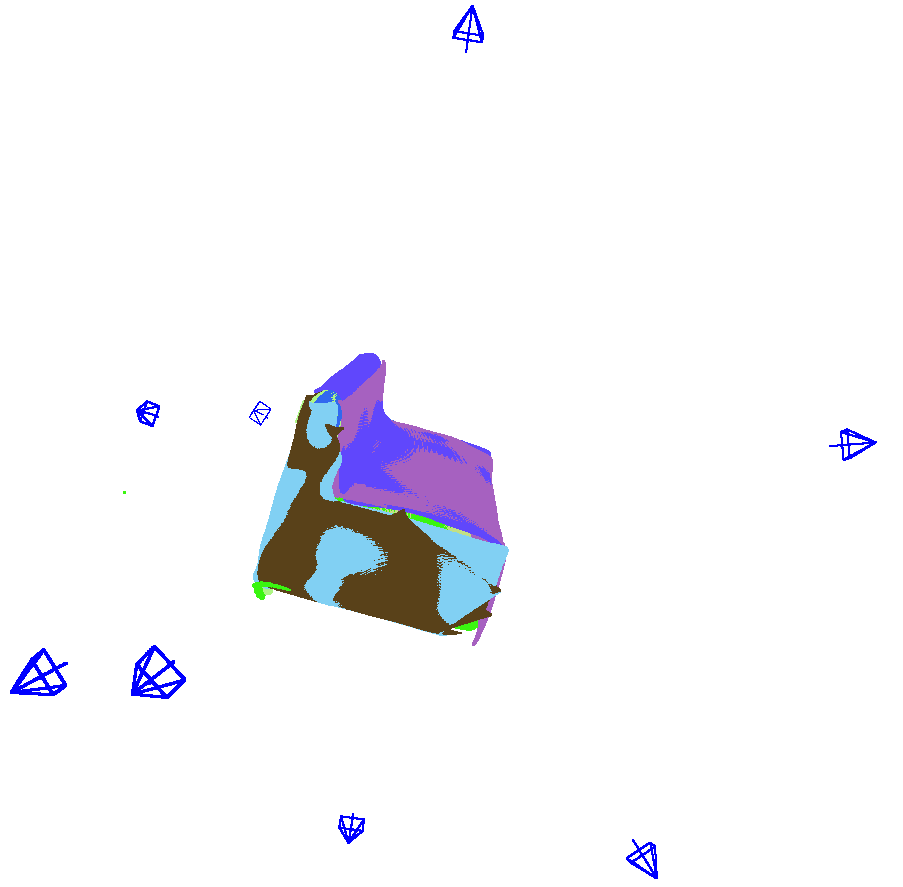}
\includegraphics[width=0.24\linewidth, trim={80mm 30mm 60mm 70mm}, clip]{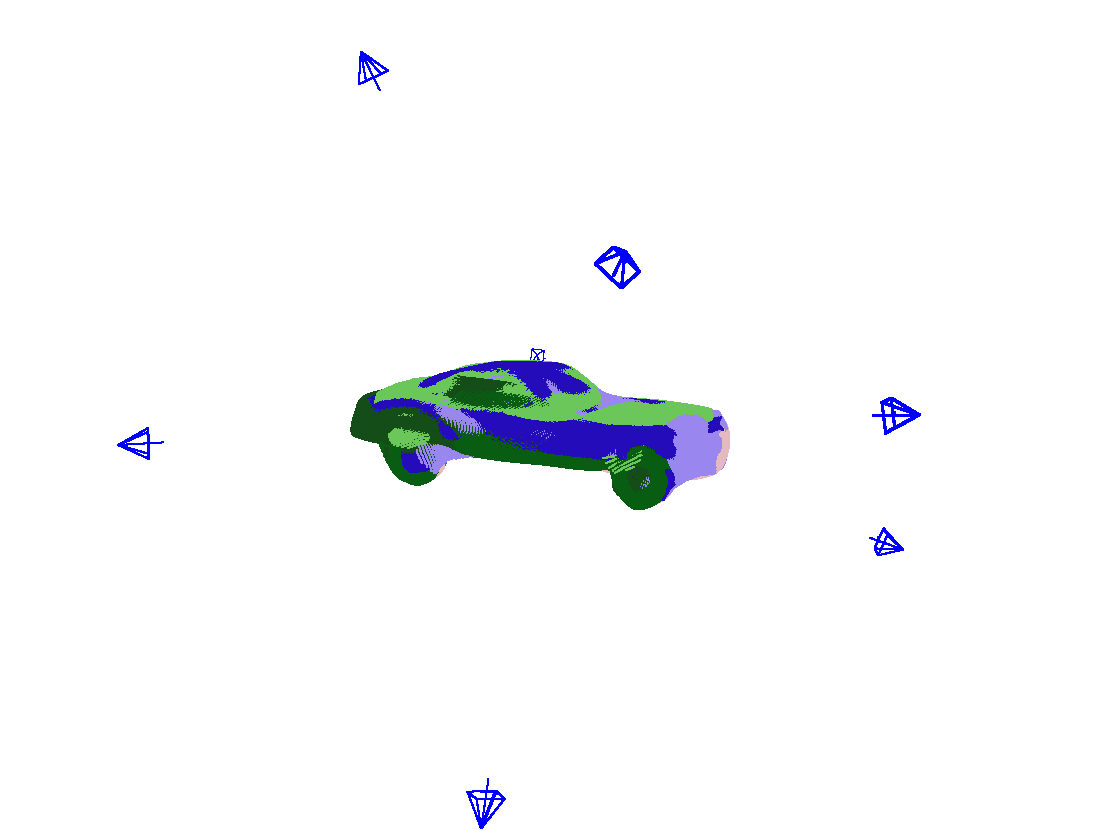}
\includegraphics[width=0.24\linewidth, trim={40mm 30mm 40mm 70mm}, clip]{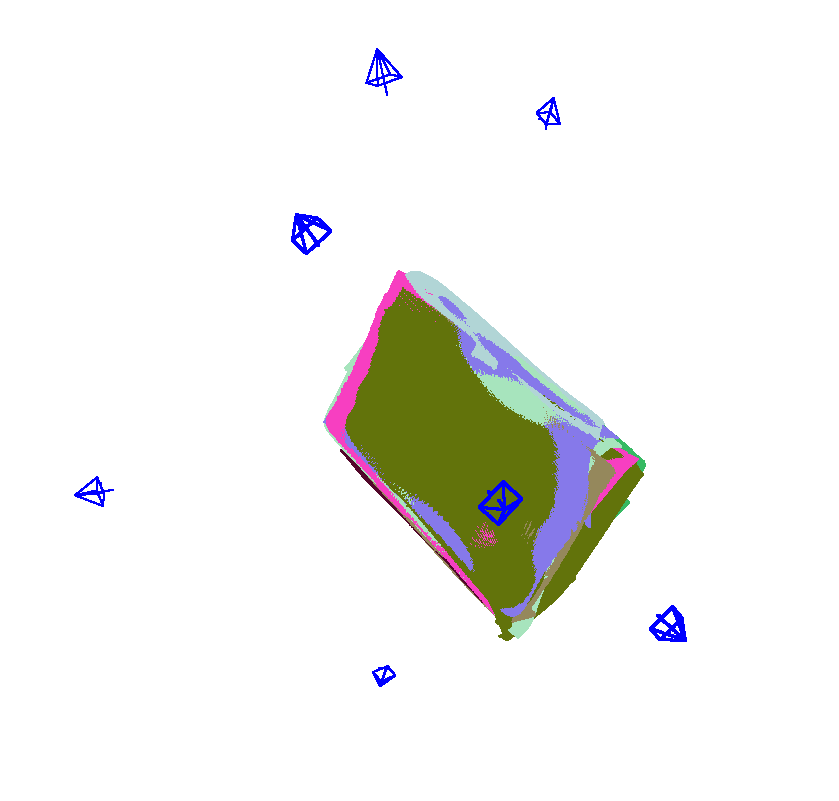}
\includegraphics[width=0.24\linewidth, trim={40mm 10mm 40mm 70mm}, clip]{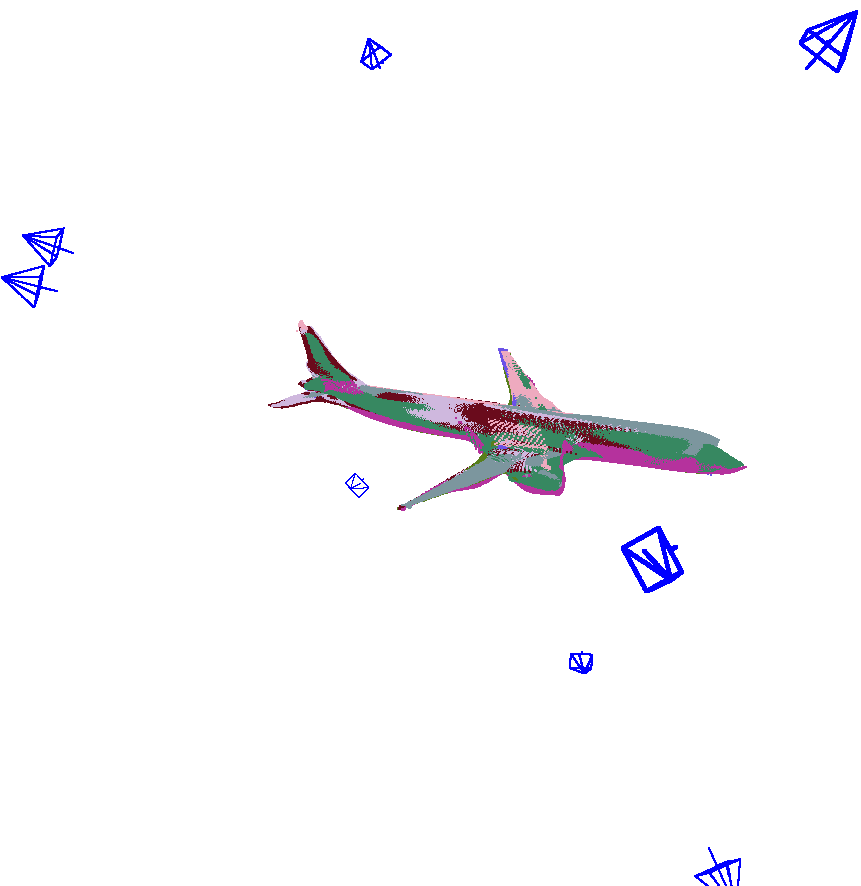}
%
%
%
\end{minipage}%
\caption{SDDF shape completion using $1k$ finite and $1k$ infinite rays from a single distance view and the corresponding point cloud (upper row) from an unseen object instance. After latent code optimization, the SDDF model can synthesize novel distance views (the four middle rows), and novel point clouds from arbitrary views (last row).}
\label{fig:supp_comp}
\end{figure*}
\begin{figure*}[h!]
    \centering
\begin{minipage}{\linewidth}
  \centering
\includegraphics[width=0.24\linewidth, trim={60mm 20mm 20mm 20mm}, clip]{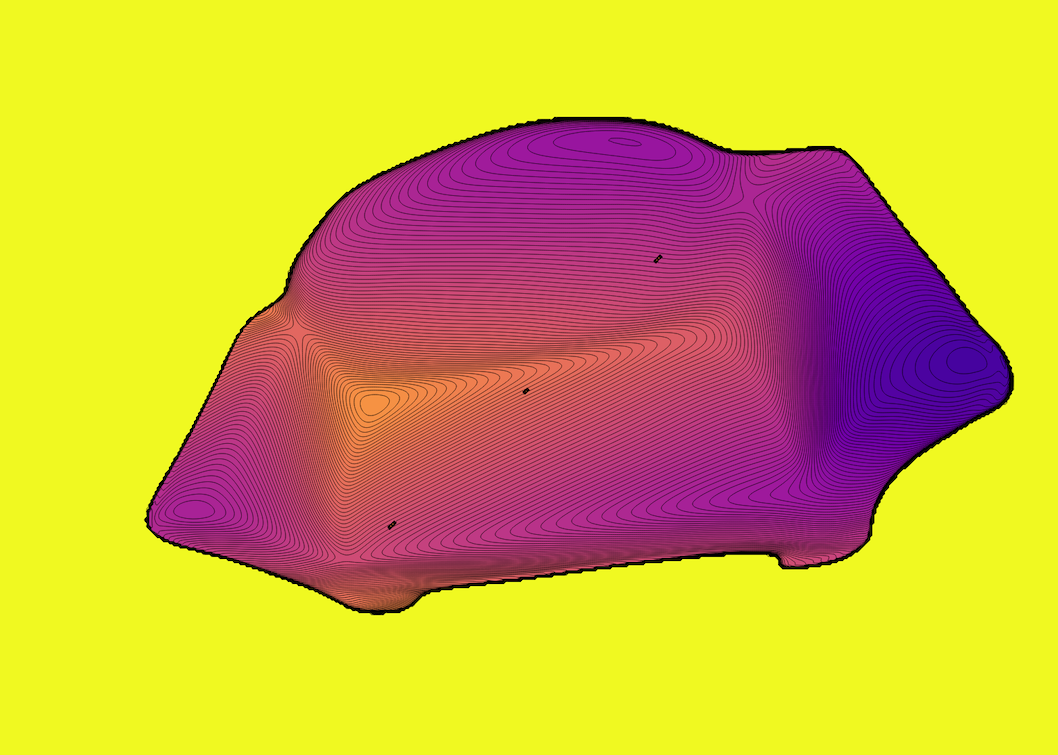}
\includegraphics[width=0.24\linewidth, trim={60mm 20mm 20mm 20mm}, clip]{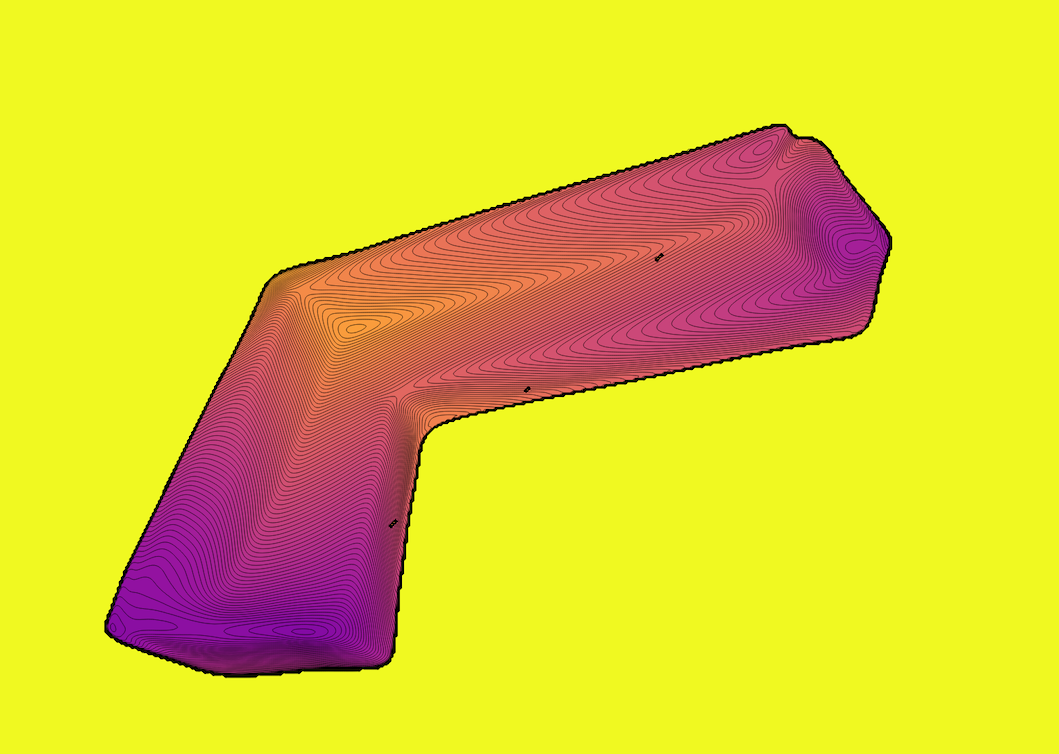}
\includegraphics[width=0.24\linewidth, trim={60mm 20mm 20mm 20mm}, clip]{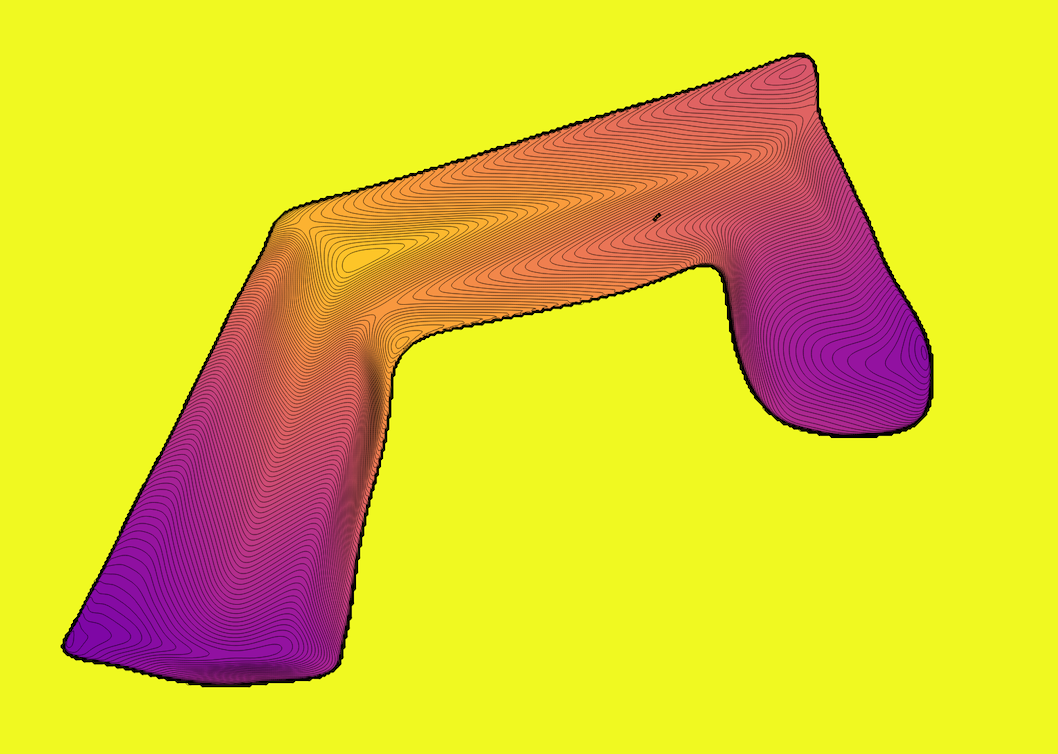}
\includegraphics[width=0.24\linewidth, trim={60mm 20mm 20mm 20mm}, clip]{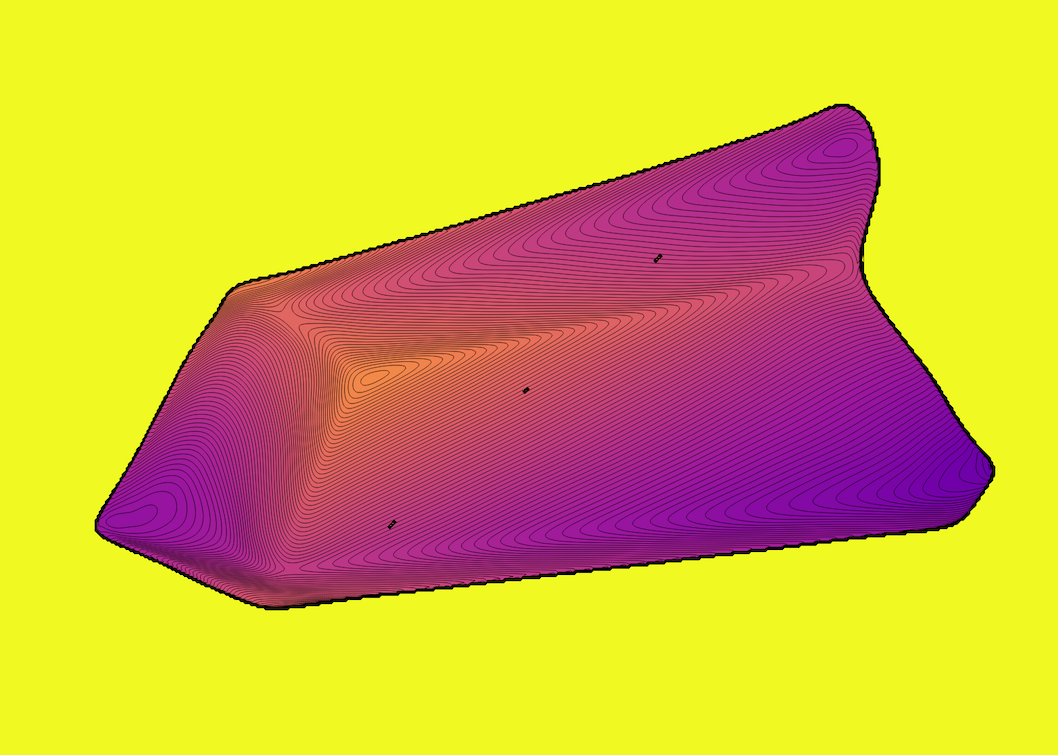}
\includegraphics[width=0.24\linewidth, trim={60mm 20mm 20mm 20mm}, clip]{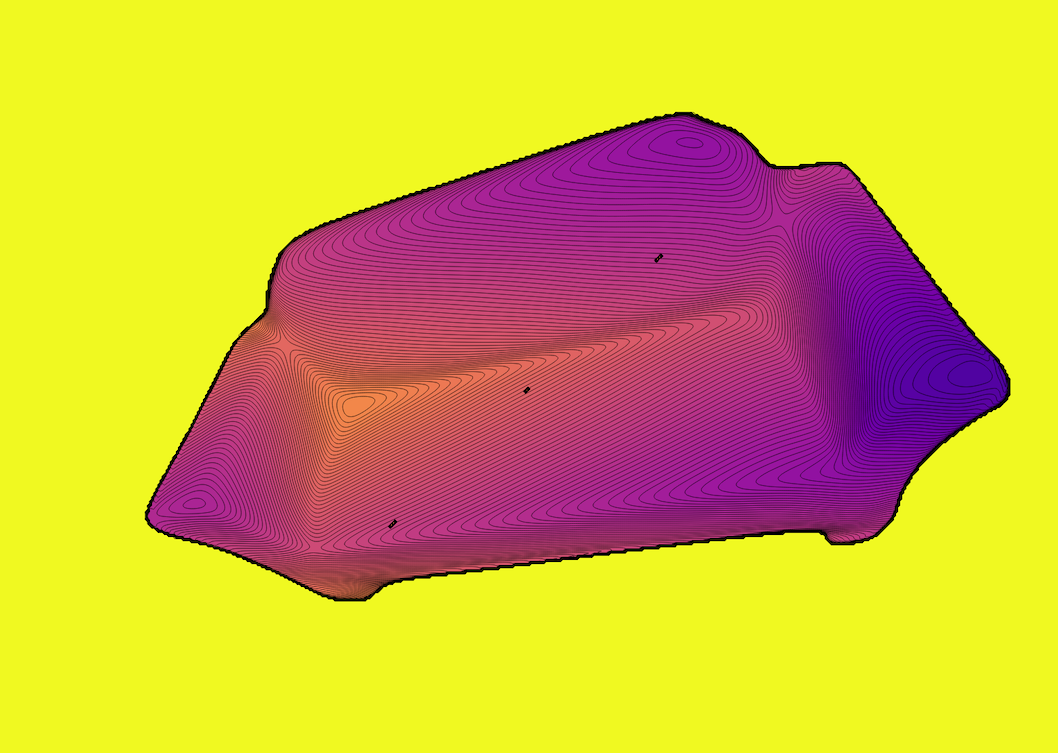}
\includegraphics[width=0.24\linewidth, trim={60mm 20mm 20mm 20mm}, clip]{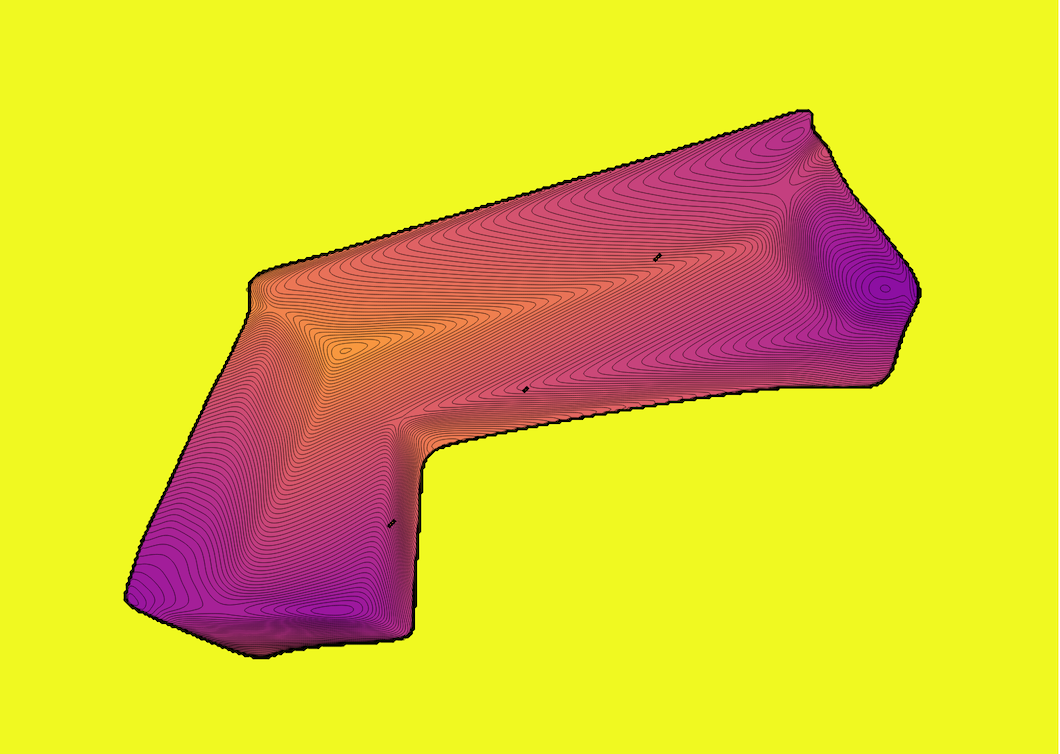}
\includegraphics[width=0.24\linewidth, trim={60mm 20mm 20mm 20mm}, clip]{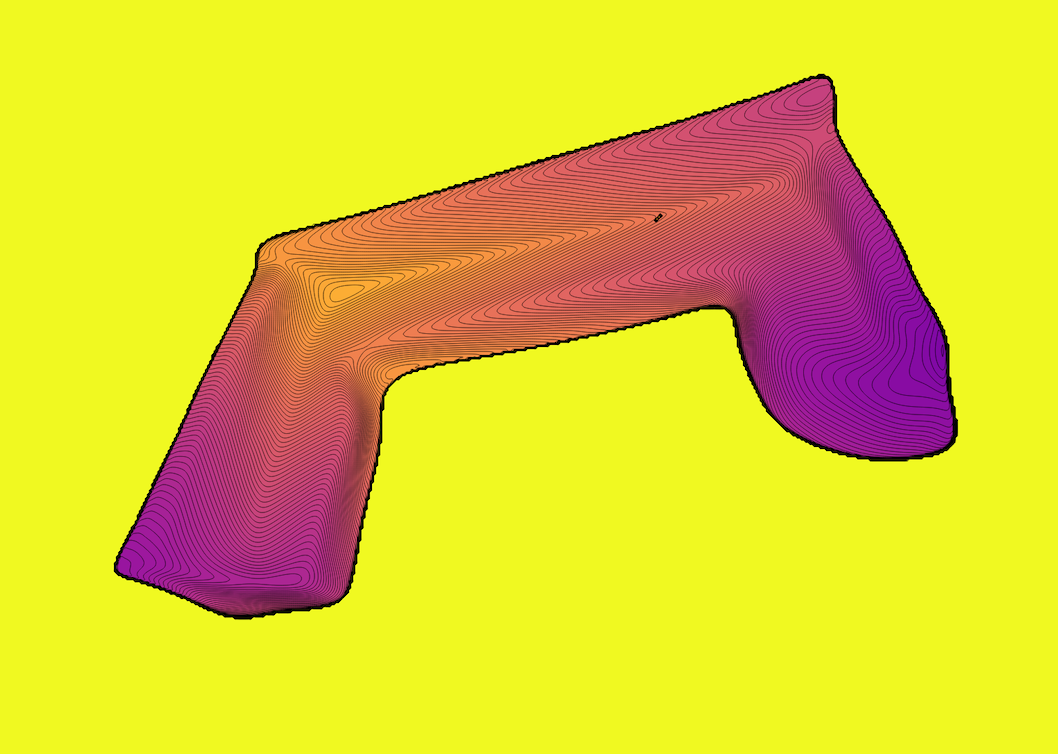}
\includegraphics[width=0.24\linewidth, trim={60mm 20mm 20mm 20mm}, clip]{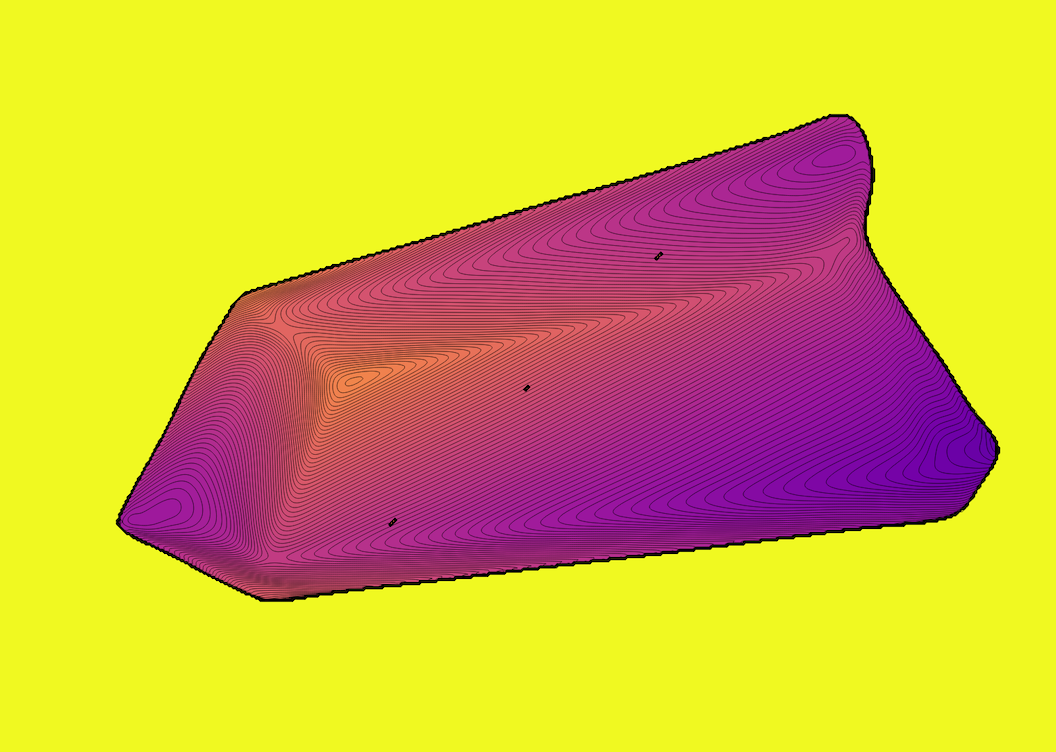}
\includegraphics[width=0.24\linewidth, trim={60mm 20mm 20mm 20mm}, clip]{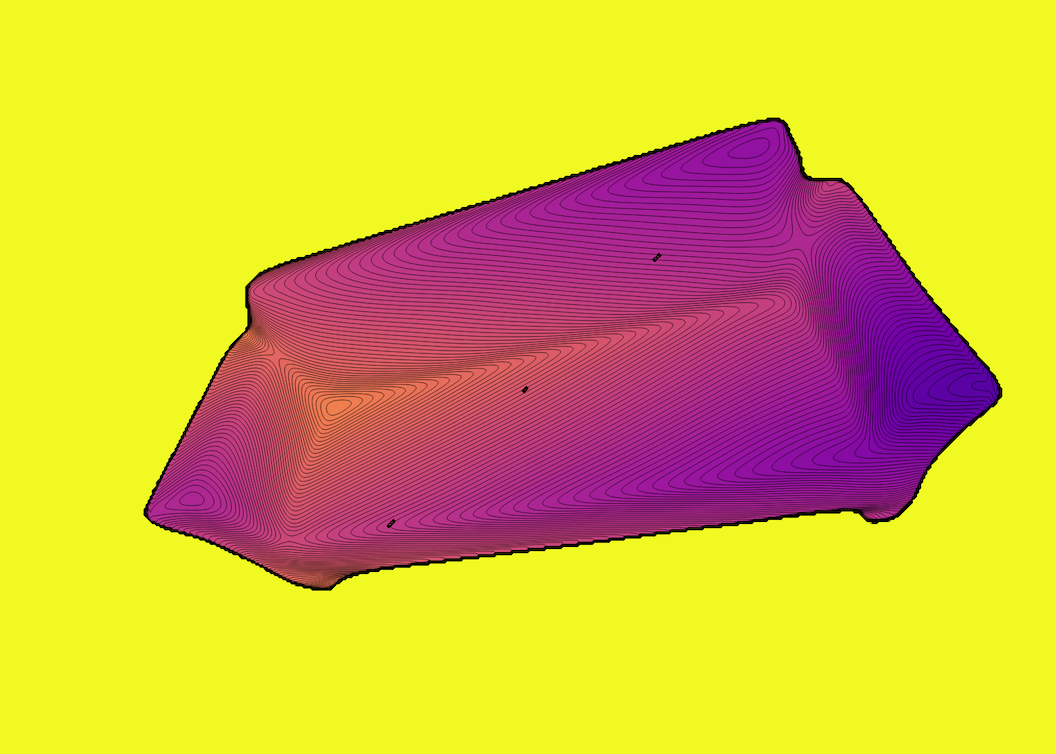}
\includegraphics[width=0.24\linewidth, trim={60mm 20mm 20mm 20mm}, clip]{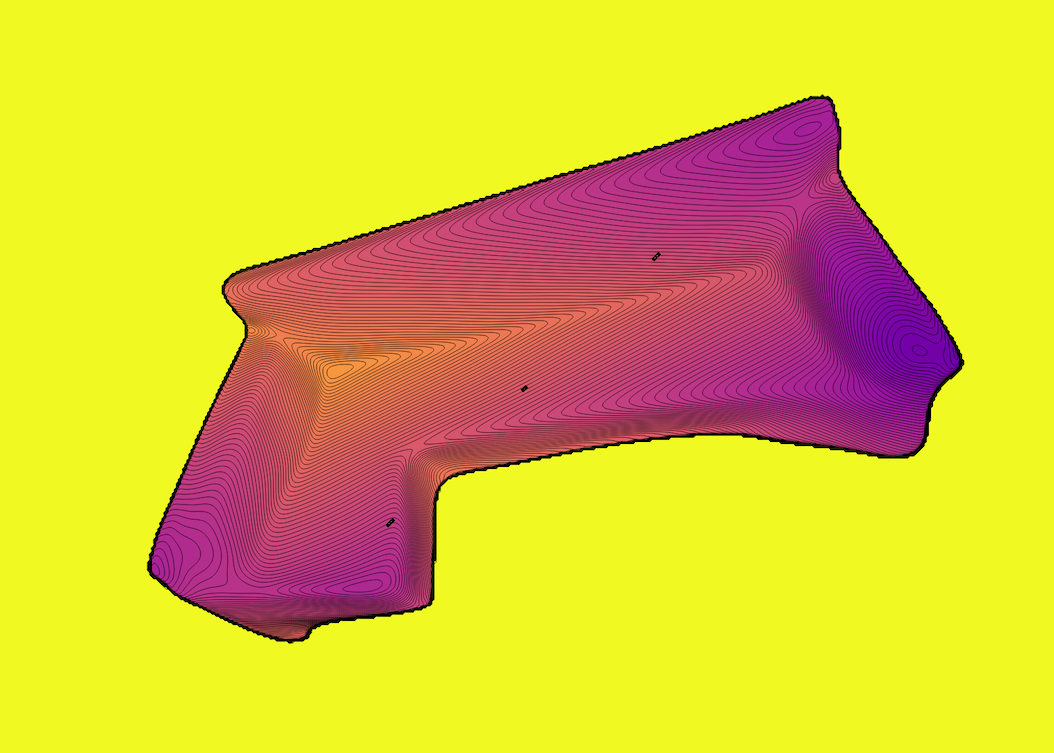}
\includegraphics[width=0.24\linewidth, trim={60mm 20mm 20mm 20mm}, clip]{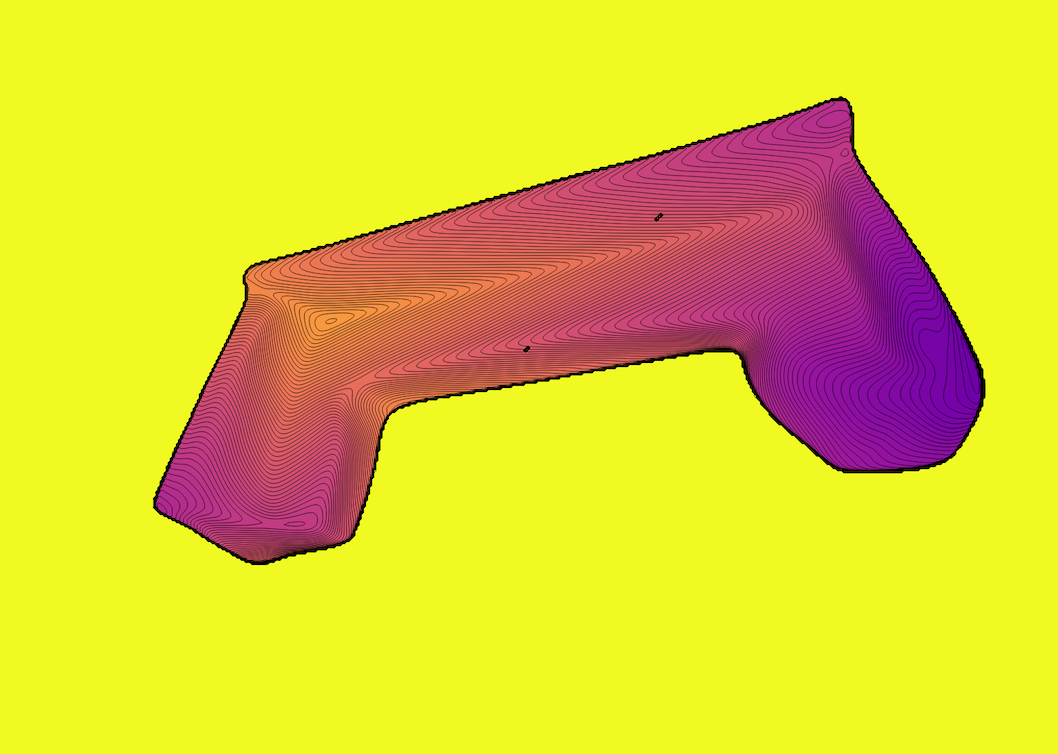}
\includegraphics[width=0.24\linewidth, trim={60mm 20mm 20mm 20mm}, clip]{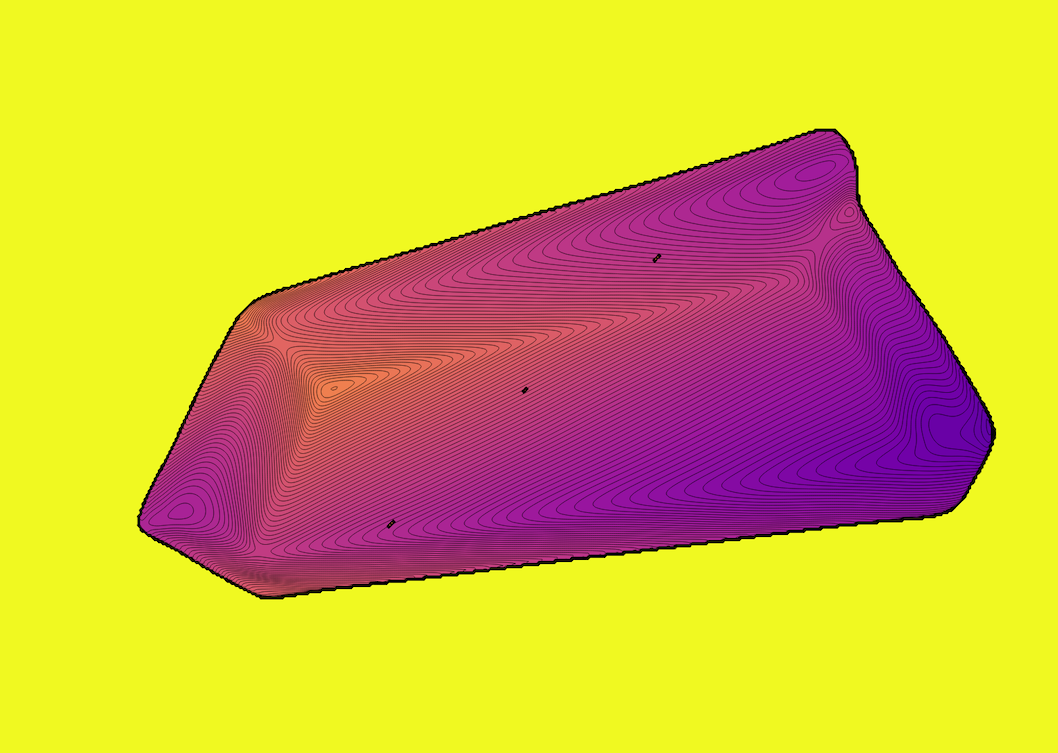}
\includegraphics[width=0.24\linewidth, trim={60mm 20mm 20mm 20mm}, clip]{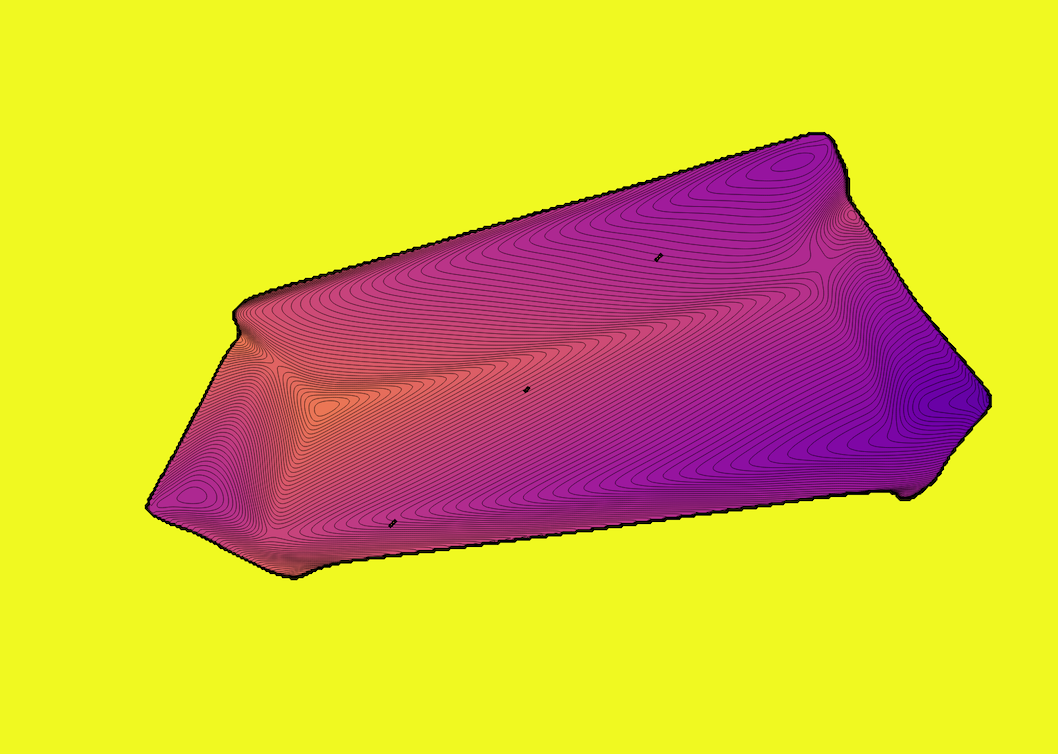}
\includegraphics[width=0.24\linewidth, trim={60mm 20mm 20mm 20mm}, clip]{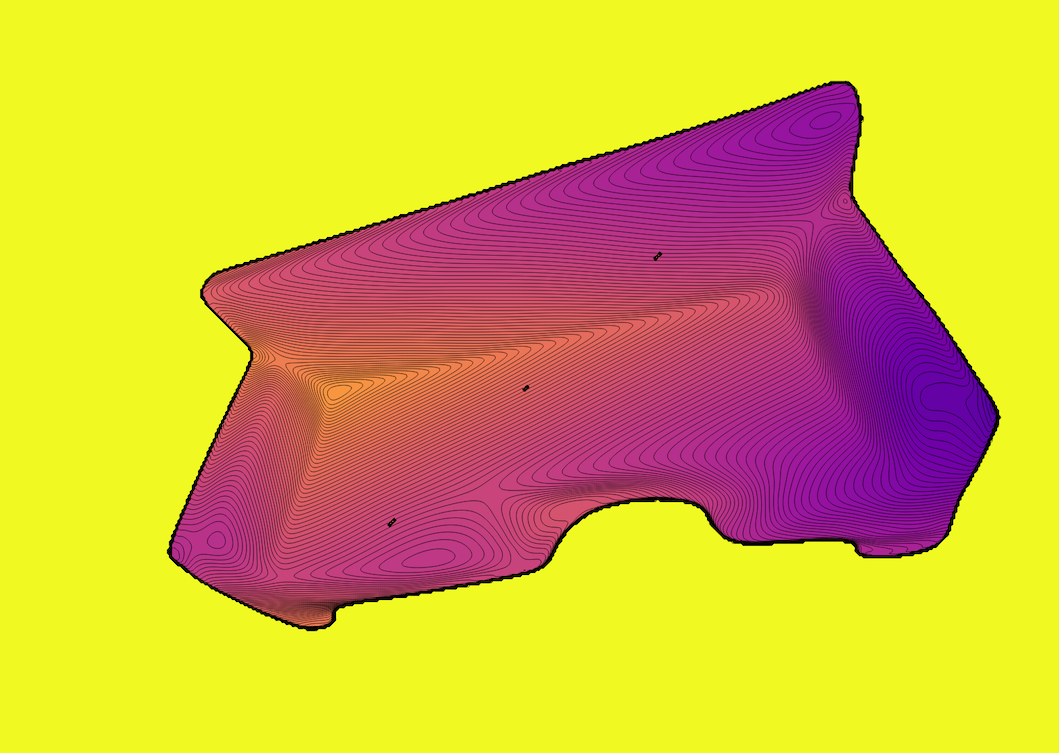}
\includegraphics[width=0.24\linewidth, trim={60mm 20mm 20mm 20mm}, clip]{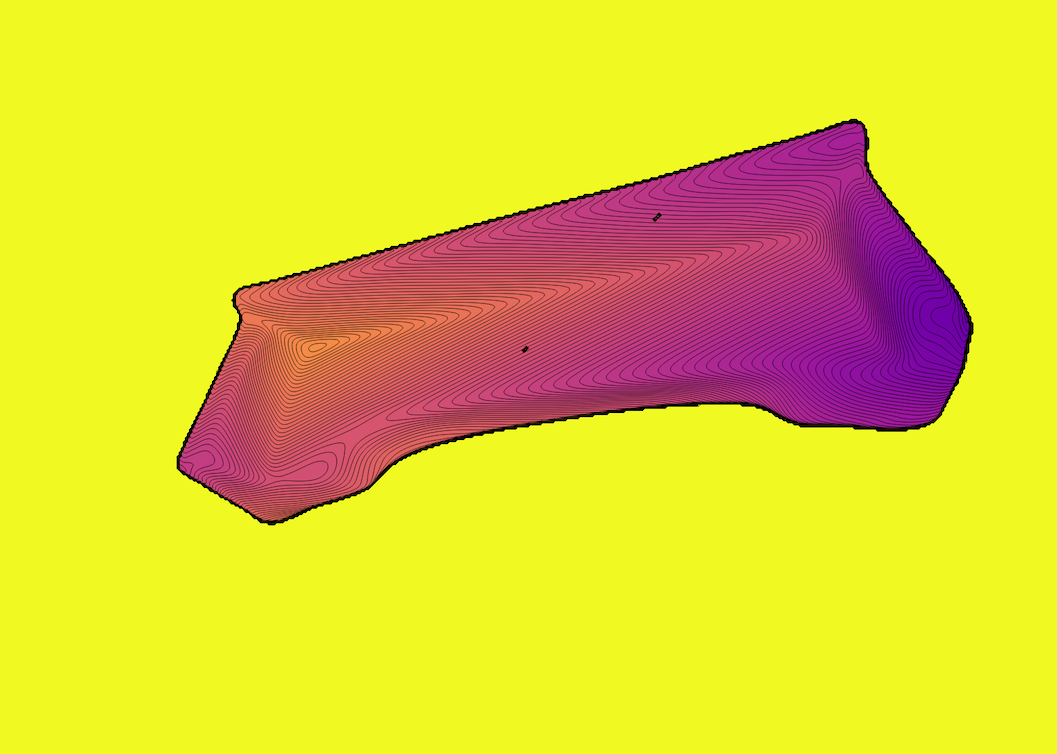}
\includegraphics[width=0.24\linewidth, trim={60mm 20mm 20mm 20mm}, clip]{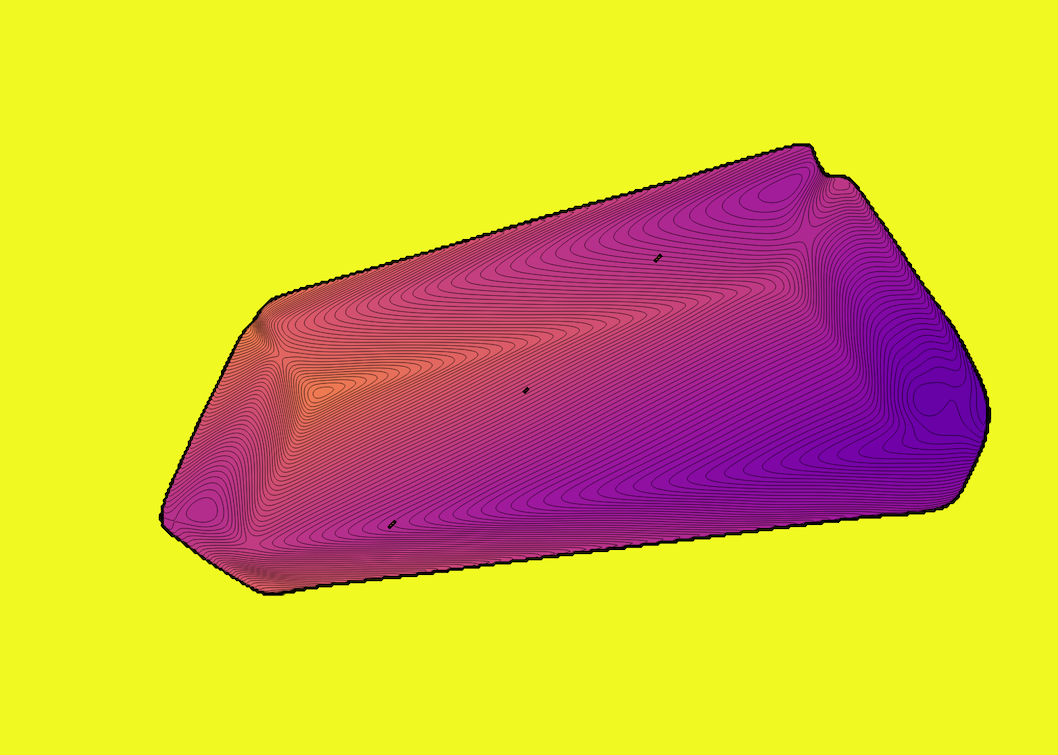}
\includegraphics[width=0.24\linewidth, trim={60mm 20mm 20mm 20mm}, clip]{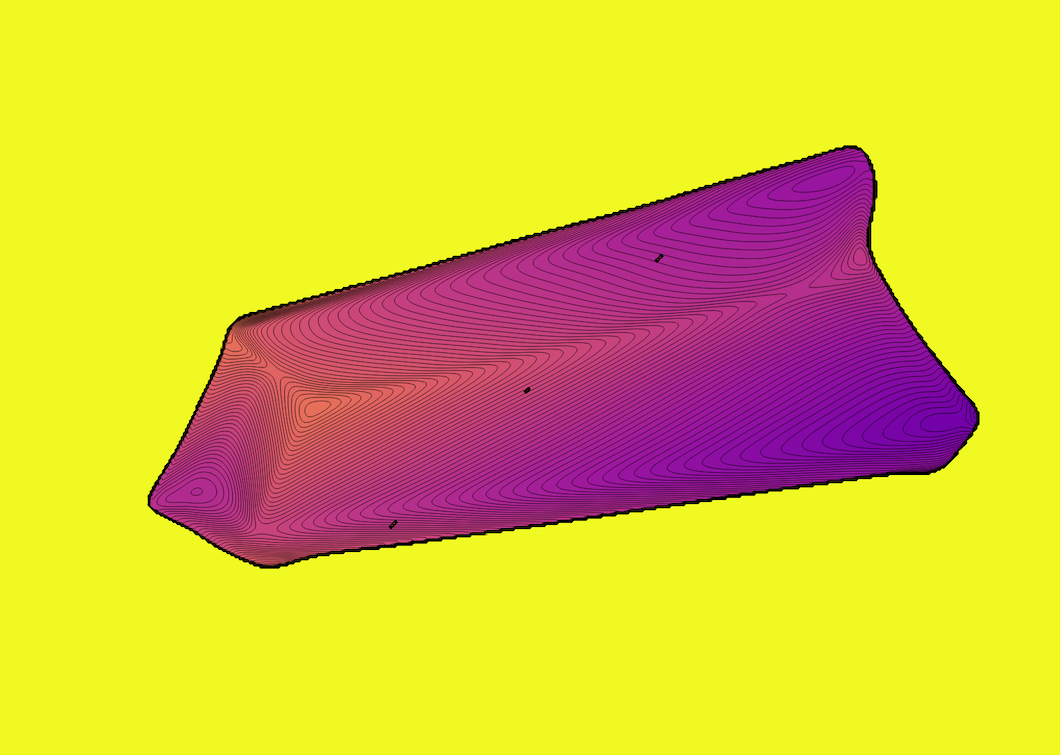}
\includegraphics[width=0.24\linewidth, trim={60mm 20mm 20mm 20mm}, clip]{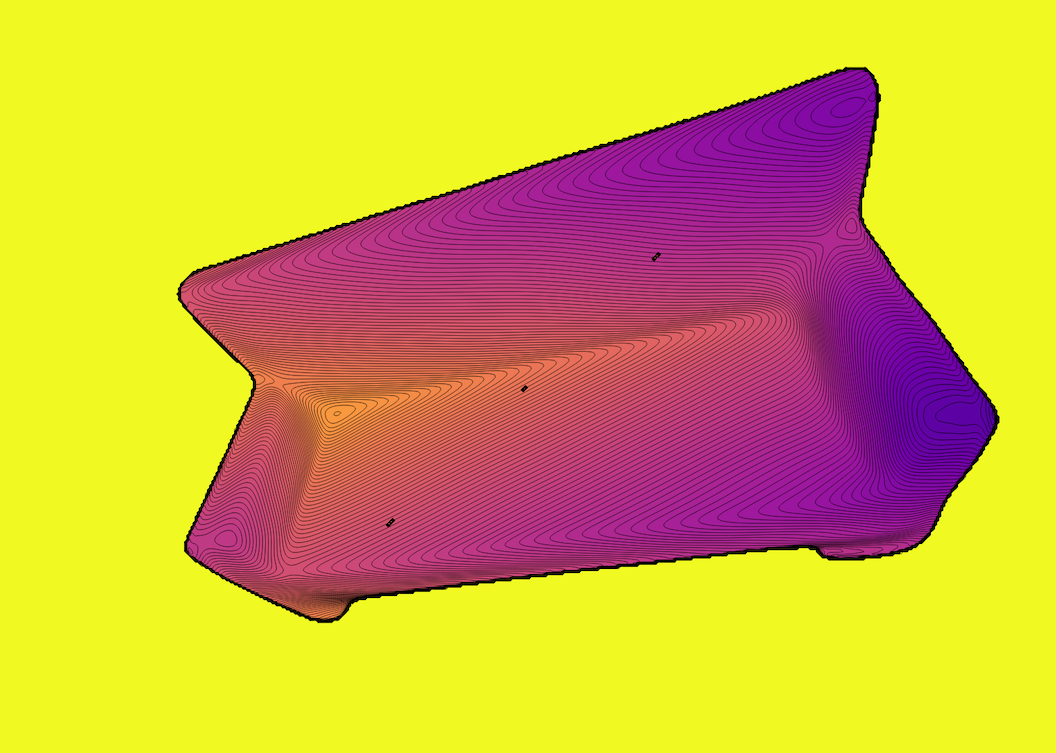}
\includegraphics[width=0.24\linewidth, trim={60mm 20mm 20mm 20mm}, clip]{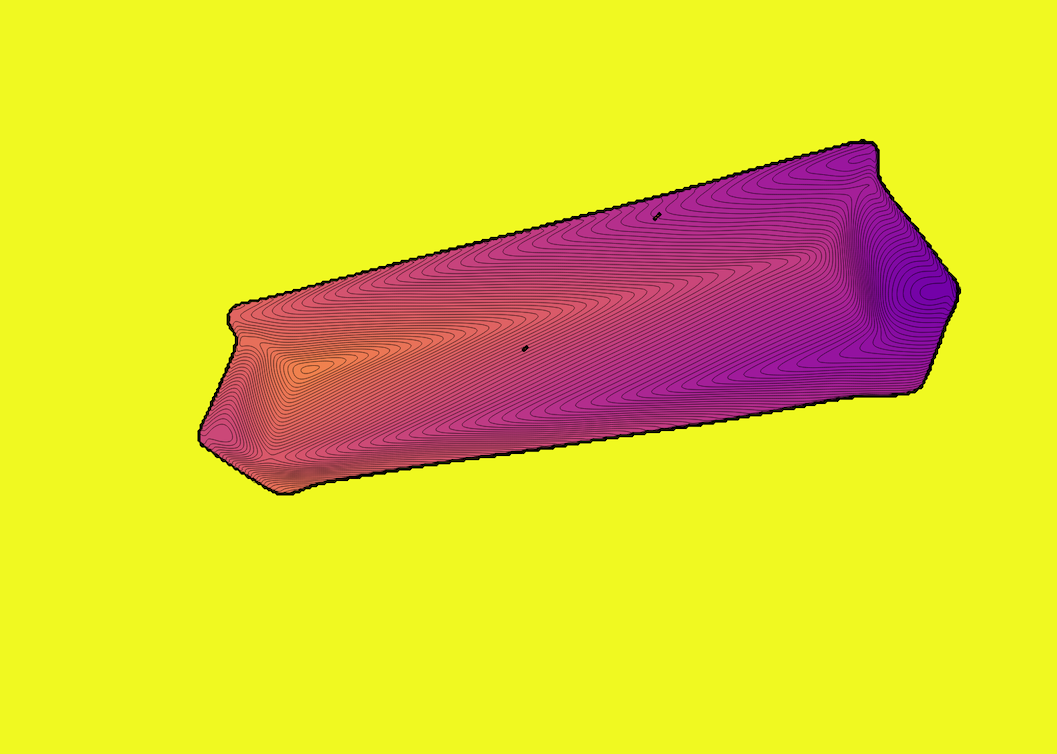}
\includegraphics[width=0.24\linewidth, trim={60mm 20mm 20mm 20mm}, clip]{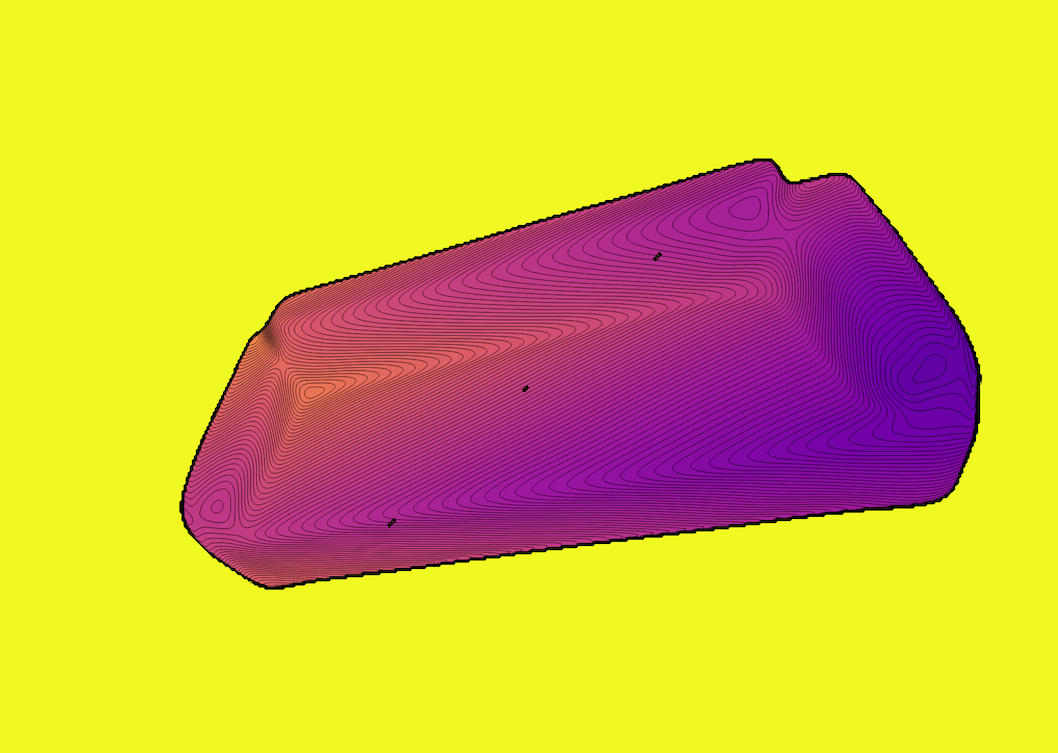}
%
\end{minipage}%
\caption{SDDF shape interpolation between two sofa instances. The first and last row show the SDDF output from the same view for two different instances from the training set. The rows in the middle are generated by using a weighted average of the latent codes of the upper-most and down-most instances as an input to the SDDF network. In each column, from top to bottom, the latent code weights with respect to the upper-most instance are $1$, $0.75$, $0.5$, $0.25$, $0$, respectively. Note how the shapes transform smoothly from top to bottom with intermediate shapes looking like valid sofas. This demonstrates that the SDDF model represents the latent shape space continuously and meaningfully.}
\label{fig:sofainterpolation}
\end{figure*}
\begin{figure*}[h!]
    \centering
\begin{minipage}{\linewidth}
  \centering
\includegraphics[width=0.24\linewidth, trim={75mm 20mm 75mm 0mm}, clip]{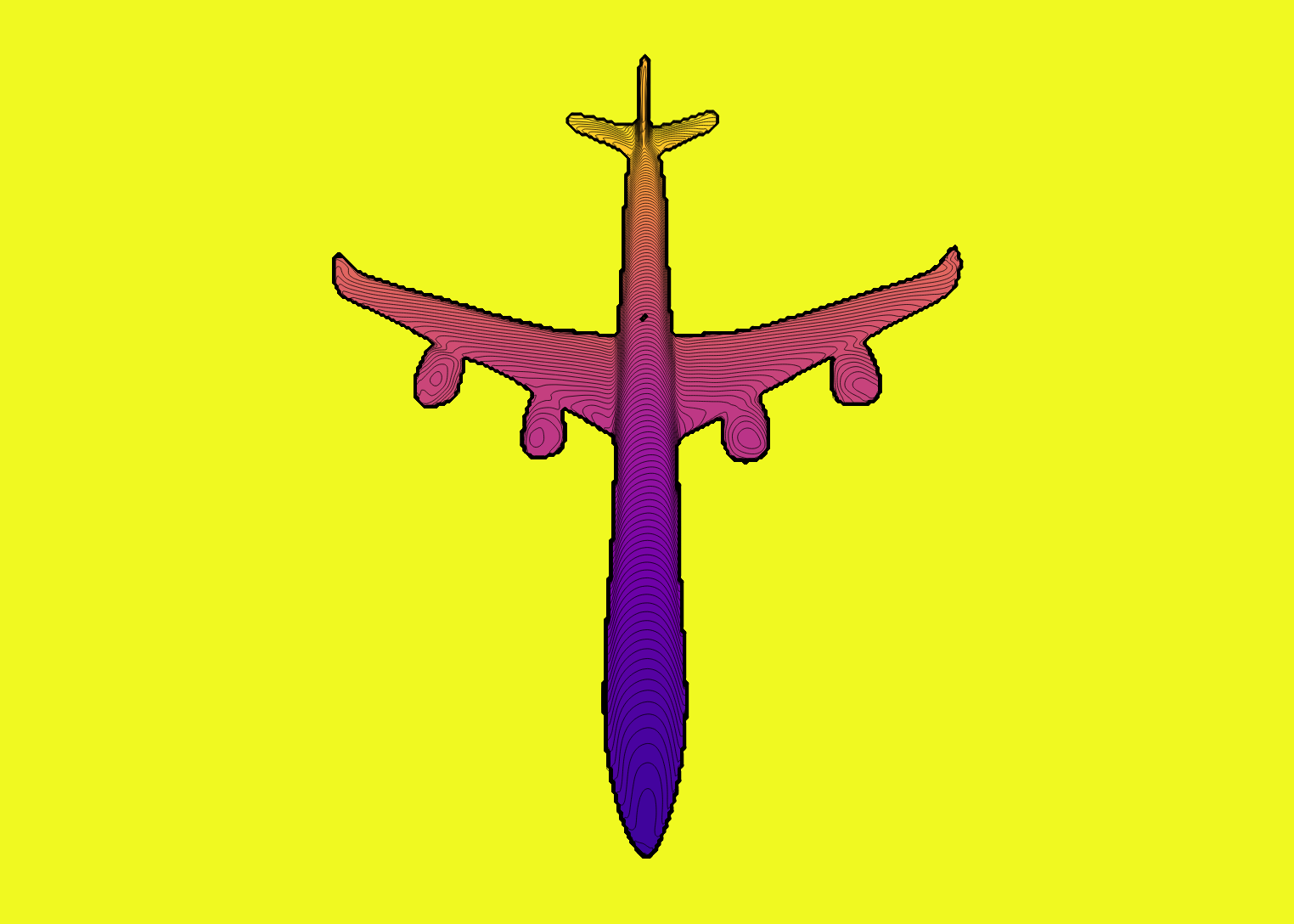}
\includegraphics[width=0.24\linewidth, trim={75mm 20mm 75mm 0mm}, clip]{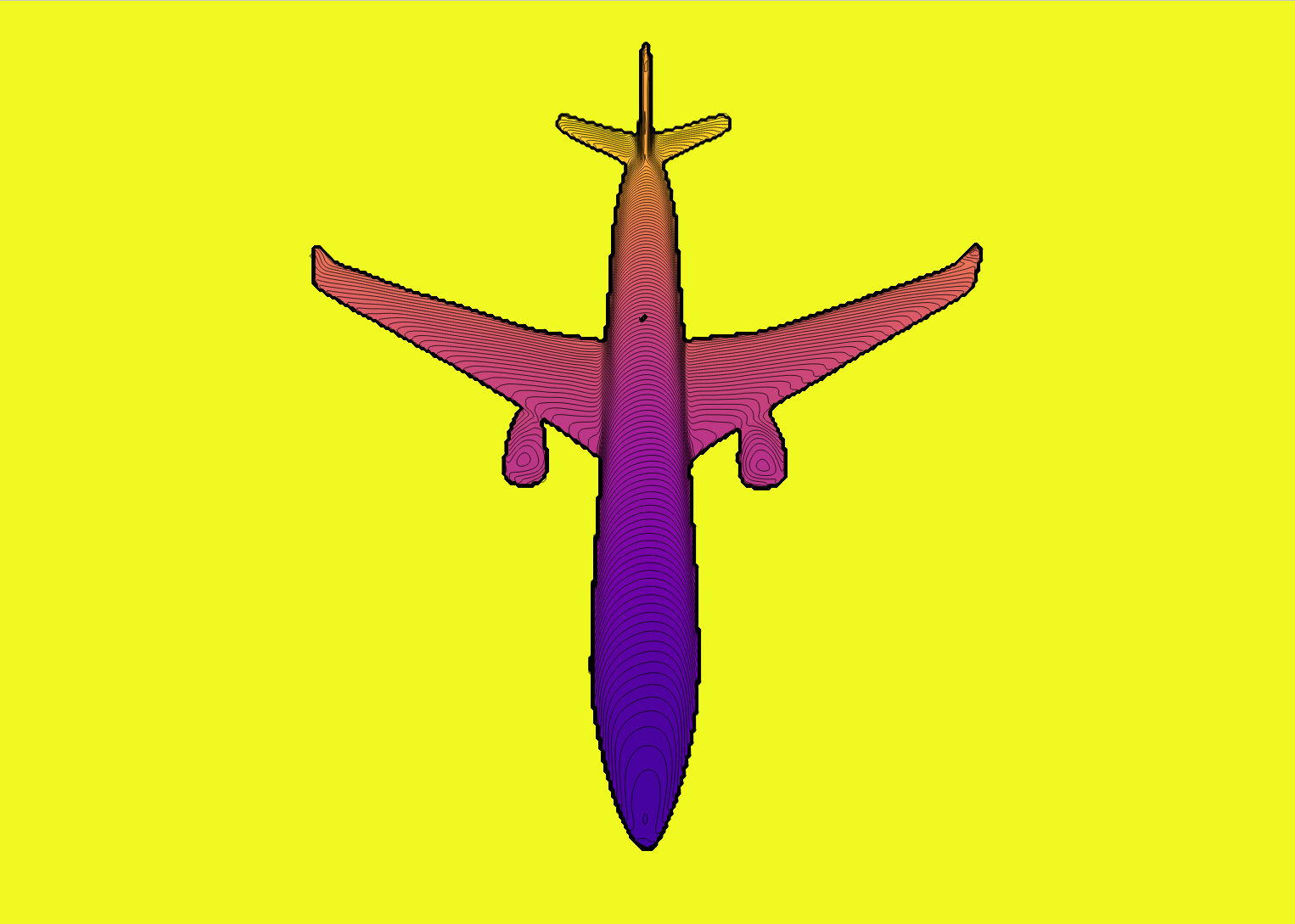}
\includegraphics[width=0.24\linewidth, trim={75mm 20mm 75mm 0mm}, clip]{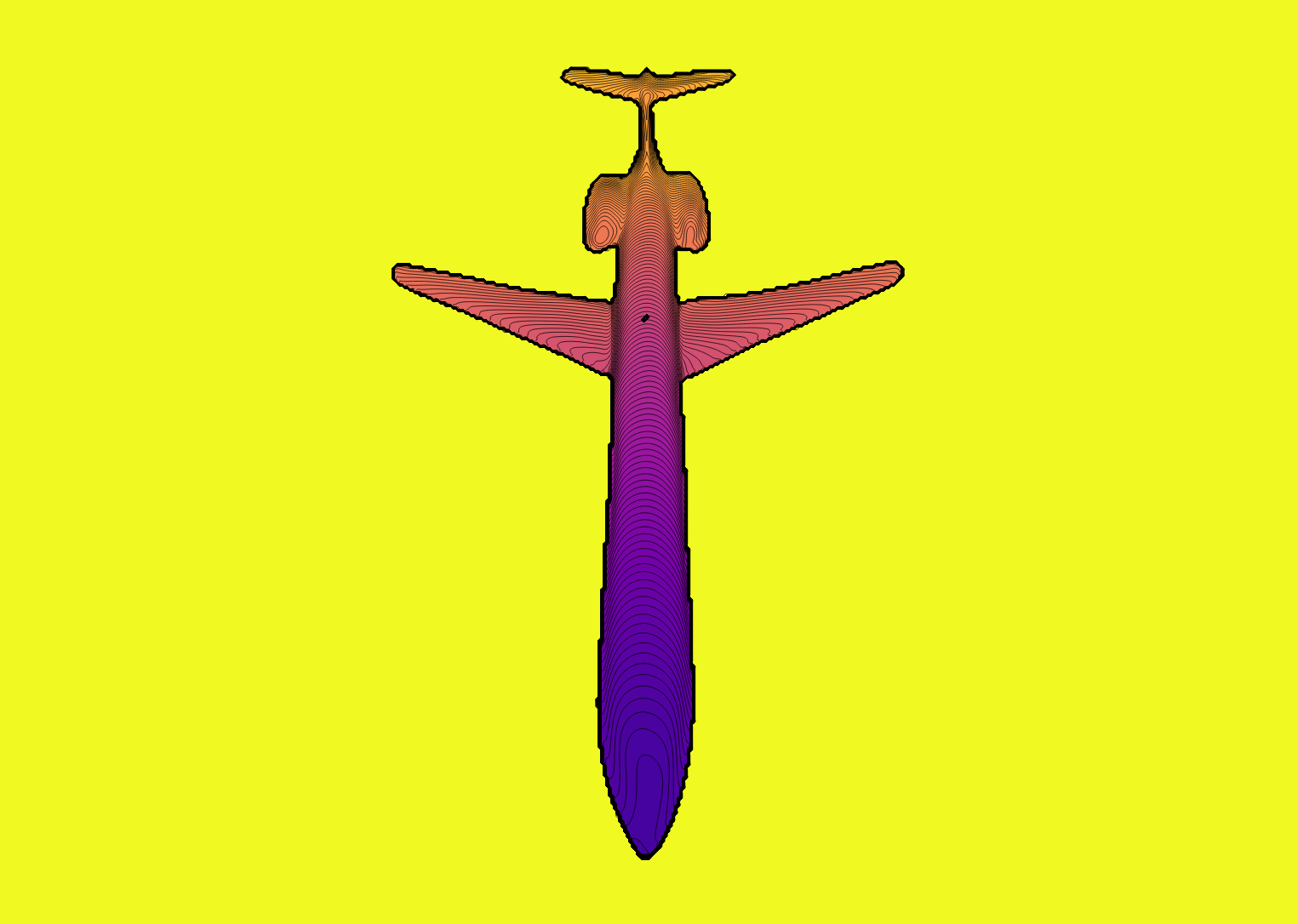}
\includegraphics[width=0.24\linewidth, trim={75mm 20mm 75mm 0mm}, clip]{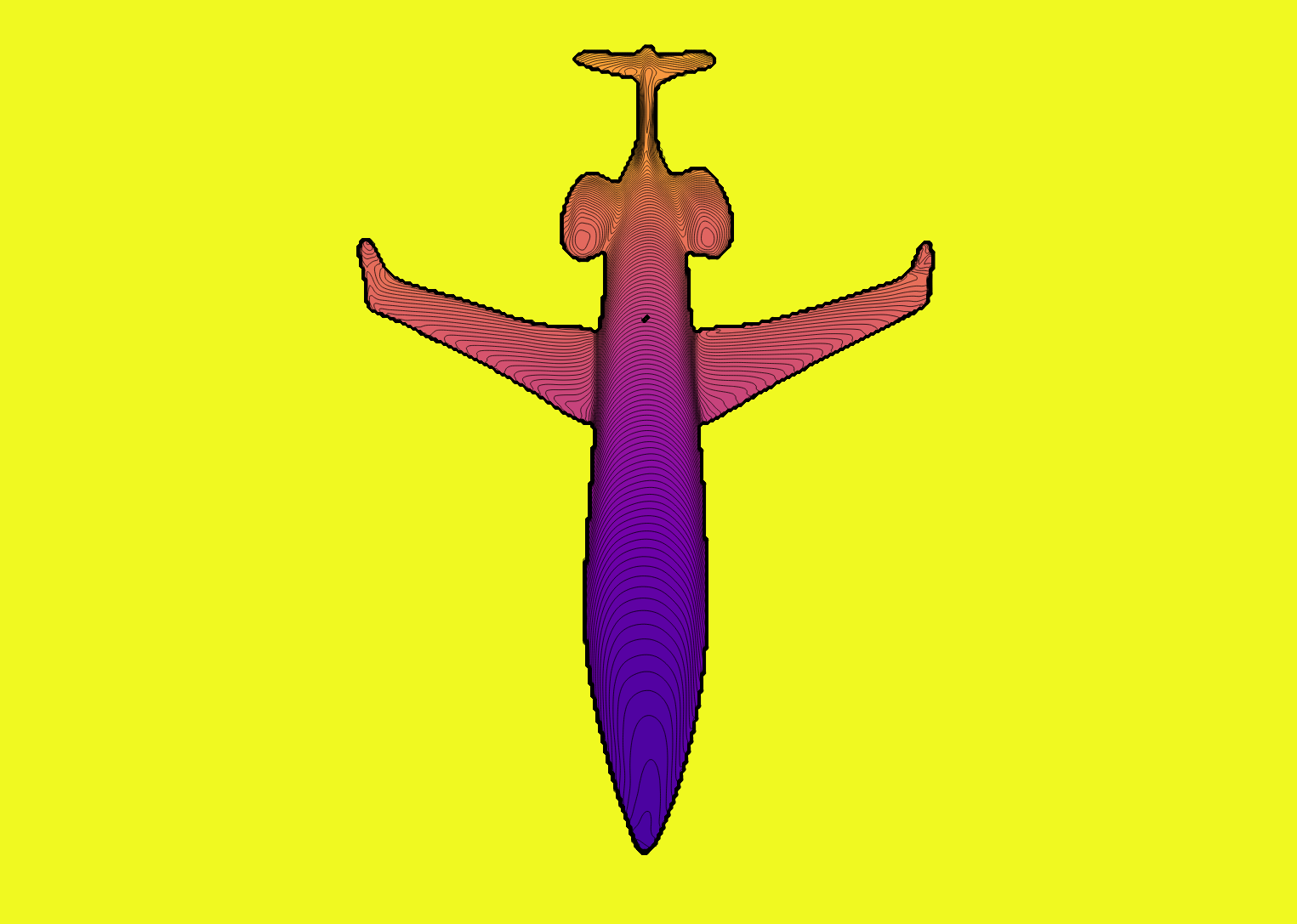}
\includegraphics[width=0.24\linewidth, trim={75mm 20mm 75mm 0mm}, clip]{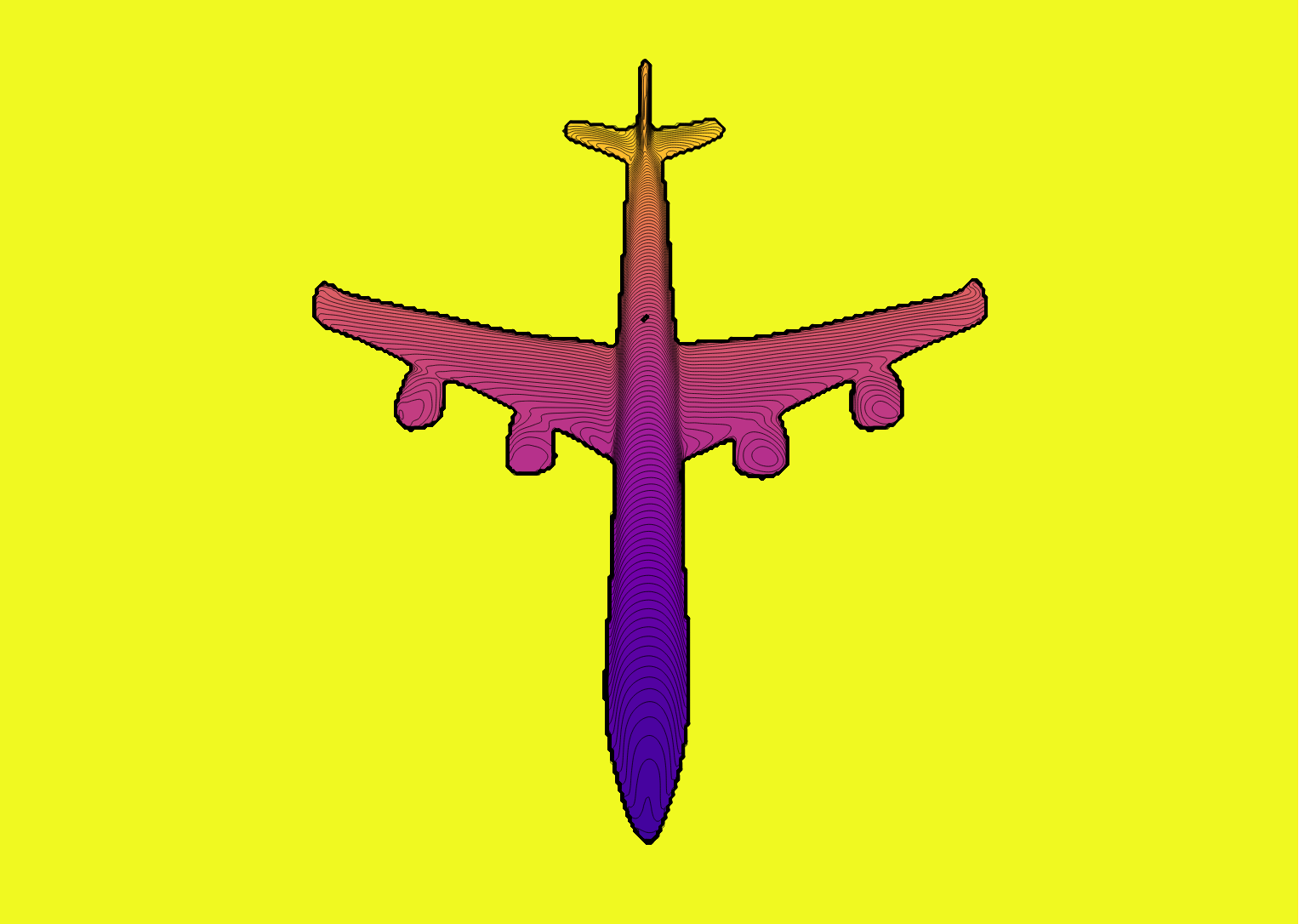}
\includegraphics[width=0.24\linewidth, trim={75mm 20mm 75mm 0mm}, clip]{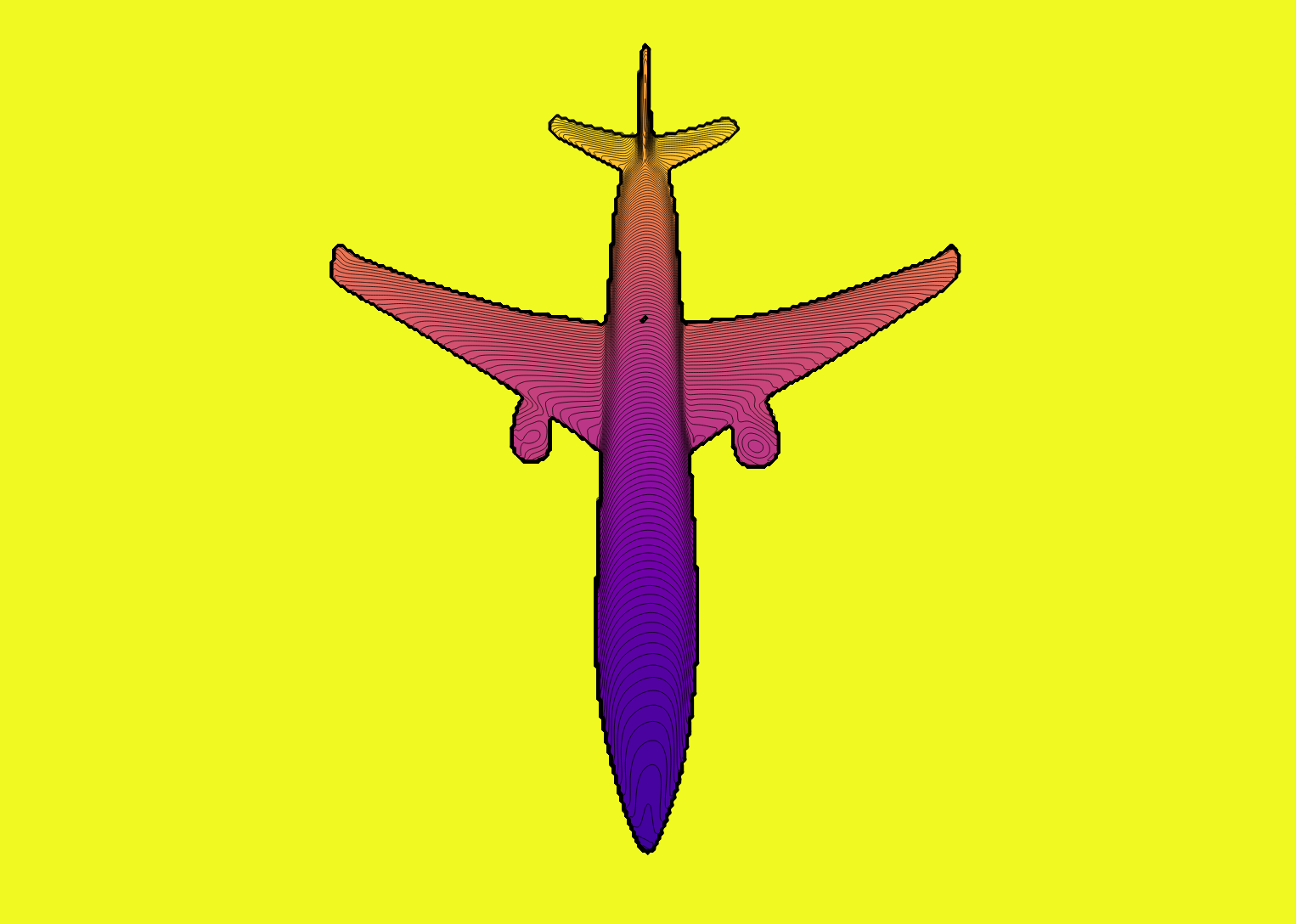}
\includegraphics[width=0.24\linewidth, trim={75mm 20mm 75mm 0mm}, clip]{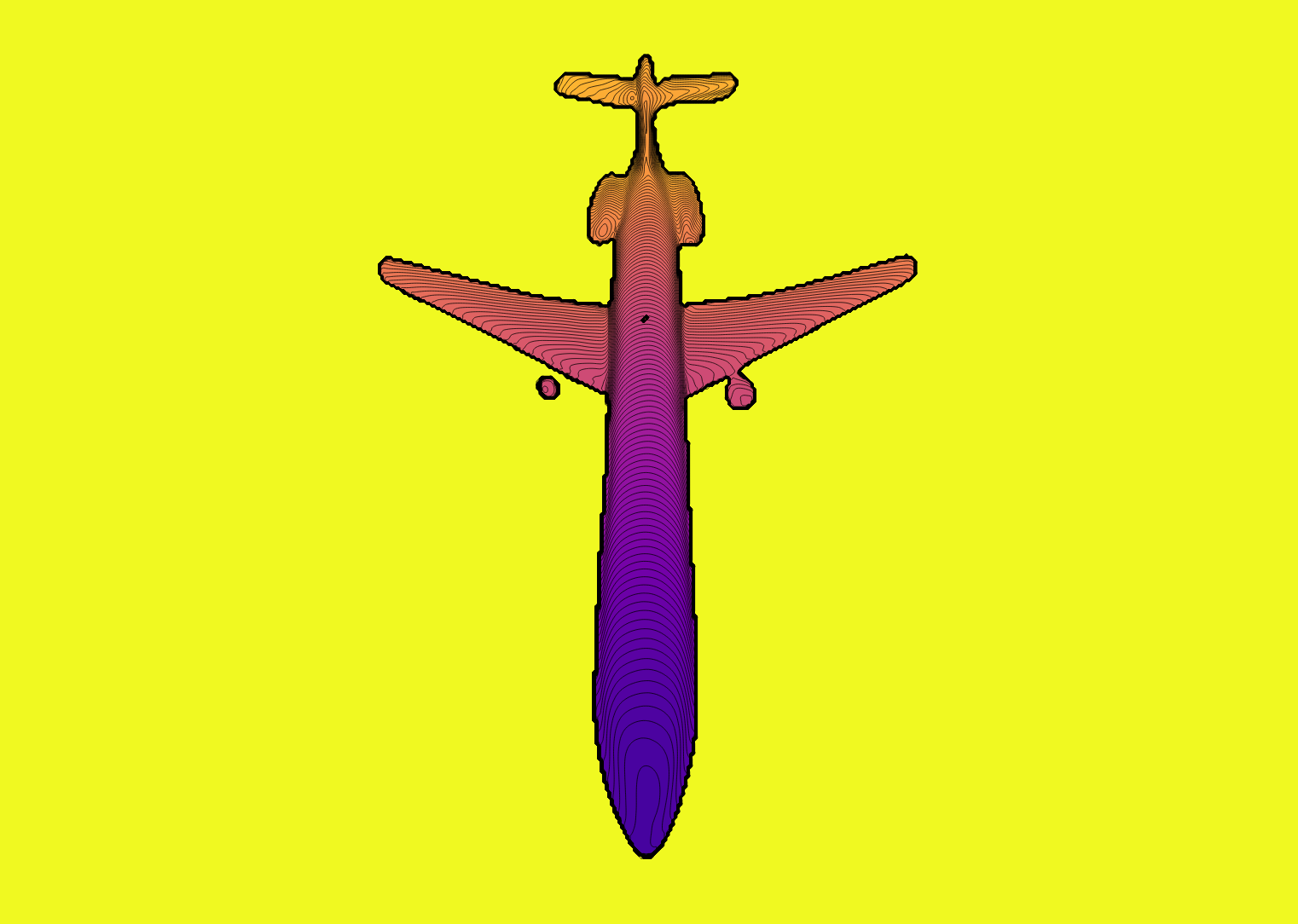}
\includegraphics[width=0.24\linewidth, trim={75mm 20mm 75mm 0mm}, clip]{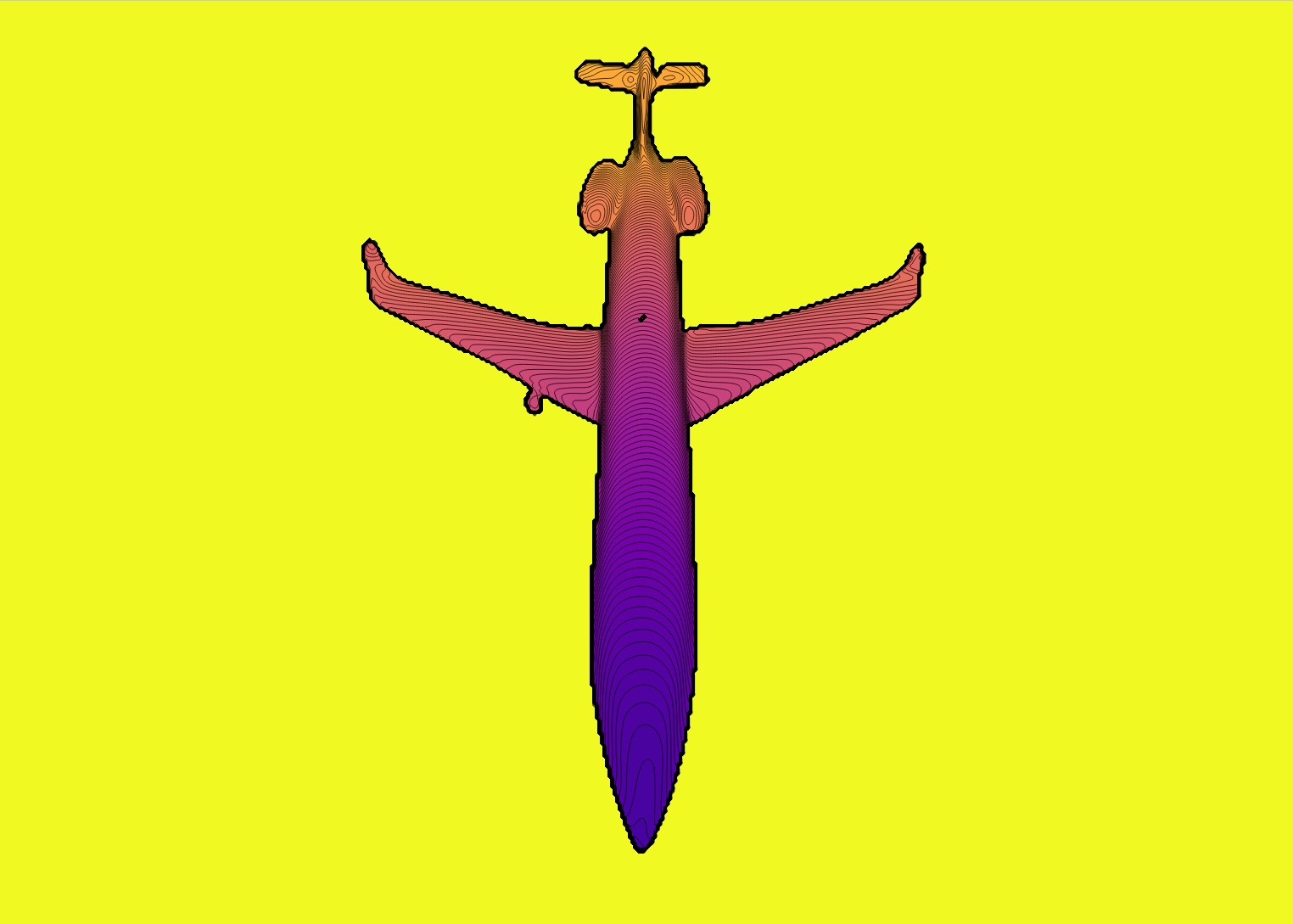}
\includegraphics[width=0.24\linewidth, trim={75mm 20mm 75mm 0mm}, clip]{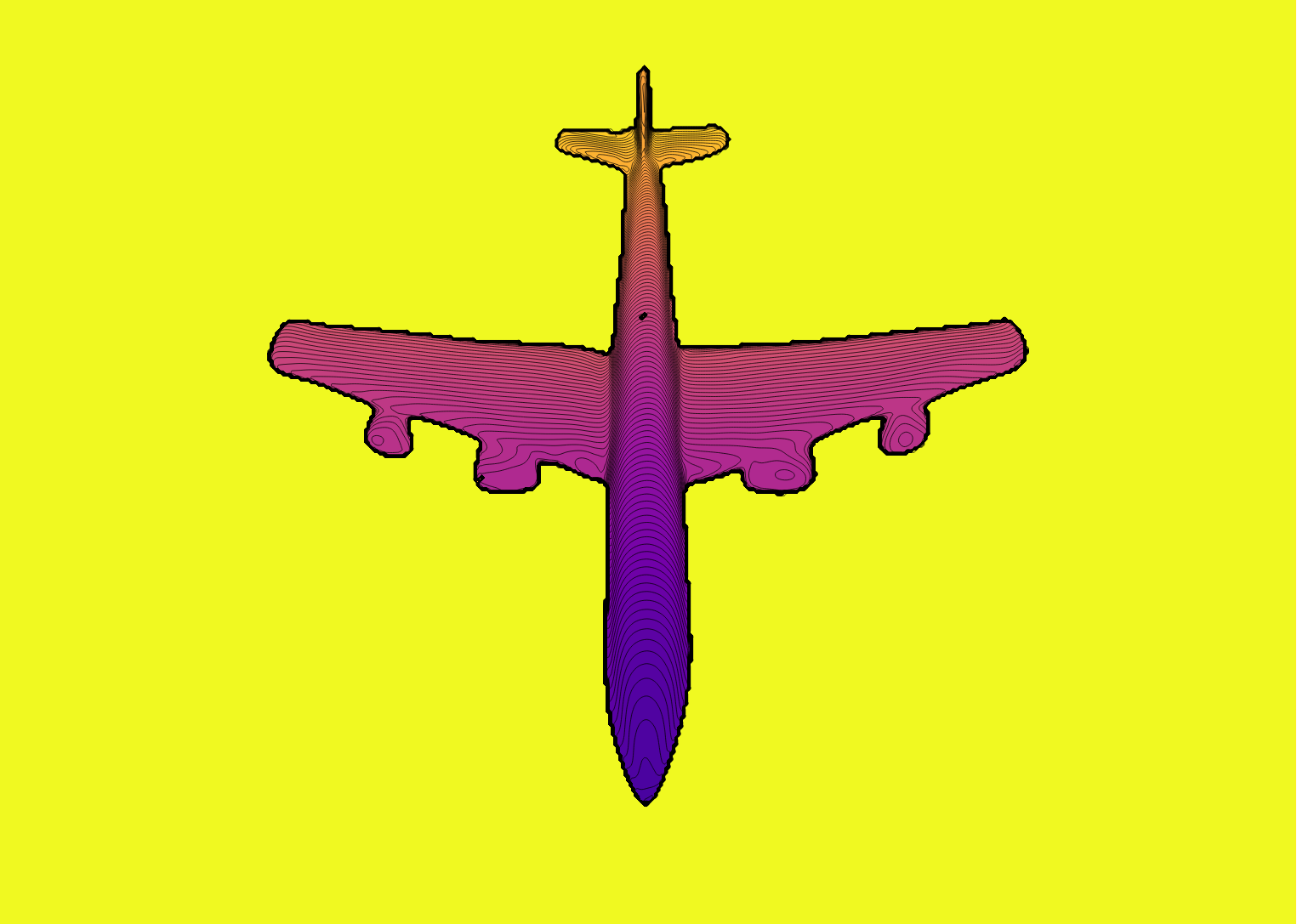}
\includegraphics[width=0.24\linewidth, trim={75mm 20mm 75mm 0mm}, clip]{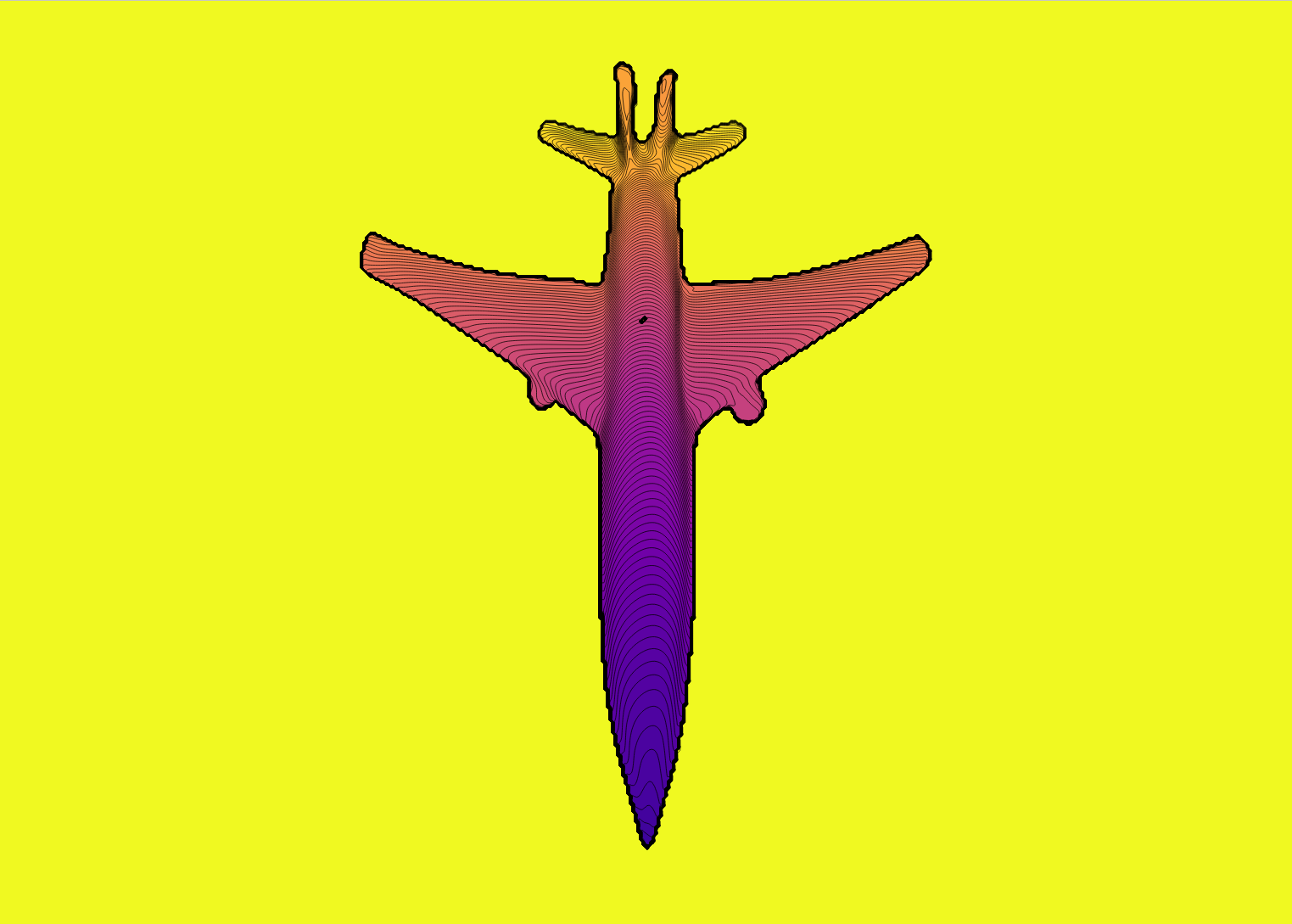}
\includegraphics[width=0.24\linewidth, trim={75mm 20mm 75mm 0mm}, clip]{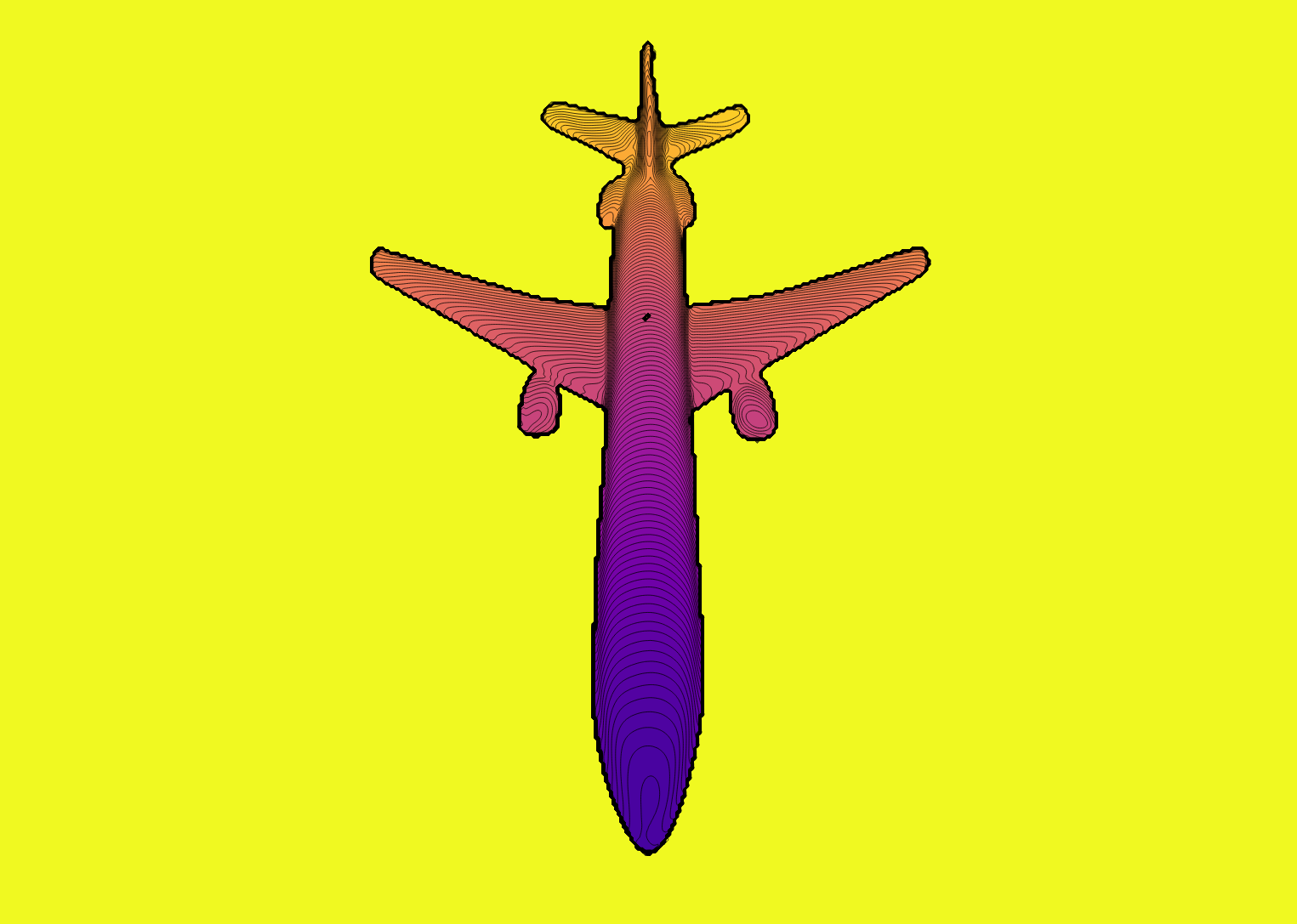}
\includegraphics[width=0.24\linewidth, trim={75mm 20mm 75mm 0mm}, clip]{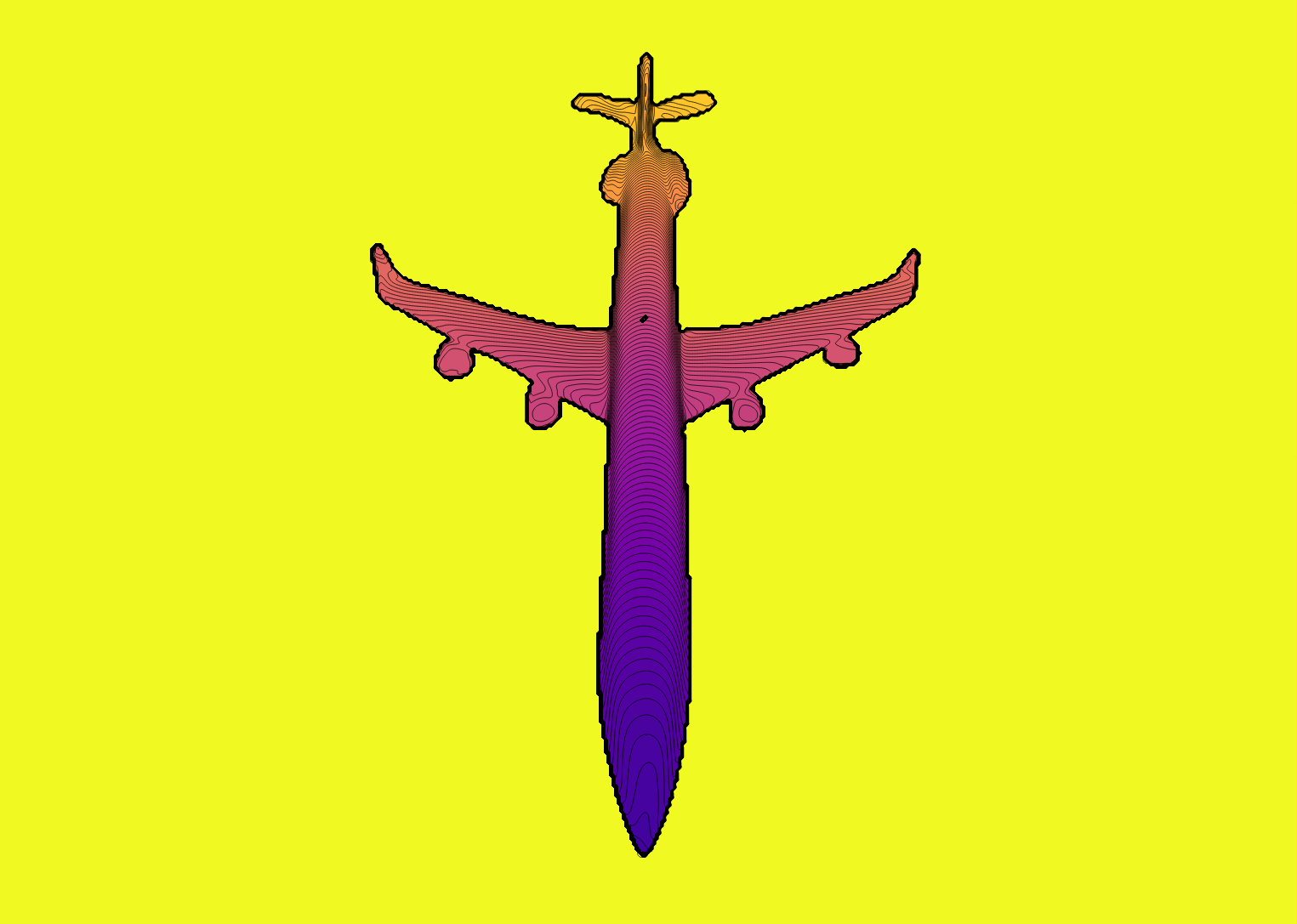}
\includegraphics[width=0.24\linewidth, trim={75mm 20mm 75mm 0mm}, clip]{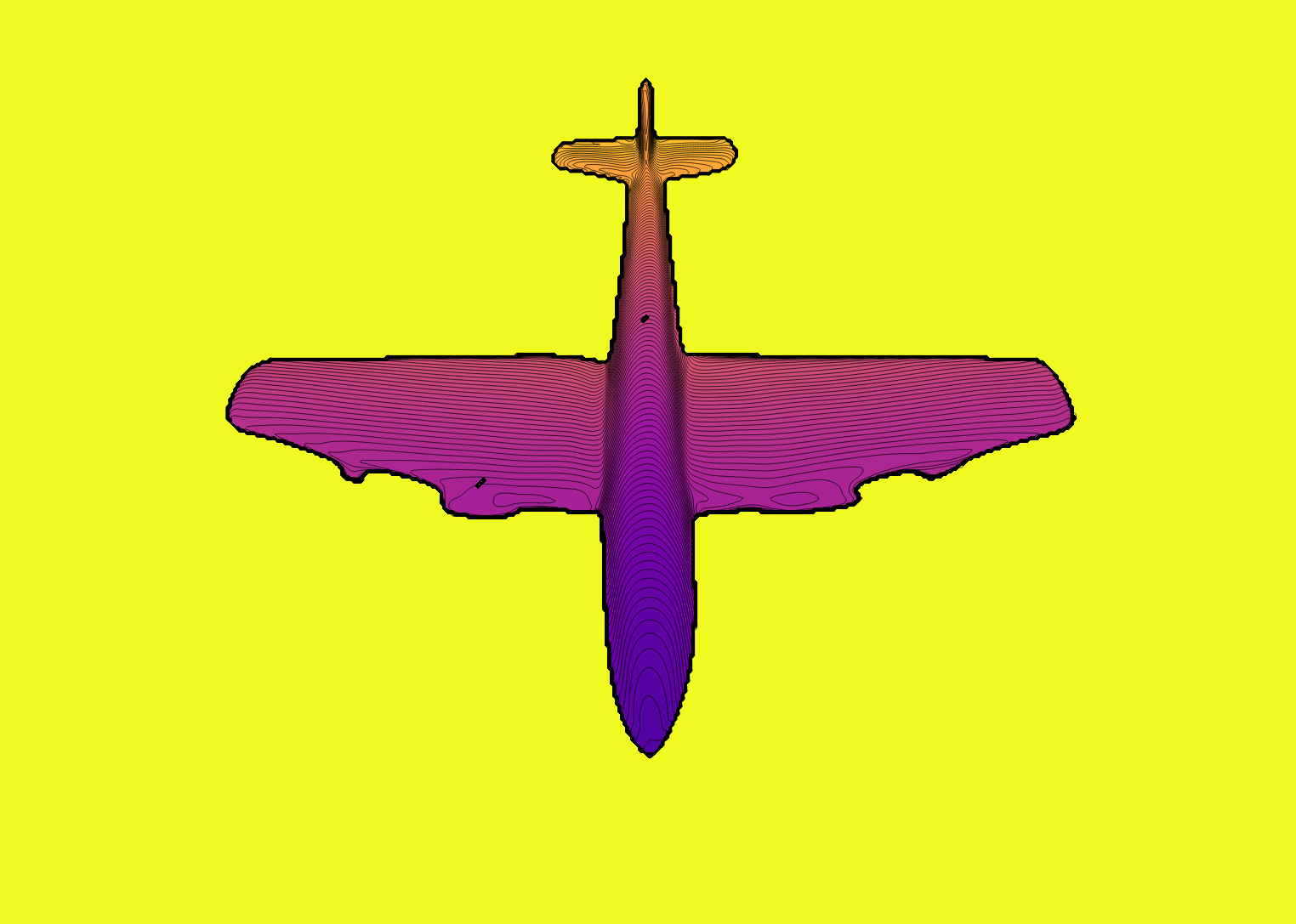}
\includegraphics[width=0.24\linewidth, trim={75mm 20mm 75mm 0mm}, clip]{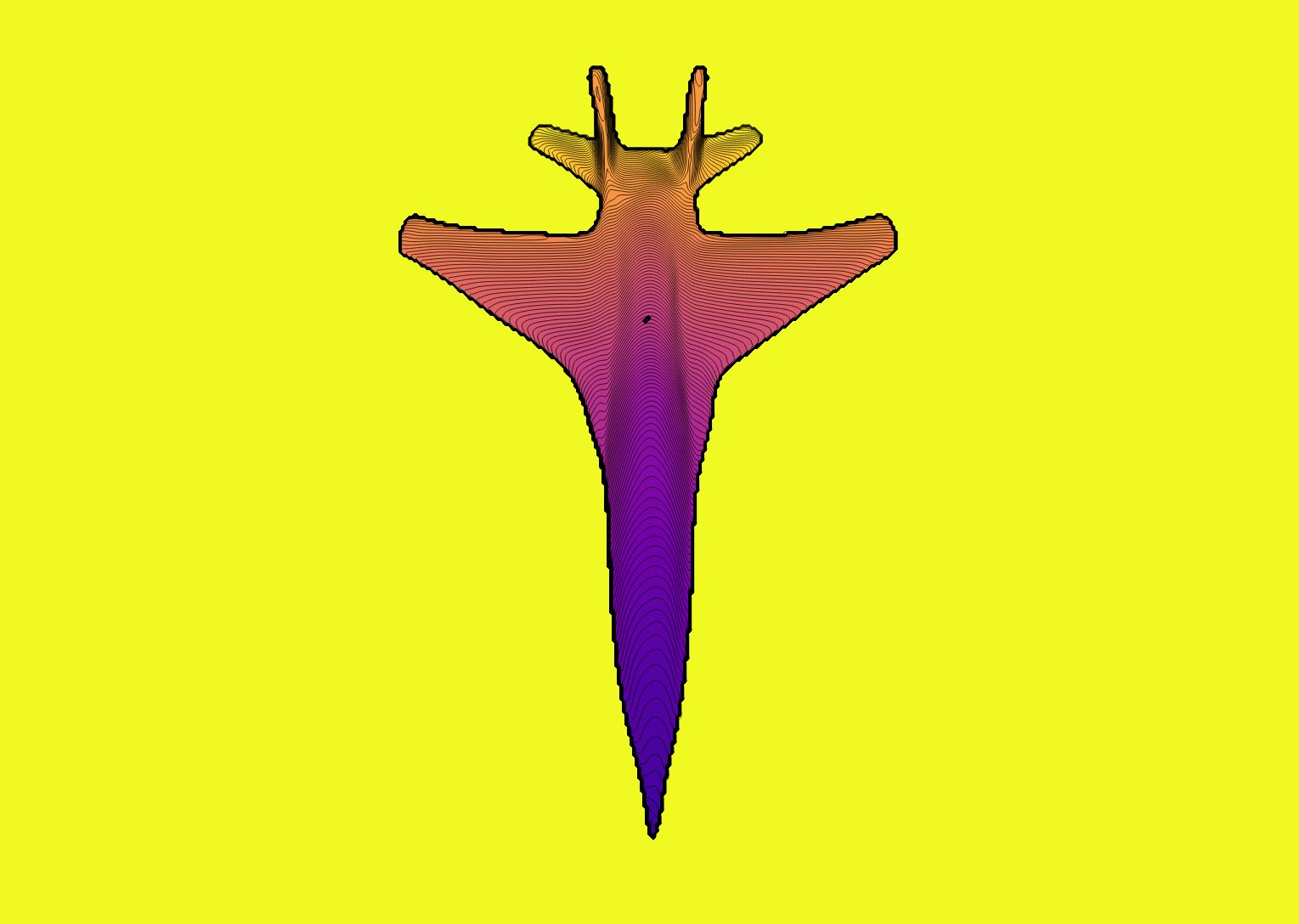}
\includegraphics[width=0.24\linewidth, trim={75mm 20mm 75mm 0mm}, clip]{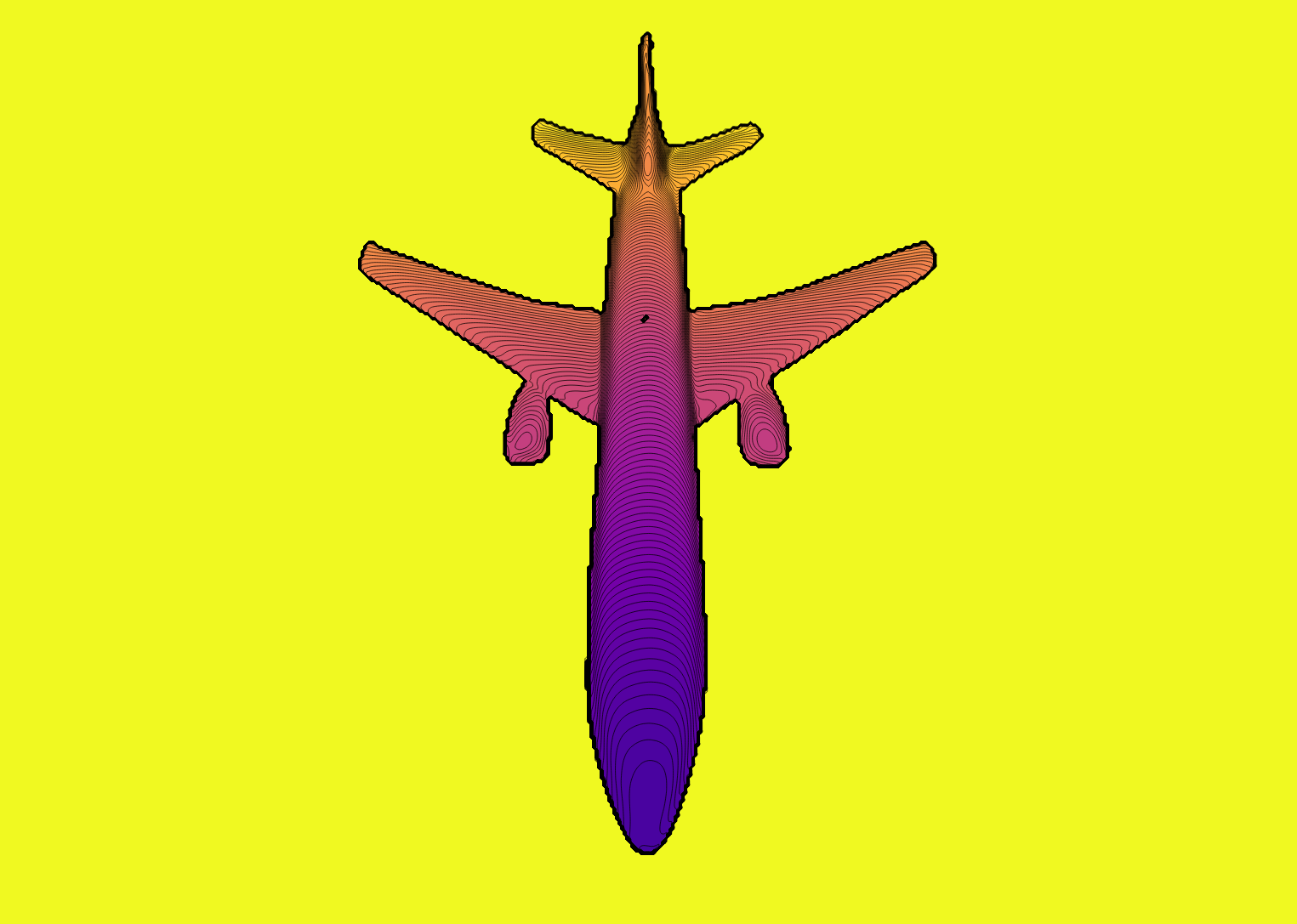}
\includegraphics[width=0.24\linewidth, trim={75mm 20mm 75mm 0mm}, clip]{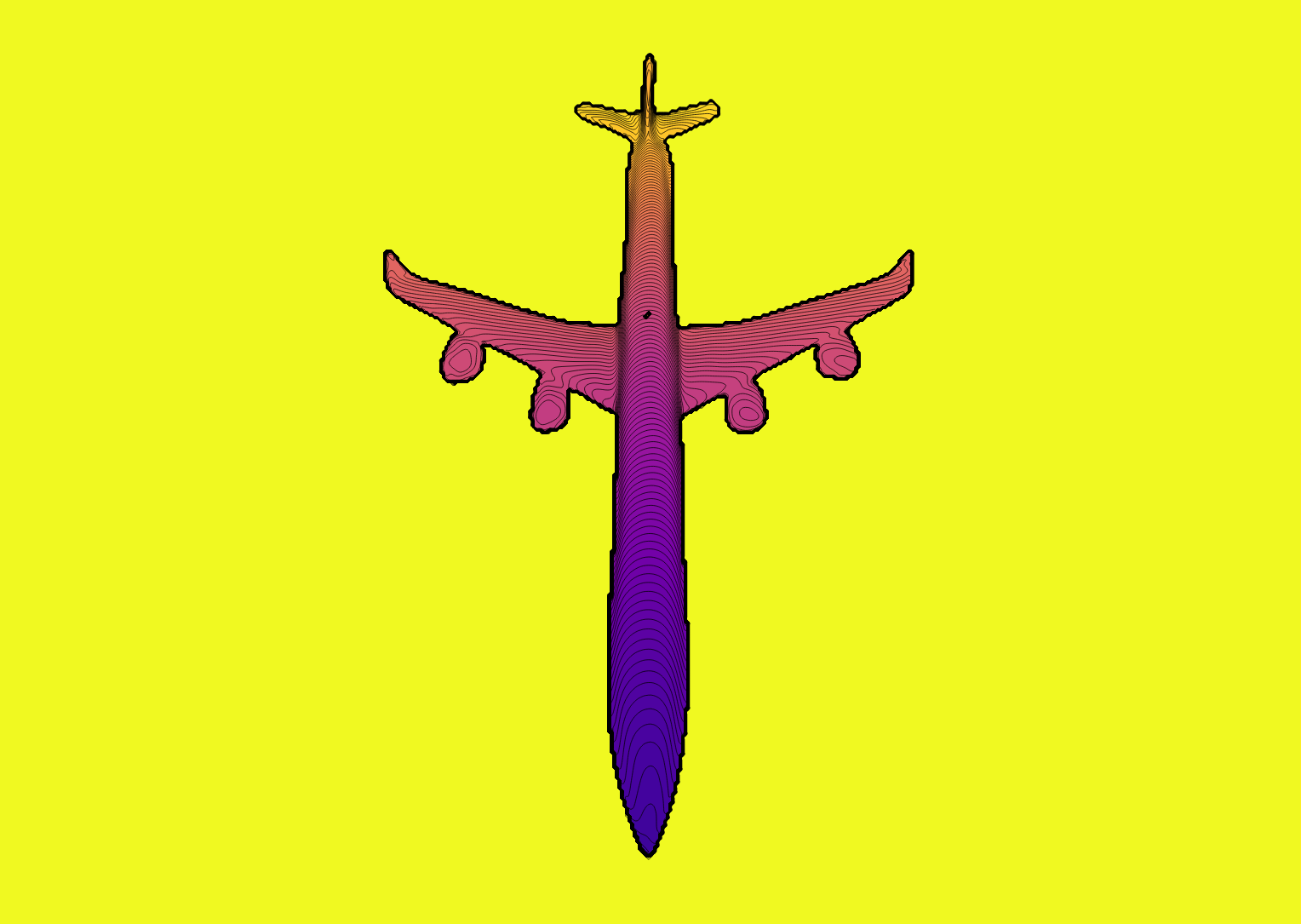}
\includegraphics[width=0.24\linewidth, trim={75mm 20mm 75mm 0mm}, clip]{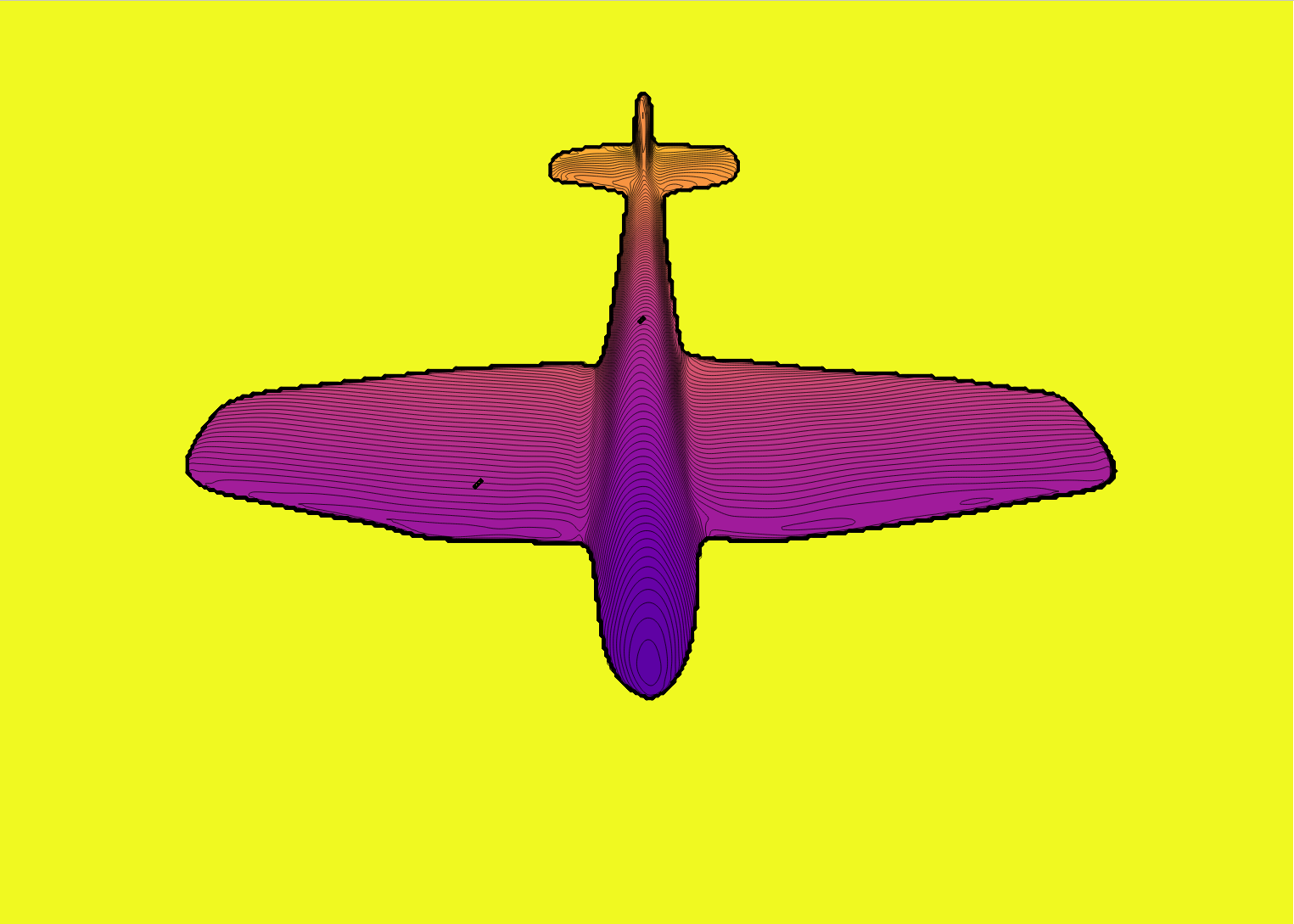}
\includegraphics[width=0.24\linewidth, trim={75mm 20mm 75mm 0mm}, clip]{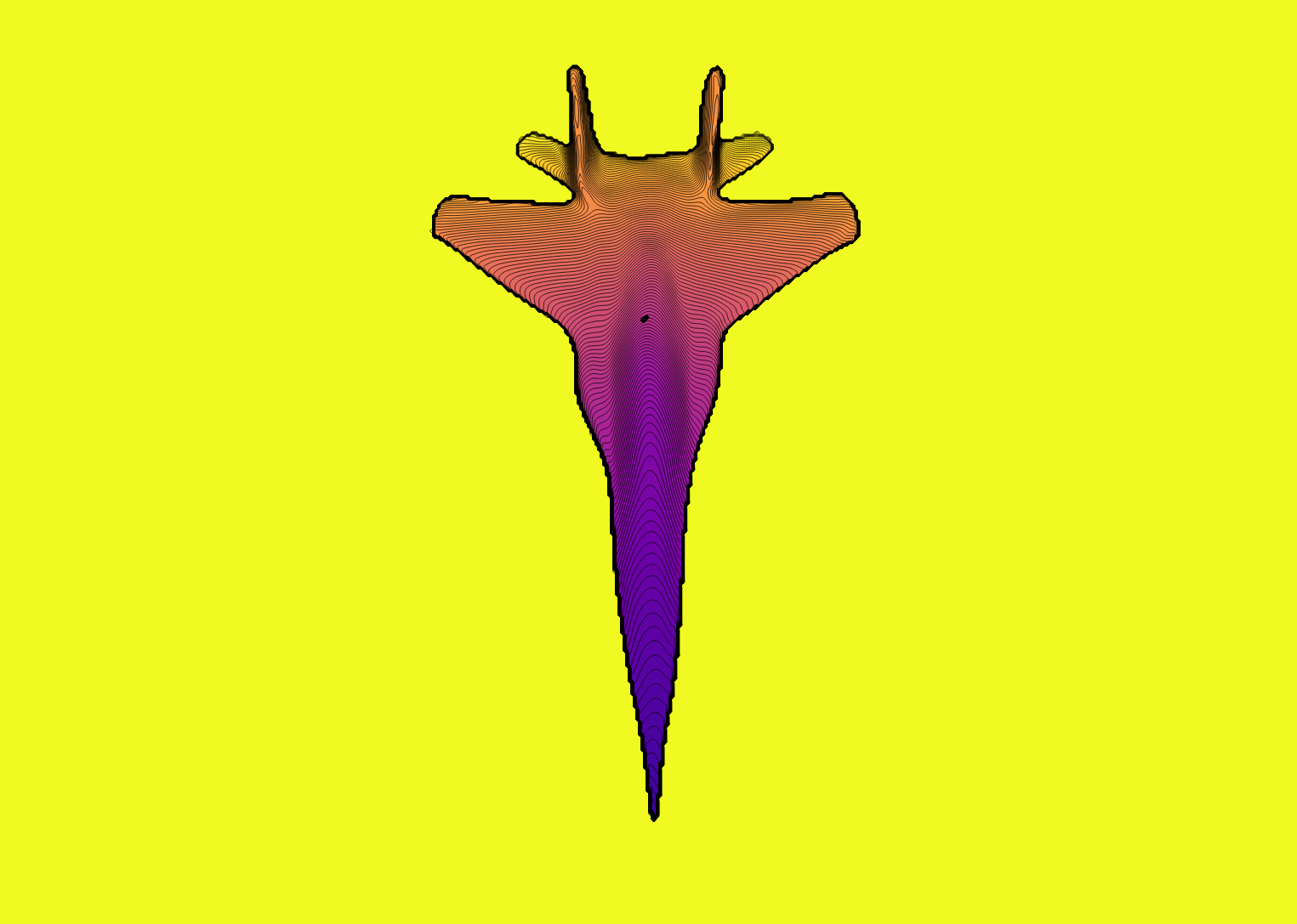}
\includegraphics[width=0.24\linewidth, trim={75mm 20mm 75mm 0mm}, clip]{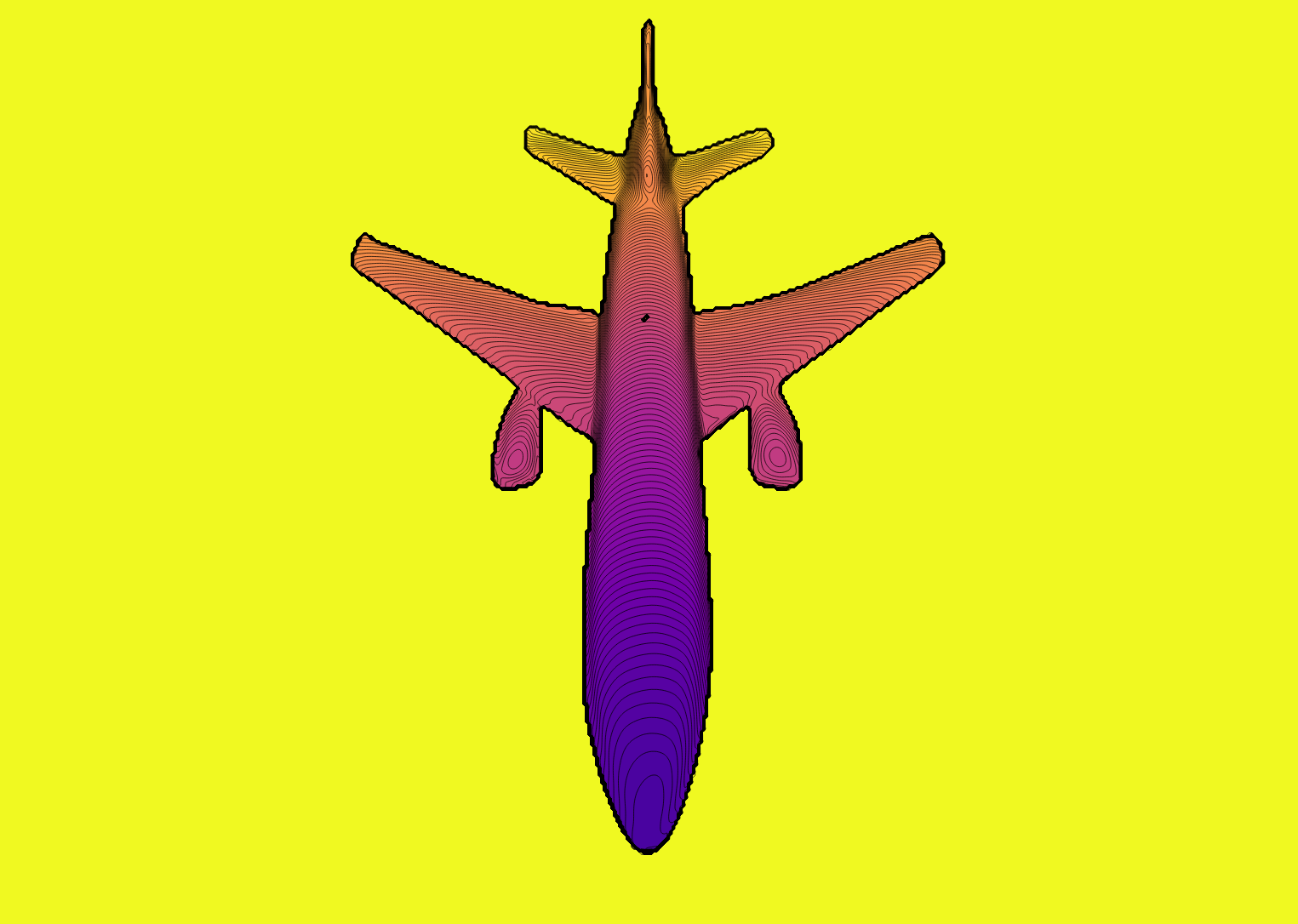}
\includegraphics[width=0.24\linewidth, trim={75mm 20mm 75mm 0mm}, clip]{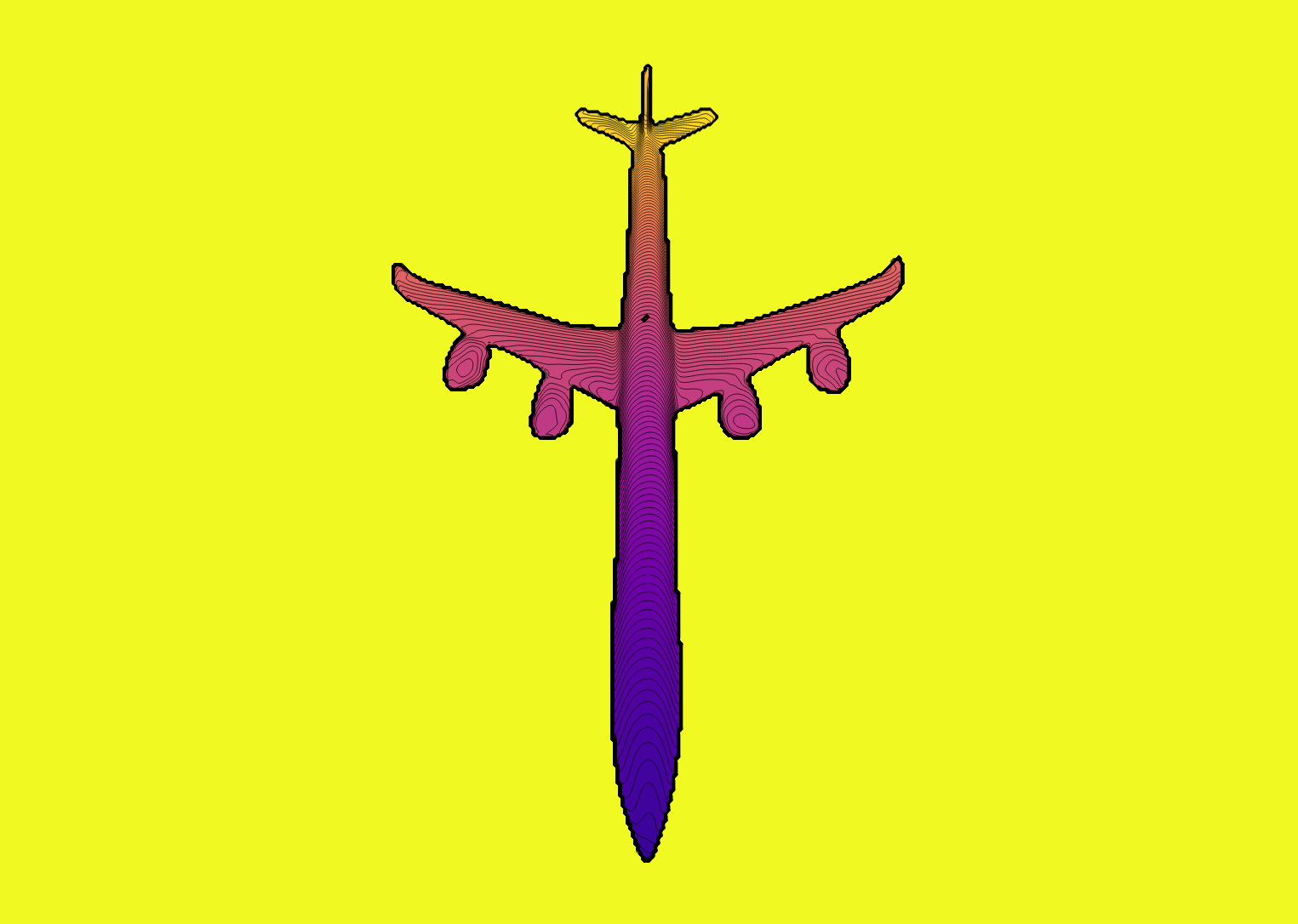}
%
\end{minipage}%
\caption{SDDF shape interpolation between two airplane instances. The first and last row show the SDDF output from the same view for two different instances from the training set. The rows in the middle are generated by using a weighted average of the latent codes of the upper-most and down-most instances as an input to the SDDF network. In each column, from top to bottom, the latent code weights with respect to the upper-most instance are $1$, $0.75$, $0.5$, $0.25$, $0$, respectively. Note how the shapes transform smoothly from top to bottom with intermediate shapes looking like valid airplanes. This demonstrates that the SDDF model represents the latent shape space continuously and meaningfully.}
\label{fig:airinterpolation}
\end{figure*}%
\begin{figure*}[h!]
    \centering
\begin{minipage}{\linewidth}
  \centering
\includegraphics[width=0.24\linewidth, trim={40mm 90mm 20mm 20mm}, clip]{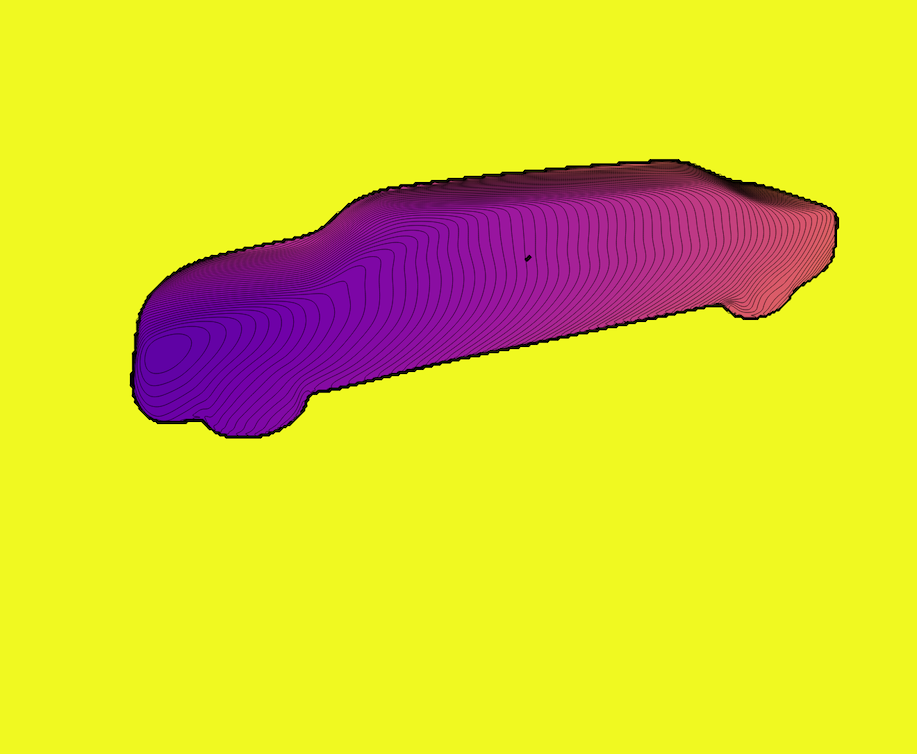}
\includegraphics[width=0.24\linewidth, trim={40mm 90mm 20mm 20mm}, clip]{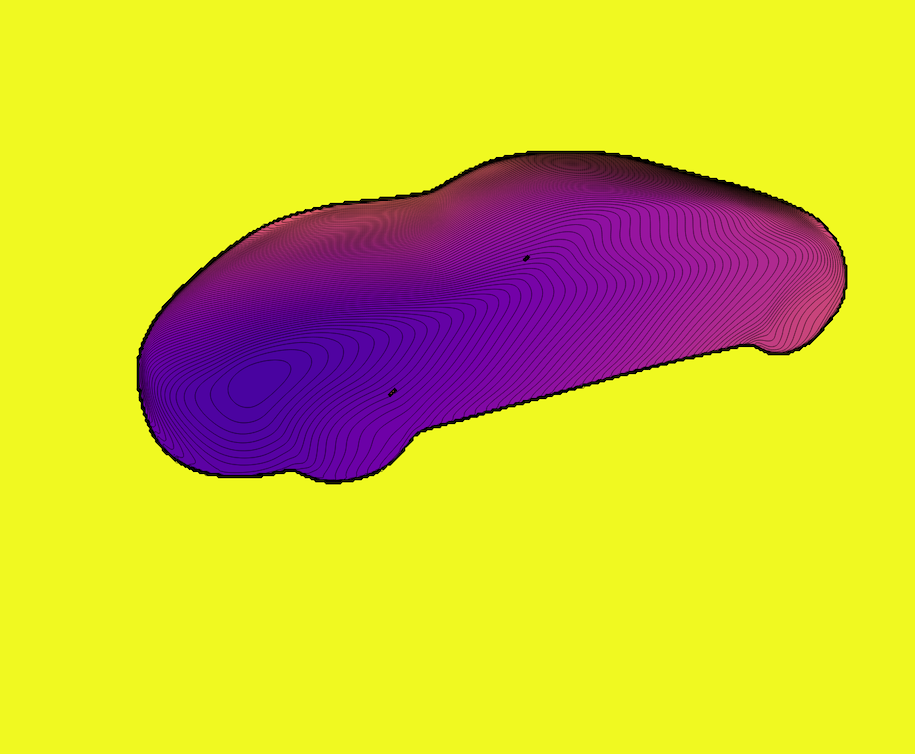}
\includegraphics[width=0.24\linewidth, trim={40mm 90mm 20mm 20mm}, clip]{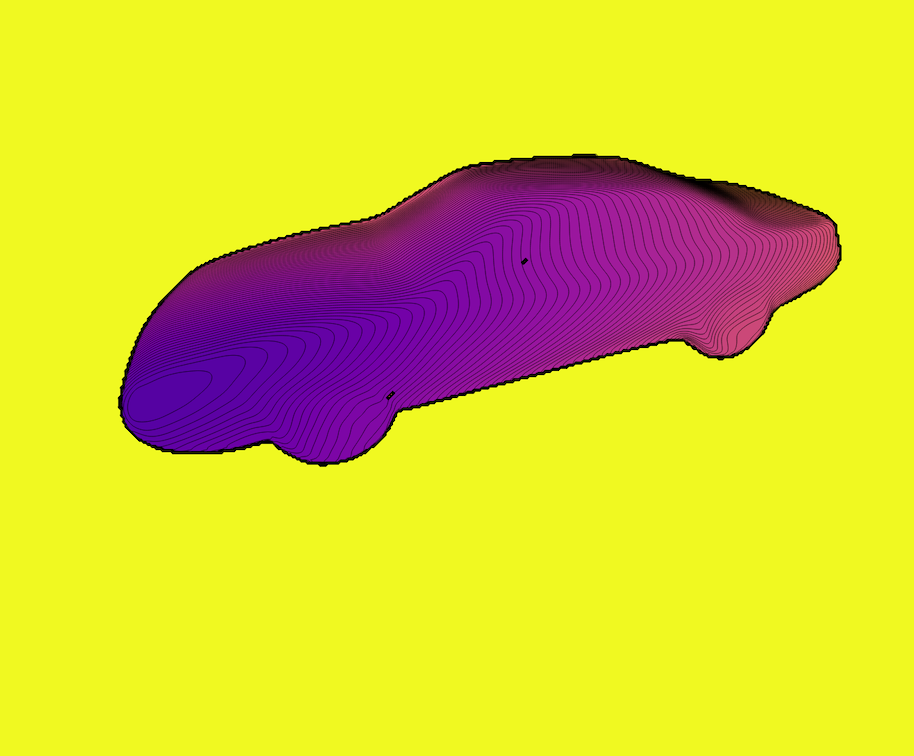}
\includegraphics[width=0.24\linewidth, trim={40mm 90mm 20mm 20mm}, clip]{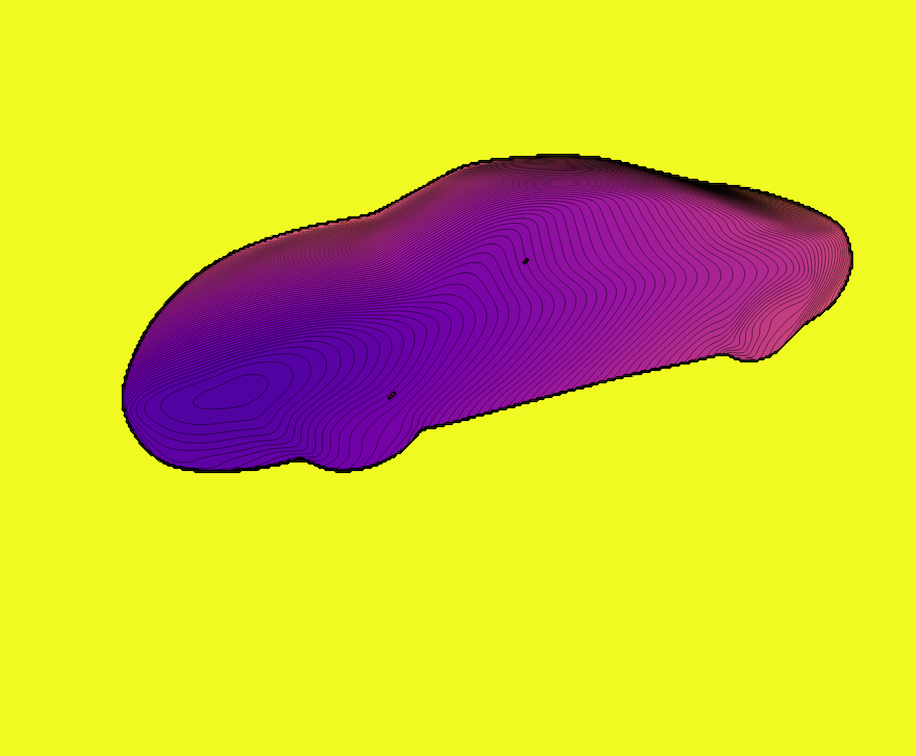}
\includegraphics[width=0.24\linewidth, trim={40mm 90mm 20mm 20mm}, clip]{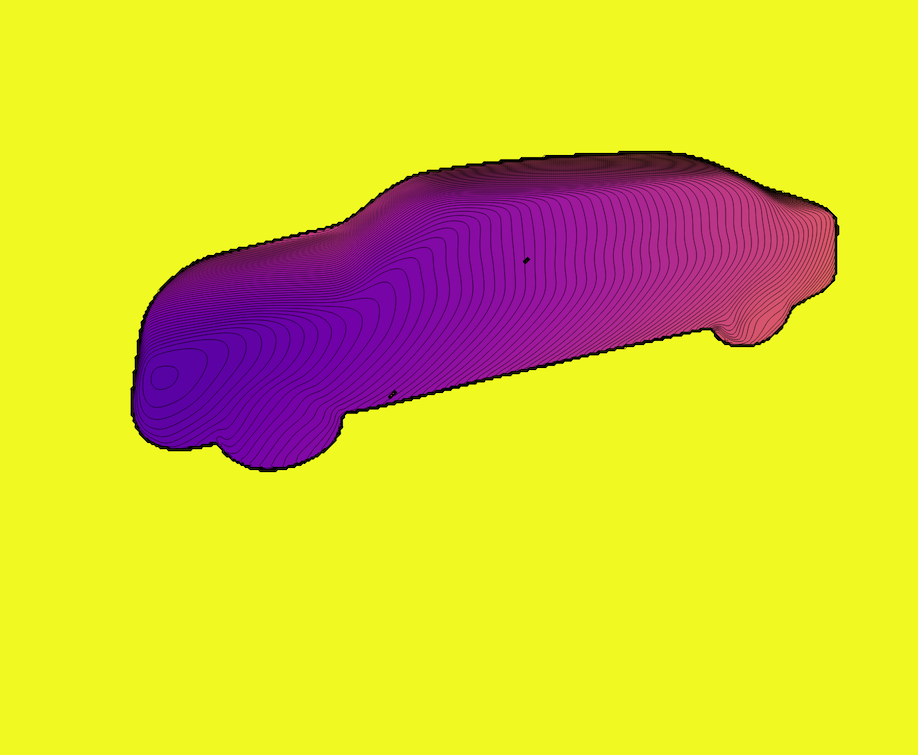}
\includegraphics[width=0.24\linewidth, trim={40mm 90mm 20mm 20mm}, clip]{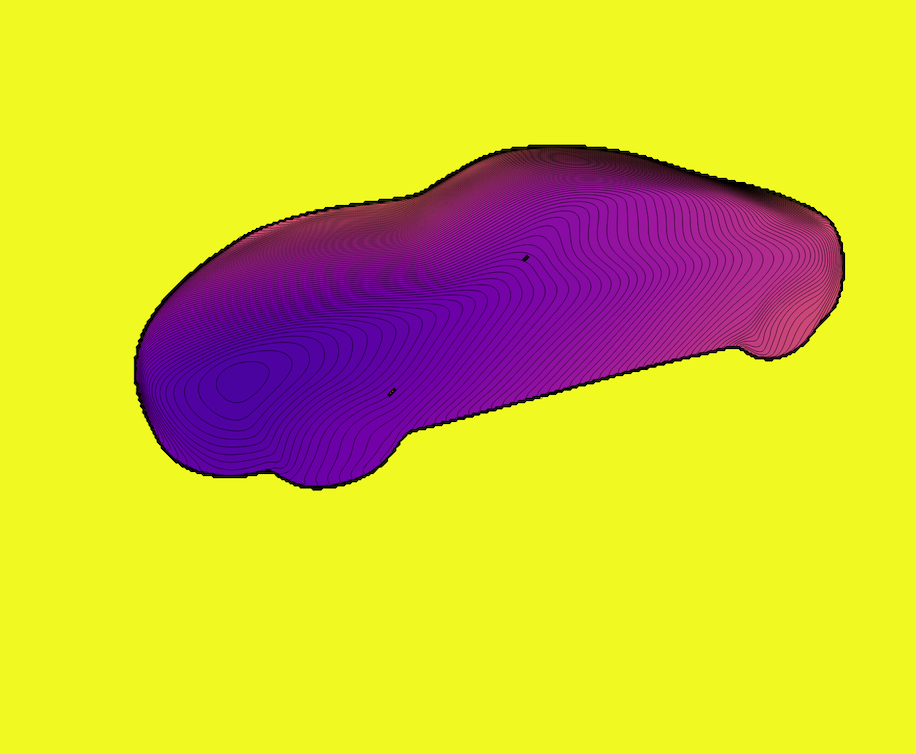}
\includegraphics[width=0.24\linewidth, trim={40mm 90mm 20mm 20mm}, clip]{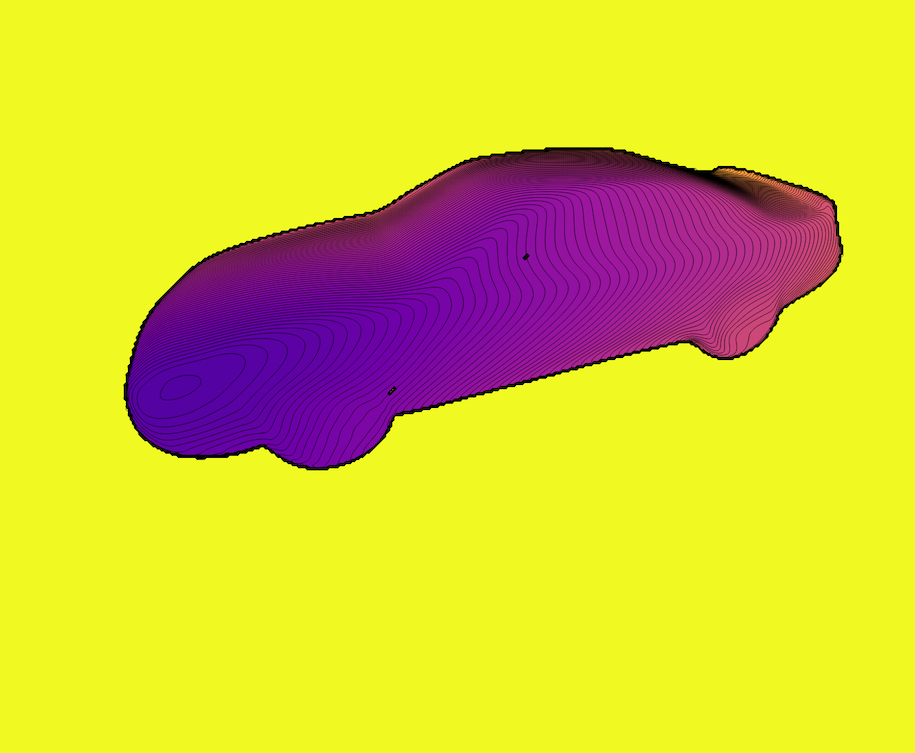}
\includegraphics[width=0.24\linewidth, trim={40mm 90mm 20mm 20mm}, clip]{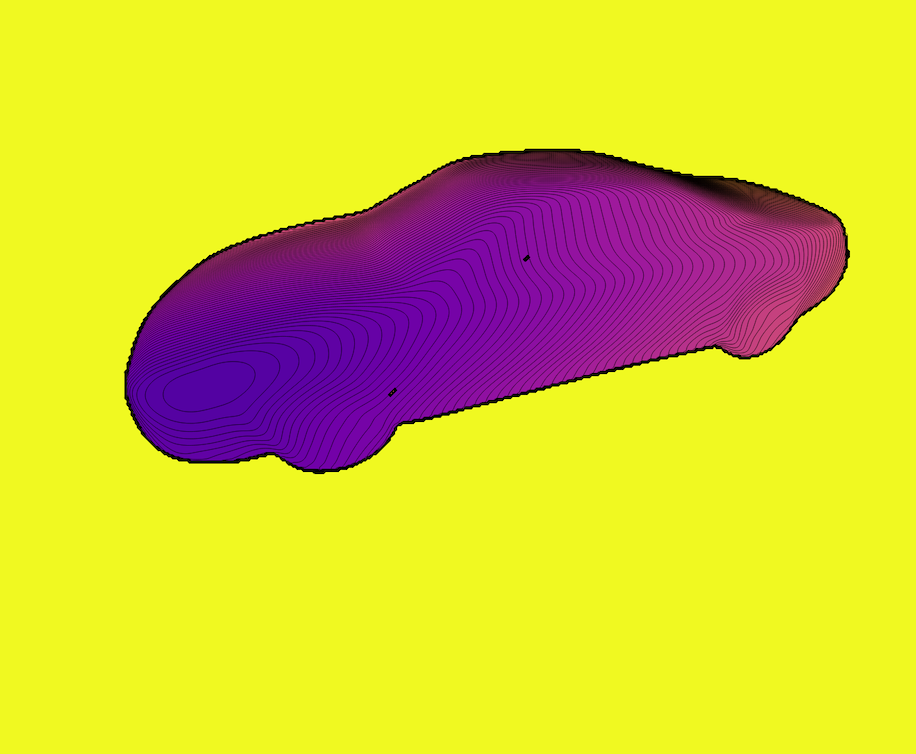}
\includegraphics[width=0.24\linewidth, trim={40mm 90mm 20mm 20mm}, clip]{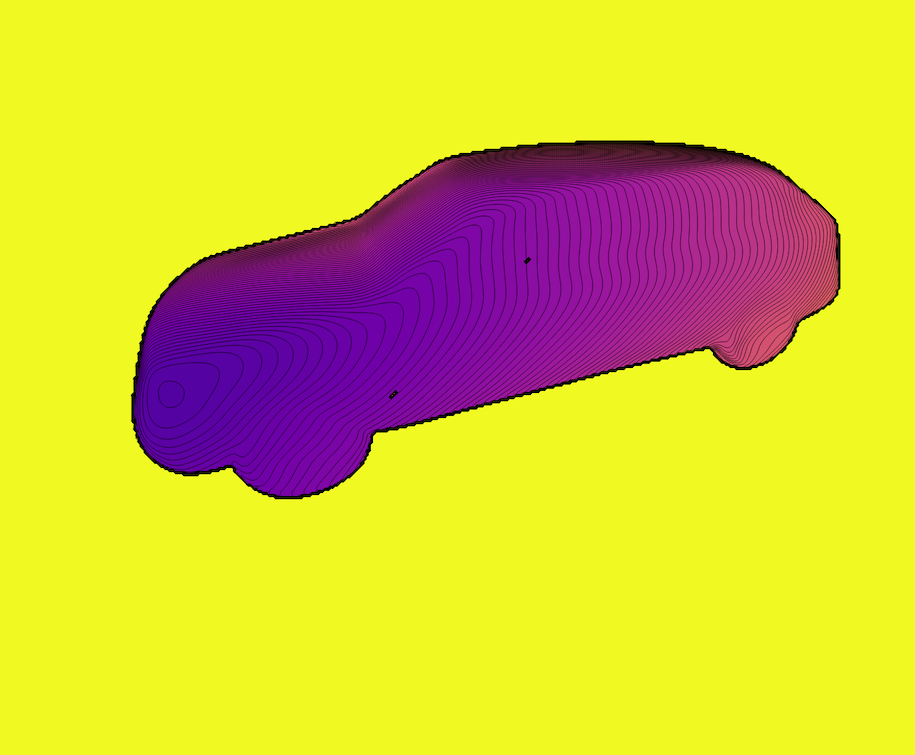}
\includegraphics[width=0.24\linewidth, trim={40mm 90mm 20mm 20mm}, clip]{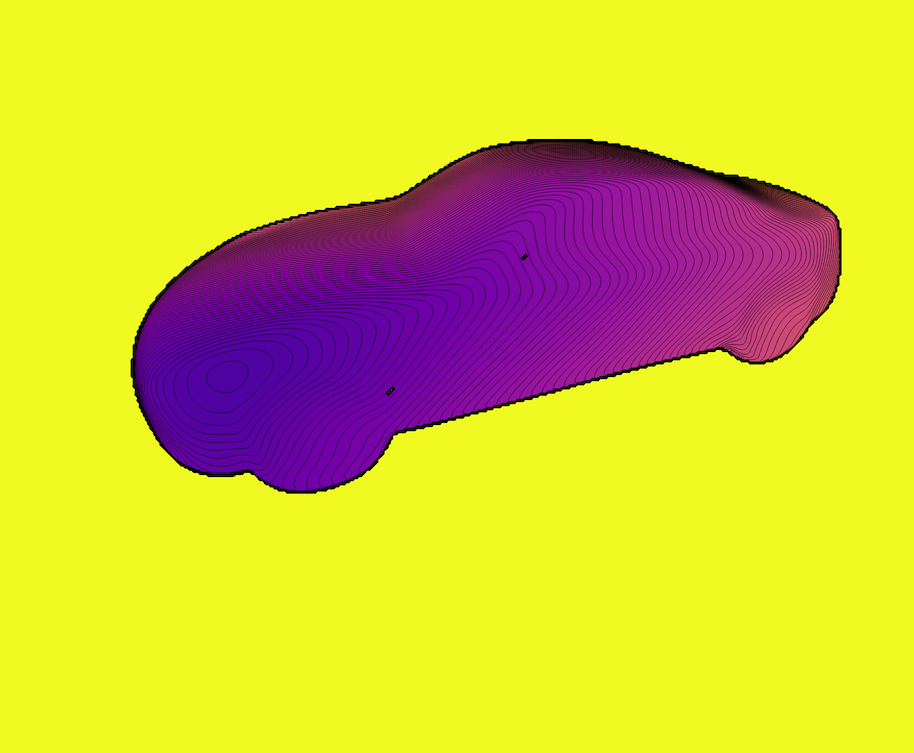}
\includegraphics[width=0.24\linewidth, trim={40mm 90mm 20mm 20mm}, clip]{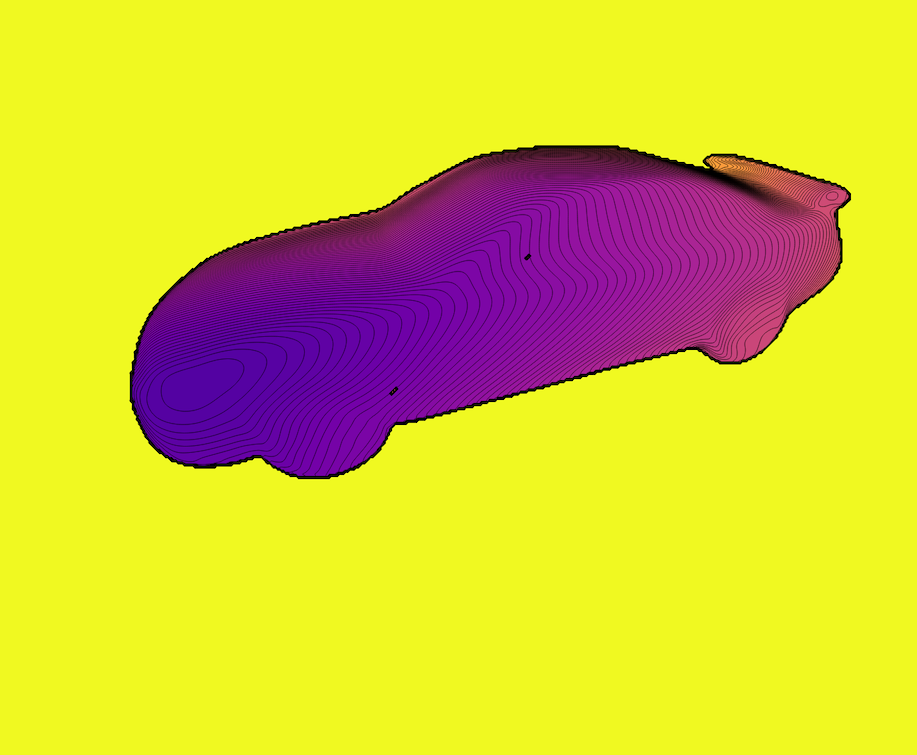}
\includegraphics[width=0.24\linewidth, trim={40mm 90mm 20mm 20mm}, clip]{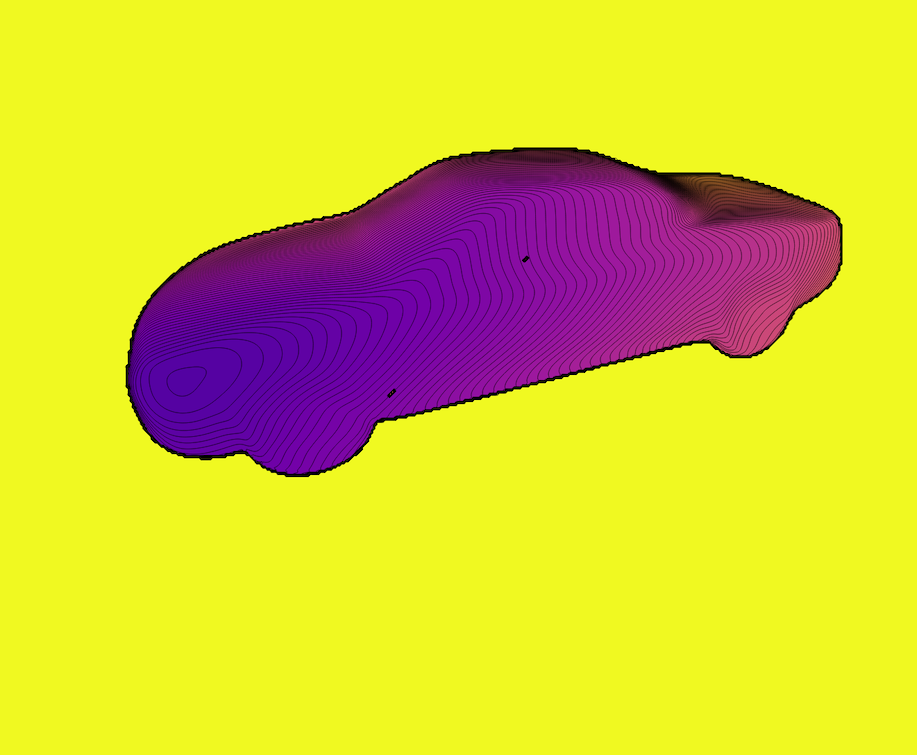}
\includegraphics[width=0.24\linewidth, trim={40mm 90mm 20mm 20mm}, clip]{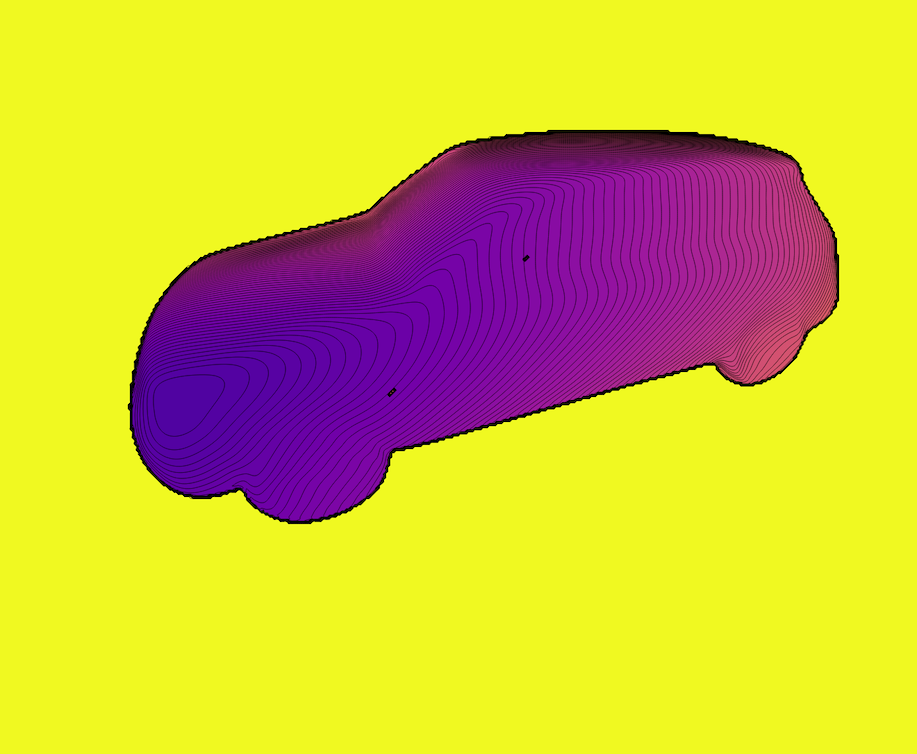}
\includegraphics[width=0.24\linewidth, trim={40mm 90mm 20mm 20mm}, clip]{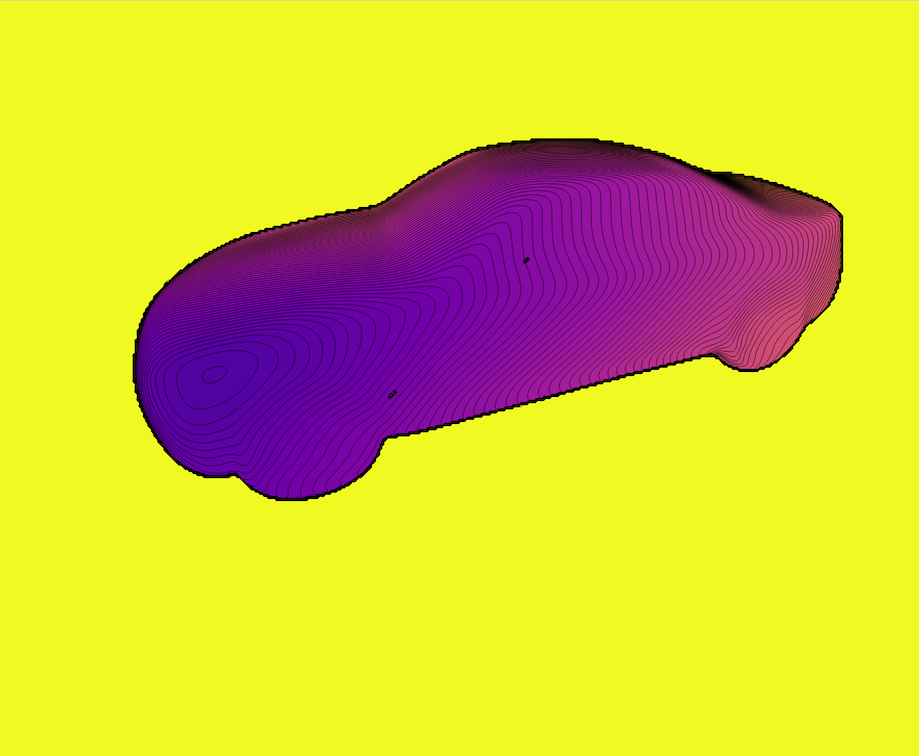}
\includegraphics[width=0.24\linewidth, trim={40mm 90mm 20mm 20mm}, clip]{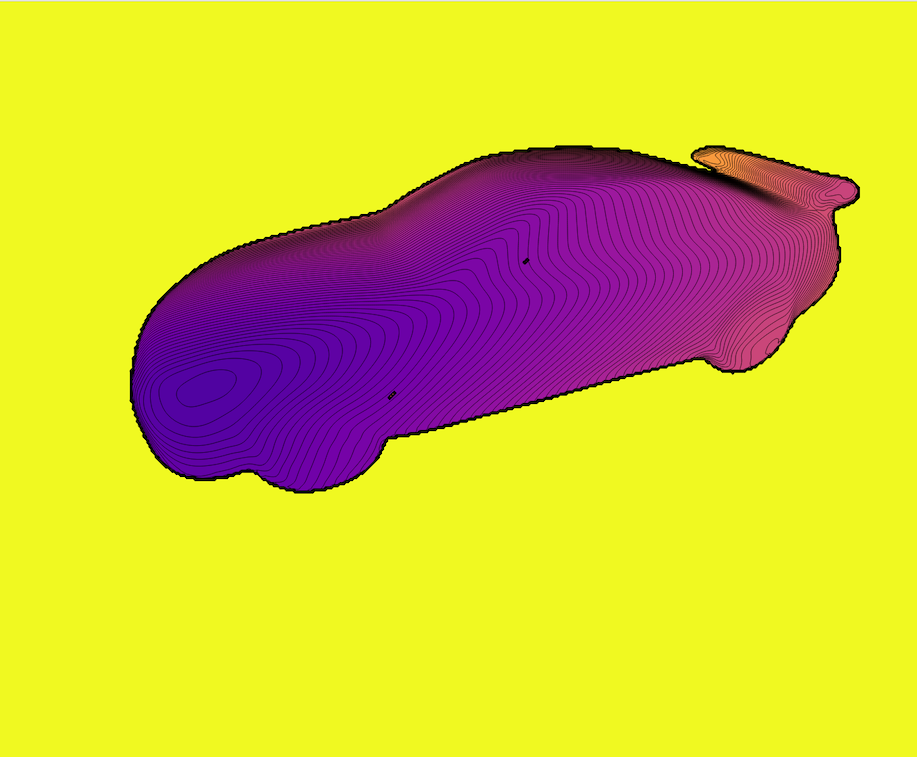}
\includegraphics[width=0.24\linewidth, trim={40mm 90mm 20mm 20mm}, clip]{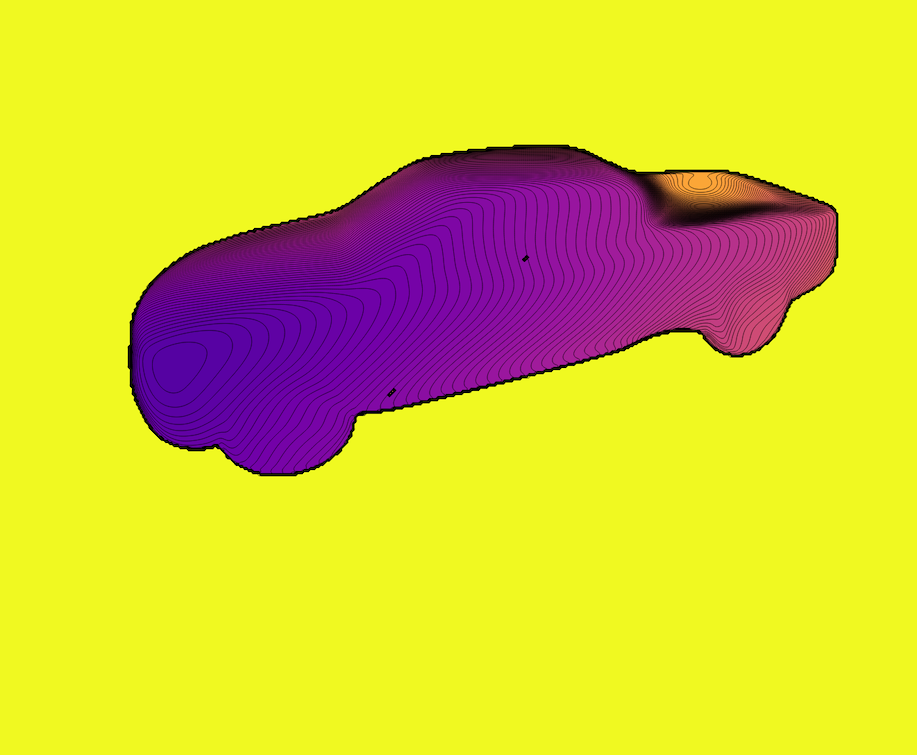}
\includegraphics[width=0.24\linewidth, trim={40mm 90mm 20mm 20mm}, clip]{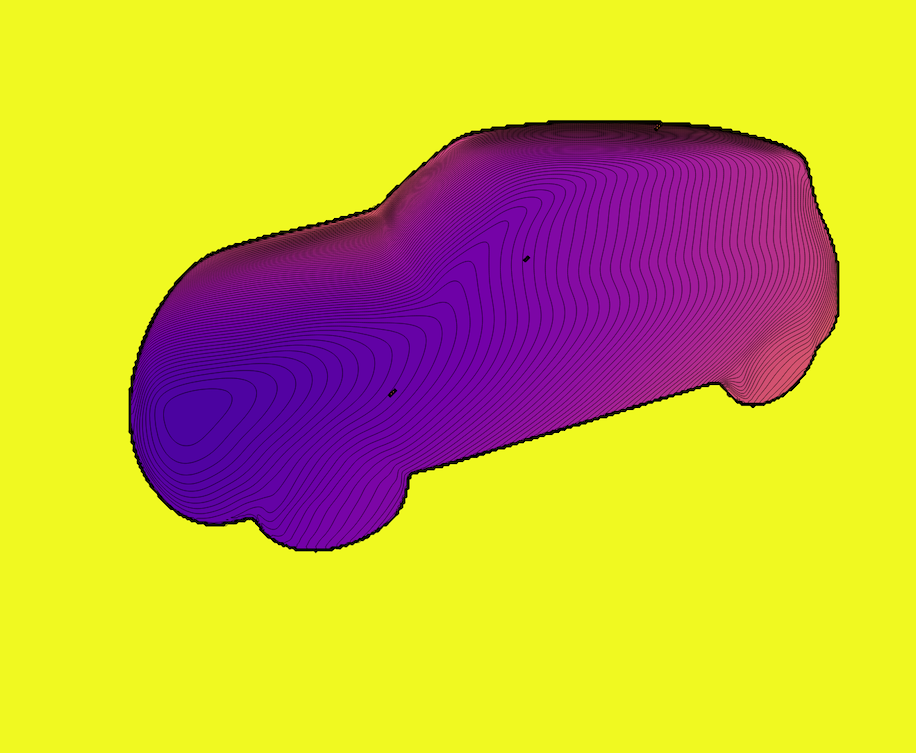}
\includegraphics[width=0.24\linewidth, trim={40mm 90mm 20mm 20mm}, clip]{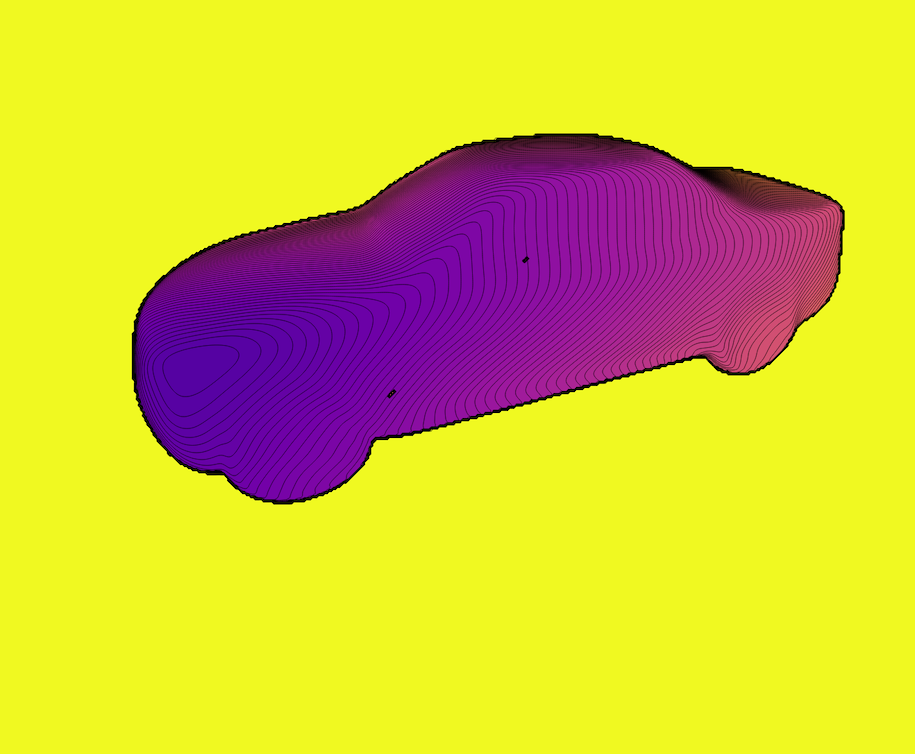}
\includegraphics[width=0.24\linewidth, trim={40mm 90mm 20mm 20mm}, clip]{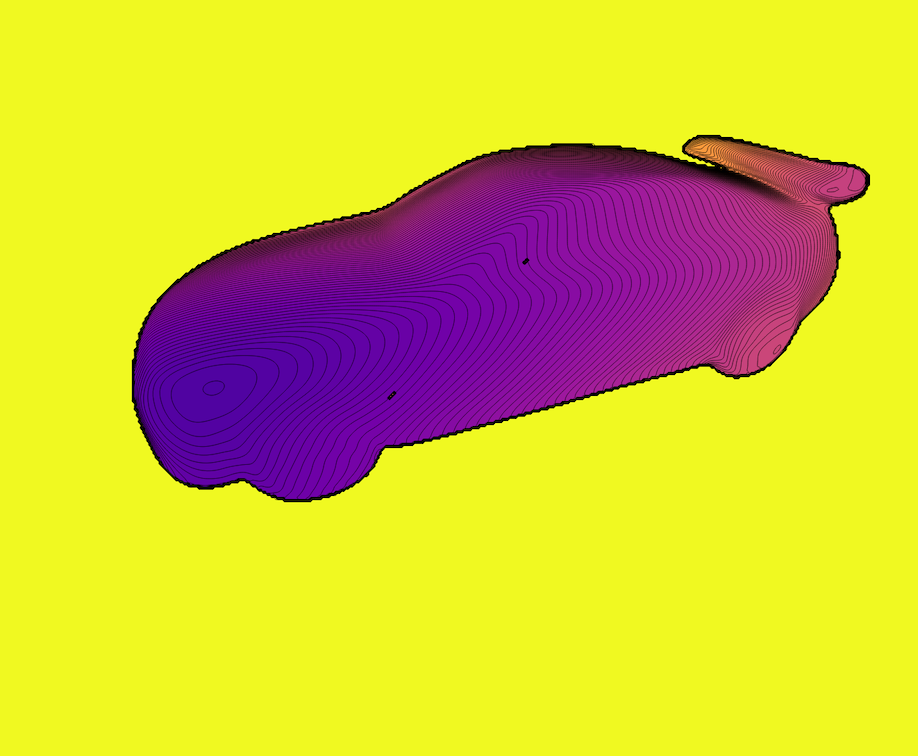}
\includegraphics[width=0.24\linewidth, trim={40mm 90mm 20mm 20mm}, clip]{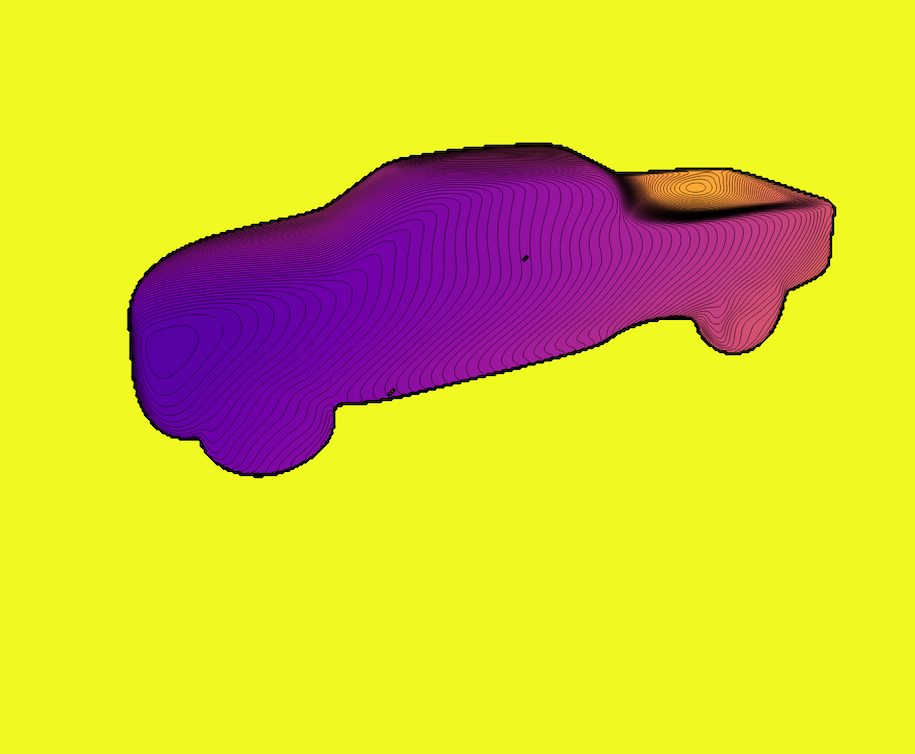}
%
%
\end{minipage}%
\caption{SDDF shape interpolation between two car instances. The first and last row show the SDDF output from the same view for two different instances from the training set. The rows in the middle are generated by using a weighted average of the latent codes of the upper-most and down-most instances as an input to the SDDF network. In each column, from top to bottom, the latent code weights with respect to the upper-most instance are $1$, $0.75$, $0.5$, $0.25$, $0$, respectively. Note how the shapes transform smoothly from top to bottom with intermediate shapes looking like valid cars. This demonstrates that the SDDF model represents the latent shape space continuously and meaningfully.}
\label{fig:carinterpolation}
\end{figure*}
\begin{figure*}[h!]
    \centering
\begin{minipage}{\linewidth}
  \centering
\includegraphics[width=0.24\linewidth, trim={60mm 30mm 60mm 40mm}, clip]{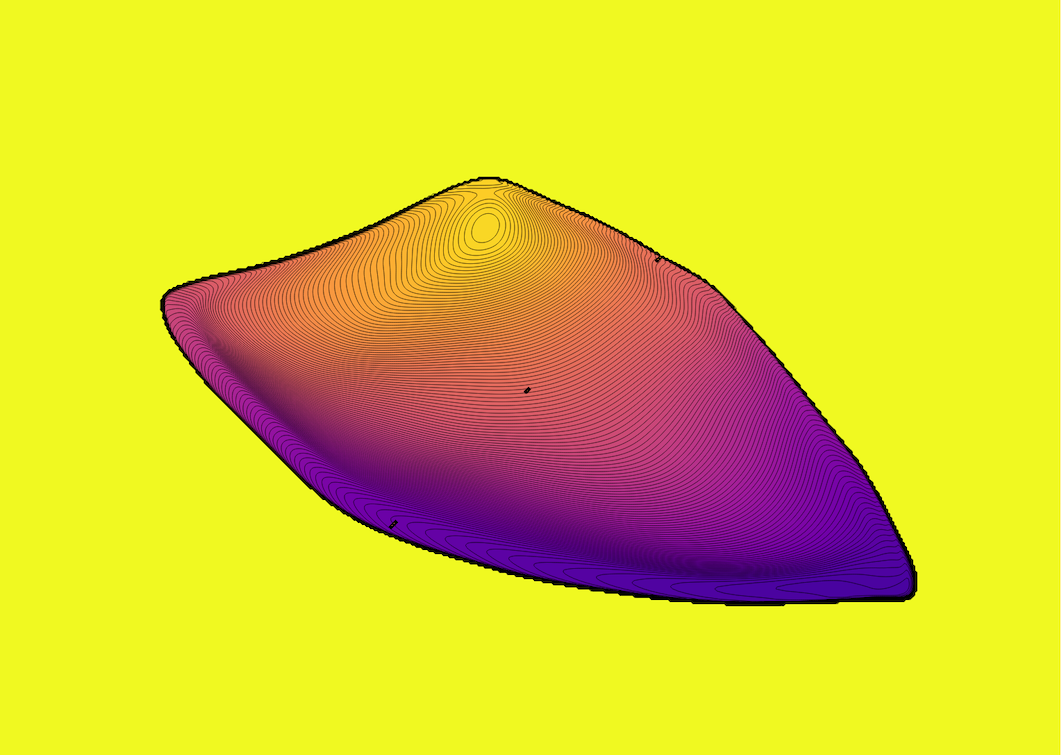}
\includegraphics[width=0.24\linewidth, trim={60mm 30mm 60mm 40mm}, clip]{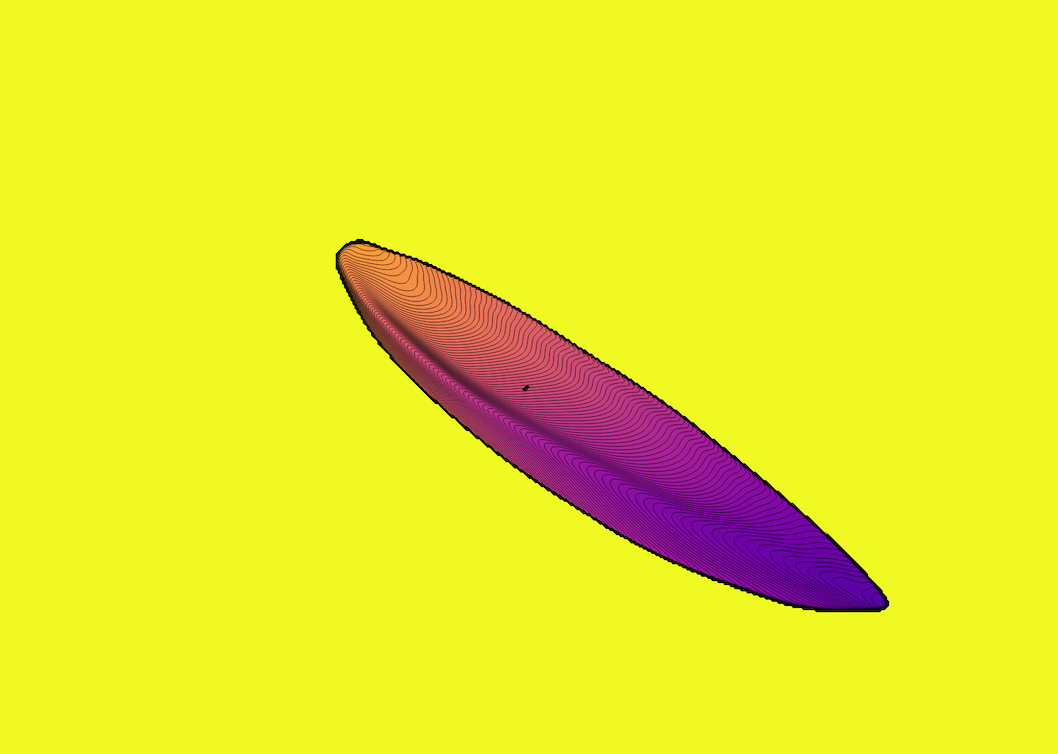}
\includegraphics[width=0.24\linewidth, trim={60mm 30mm 60mm 40mm}, clip]{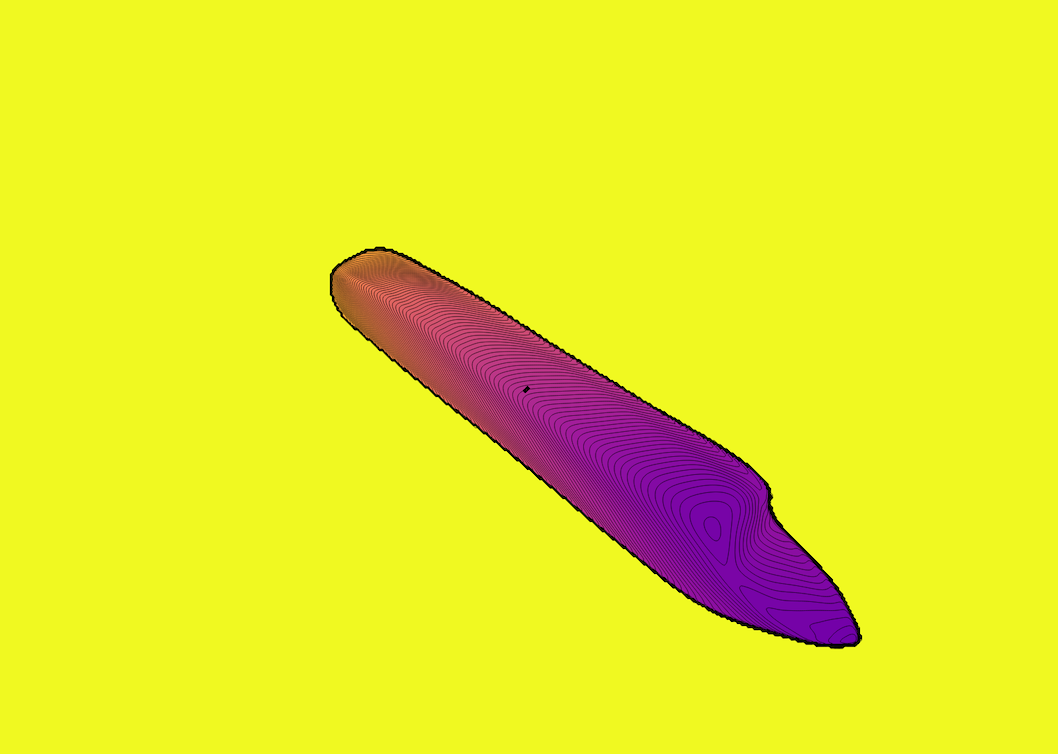}
\includegraphics[width=0.24\linewidth, trim={40mm 10mm 20mm 20mm}, clip]{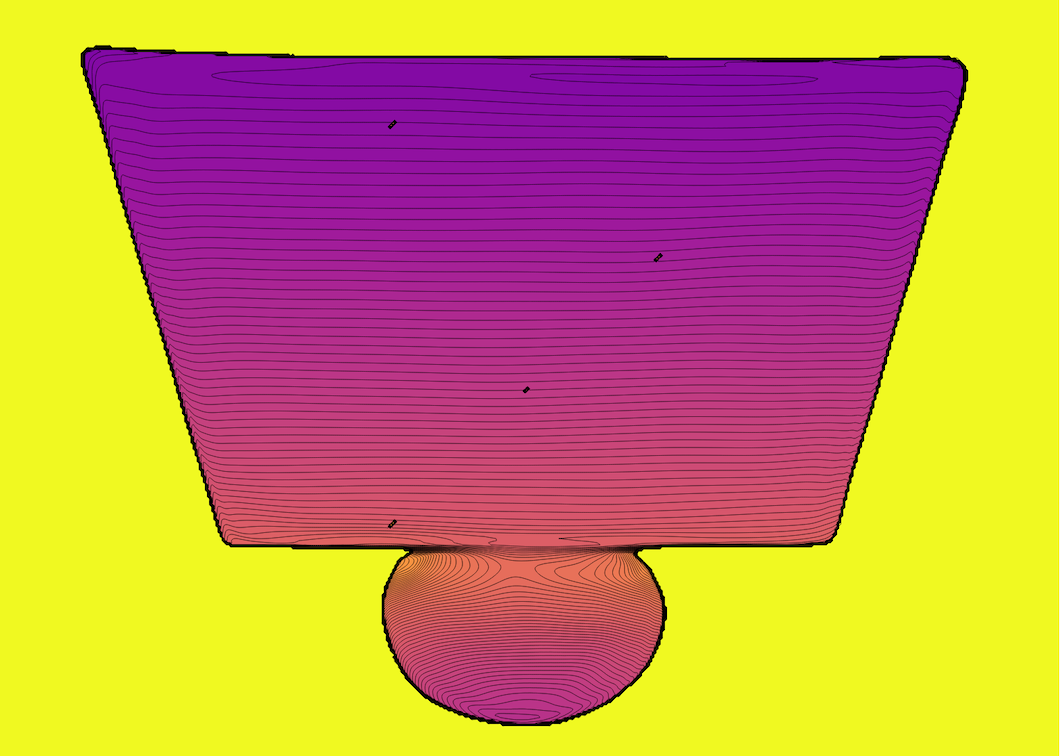}
\includegraphics[width=0.24\linewidth, trim={60mm 30mm 60mm 40mm}, clip]{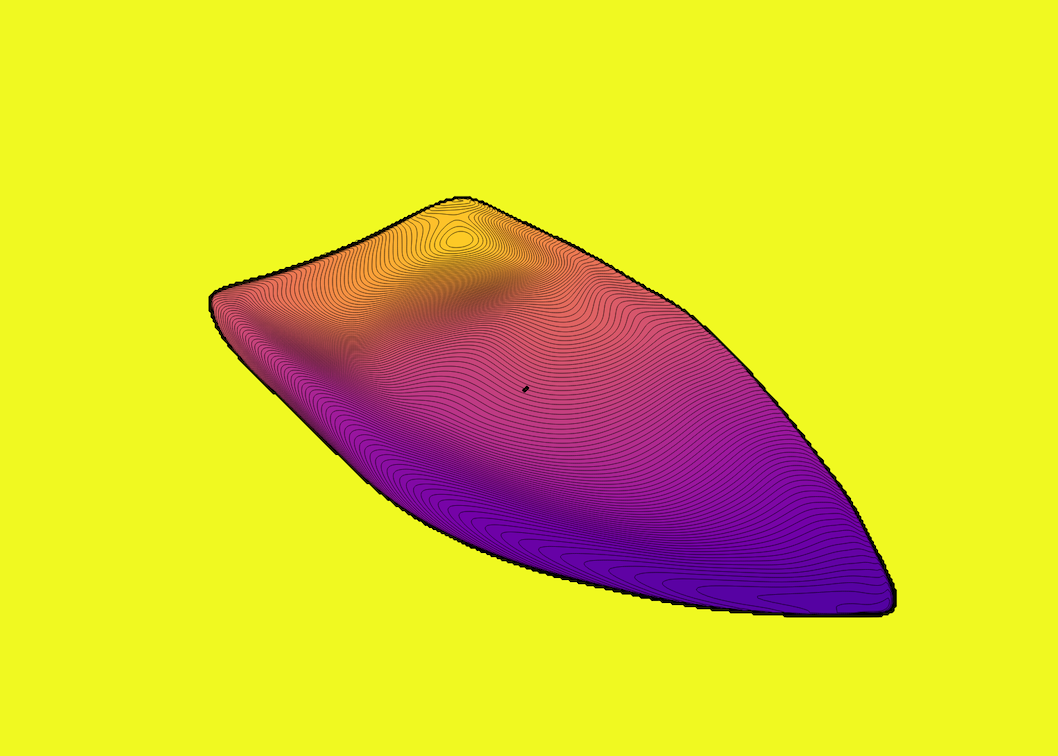}
\includegraphics[width=0.24\linewidth, trim={60mm 30mm 60mm 40mm}, clip]{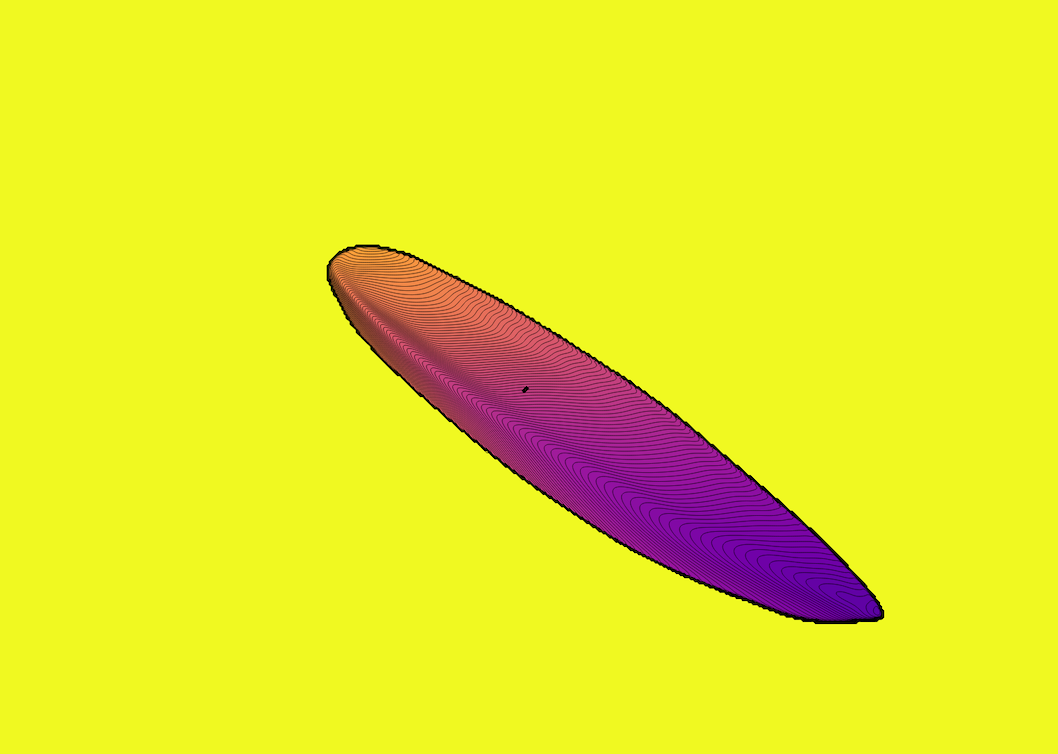}
\includegraphics[width=0.24\linewidth, trim={60mm 30mm 60mm 40mm}, clip]{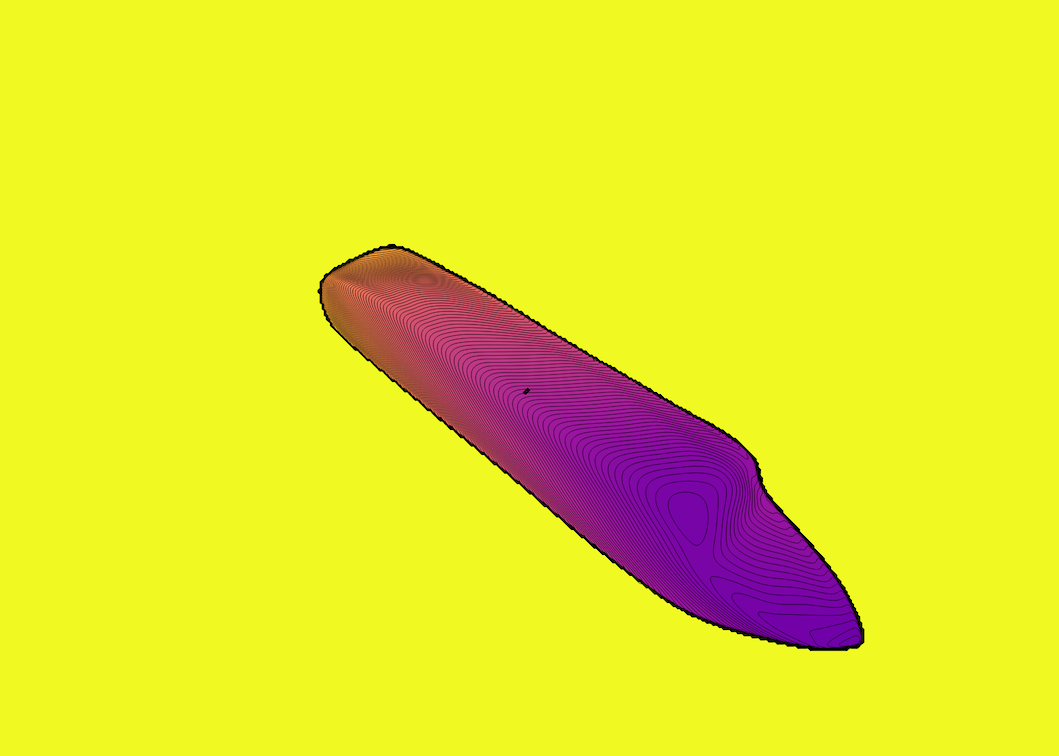}
\includegraphics[width=0.24\linewidth, trim={40mm 10mm 20mm 20mm}, clip]{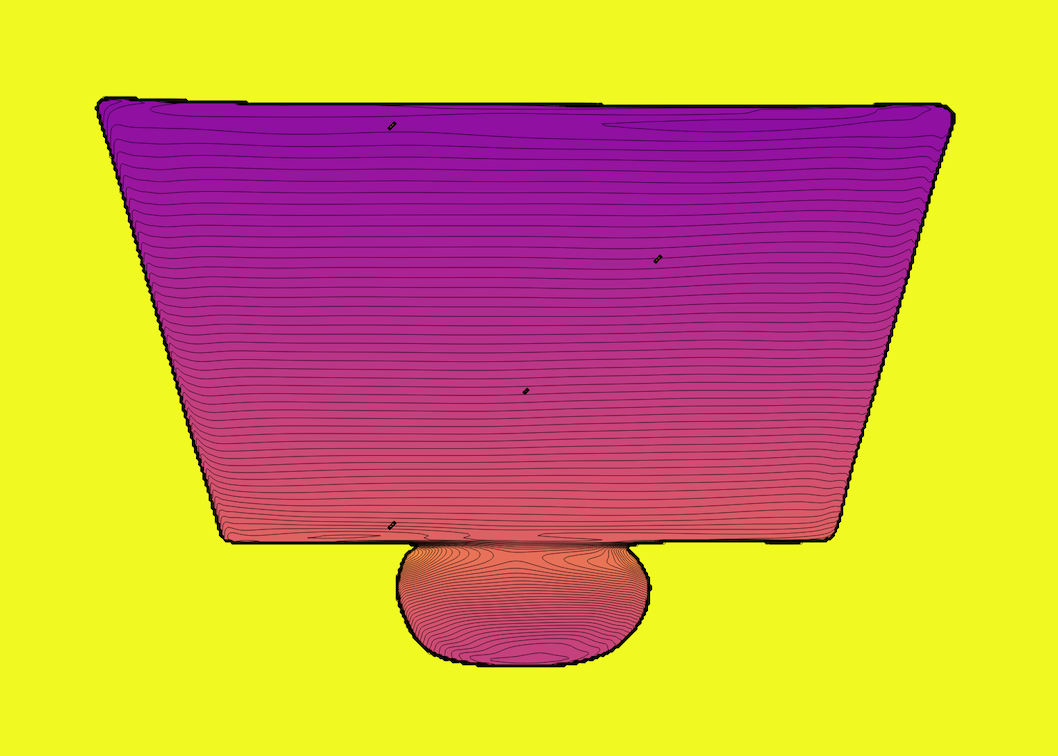}
\includegraphics[width=0.24\linewidth, trim={60mm 30mm 60mm 40mm}, clip]{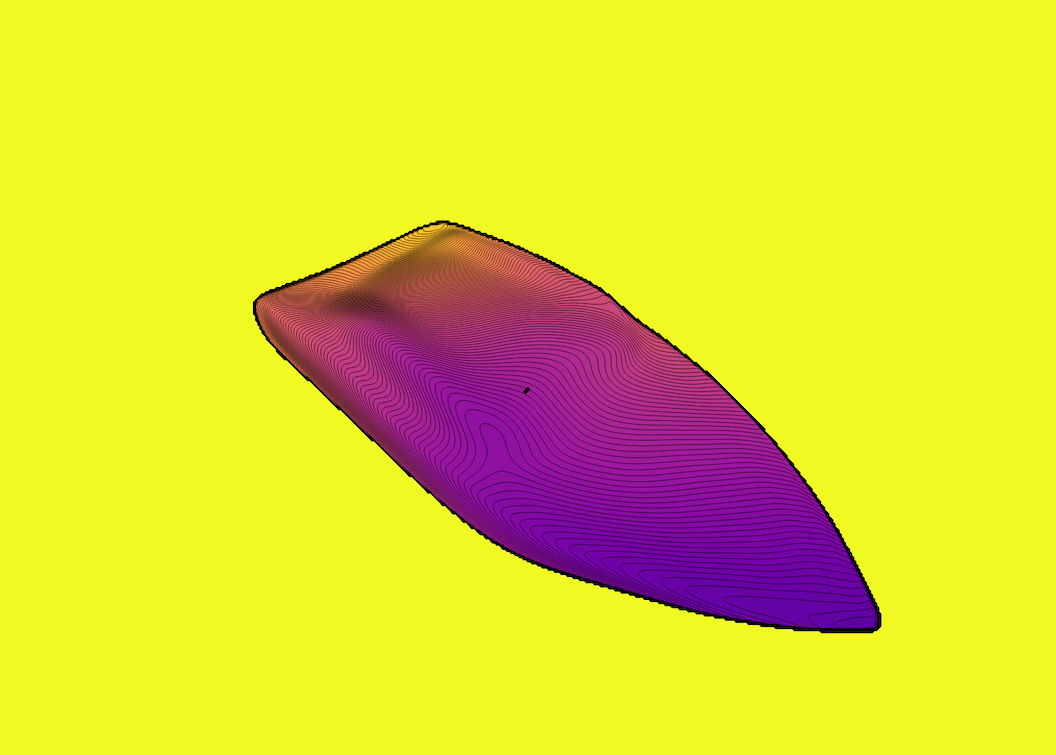}
\includegraphics[width=0.24\linewidth, trim={60mm 30mm 60mm 40mm}, clip]{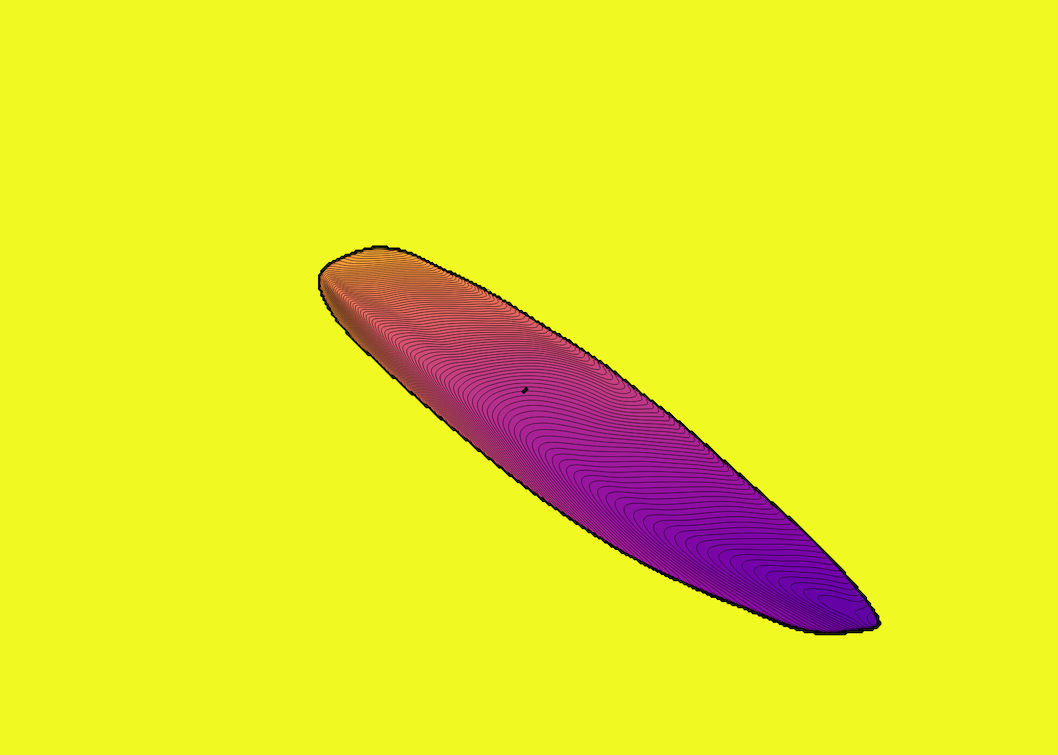}
\includegraphics[width=0.24\linewidth, trim={60mm 30mm 60mm 40mm}, clip]{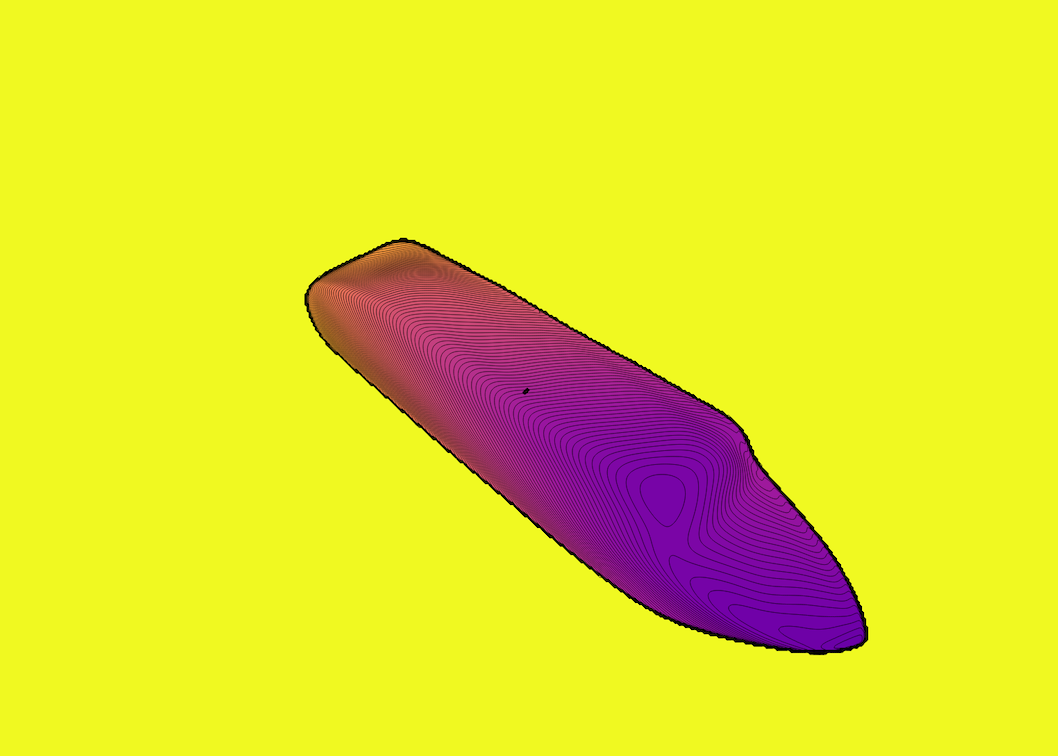}
\includegraphics[width=0.24\linewidth, trim={40mm 10mm 20mm 20mm}, clip]{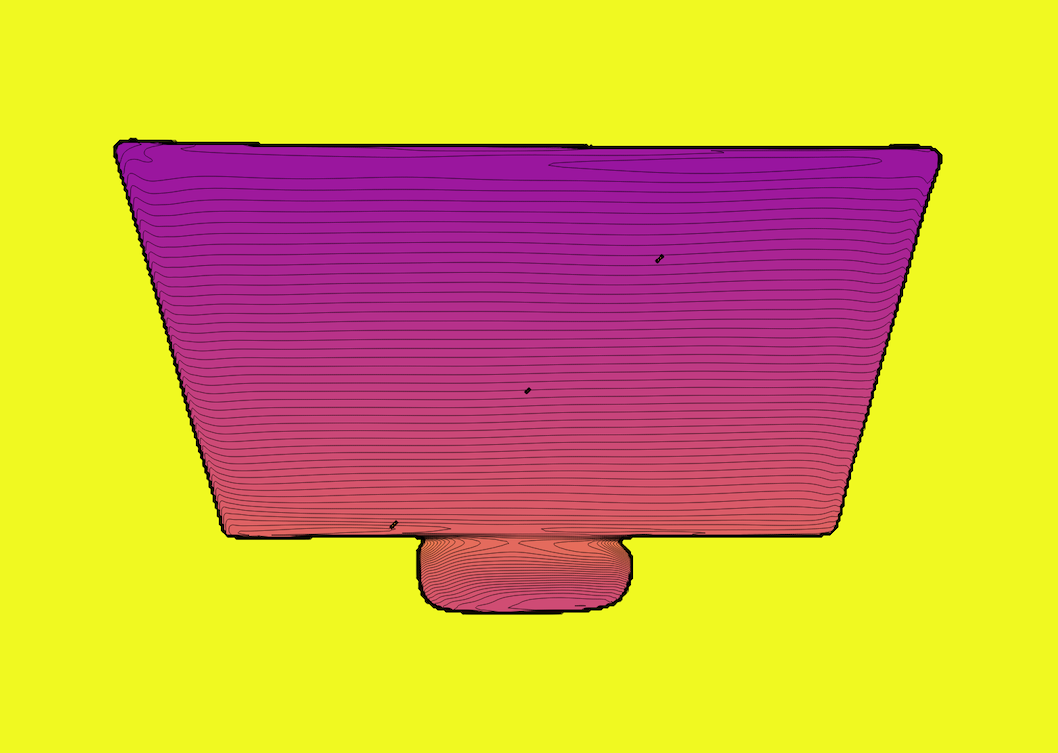}
\includegraphics[width=0.24\linewidth, trim={60mm 30mm 60mm 40mm}, clip]{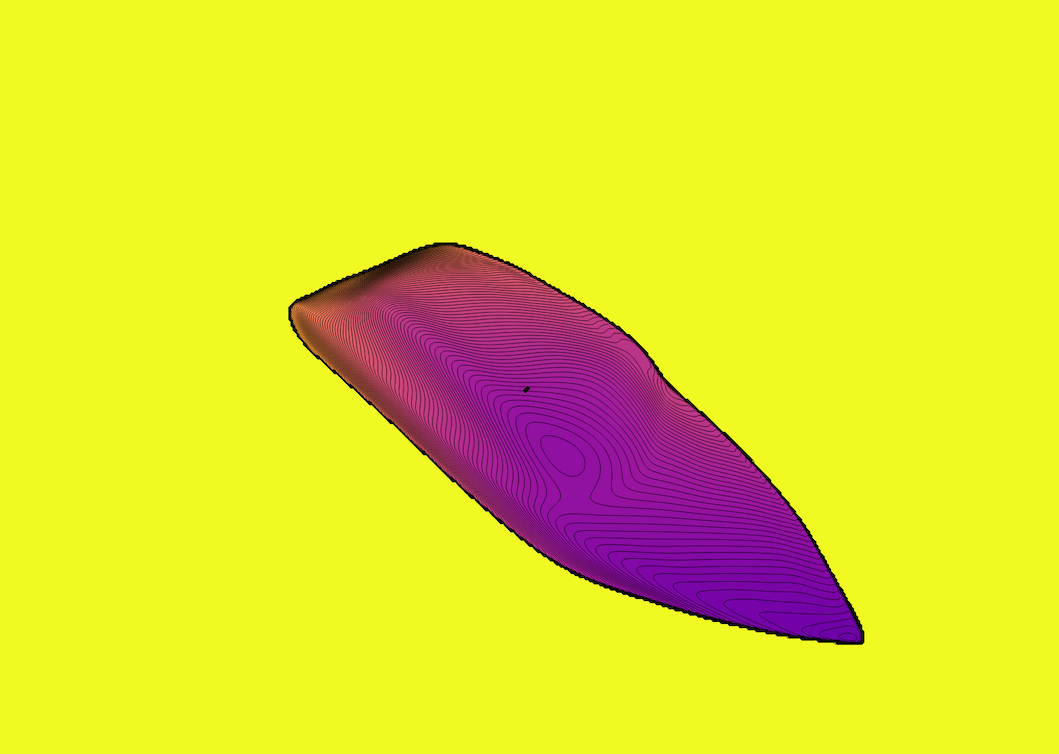}
\includegraphics[width=0.24\linewidth, trim={60mm 30mm 60mm 40mm}, clip]{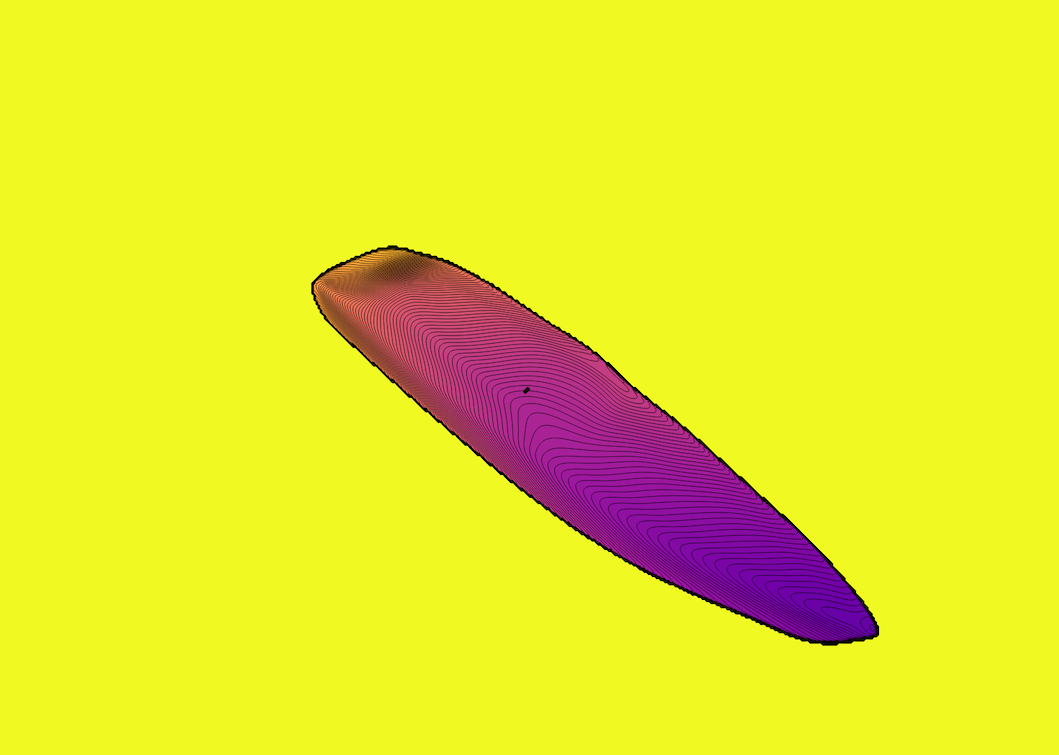}
\includegraphics[width=0.24\linewidth, trim={60mm 30mm 60mm 40mm}, clip]{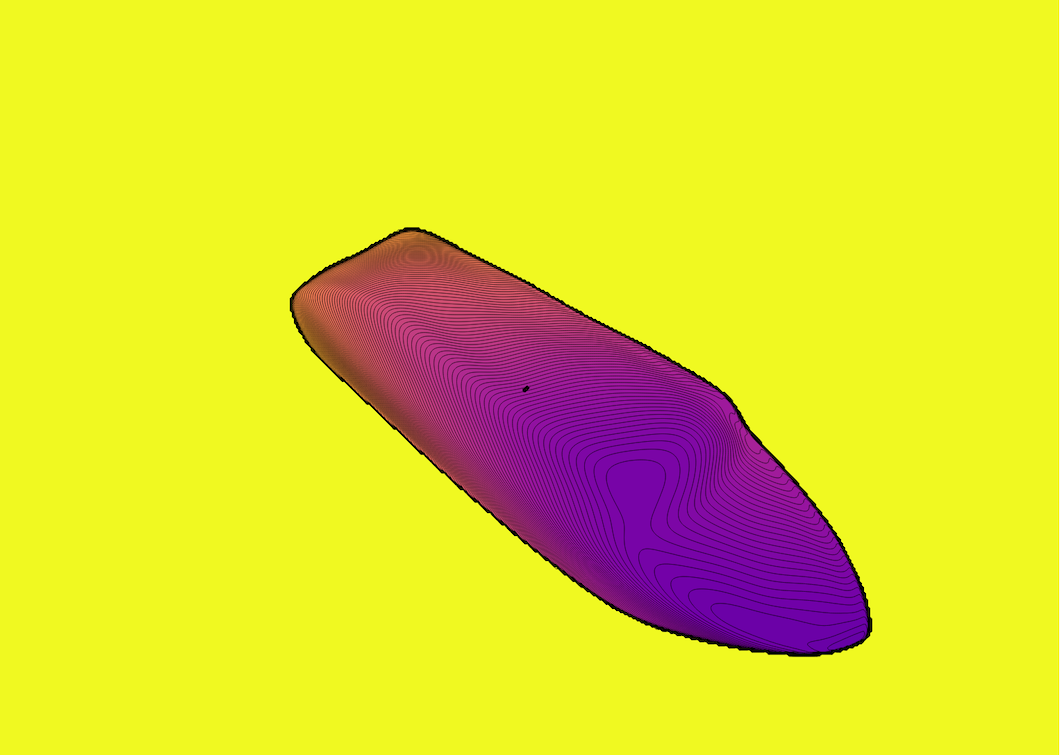}
\includegraphics[width=0.24\linewidth, trim={40mm 10mm 20mm 20mm}, clip]{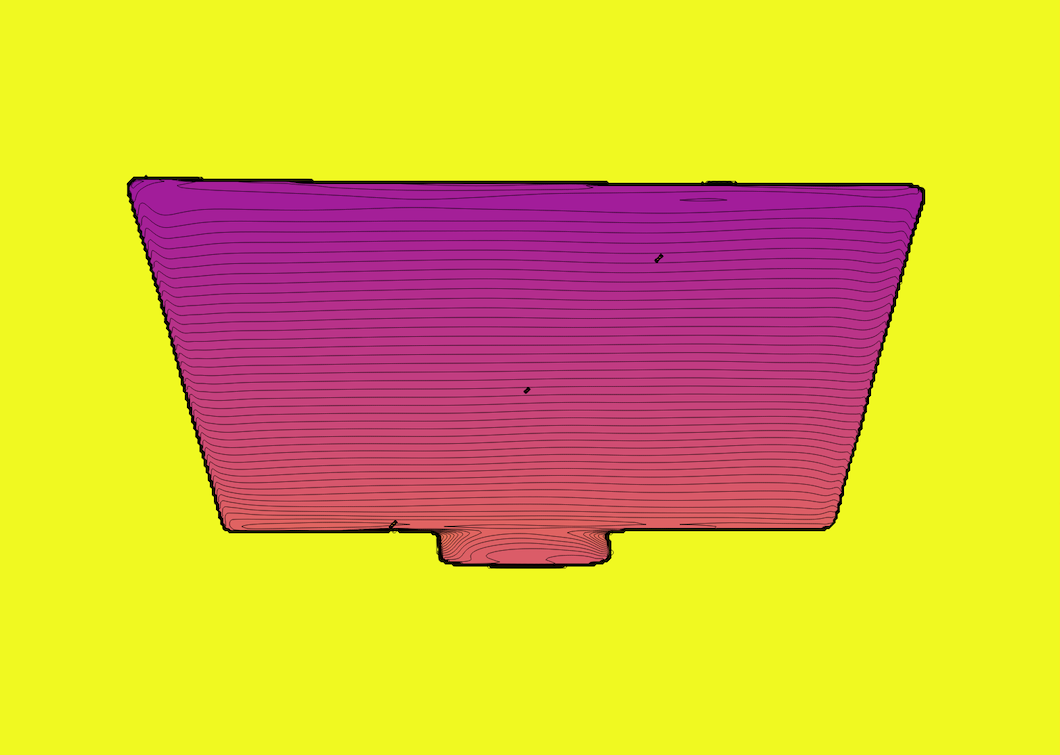}
\includegraphics[width=0.24\linewidth, trim={60mm 30mm 60mm 40mm}, clip]{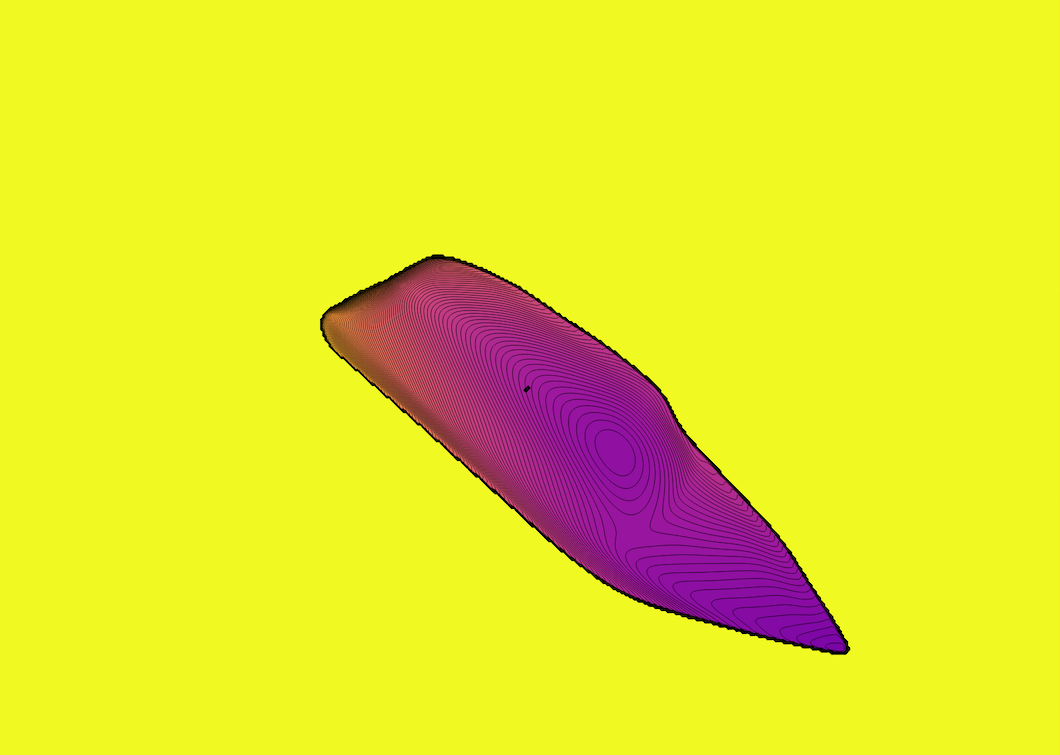}
\includegraphics[width=0.24\linewidth, trim={60mm 30mm 60mm 40mm}, clip]{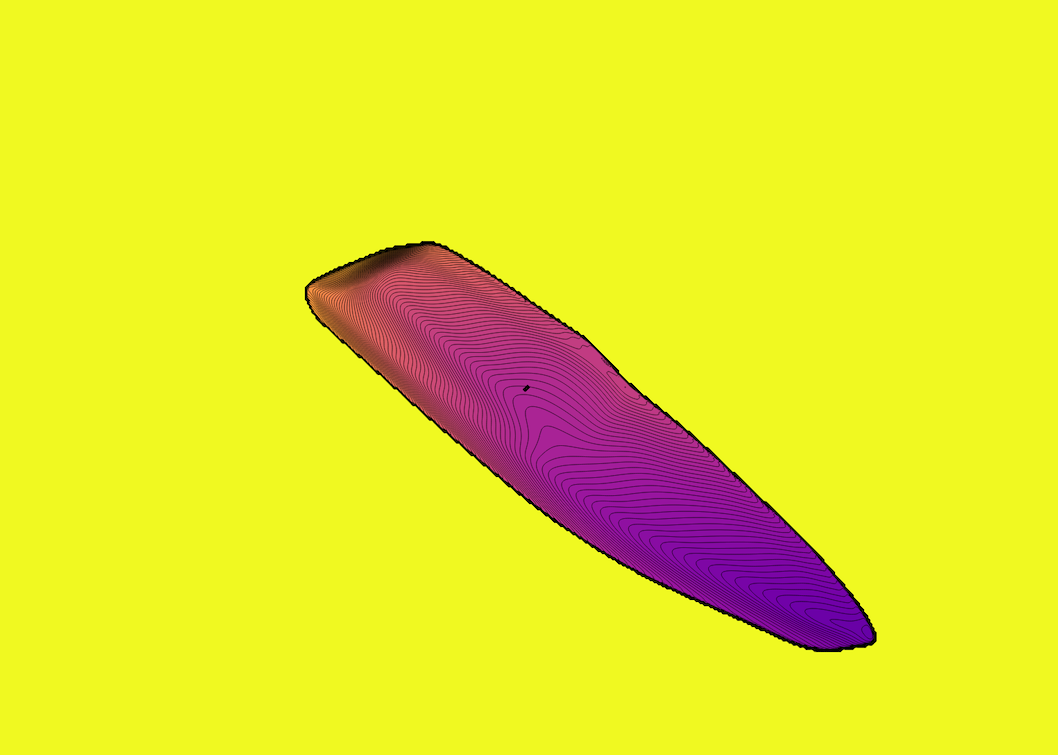}
\includegraphics[width=0.24\linewidth, trim={60mm 30mm 60mm 40mm}, clip]{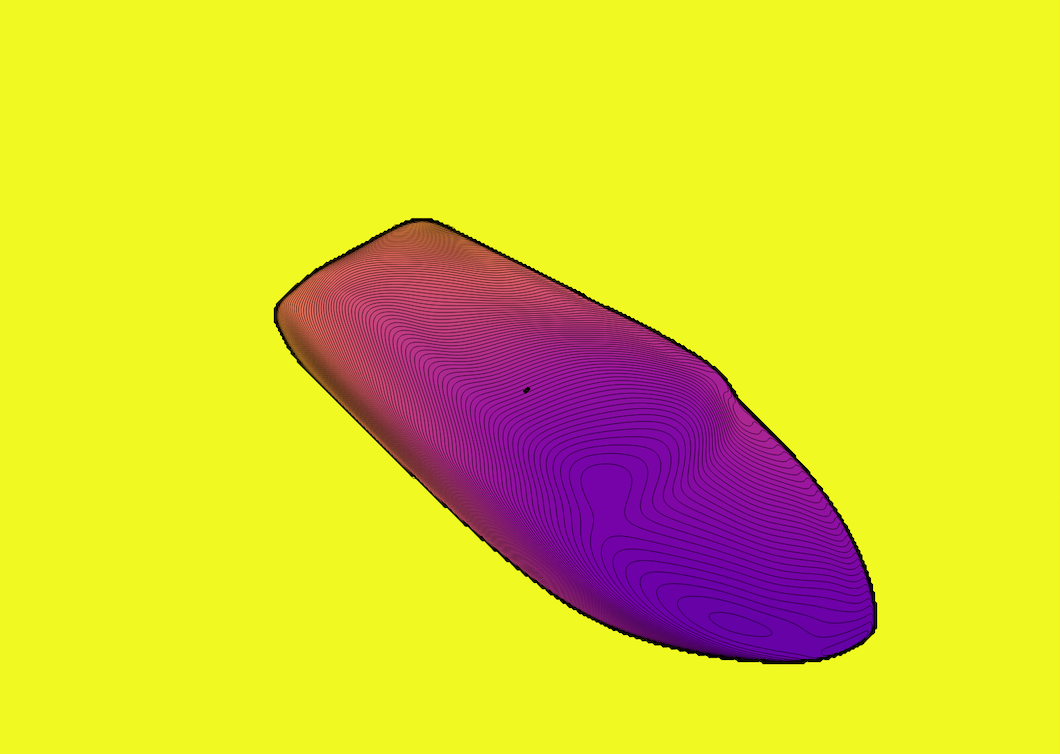}
\includegraphics[width=0.24\linewidth, trim={40mm 10mm 20mm 20mm}, clip]{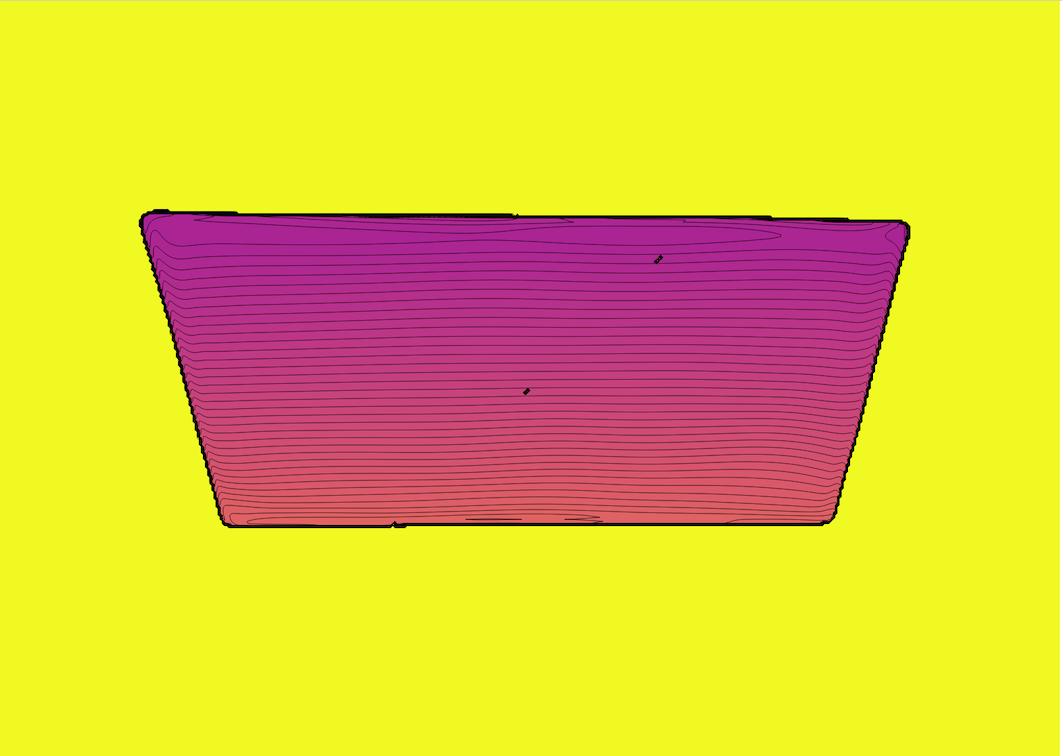}
%
%
%
\end{minipage}%
\caption{SDDF shape interpolation between two watercraft instances in the first three columns and two display instances in the last column. The first and last row show the SDDF output from the same view for two different instances from the training set. The rows in the middle are generated by using a weighted average of the latent codes of the upper-most and downer-most instances as an input to the SDDF network. In each column, from top to bottom, the latent code weights with respect to the upper-most instance are $1$, $0.75$, $0.5$, $0.25$, $0$, respectively. Note how the shapes transform smoothly from top to bottom with intermediate shapes looking like valid instances. This demonstrates that the SDDF model represents the latent shape space continuously and meaningfully.}
\label{fig:watercraft_display_interpolation}
\end{figure*}
%
%
%
%
%
\clearpage
\clearpage

\begin{figure*}[h!]
\centering
\includegraphics[width=0.49\linewidth, trim={0mm 0mm 0mm 0mm}, clip]{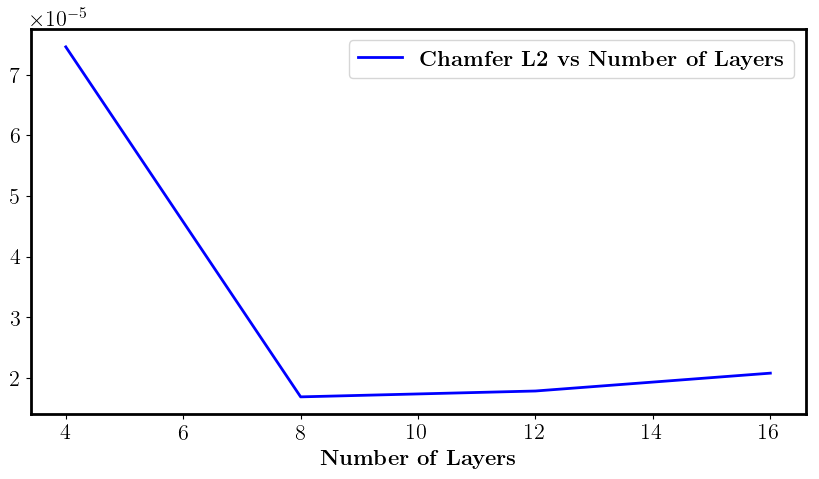}%
\hfill%
\includegraphics[width=0.49\linewidth, trim=0mm 0mm 0mm 0mm, clip]{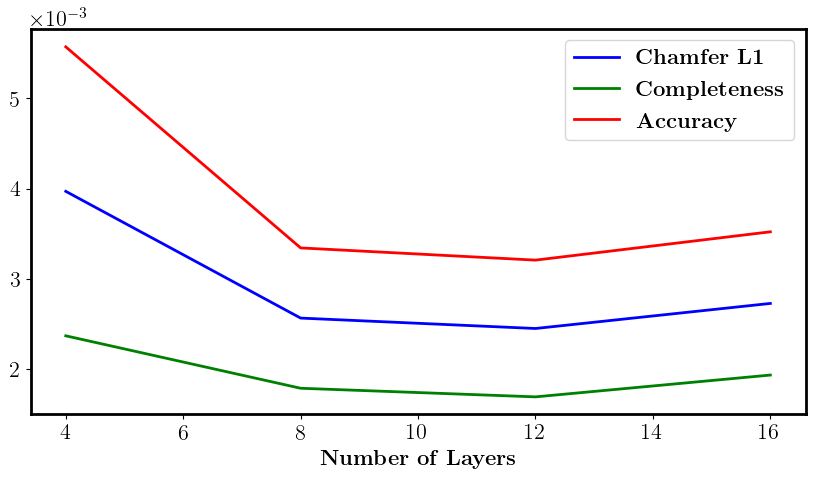}%
\hfill%
\includegraphics[width=0.49\linewidth, trim=0mm 0mm 0mm 0mm, clip]{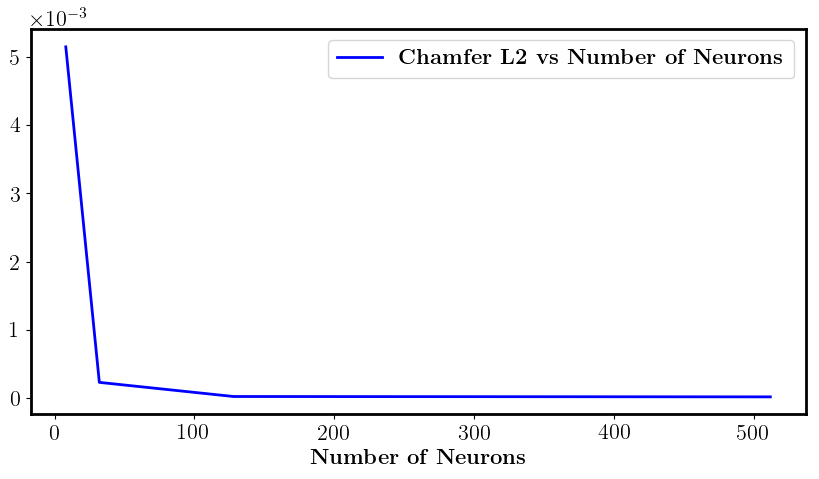}%
\hfill%
\includegraphics[width=0.49\linewidth, trim=0mm 0mm 0mm 0mm, clip]{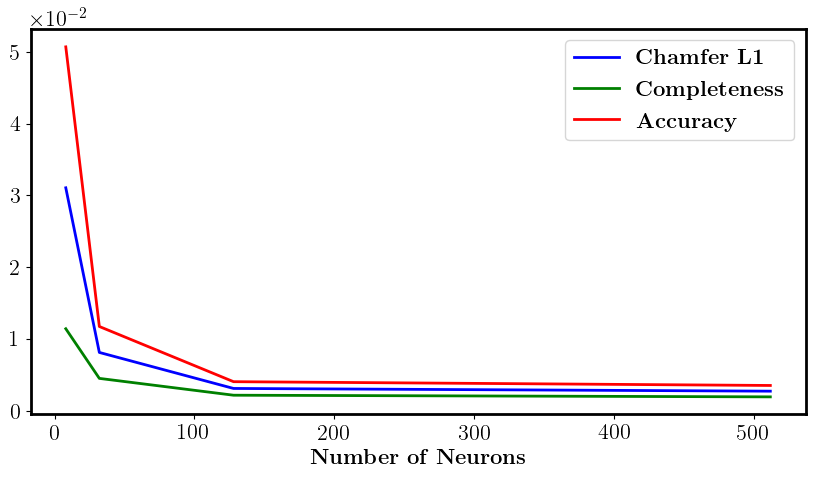}
\caption{The first row shows the error as the number of layers changes between $4,8,12,16$ layers. The second row shows the error as number of neurons changes between $8,32,128,512$. In each row the left figure is Chamfer-$L_2$ distance, and the right one contains Chamfer-$L_1$ distance, Accuracy and Completeness.}
\label{fig:curves}
\end{figure*}

\subsection{Effect of the Network Size on the Performance}

This section evaluates the effect of the number of layers and number of neurons per layer in the SDDF model on the performance qualitatively and quantitatively. The results are obtained using a single sofa instance, shown in Fig.~3 in main paper. We use the same settings as the single-object experiment in Sec.~5.1 with data augmentation from $10k$ random views. The default model in the paper has $16$ layers with $512$ neurons per layer with skip connections every $4$ layers.

First, we varied the number of layers in the neural network, while keeping a skip connection every $4$ layers. We evaluated the SDDF model with $4$ (with out skip connection), $8$, $12$, $16$ layers. Quantitatively, as seen in Fig.~\ref{fig:curves}, at first the error decreases significantly and then it increases slowly. Qualitatively, in Fig.~\ref{fig:newlayer}, with more layers the model can capture finer details about the shape but even with $8$ layers the shape is reconstructed very well.

Second, in the default setup with $16$ layers, we kept an equal number of neurons per layer but varied the number as $8$, $32$, $128$, $512$. As we see in Fig.~\ref{fig:curves}, at first the error decreases significantly and then continues to decrease slowly. In Fig.~\ref{fig:newcell}, we see that with fewer neurons per layer the model cannot capture the shape details very well. In comparison to the changing number of layers experiment, we see that the model is qualitatively more sensitive to the number of neurons.

\begin{figure*}[t]
\begin{minipage}{\linewidth}
\centering
\includegraphics[width=0.24\linewidth, trim={10mm 30mm 10mm 30mm}, clip]{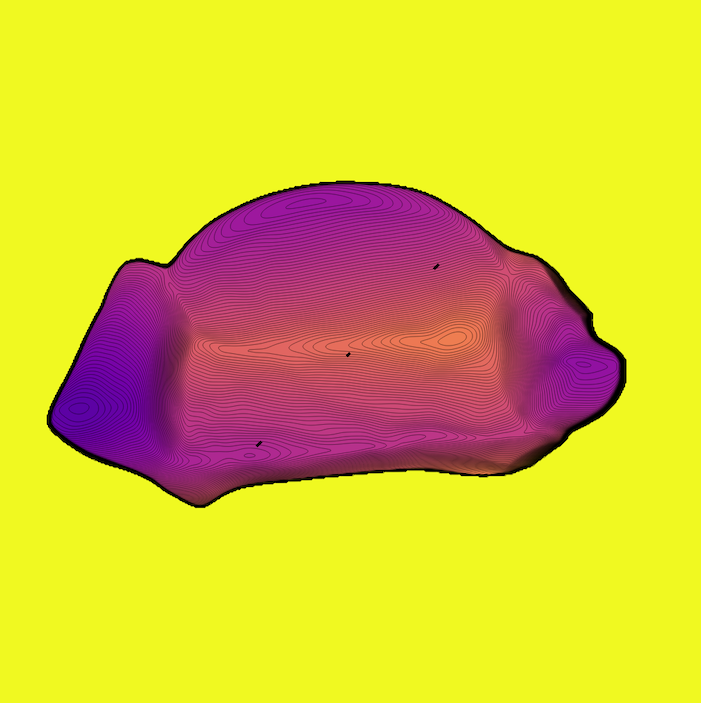}%
\hfill%
\includegraphics[width=0.24\linewidth, trim=10mm 30mm 10mm 30mm, clip]{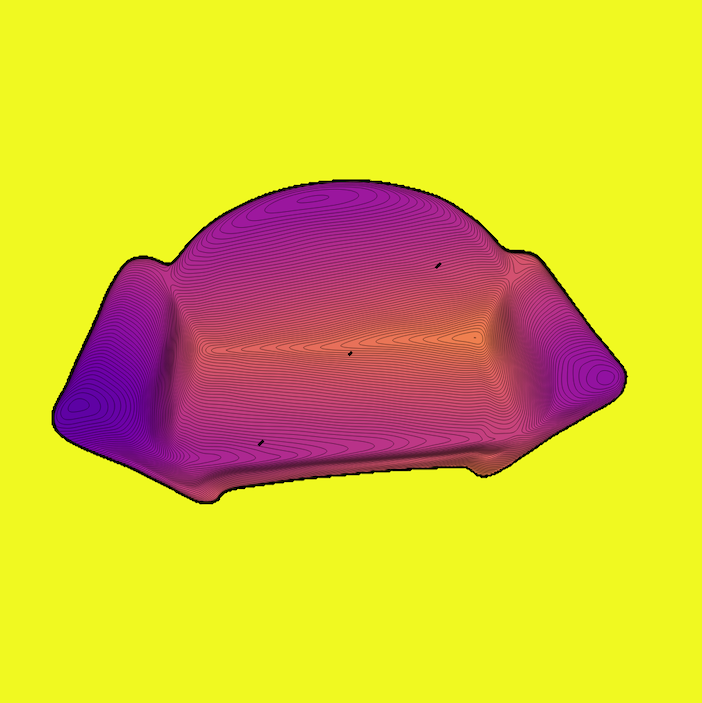}%
\hfill%
\includegraphics[width=0.24\linewidth, trim=10mm 30mm 10mm 30mm, clip]{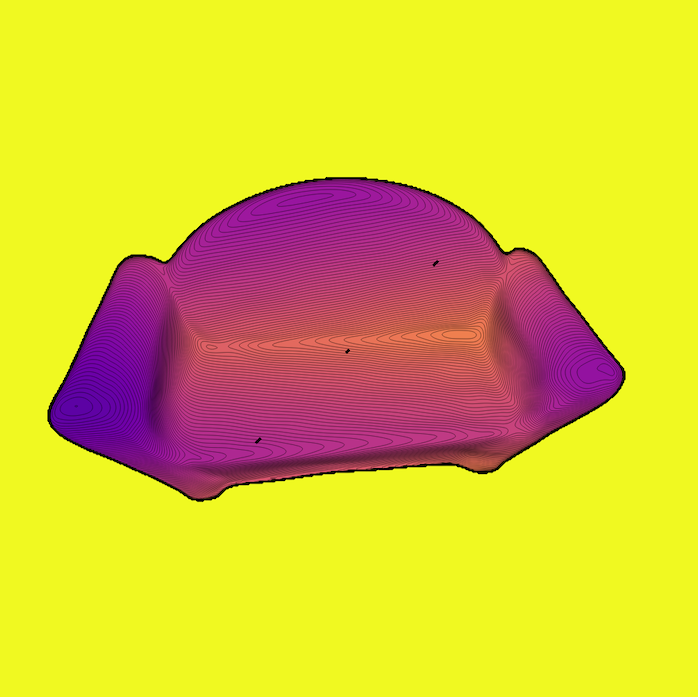}
\hfill%
\includegraphics[width=0.24\linewidth, trim={10mm 30mm 10mm 30mm}, clip]{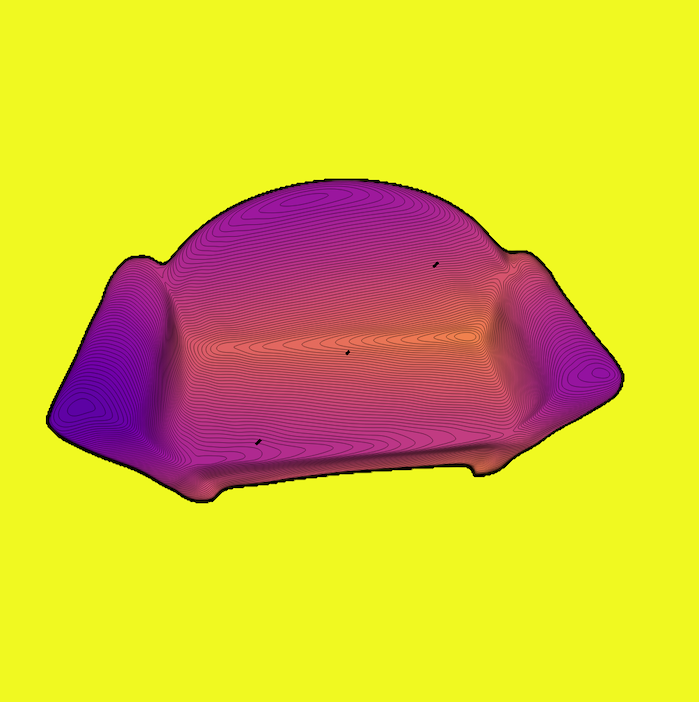}
\caption{Distance views synthesized by our SDDF model trained with the same data with $512$ neurons per layer and different numbers of layers: $4$ (first), $8$ (second), $12$ (third), $16$ (forth).}
\label{fig:newlayer}
\end{minipage}

%
\begin{minipage}{\linewidth}
\centering
\includegraphics[width=0.24\linewidth, trim={10mm 30mm 10mm 30mm}, clip]{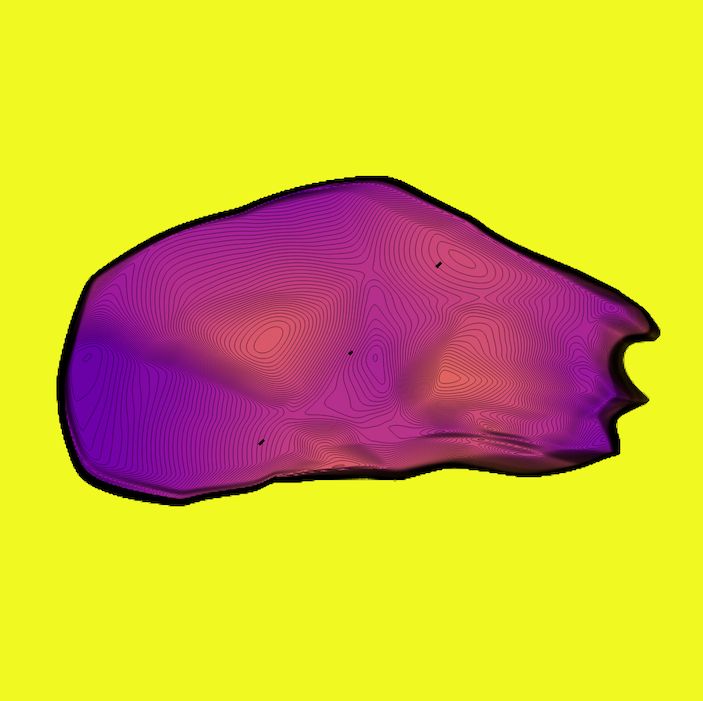}%
\hfill%
\includegraphics[width=0.24\linewidth, trim=10mm 30mm 10mm 30mm, clip]{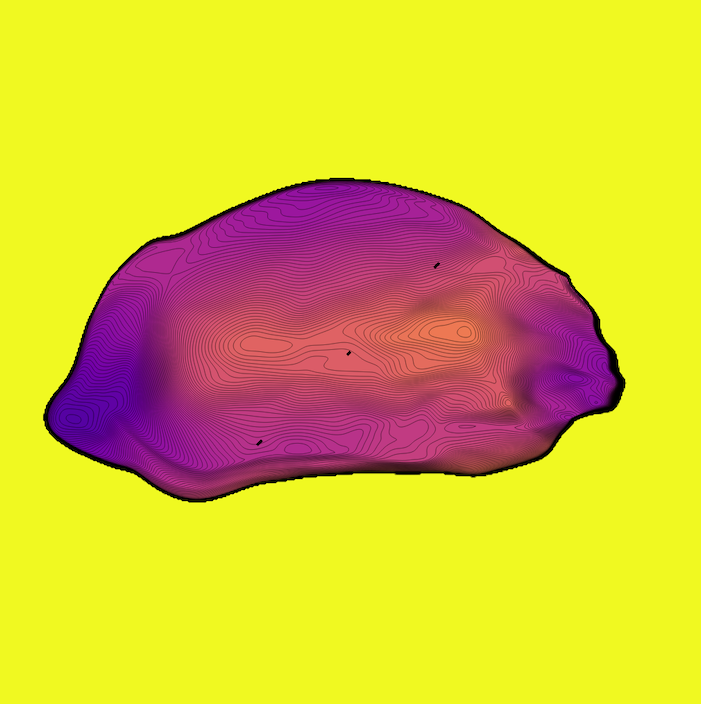}%
\hfill%
\includegraphics[width=0.24\linewidth, trim=10mm 30mm 10mm 30mm, clip]{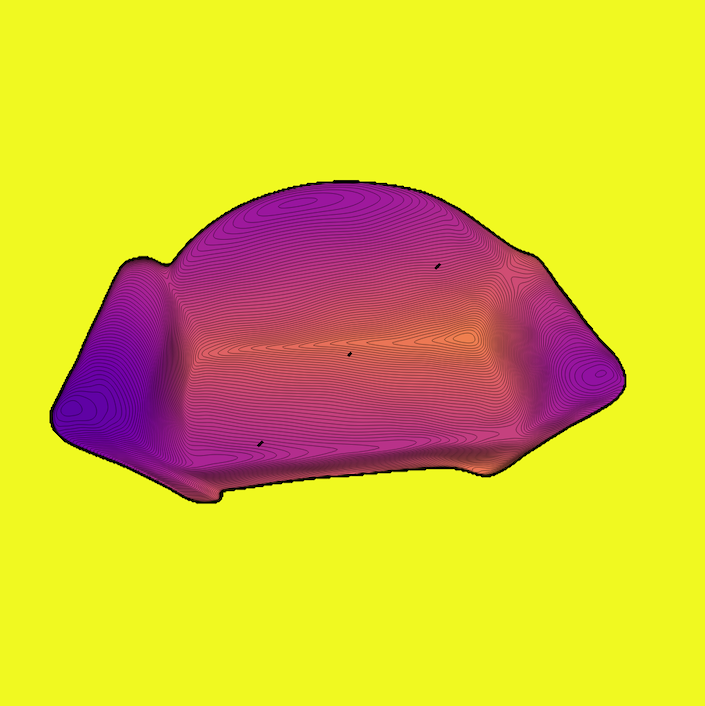}
\hfill%
\includegraphics[width=0.24\linewidth, trim={10mm 30mm 10mm 30mm}, clip]{fig/layerandcells/16layers.png}
\caption{Distance views synthesized by our SDDF model trained with the same data with $16$ layers and different numbers of neurons per layer: $8$ (first), $32$ (second), $128$ (third), $512$ (forth).}
\label{fig:newcell}
\end{minipage}
\end{figure*}

\subsection{Proof of Lemma~5}
This section provides the proof of Lemma~5 and a brief intuitive discussion on difference of the spherical convex hull computation and its discretized approximation.

\setcounter{lemma}{4}
\begin{lemma}\label{lem:cnv3D2D}
Let $\mathcal{P}$ be a set of points on the unit sphere. The boundary points of $\mathcal{P}$ with respect to $\bfe_3$ are points that are adjacent vertices to $\bfe_3$ in the convex hull of $\mathcal{P} \cup \{\bfe_3\}$. Let $m$ be a function that maps a point on the unit sphere to the plane $z=0$ with center $\bfe_3$, i.e., $m([x,y,z]^\top) := [\frac{x}{1-z}, \frac{y}{1-z}]^\top$. A point $\bfu \in \mathcal{P}$ is a boundary point if and only if $m(\bfu)$ is a vertex of the convex hull of $m(\mathcal{P})$.
\end{lemma}

\begin{proof}
For $\bfu = [\bfu_x, \bfu_y, \bfu_z]^\top \in \mathcal{P}$, suppose that $m(\bfu)$ is not a vertex of the convex hull of $m(\mathcal{P})$. Then, there exists a set of points $\{\bfu^i = [\bfu^i_x, \bfu^i_y, \bfu^i_z]^\top\}_{i=0}^n \subset \cal P \setminus \{\bf u\}$ and coefficients $\{\alpha_i\}_{i=0}^n$, $0<\alpha_i<1$, $\sum_{i=0}^n \alpha_i = 1$ such that $m({\bf u}) = \sum_{i=0}^n \alpha_i m({\bf u}^i)$. Let $\beta := \frac{1}{\sum_{i=0}^n\alpha_i\frac{1-\bfu_z}{1-\bfu^i_z}}$, and $\gamma_i = \beta \alpha_i \frac{1-\bfu_z}{1-\bfu^i_z}$, so that $\sum_{i=0}^n\gamma_i=1$. Since all points are on the unit sphere, we have $\gamma_i > 0$, $\beta > 0$, and:
\begin{align*}
    \sum_{i=0}^n\gamma_i {\bf u}^i_x &= \beta (1-{\bf u}_z)\sum_{i=0}^n\alpha_i\frac{{\bf u}^i_x}{1-{\bf u}^i_z} \\
    &= \beta (1-{\bf u}_z) \frac{{\bf u}_x}{1-{\bf u}_z} = \beta {\bf u}_x,\\
    \sum_{i=0}^n\gamma_i {\bf u}^i_z &= \beta (1-{\bf u}_z) \sum_{i=0}^n (\alpha_i \frac{1-(1-{\bf u}^i_z)}{1-{\bf u}^i_z})\\
    &=\beta \sum_{i=0}^n\alpha_i\frac{1-{\bf u}_z}{1-{\bf u}^i_z} - \beta (1-{\bf u}_z) (\sum_{i=0}^n \alpha_i)\\
    &= 1-\beta (1-{\bf u}_z).
\end{align*}
Hence, $\sum_{i=0}^n\gamma_i \bfu^i=[\beta \bfu_x, \beta \bfu_y, 1-\beta (1- \bfu_z)]^\top = (1-\beta) {\bfe_3}+\beta \bfu$. Note that $(1-\beta) {\bfe_3}+\beta{\bf u}$ is on the segment from $\bfe_3$ to $\bf u$, so $0<\beta<1$; otherwise the convex combination of points on the unit sphere ($\sum_{i=0}^n\gamma_i {\bf u}^i$) will be out of unit sphere. This means that the segment between $\bfe_3$ and $\bf u$ intersects with the convex hull of $\mathcal{P}$ at another point $(1-\beta) {\bfe_3}+\beta \bfu$, which implies that $\bfu$ is not a boundary point of $\mathcal{P}$. The converse statement can be proven similary by reversing the steps above. 
\end{proof}

For usual physically plausible objects, a few views (depending on the object shape complexity) with sufficiently high distance image resolution are sufficient to obtain a distance image from any other view using the convex hull approach described in Sec. 4.5 of the main paper. Intuitively, after transforming the object surface to a unit sphere with respect to an arbitrary point on the object surface, the region from which the sphere origin is observable may be decomposed into few convex regions. Hence, considering few distance views with sufficiently high resolutions should be sufficient to encode the real object shape. On the other hand, the discretized approximation of the convex hull is very efficient but its accuracy is limited by the azimuth resolution.


\begin{figure*}[t]
  \centering
  \includegraphics[width=0.24\linewidth, trim=0mm 0mm 0mm 0mm, clip]{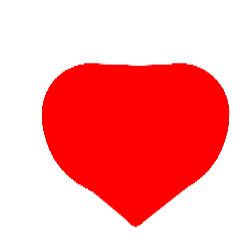}%
  \includegraphics[width=0.24\linewidth, trim=15mm 5mm 25mm 15mm, clip]{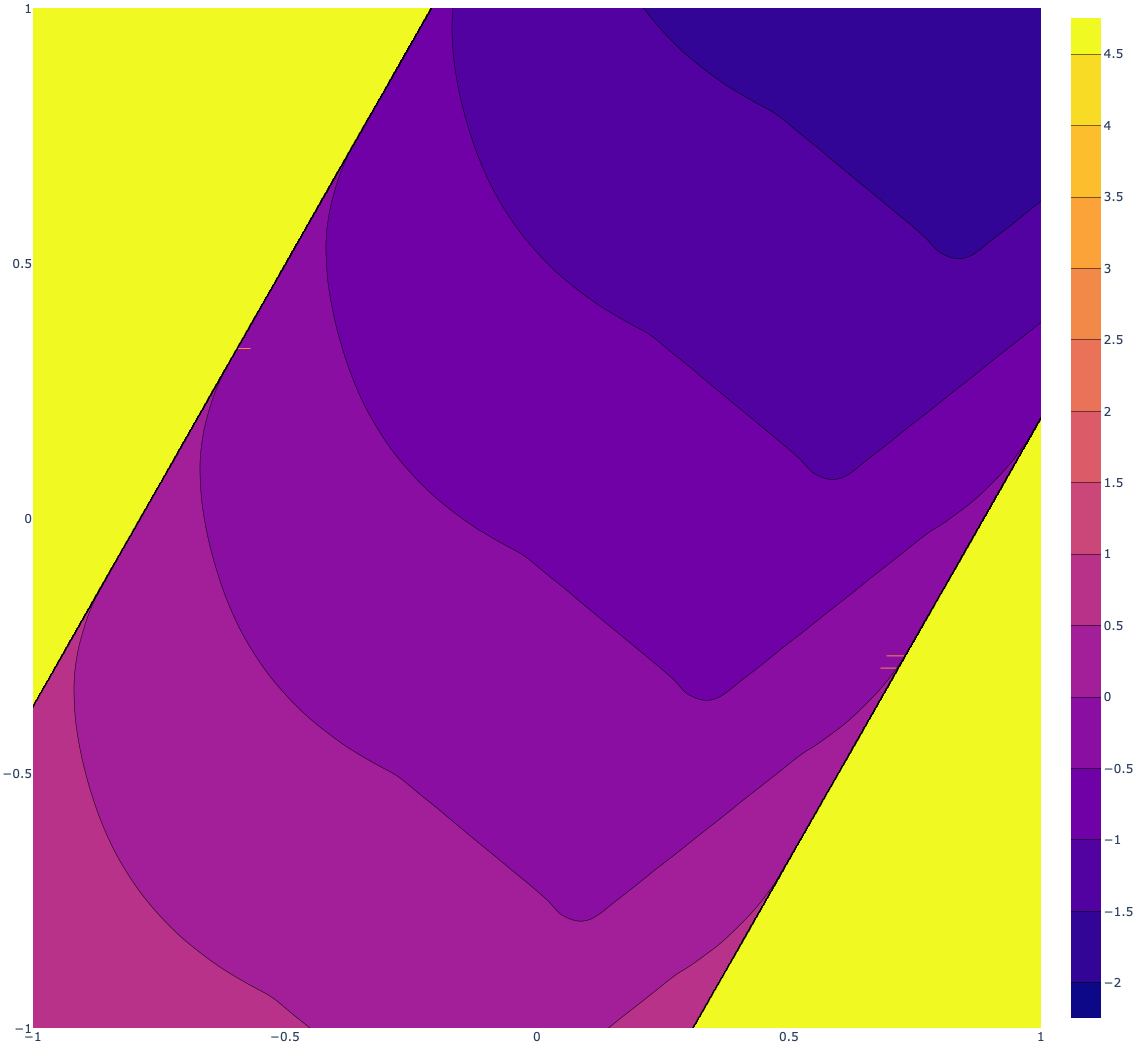}%
  \includegraphics[width=0.24\linewidth, trim=15mm 5mm 25mm 15mm, clip]{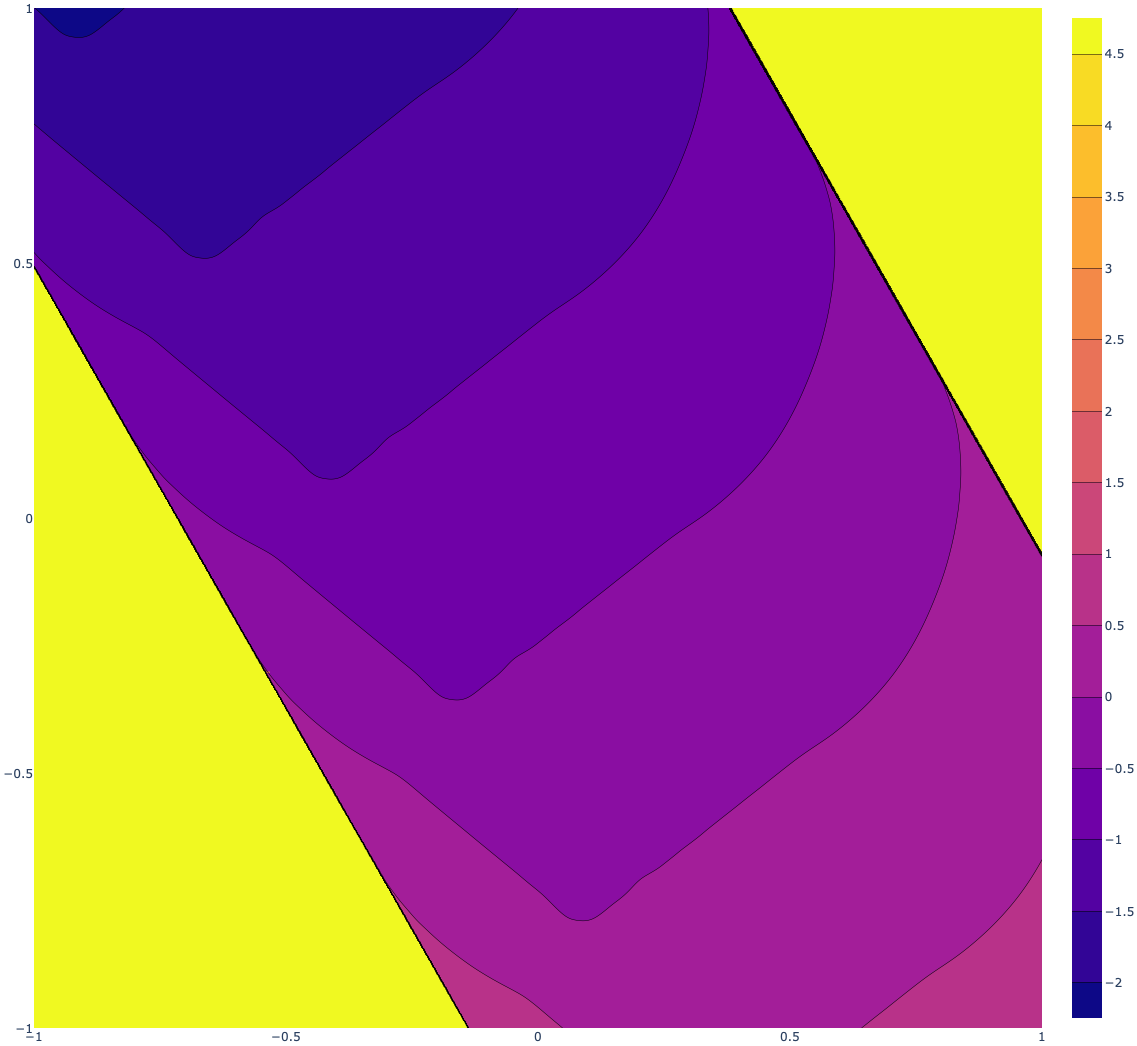}%
  \includegraphics[width=0.24\linewidth, trim=15mm 5mm 25mm 15mm, clip]{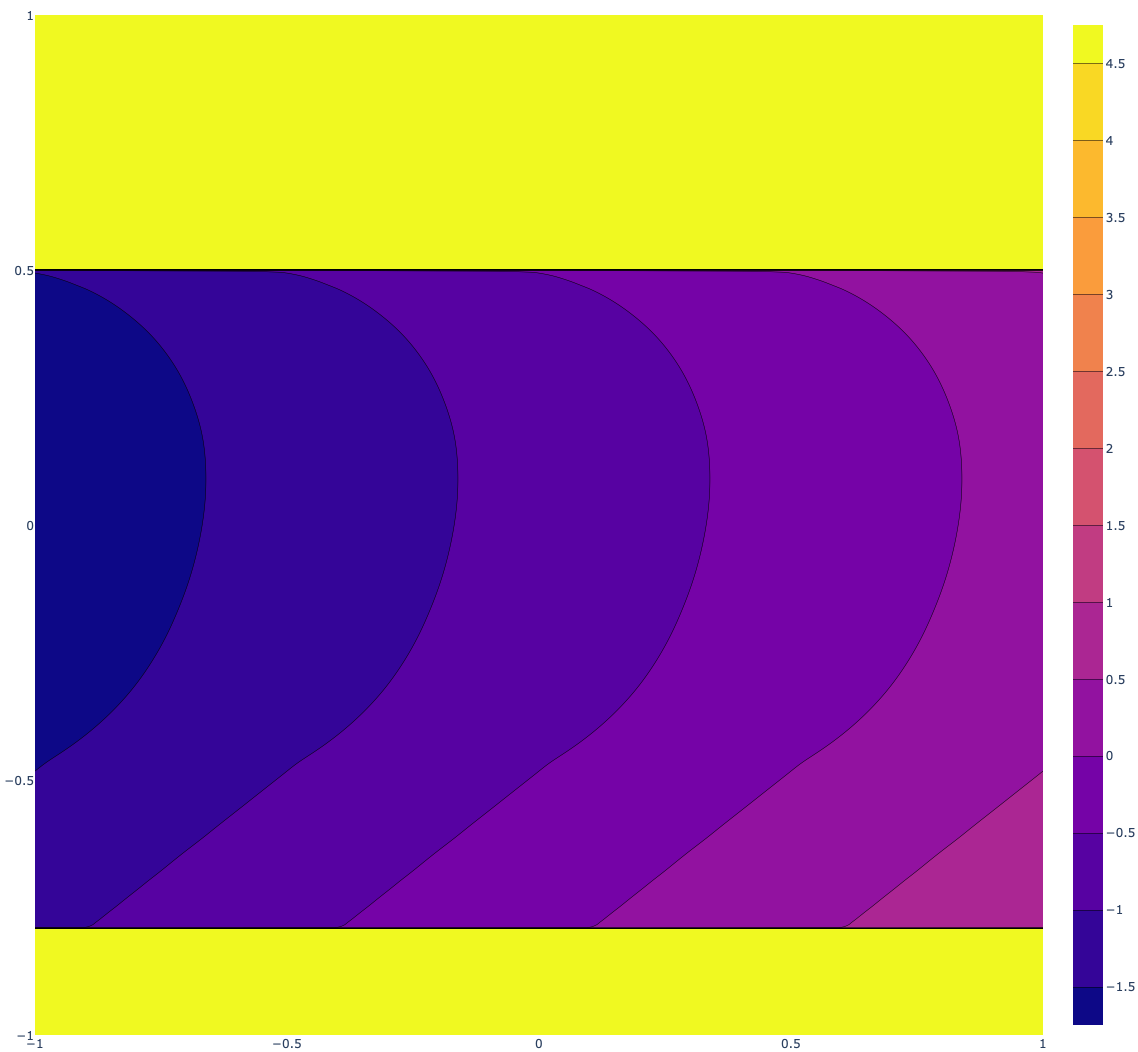}\\
  \includegraphics[width=0.24\linewidth, trim=30mm 30mm 30mm 35mm, clip]{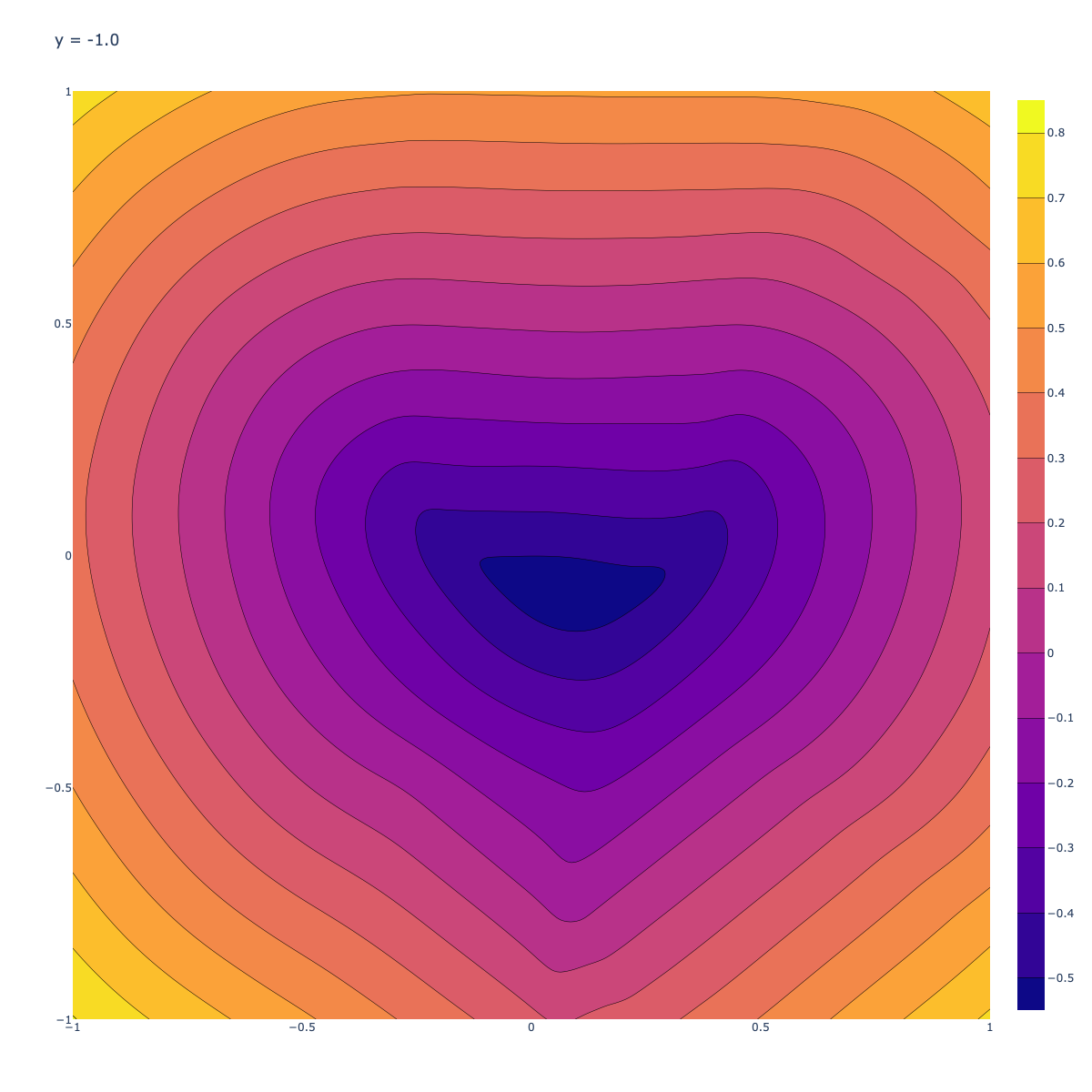}%
  \includegraphics[width=0.24\linewidth, trim=15mm 5mm 25mm 15mm, clip]{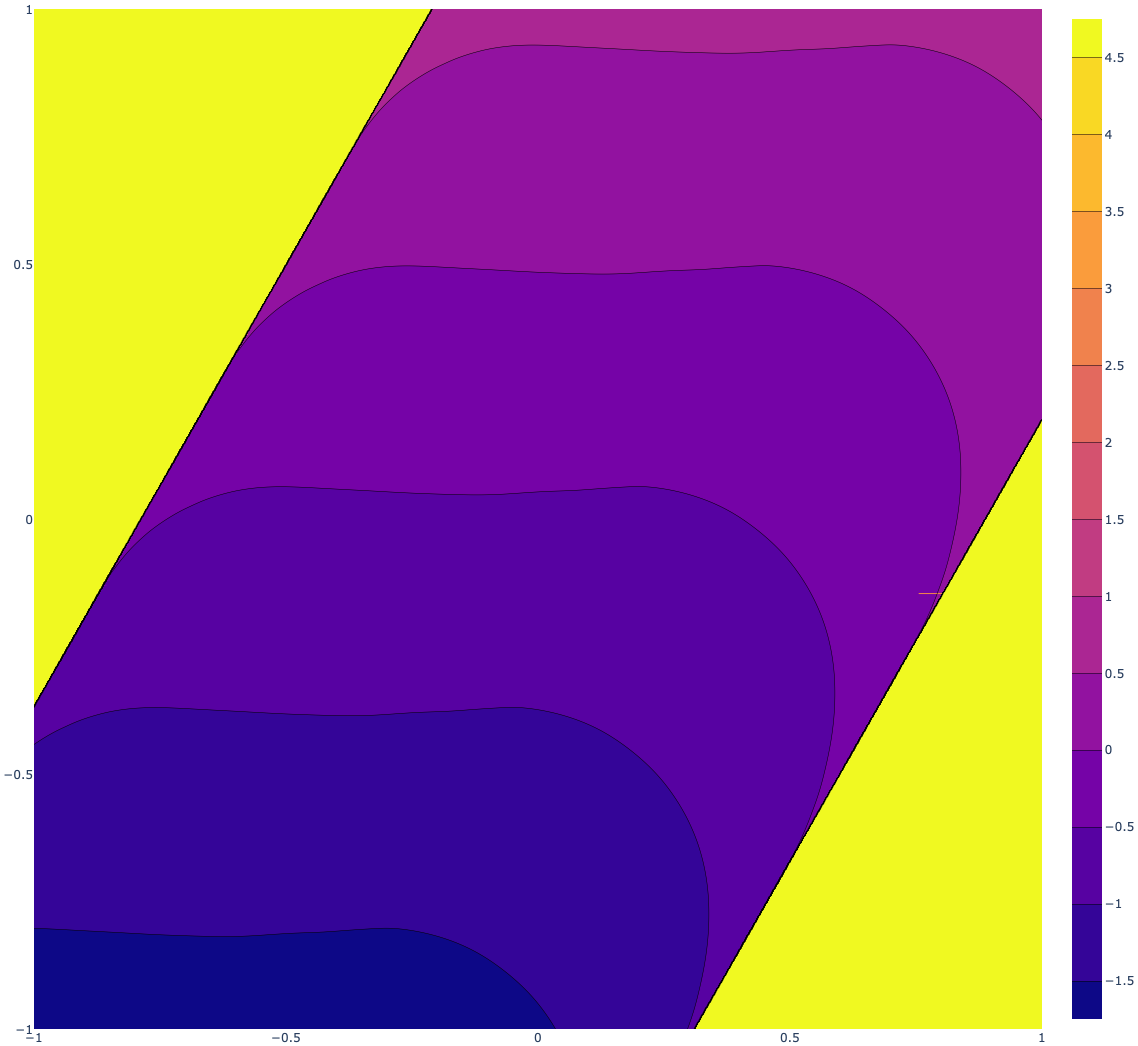}%
  \includegraphics[width=0.24\linewidth, trim=15mm 5mm 25mm 15mm, clip]{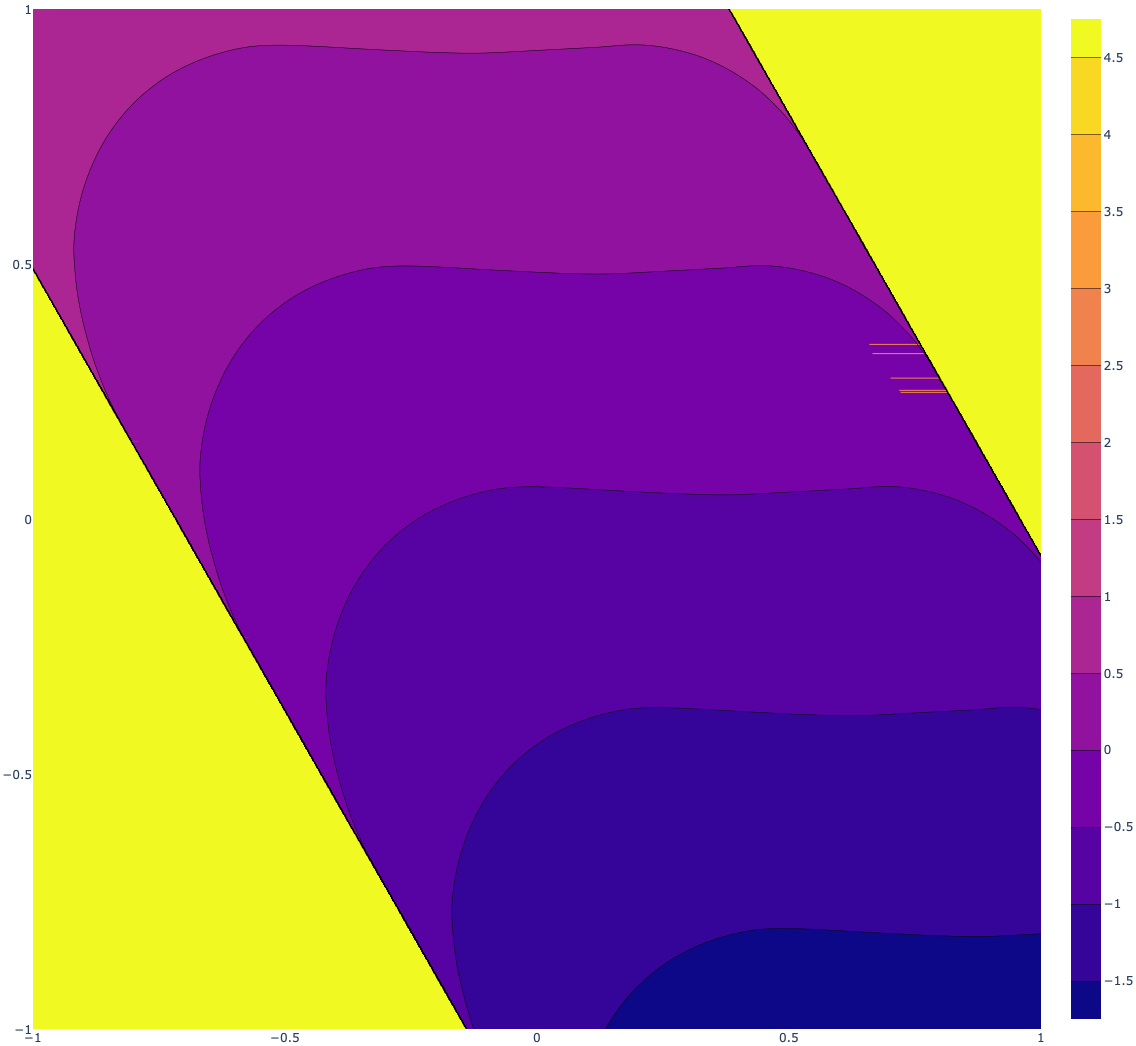}%
  \includegraphics[width=0.24\linewidth, trim=15mm 5mm 25mm 15mm, clip]{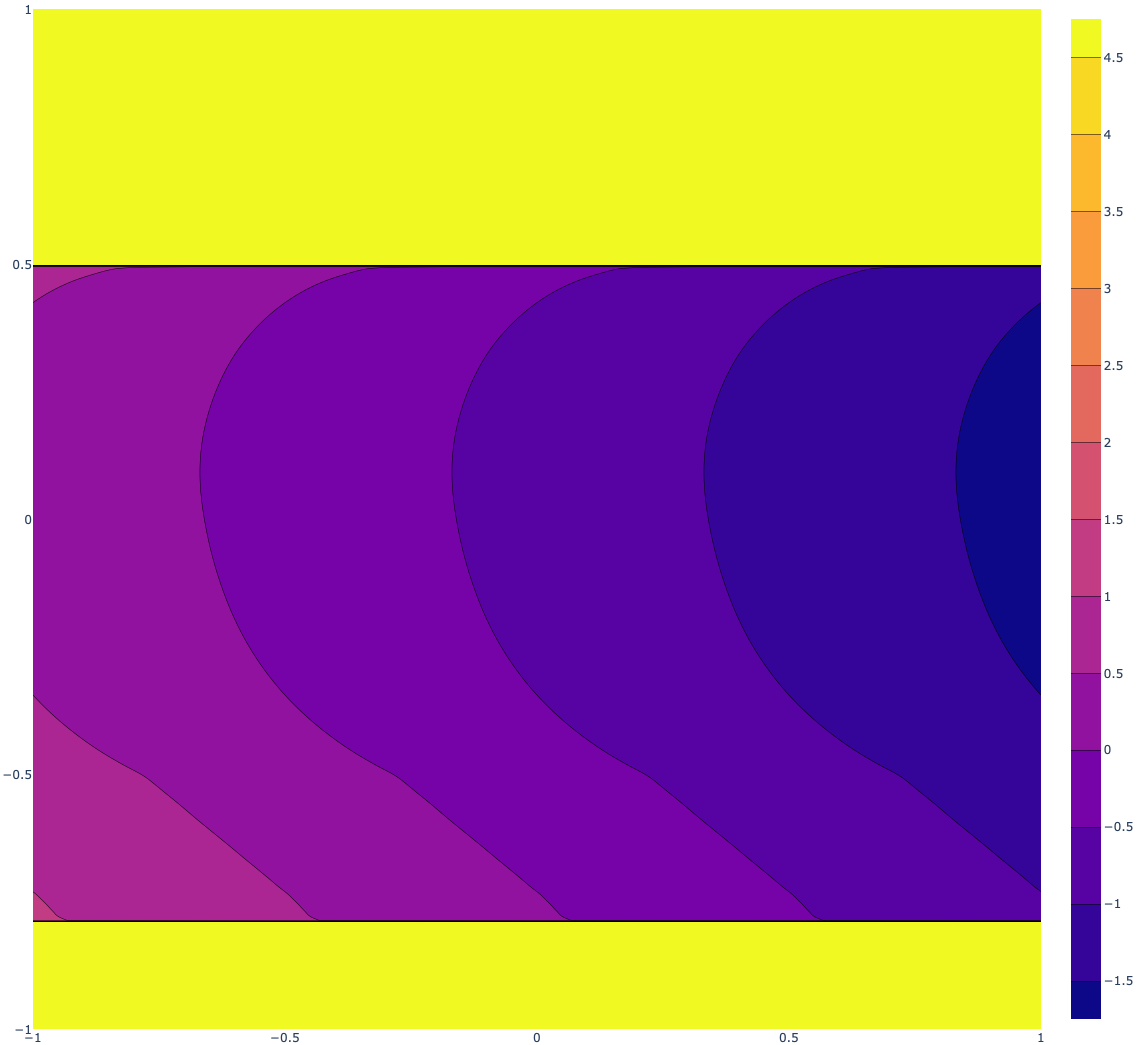}
  \caption{Ground-truth 2D instance (top left) and signed distance field (bottom left) learned by IGR~\cite{gropp2020implicit}. The remaining plots show the output of our SDDF model with a fixed direction at each 2D location in the image. The fixed viewing directions for the six plots from left to right and top to down are $\frac{\pi}{3}$, $\frac{2\pi}{3}$, $\pi$, $\frac{4\pi}{3}$, $\frac{5\pi}{3}$, $2\pi$, respectively. To produce good color contrast, in all images infinite distance values (corresponding to rays in free space) are set to $1$.}
  \label{fig:our2Dheart}
\end{figure*}
%
%

%
\subsection{2D Evaluation}

This section shows that an SDDF model can be used in 2D, e.g., with distance measurements obtained from a LiDAR scanner. We simulated a Hokuyo UTM-30LX Lidar scanner with 1081 rays per scan moving along a manually specified trajectory in an environment containing a 2D shape. See Fig.~\ref{fig:our2Dheart} for an example. The Lidar scans were used as training data for the SDDF and IGR models. After training, the SDDF network can generate distance values to the object contours at any location and in any viewing direction. Fig.~\ref{fig:our2Dheart} visualizes the models learned by IGR and SDDF for a heart-shaped 2D object. The distance predictions of the SDDF model are shown at every 2D location for several fixed viewing directions. We see that the SDDF model recognizes the boundary between free space and the object well. The parallel distance level sets indicate that the condition in Lemma~1 in the main paper indeed holds everywhere.

\subsection{Small Training Set without Synthesized Data}
In this section, we study the performance of the SDDF model when only a small training set is available and the data augmentation technique, described in Sec. 4.5 is not used. The results are generated using $100$ car and $200$ airplane instances from the ShapeNet dataset~\cite{chang2015shapenet}. To generate training data, we use the functions in PyTorch3D~\cite{ravi2020pytorch3d} for ray casting. For each object, we choose $1000$ random locations uniformly distributed on a sphere with orientations facing the object. Each distance image was down-sampled to have at most $5000$ finite rays and $5000$ infinite rays. An SDDF model with 8 fully connected layers, 512 hidden units per layer, and a skip connection from the input to the middle layer is used. All experiments are done on a single GTX $1080$ Ti GPU using PyTorch \cite{paszke2017automatic} and the ADAM optimizer \cite{kingma2014adam} with initial learning rate of $0.005$. 

\paragraph{Single Instance Shape Representation.}
For single-instance shape estimation, we schedule the learning rate to decrease by a factor of $2$ every $500$ steps for $9k$ iterations. In each iteration, we pick a batch of $128^2$ samples randomly from the input data. The result is provided in Fig.~\ref{fig:geometry}.

\paragraph{Shape Completion and Interpolation.}
For Car category-level shape estimation, the network is trained for $500$ epochs with $100$ instances. The learning rate is scheduled to decrease by a factor of $2$ every $50$ epochs. In each iteration, for each shape, $8000$ random samples are picked uniformly out of the training data for that instance. For the Airplane category, the network is trained for $5000$ epochs with $200$ shapes. The learning rate is scheduled to decrease by a factor of $2$ every $250$ epochs. In each iteration, for each instance $10k$ random samples are picked uniformly. Shape completion results are provided in Fig.~\ref{fig:shpcmp}, while shape interpolation results are provided in Fig.~\ref{fig:interpolation}.

\begin{figure*}[t]
\begin{minipage}{0.485\textwidth}
\scalebox{1}{\includegraphics[width=0.49\linewidth, trim={120mm 40mm 95mm 60mm}, clip]{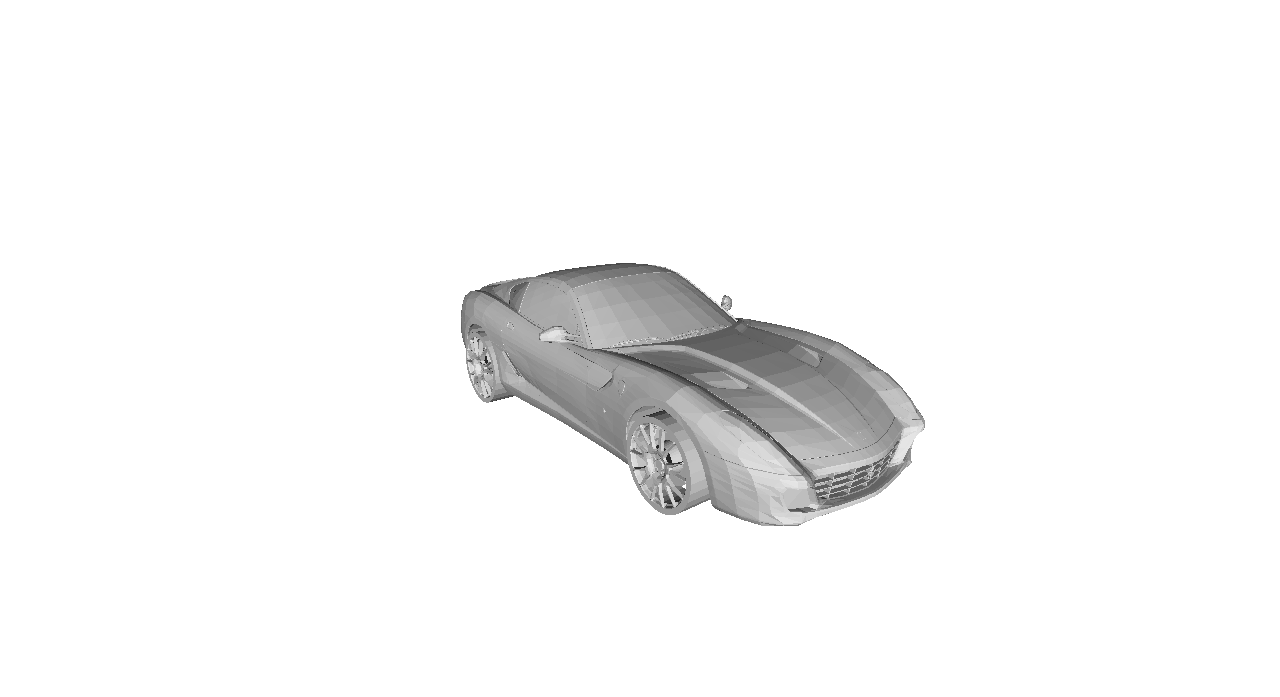}}
\includegraphics[width=0.49\linewidth, trim=60mm 50mm 50mm 130mm, clip]{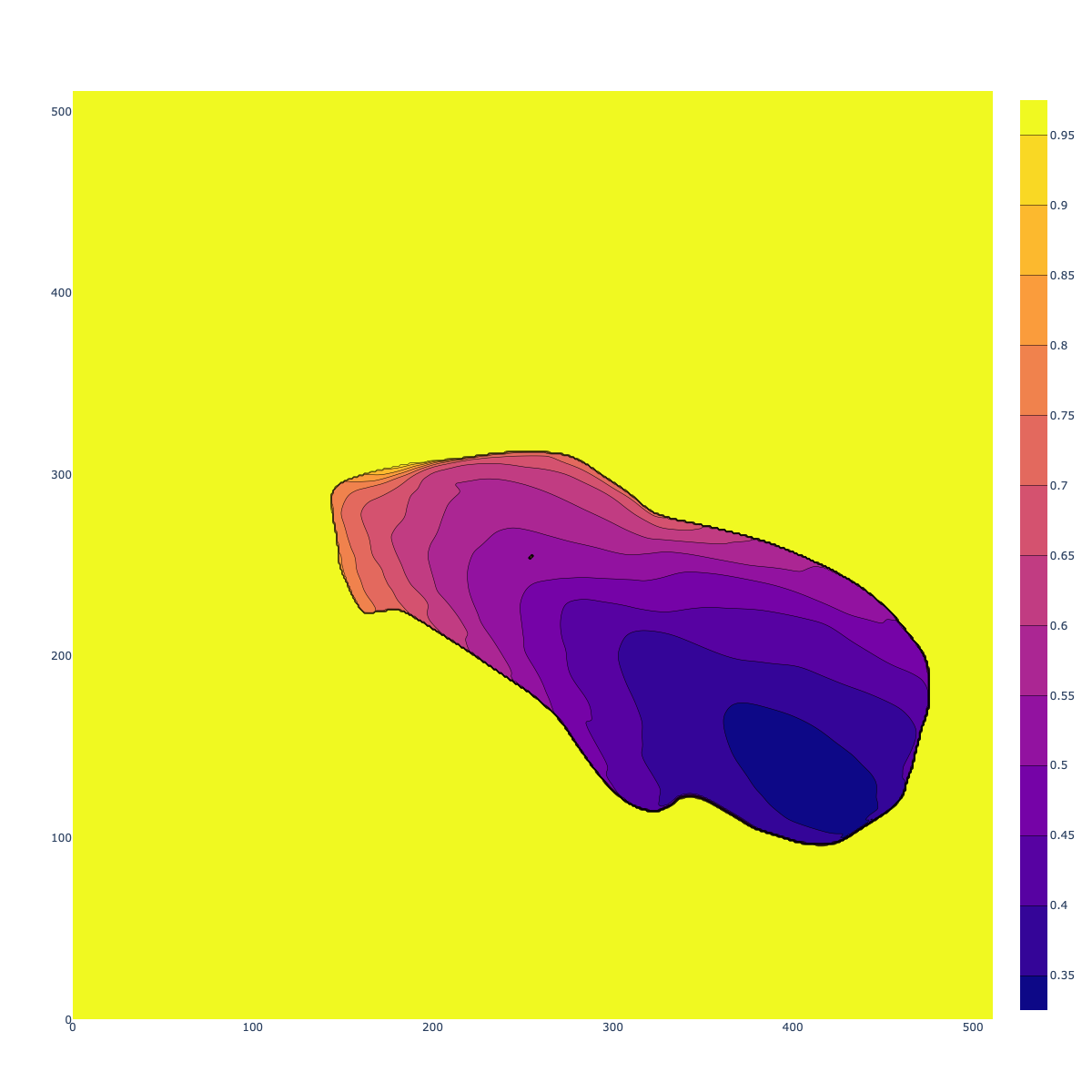}
\includegraphics[width=0.49\linewidth, trim=80mm 50mm 50mm 130mm, clip]{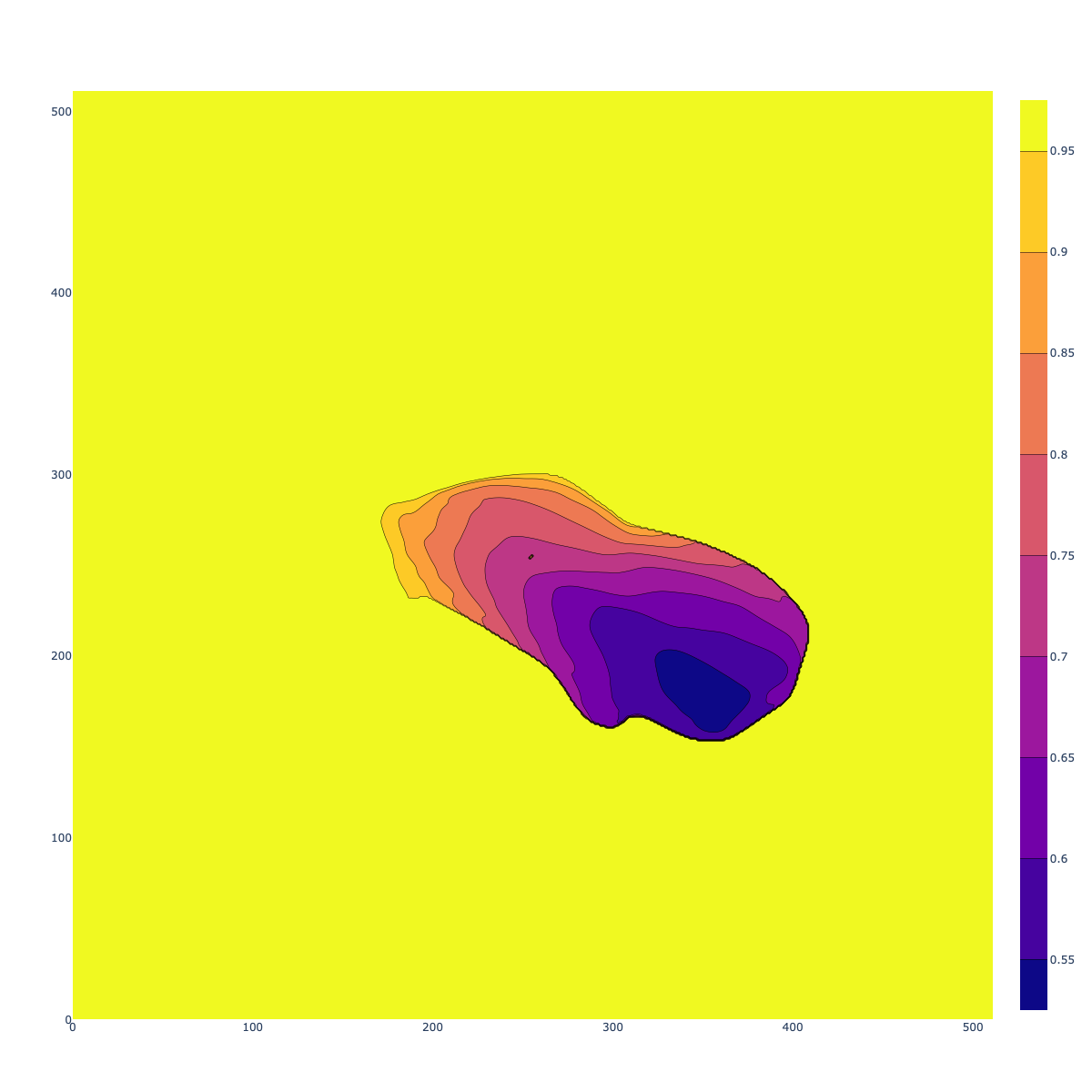}
\includegraphics[width=0.49\linewidth, trim=10mm 20mm 40mm 40mm, clip]{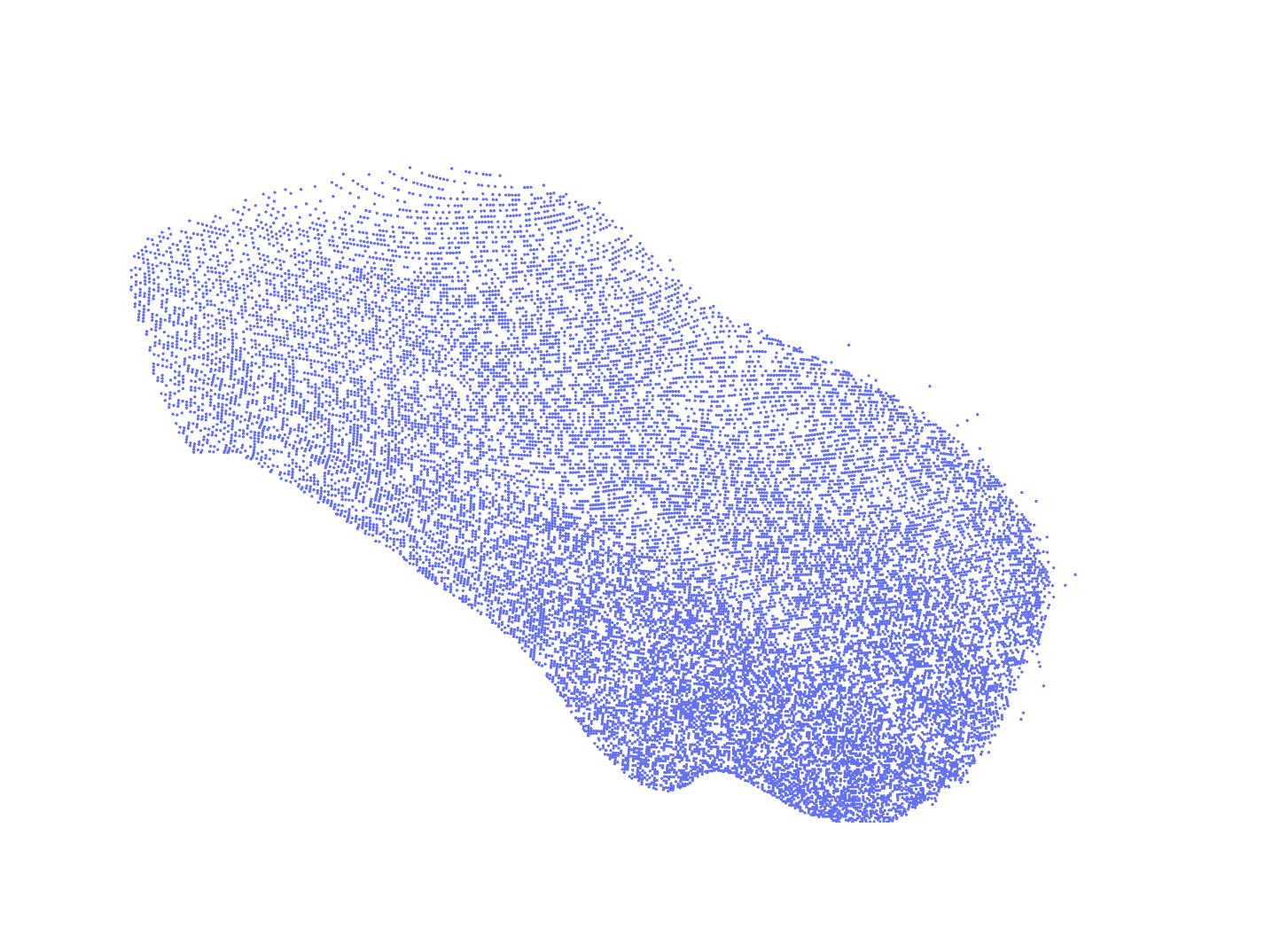}
\includegraphics[width=0.49\linewidth, trim=80mm 50mm 50mm 130mm, clip]{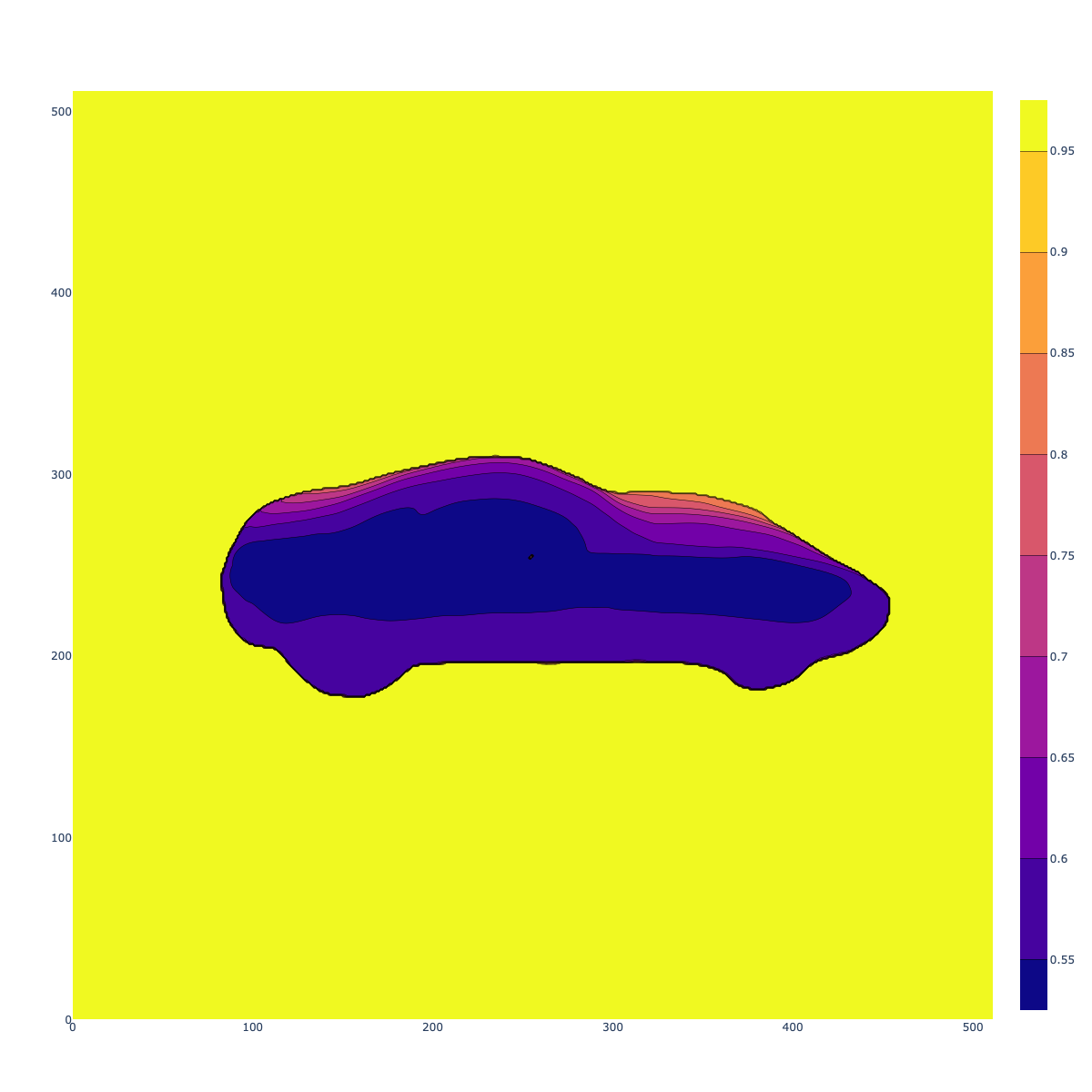}
\includegraphics[width=0.49\linewidth, trim=50mm 50mm 80mm 130mm, clip]{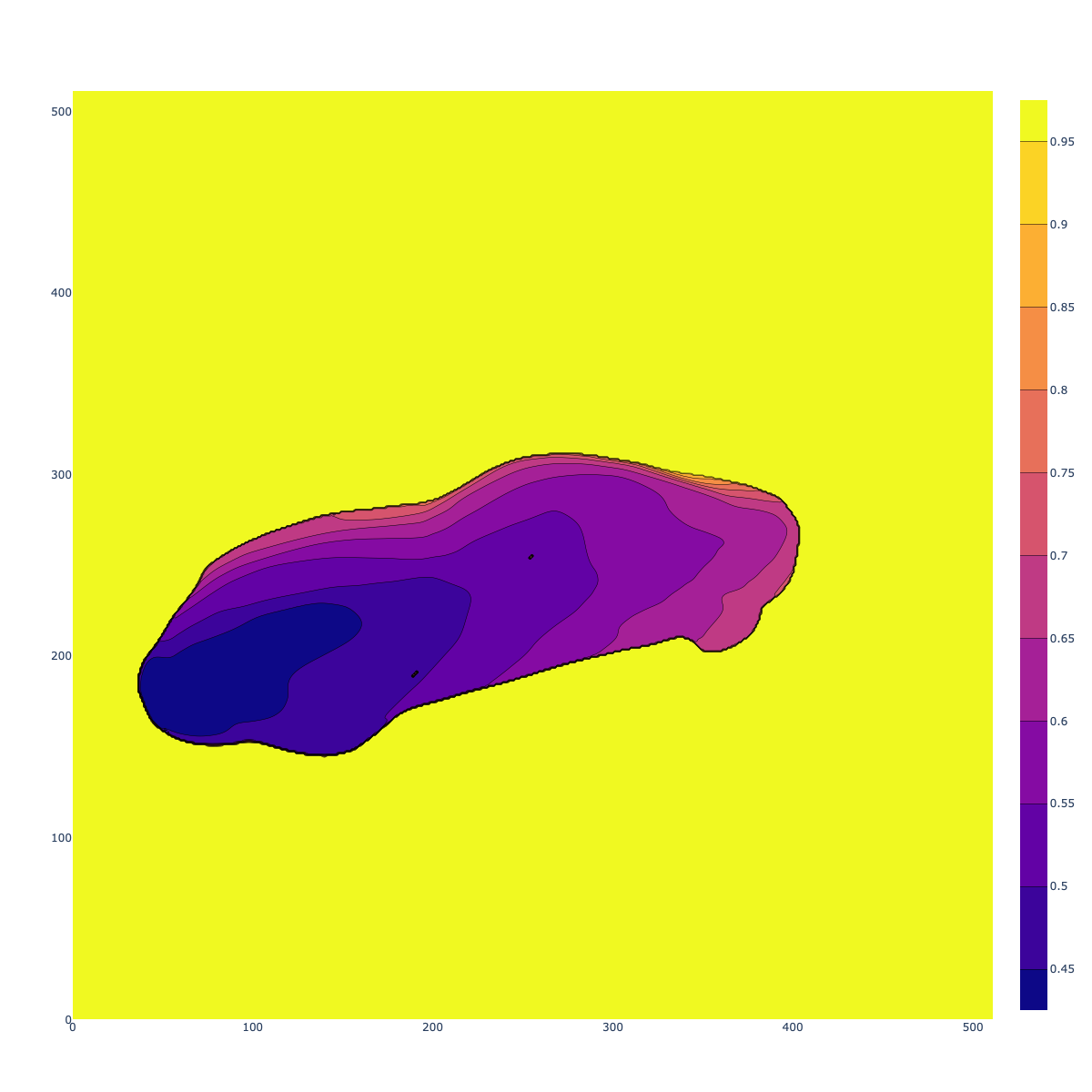}
\end{minipage}%
\hfill%
\begin{minipage}{0.465\textwidth}
  \centering
\scalebox{1}{\includegraphics[width=0.49\linewidth, trim={110mm 45mm 120mm 60mm}, clip]{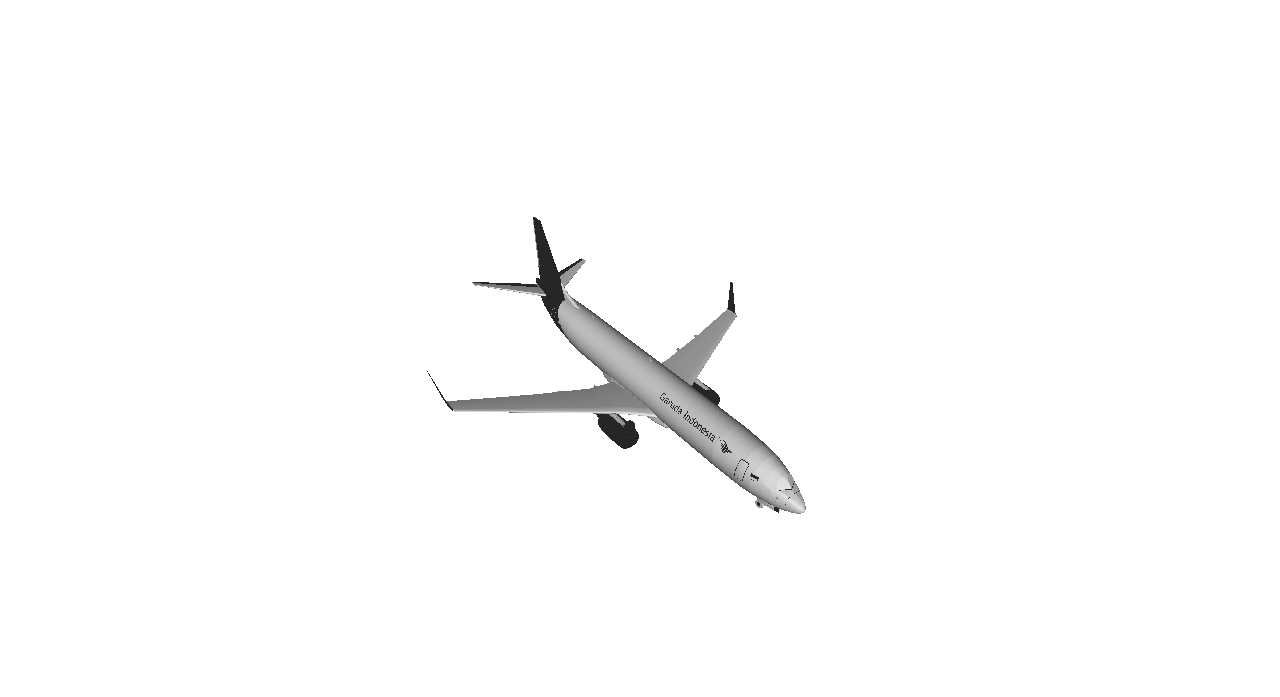}}
\includegraphics[width=0.49\linewidth, trim=75mm 120mm 120mm 115mm, clip]{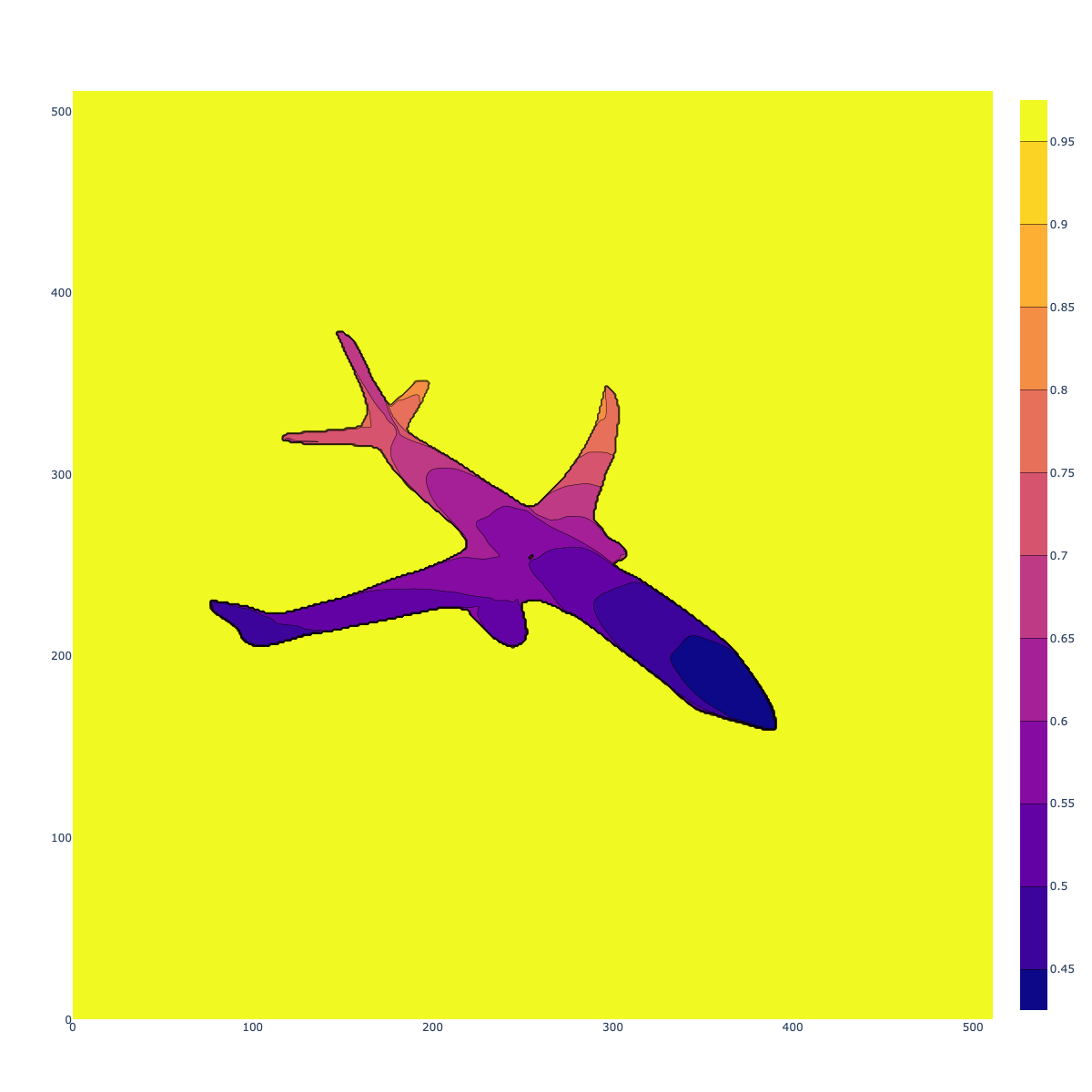}
\includegraphics[width=0.49\linewidth, trim=75mm 120mm 120mm 115mm, clip]{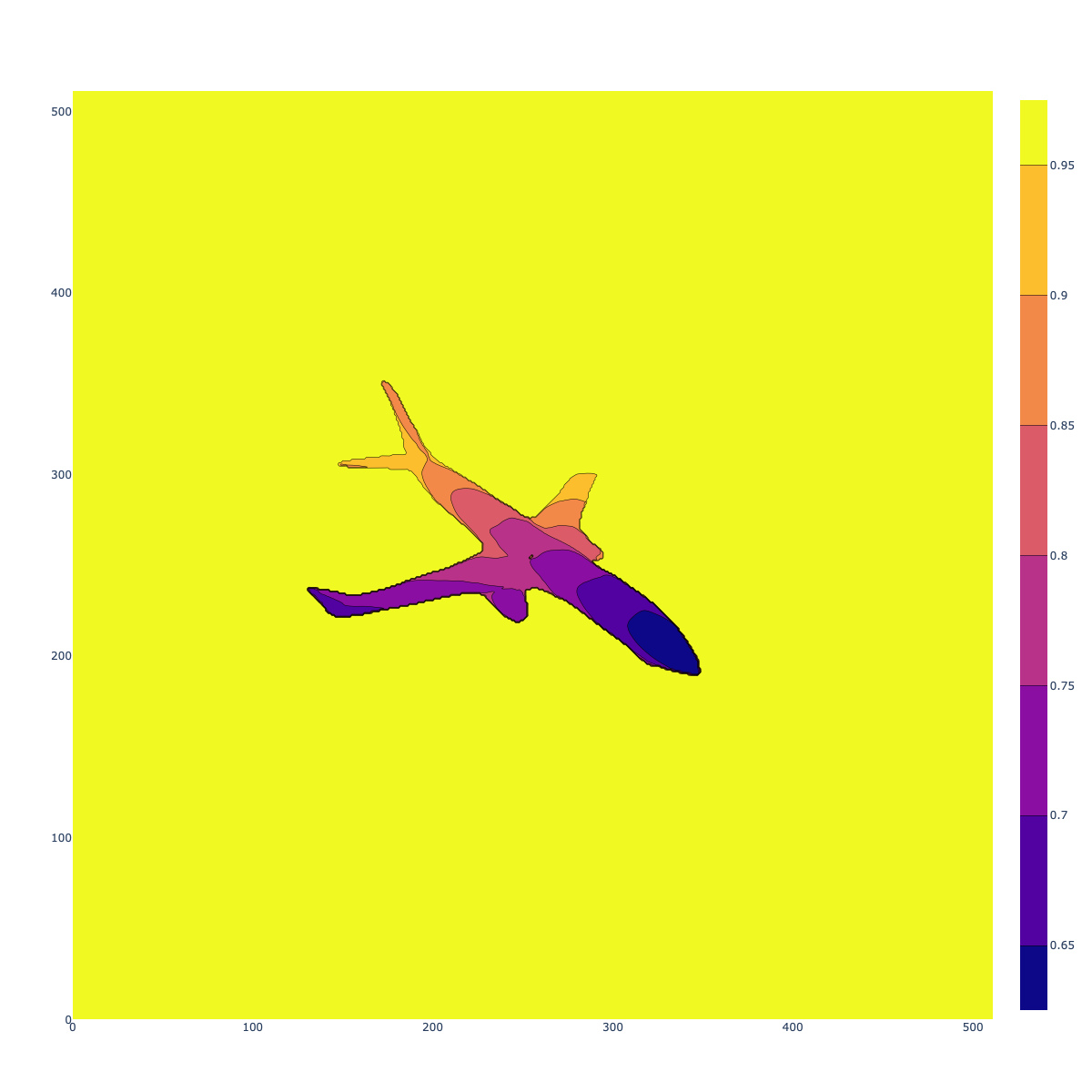}
\includegraphics[width=0.49\linewidth, trim=0mm 0mm 0mm 0mm, clip]{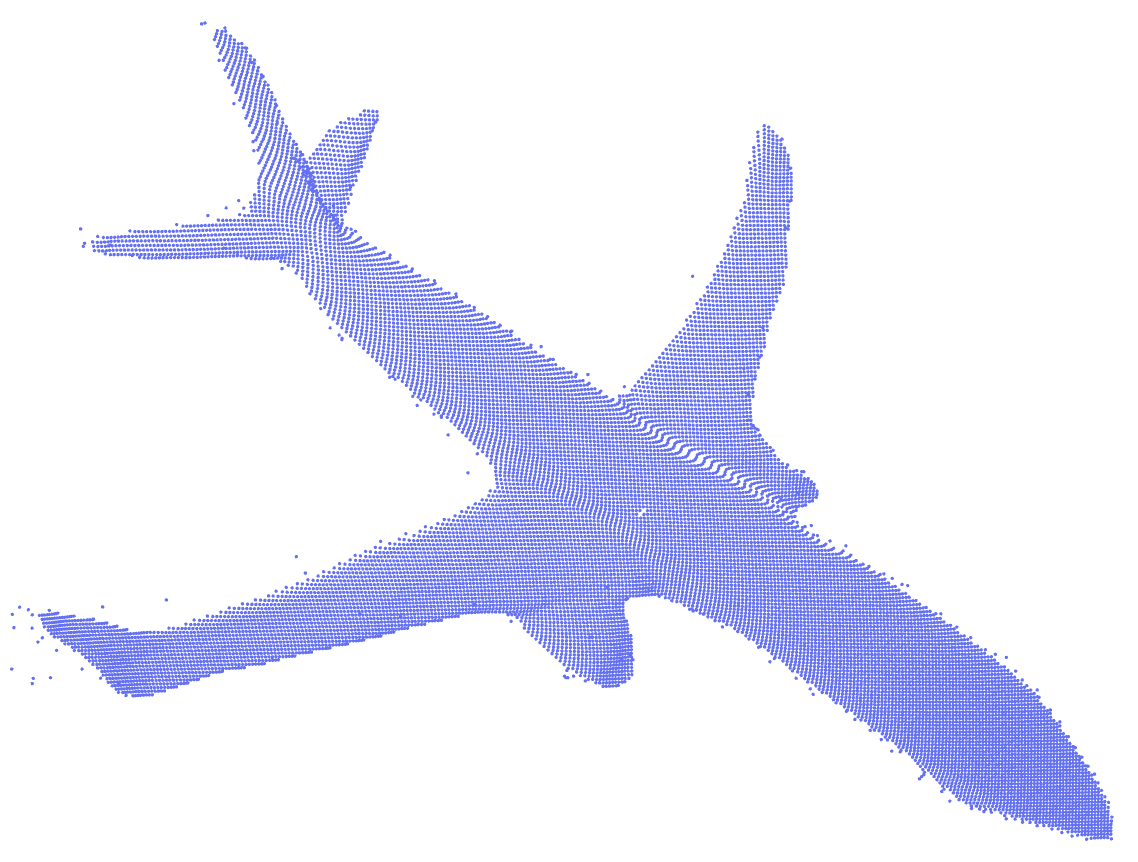}
\includegraphics[width=0.49\linewidth, trim=80mm 60mm 50mm 105mm, clip]{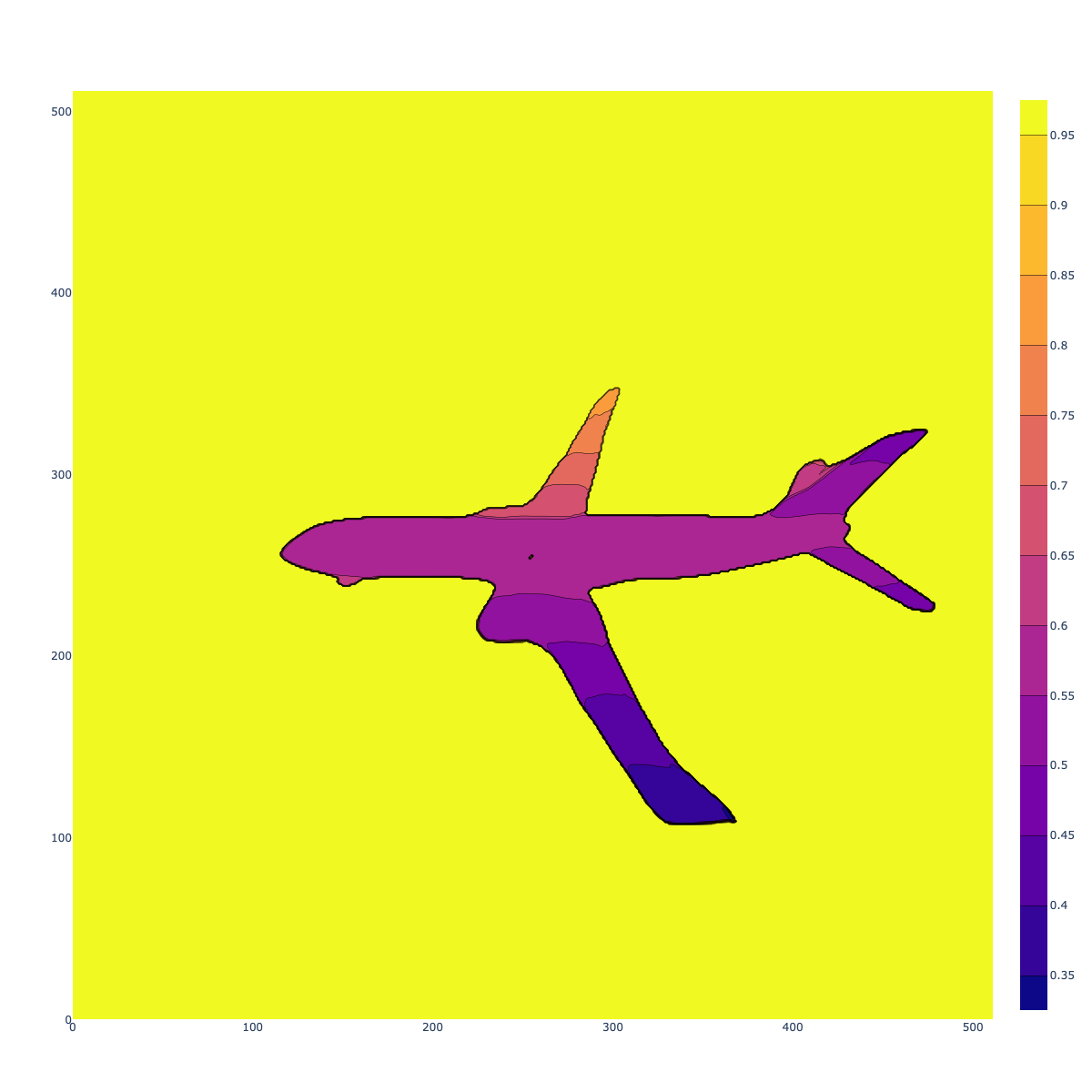}
\includegraphics[width=0.49\linewidth, trim=100mm 110mm 60mm 80mm, clip]{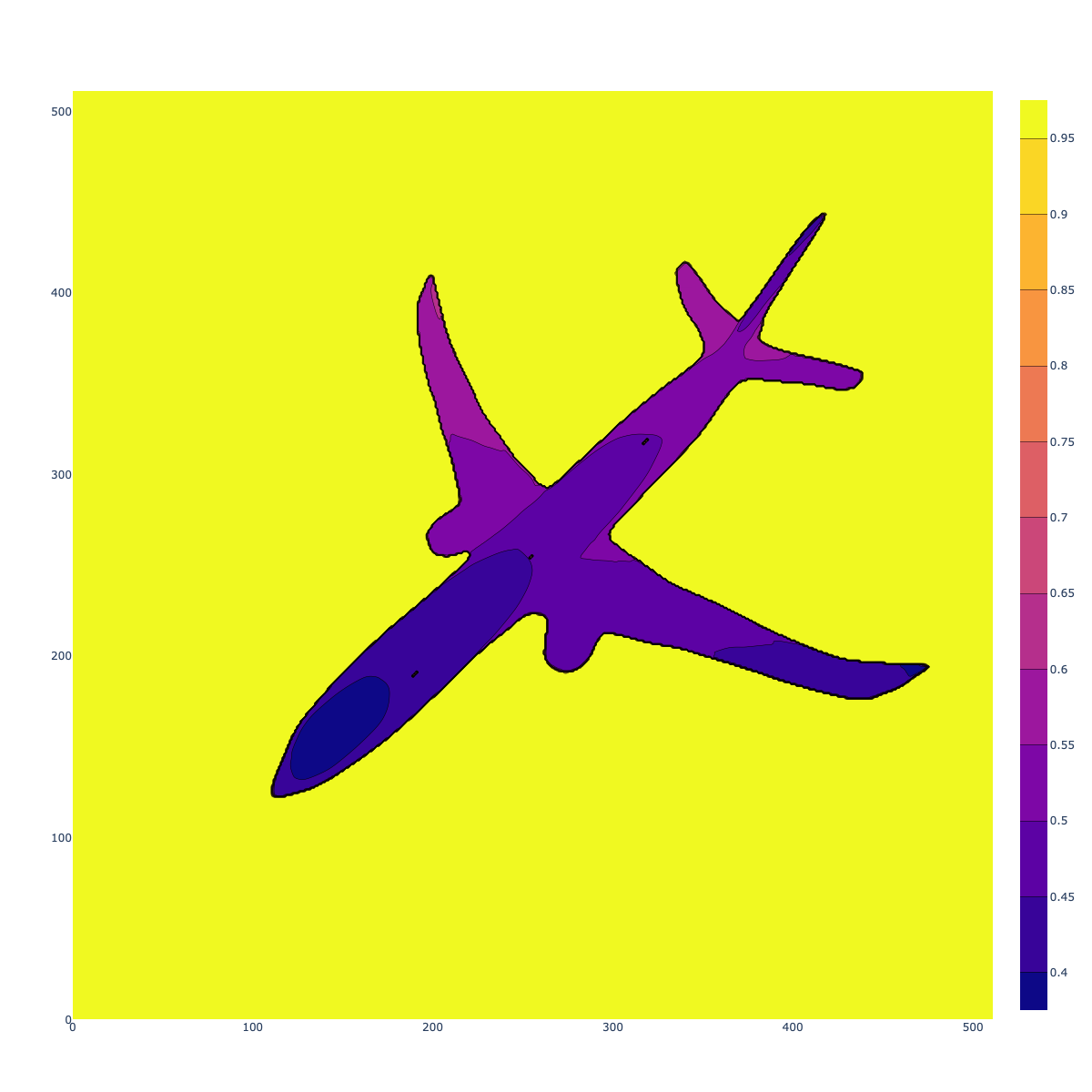}
\end{minipage}
\begin{minipage}{0.036\textwidth}
  \centering
\raisebox{-50.5mm}[0pt][0pt]{\includegraphics[width=\linewidth, trim=0mm 0mm 0mm 0mm, clip]{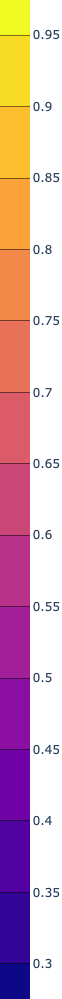}}
\end{minipage}
\caption{SDDF shape representation of a car (left two columns) and airplane (right two columns) instance. In each two columns, the ground-truth model is shown on the top left, a distance image synthesized by the SDDF model is shown on the top right, and a point cloud generated from the distance image is shown in the middle right. The middle left shows a distance image synthesized from the same view but further distance from the object. Note that the level sets in the distant view remain parallel to the close-up view but more yellowish, indicating the distance increase. The third row includes more distance images synthesized by our SDDF model from other views. To produce good color contrast, in all images we set infinite distance values (corresponding to rays in free space) to $1$.}
\label{fig:geometry}
\end{figure*}

\paragraph{Effect of Measurement Noise on the Performance.}
We present qualitative and quantitative results about the effects of noisy distance data and different number of layers and neurons per layer in the model on the performance of the SDDF model. The results are obtained for a single Airplane instance, shown in Fig.~\ref{fig:geometry}. We obtained $500$ finite rays and $500$ infinite rays from $1000$ random locations uniformly distributed on a sphere around the instance with orientations facing the object. The SDDF model was trained for $9k$ iterations in several different settings. The default setting has $8$ layers with $512$ neurons per layer and noise-free distance data for training. Training this model takes about $688$ seconds. A distance view synthesized by the trained SDDF model is shown in Fig.~\ref{fig:times}.

First, keeping the network structure fixed, we varied the standard deviation of zero-mean Gaussian noise added to the distance measurements. To have a sense about the noise magnitude, note that the radius of the sphere on which the camera locations were picked was $0.6$. Qualitatively, as we see in Fig.~\ref{fig:noise}, the more the noise increases, the fewer details the SDDF model can capture. Second, we varied the number of layers (fixing the number of neurons to $512$) and the number of neurons (fixing the number of layers to $8$) in the neural network, keeping a skip connection to the middle layer, to measure the training, as shown in Fig.~\ref{fig:times}.

\begin{figure*}[t]
\begin{minipage}{\linewidth}
\centering
\includegraphics[width=0.24\linewidth, trim={0mm 70mm 70mm 110mm}, clip]{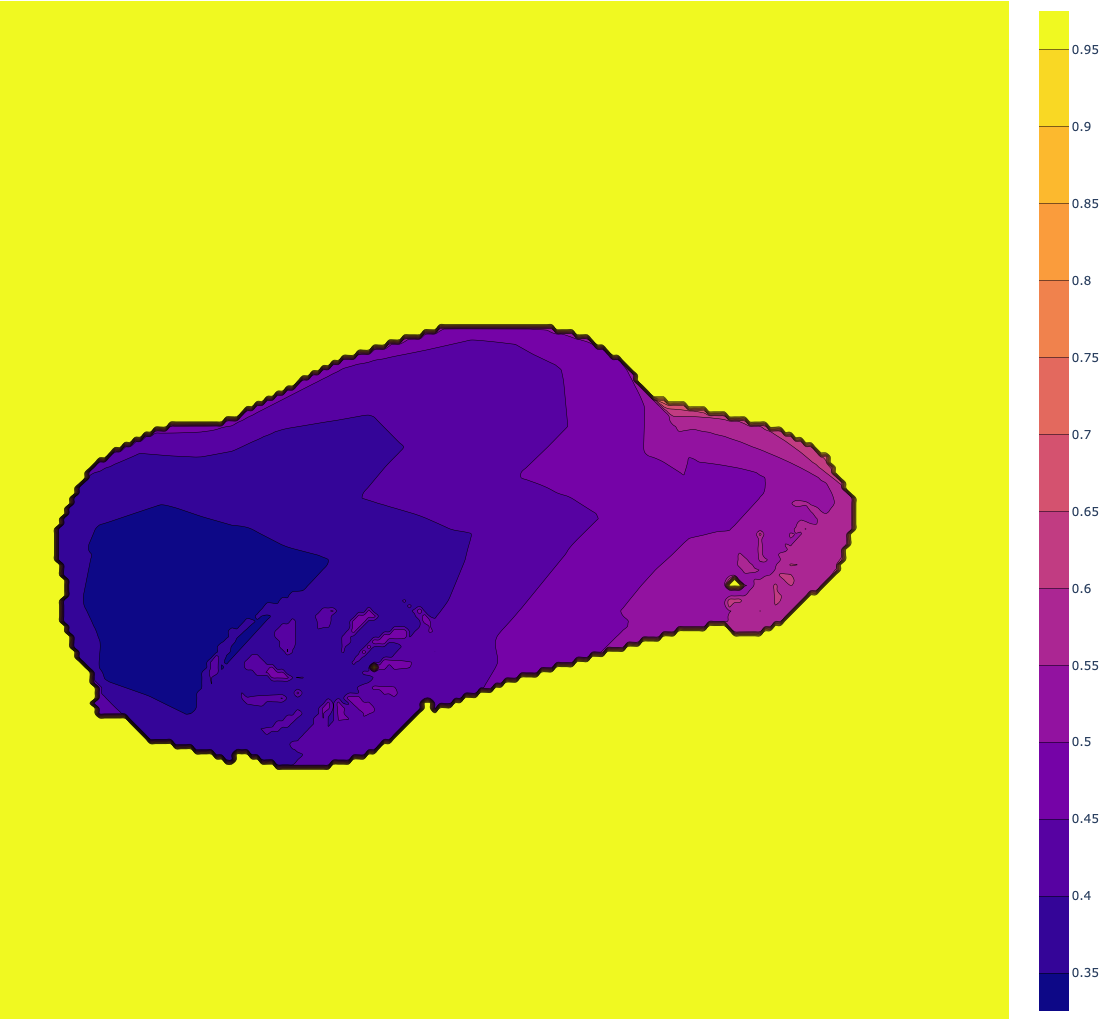}
\includegraphics[width=0.24\linewidth, trim=0mm 30mm 30mm 55mm, clip]{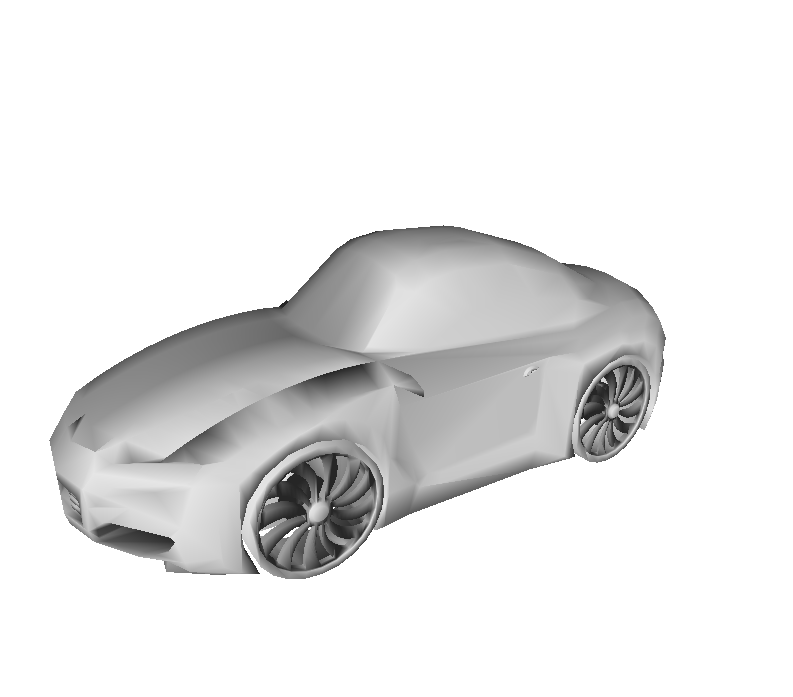}
\includegraphics[width=0.24\linewidth, trim=0mm 70mm 70mm 110mm, clip]{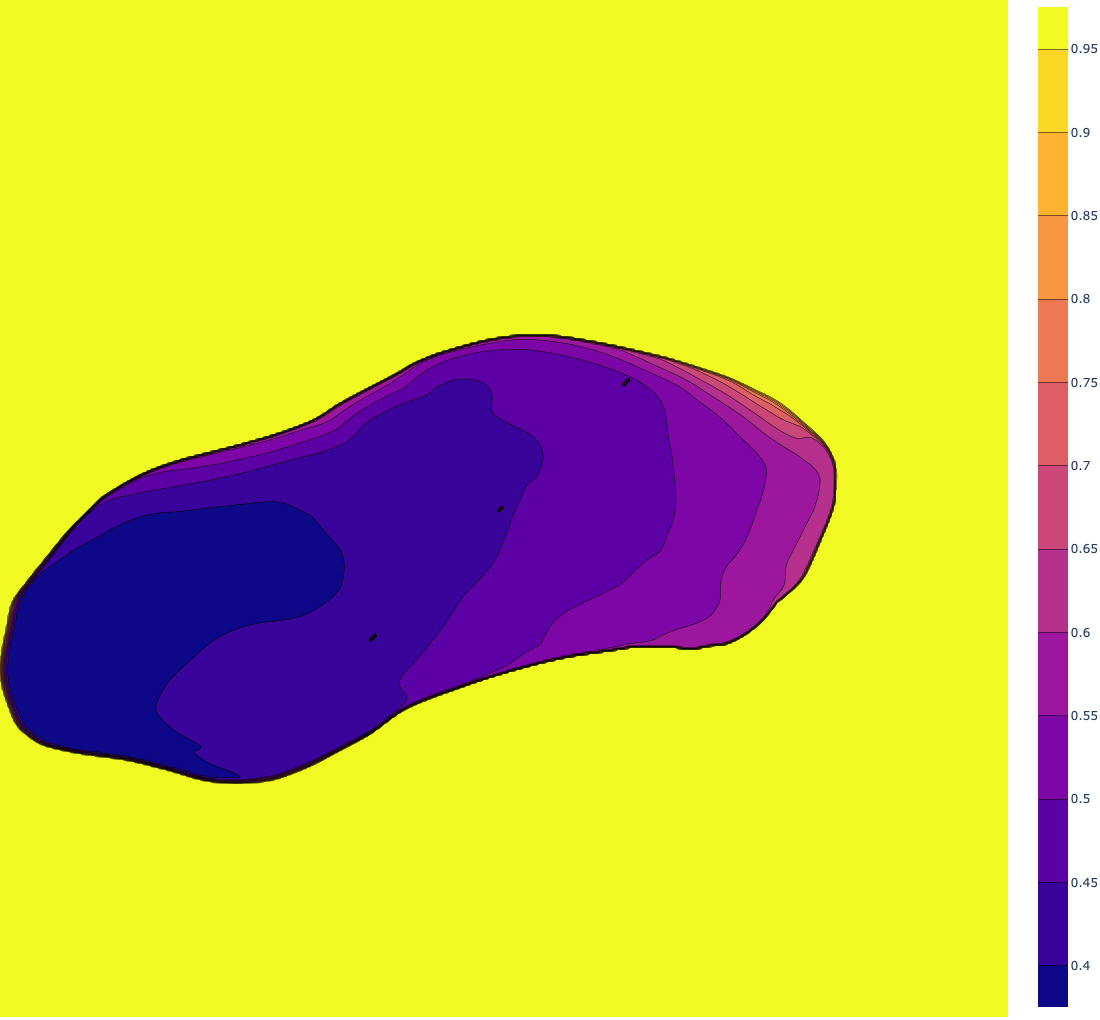}
\includegraphics[width=0.24\linewidth, trim=40mm 105mm 60mm 135mm, clip]{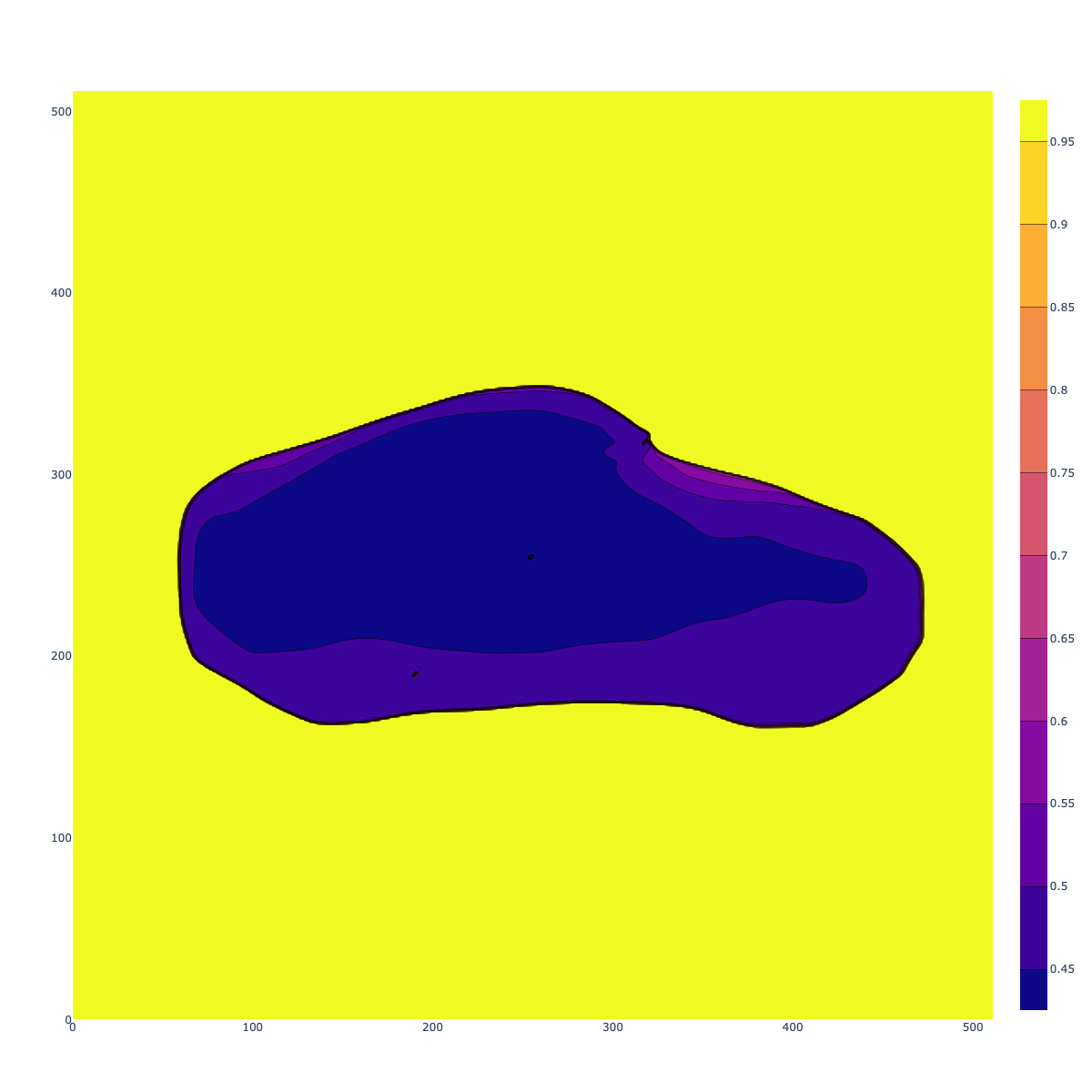}\\
\includegraphics[width=0.24\linewidth, trim={30mm 90mm 60mm 150mm}, clip]{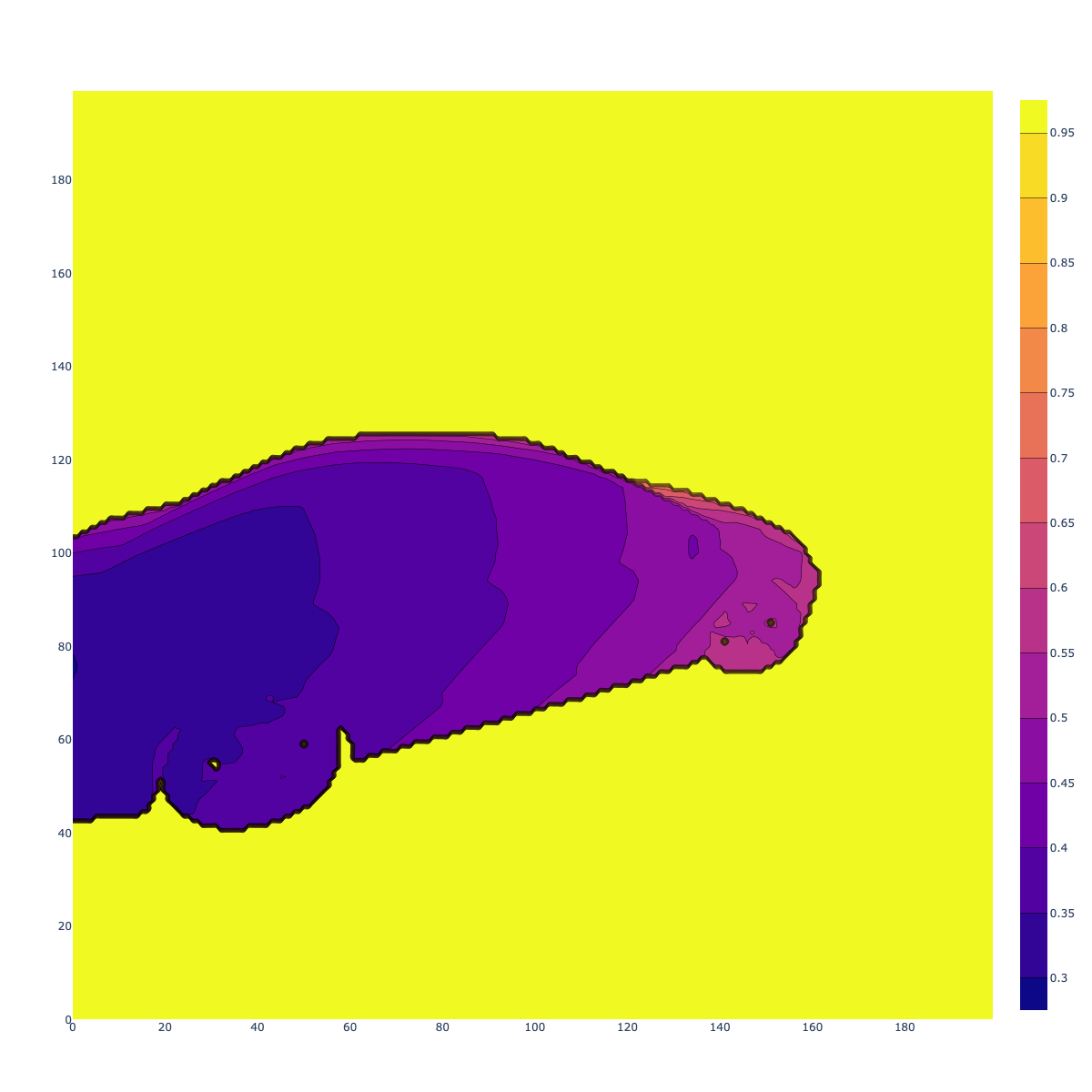}
\includegraphics[width=0.24\linewidth, trim=100mm 40mm 95mm 70mm, clip]{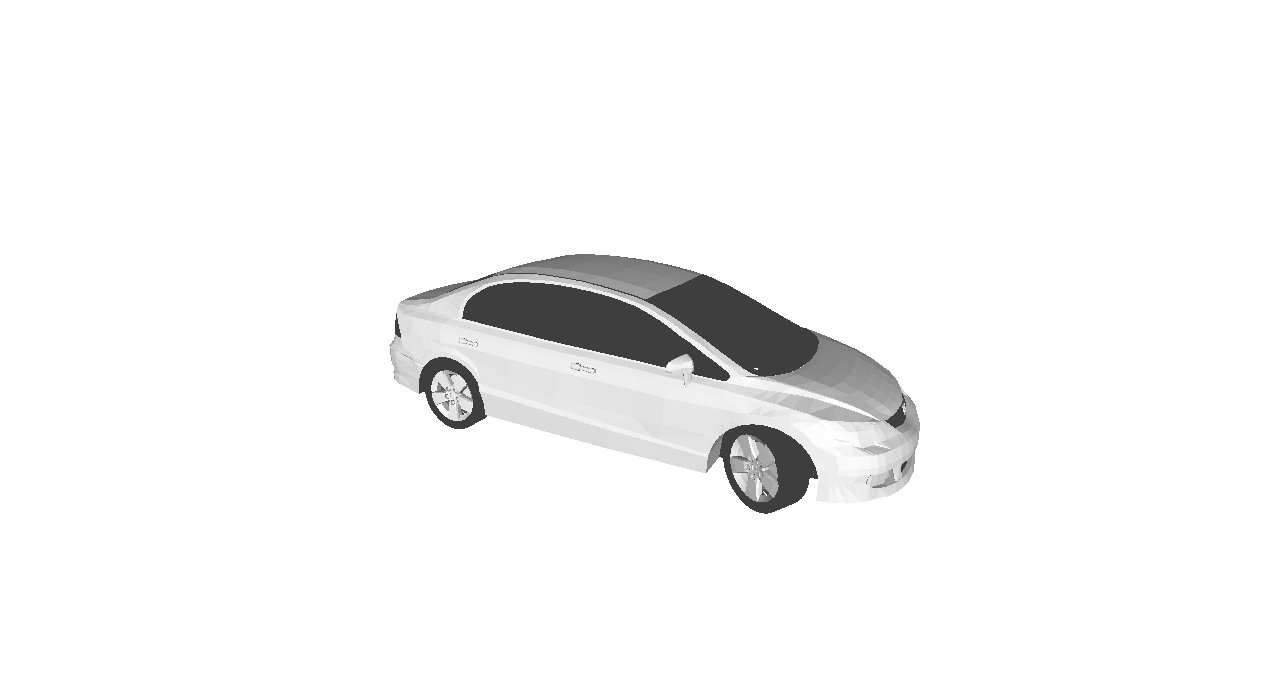}
\includegraphics[width=0.24\linewidth, trim=70mm 120mm 90mm 160mm, clip]{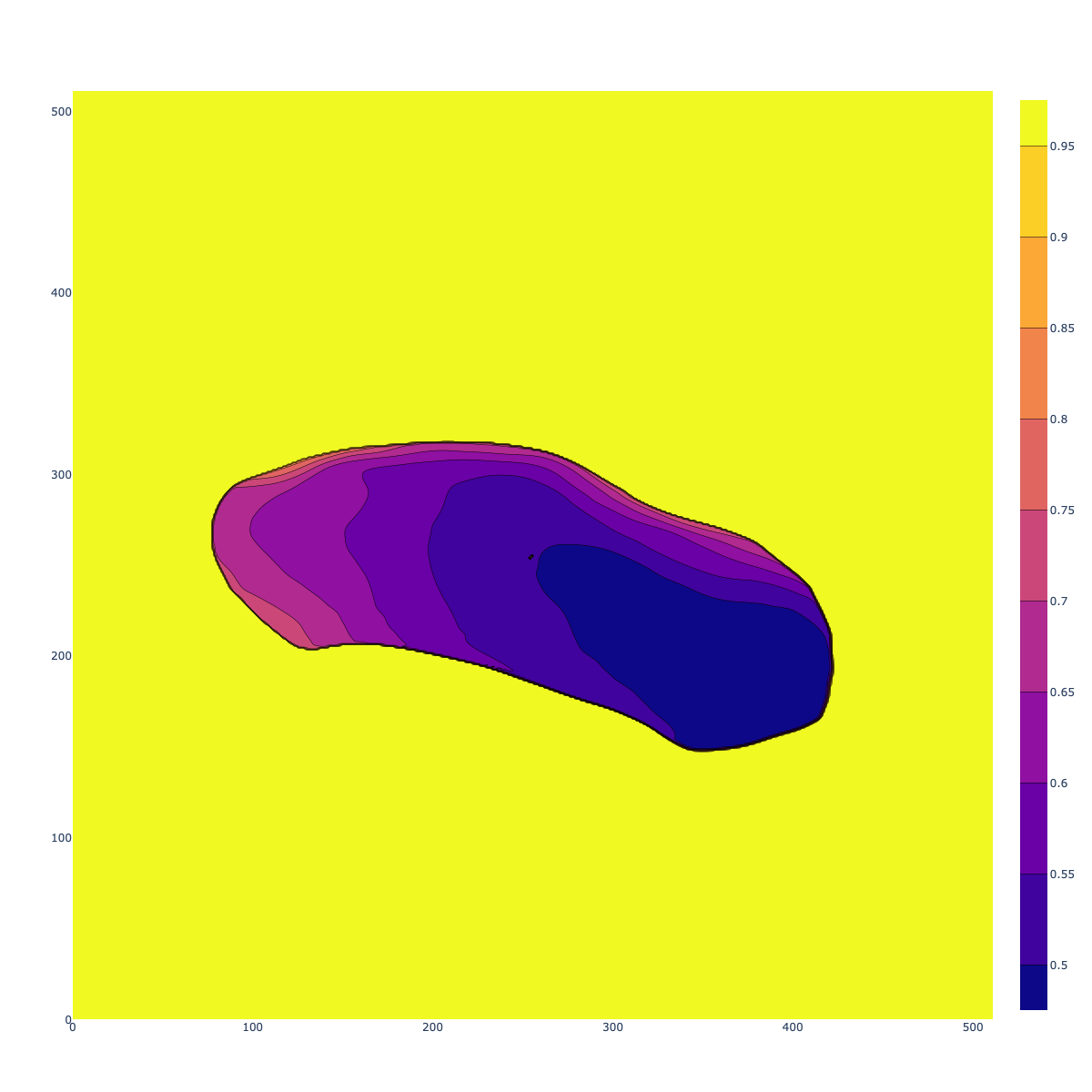}
\includegraphics[width=0.24\linewidth, trim=50mm 115mm 100mm 160mm, clip]{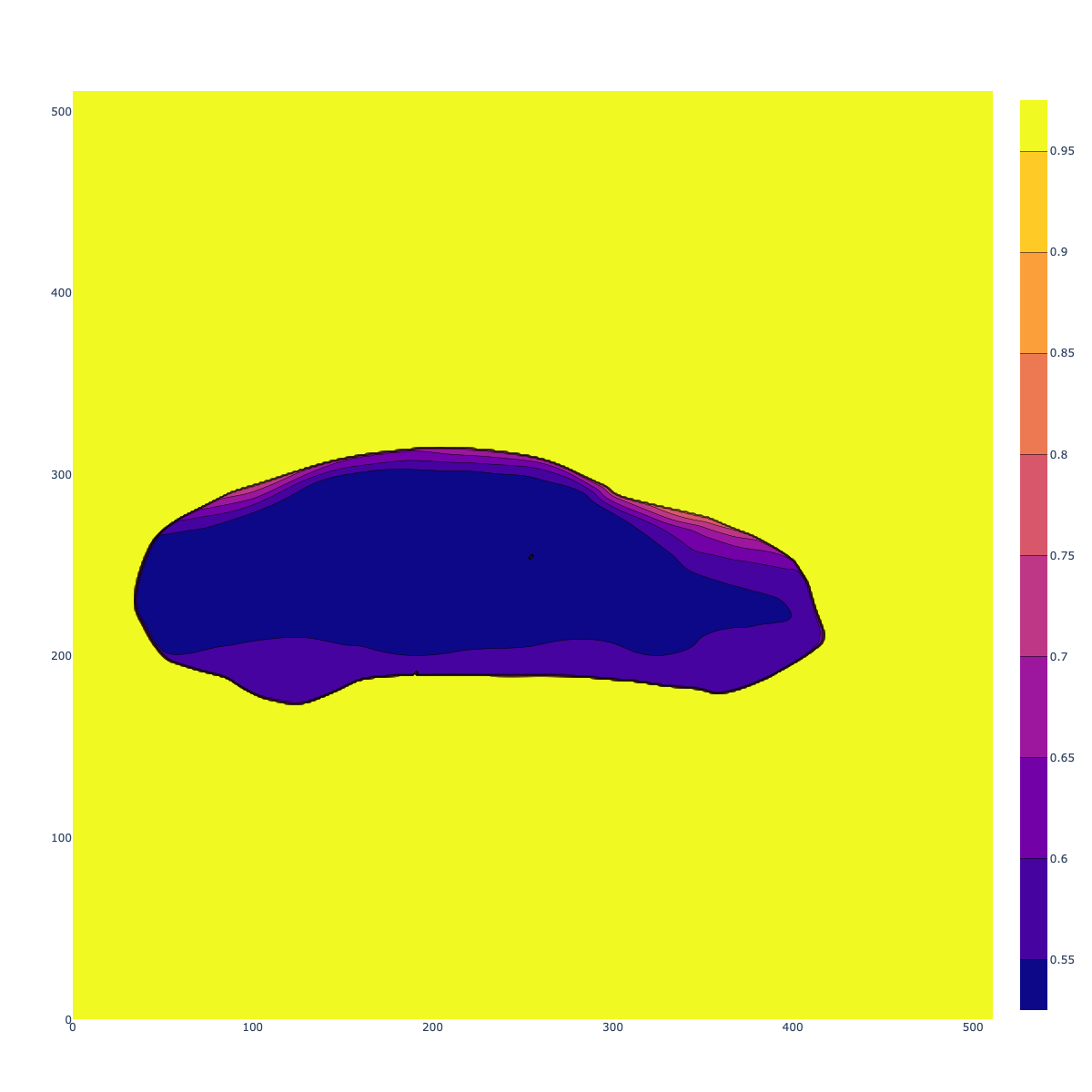}
\includegraphics[width=0.24\linewidth, trim={70mm 100mm 90mm 140mm}, clip]{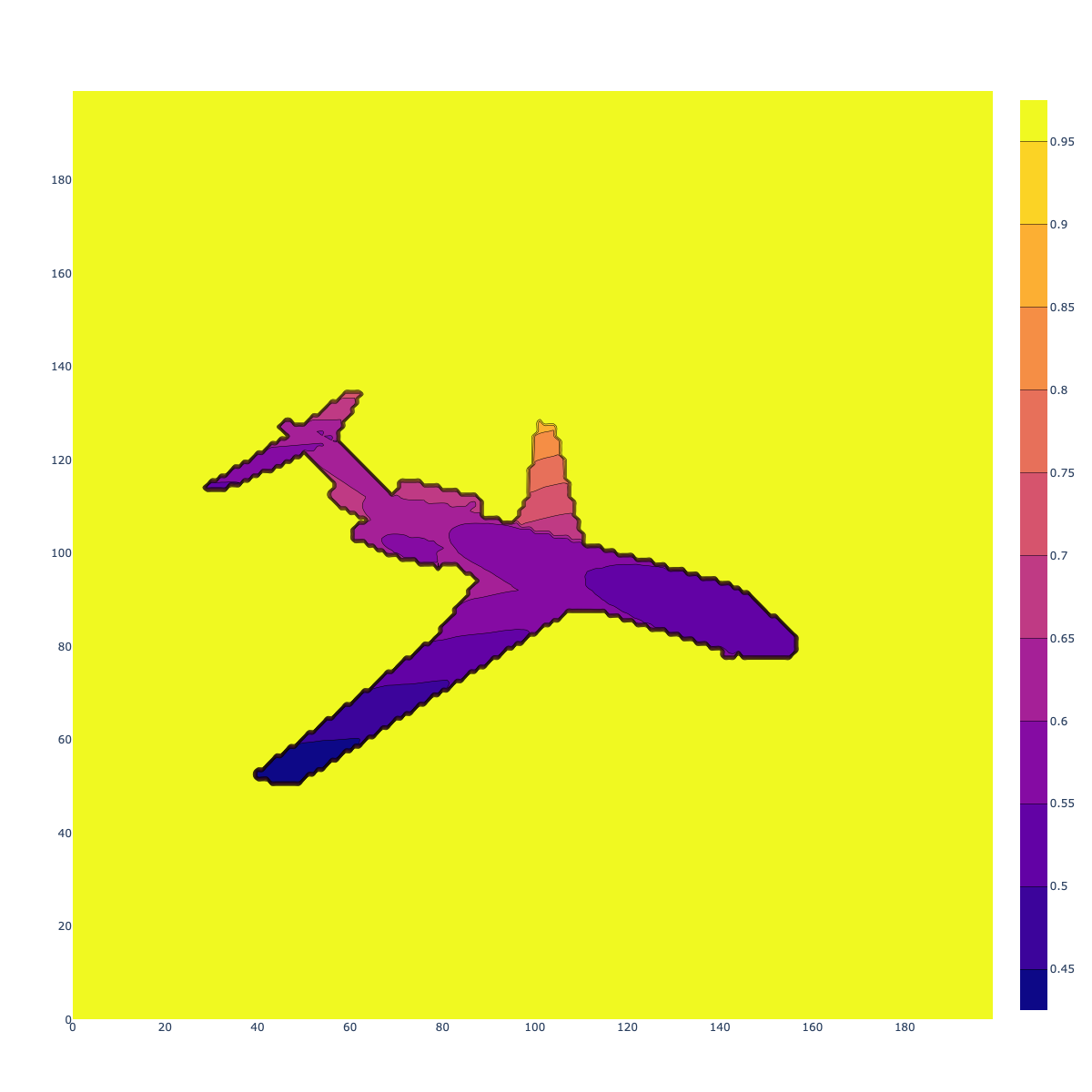}
\includegraphics[width=0.24\linewidth, trim=105mm 40mm 110mm 60mm, clip]{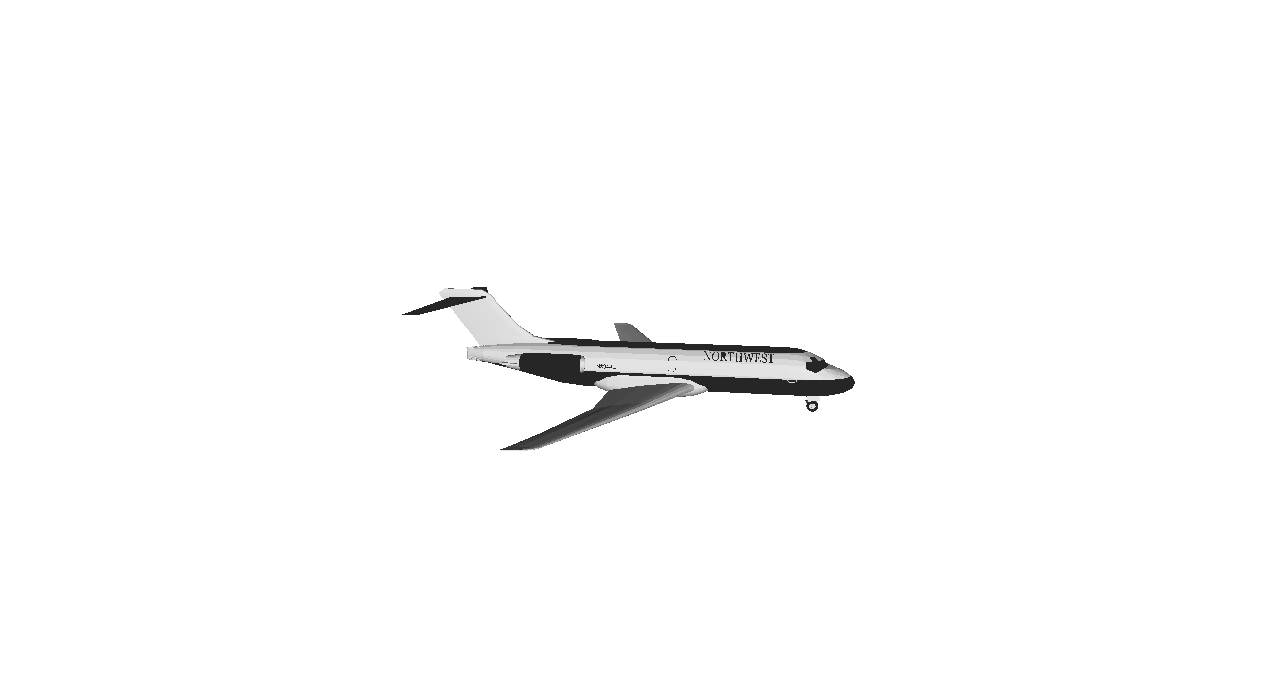}
\includegraphics[width=0.24\linewidth, trim=40mm 90mm 110mm 143mm, clip]{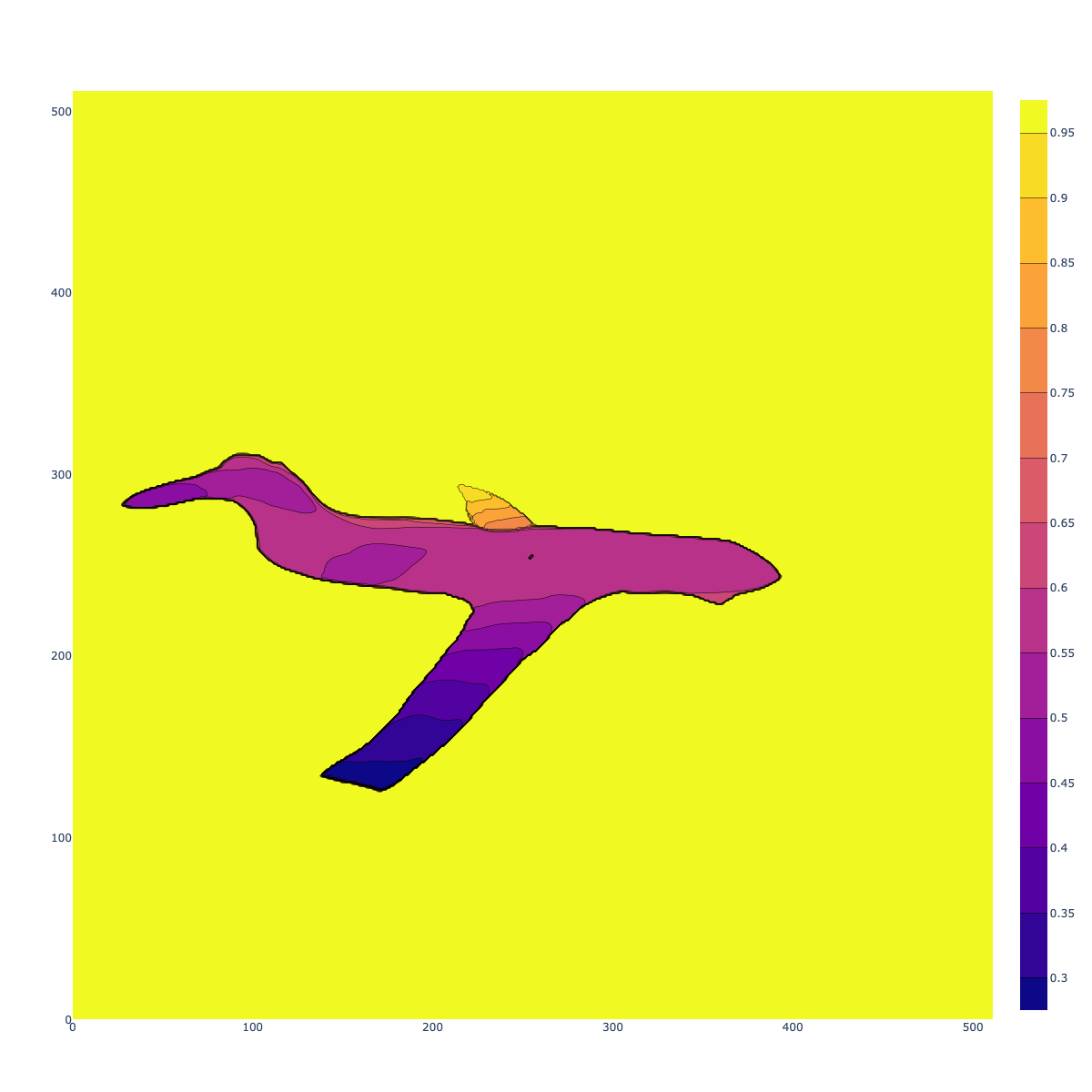}
\includegraphics[width=0.24\linewidth, trim=70mm 100mm 90mm 140mm, clip]{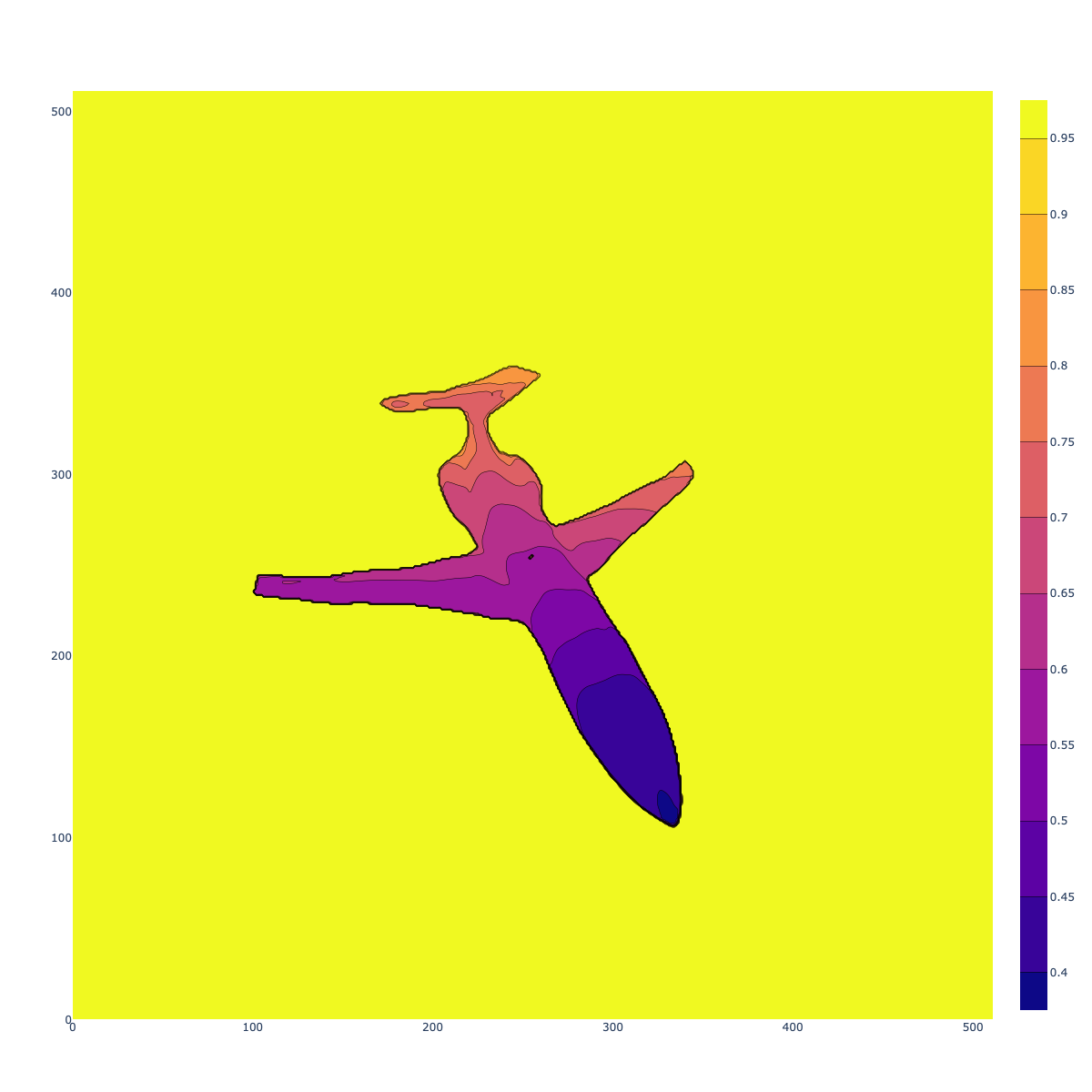}
\includegraphics[width=0.24\linewidth, trim={70mm 120mm 85mm 130mm}, clip]{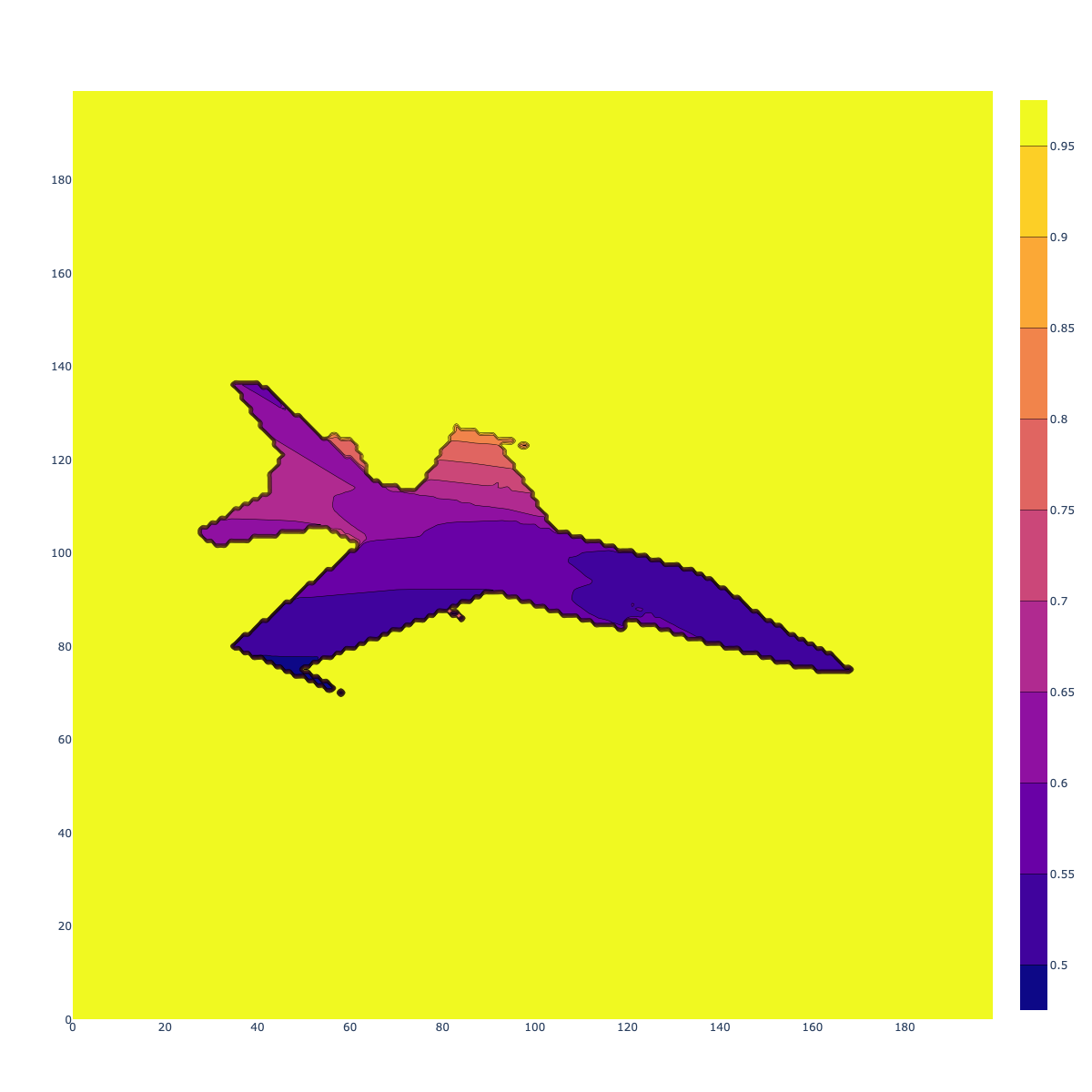}
\includegraphics[width=0.24\linewidth, trim=95mm 40mm 105mm 80mm, clip]{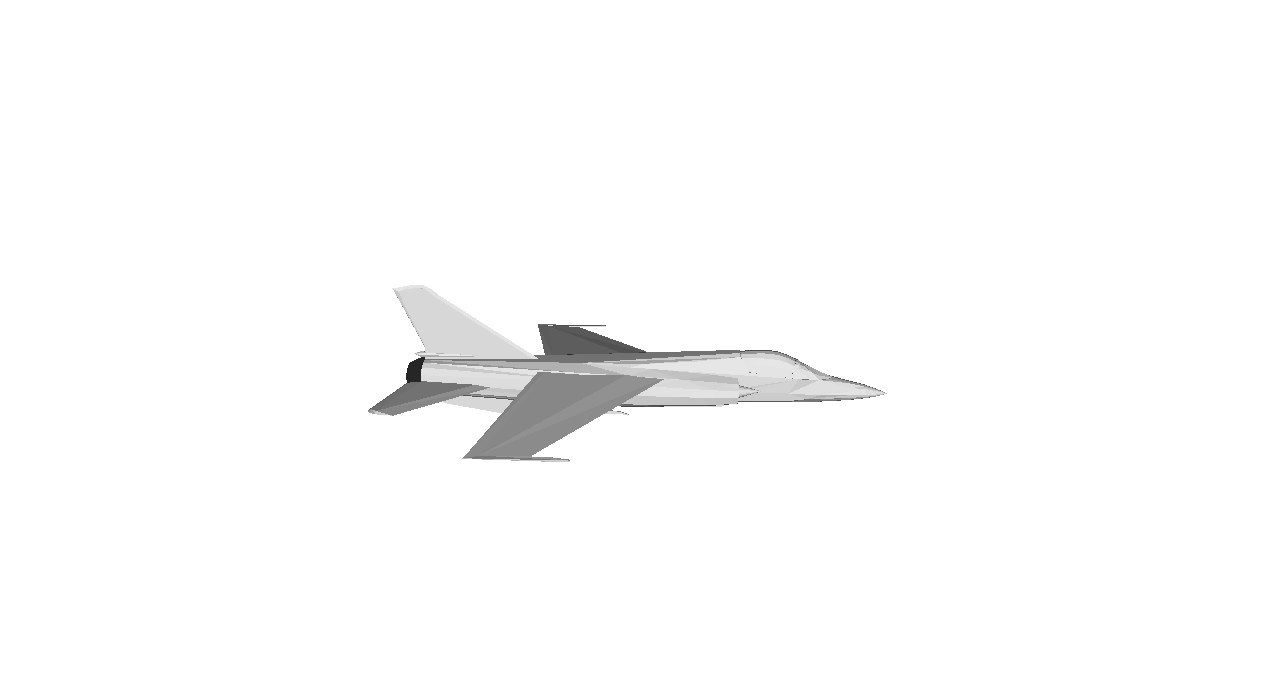}
\includegraphics[width=0.24\linewidth, trim=40mm 90mm 85mm 140mm, clip]{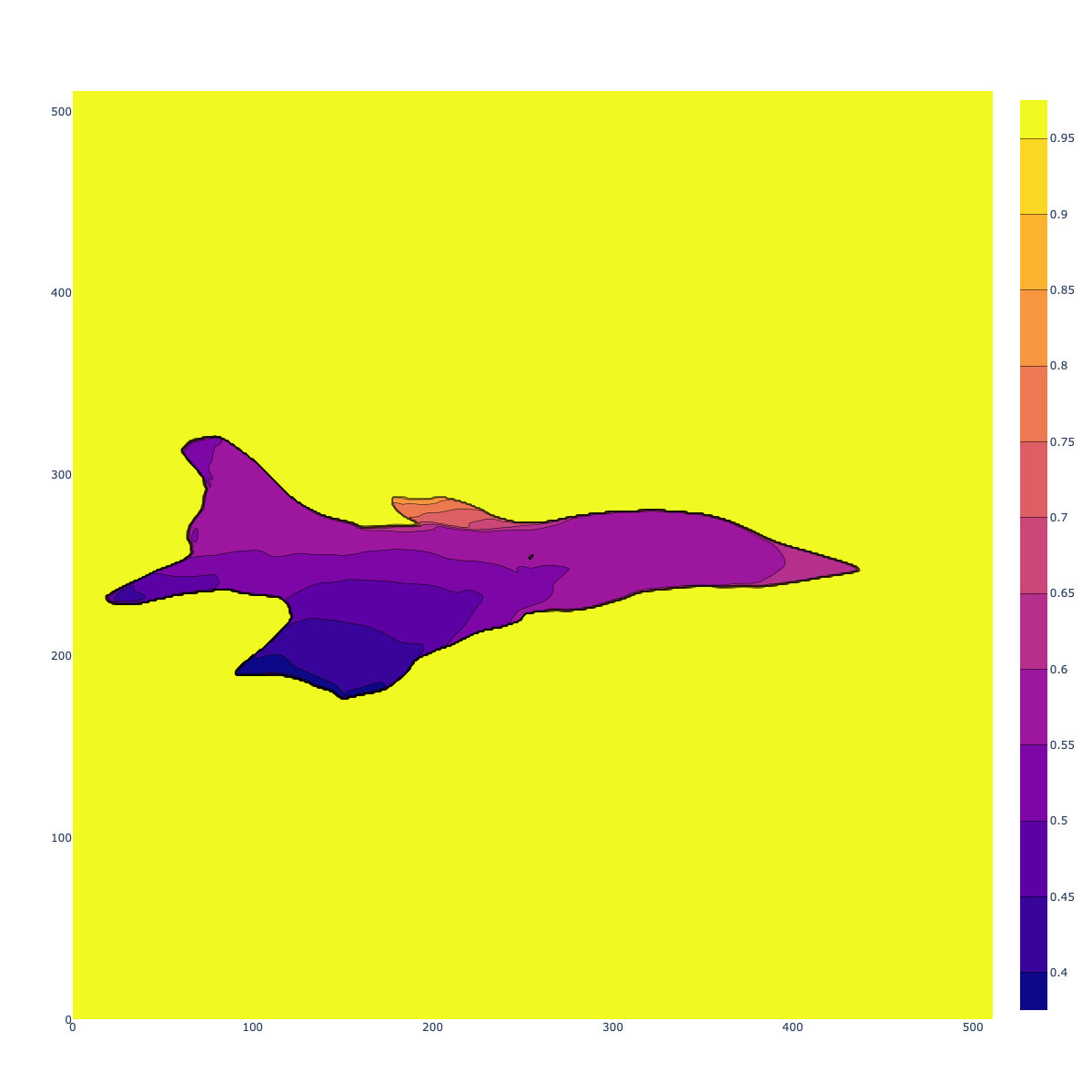}
\includegraphics[width=0.24\linewidth, trim=70mm 120mm 85mm 130mm, clip]{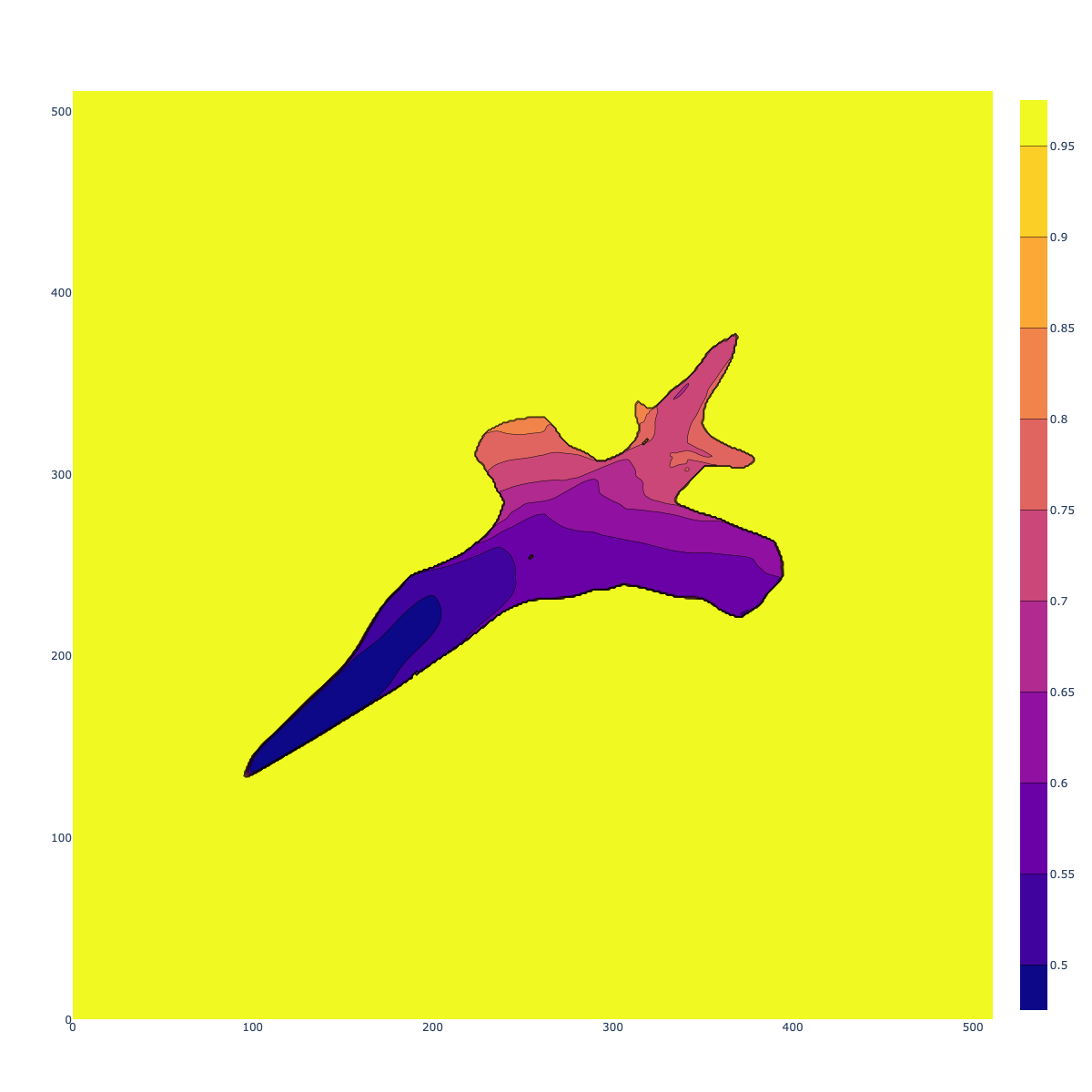}
\end{minipage}
\caption{SDDF shape completion using a distance image (left column) from an unseen object instance (second column). After latent code optimization, the SDDF model can synthesize novel distance views (third and fourth columns).}
\label{fig:shpcmp}
\begin{minipage}{\linewidth}
  \centering
\includegraphics[width=0.19\linewidth, trim={100mm 80mm 40mm 85mm}, clip]{fig/3D/airplain/interpolation/EvalDepthImg5000_phi90_theta90_1.png}
\includegraphics[width=0.19\linewidth, trim=100mm 80mm 40mm 85mm, clip]{fig/3D/airplain/interpolation/EvalDepthImg5000_phi90_theta90_0o75of1_0o25of15.png}
\includegraphics[width=0.19\linewidth, trim=100mm 80mm 40mm 85mm, clip]{fig/3D/airplain/interpolation/EvalDepthImg5000_phi90_theta90_0o5of1_0o5of15.png}
\includegraphics[width=0.19\linewidth, trim=100mm 80mm 40mm 85mm, clip]{fig/3D/airplain/interpolation/EvalDepthImg5000_phi90_theta90_0o25of1_0o75of15.png}
\includegraphics[width=0.19\linewidth, trim={100mm 80mm 40mm 85mm}, clip]{fig/3D/airplain/interpolation/EvalDepthImg5000_phi90_theta90_15.png}
\includegraphics[width=0.19\linewidth, trim=100mm 90mm 40mm 95mm, clip]{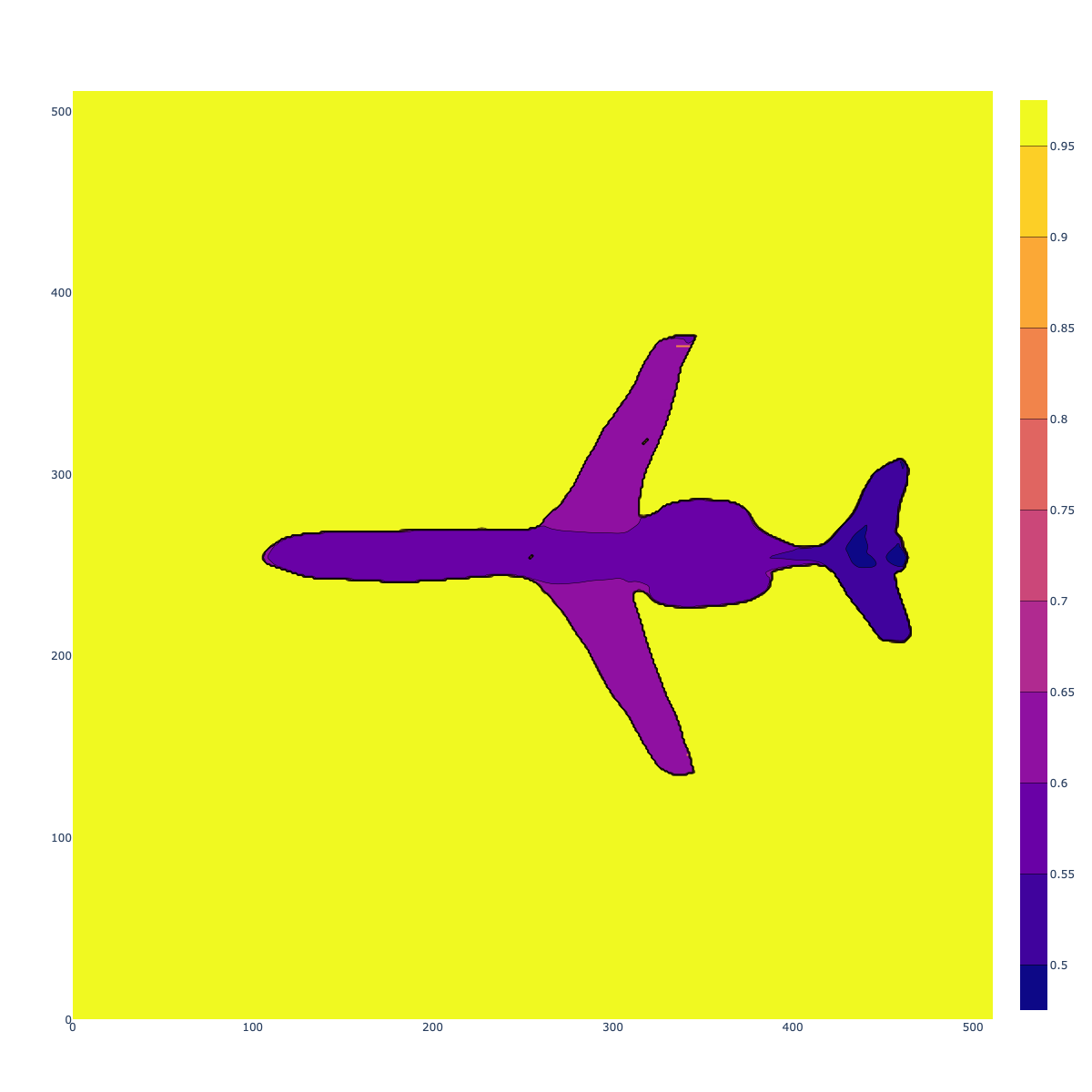}
\includegraphics[width=0.19\linewidth, trim=100mm 90mm 40mm 95mm, clip]{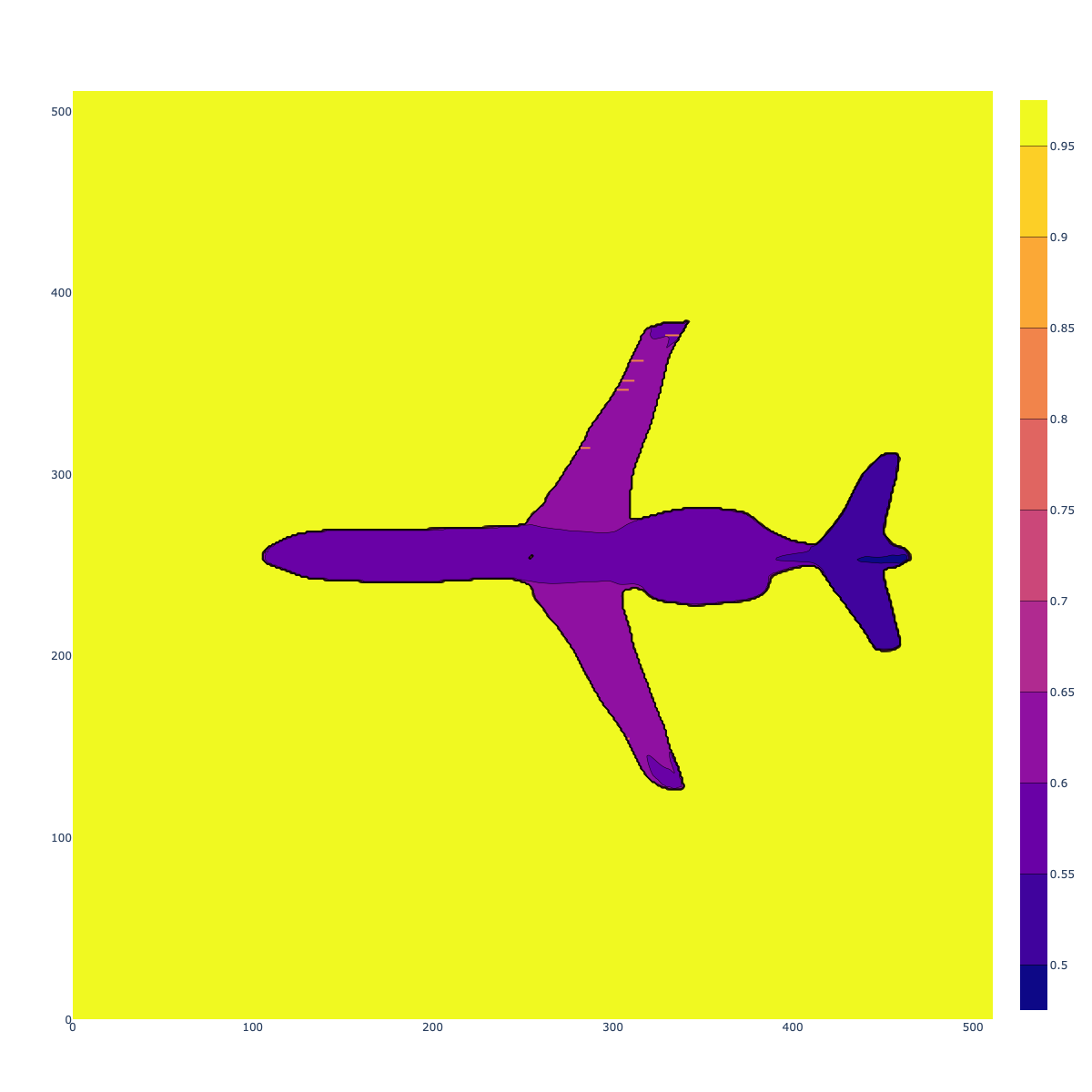}
\includegraphics[width=0.19\linewidth, trim=100mm 90mm 40mm 95mm, clip]{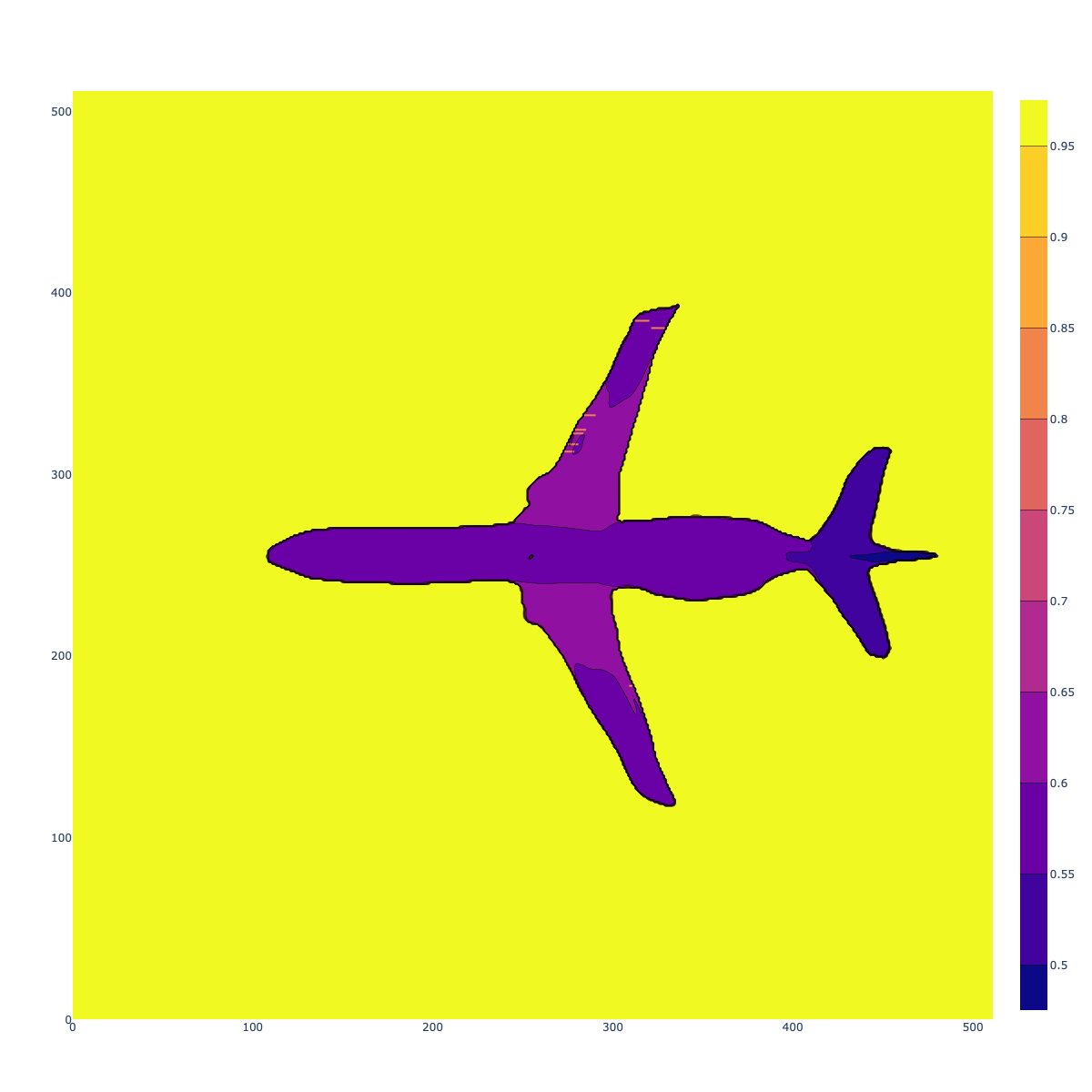}
\includegraphics[width=0.19\linewidth, trim=100mm 90mm 40mm 95mm, clip]{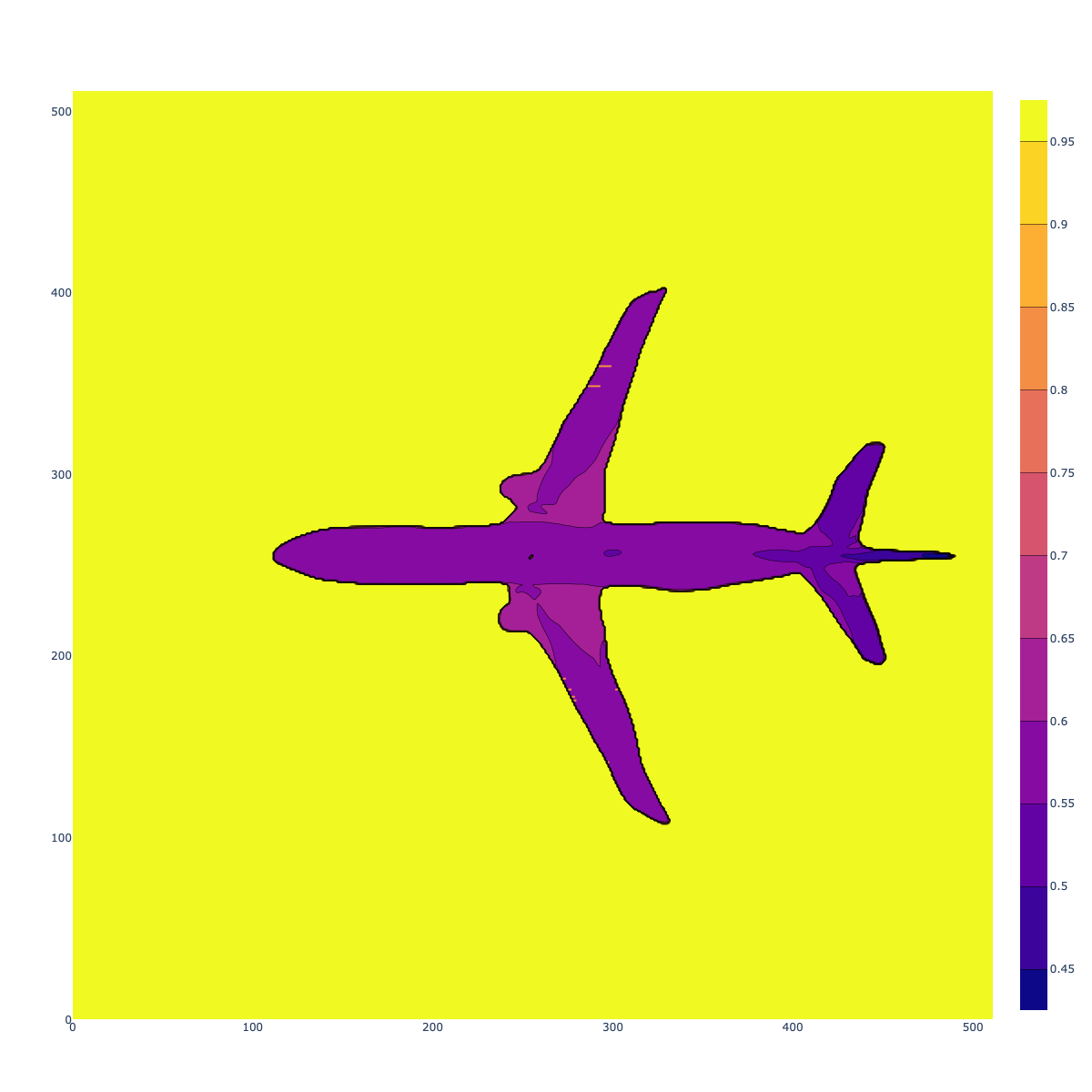}
\includegraphics[width=0.19\linewidth, trim=100mm 90mm 40mm 95mm, clip]{fig/3D/airplain/interpolation/EvalDepthImg5000_phi90_theta90_1.png}
\end{minipage}
\hfill%
\hbox{\hspace{0.7em}\begin{minipage}{\linewidth}
\includegraphics[width=0.19\linewidth, trim={40mm 150mm 105mm 160mm}, clip]{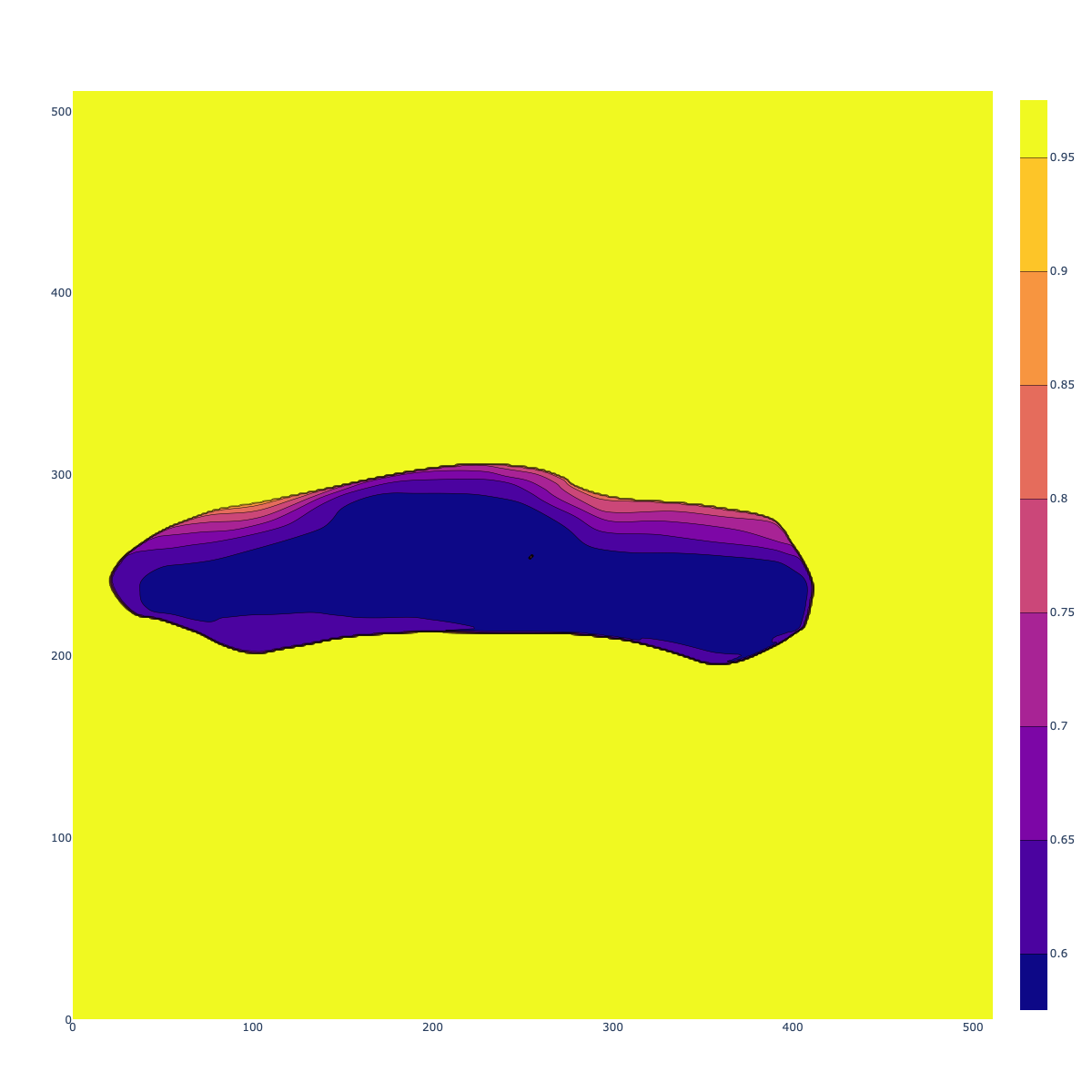}
\includegraphics[width=0.19\linewidth, trim=40mm 150mm 105mm 160mm, clip]{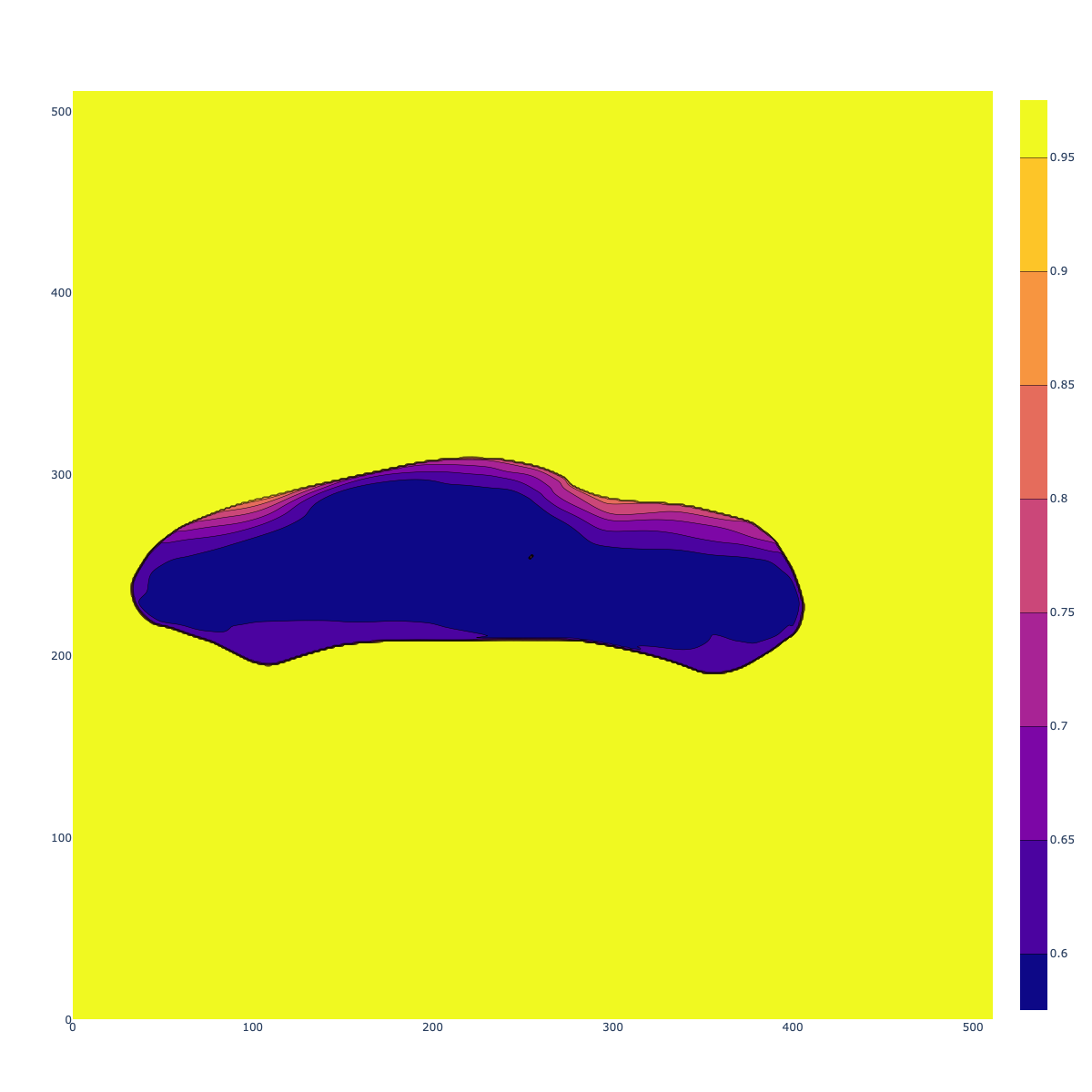}
\includegraphics[width=0.19\linewidth, trim=40mm 150mm 105mm 160mm, clip]{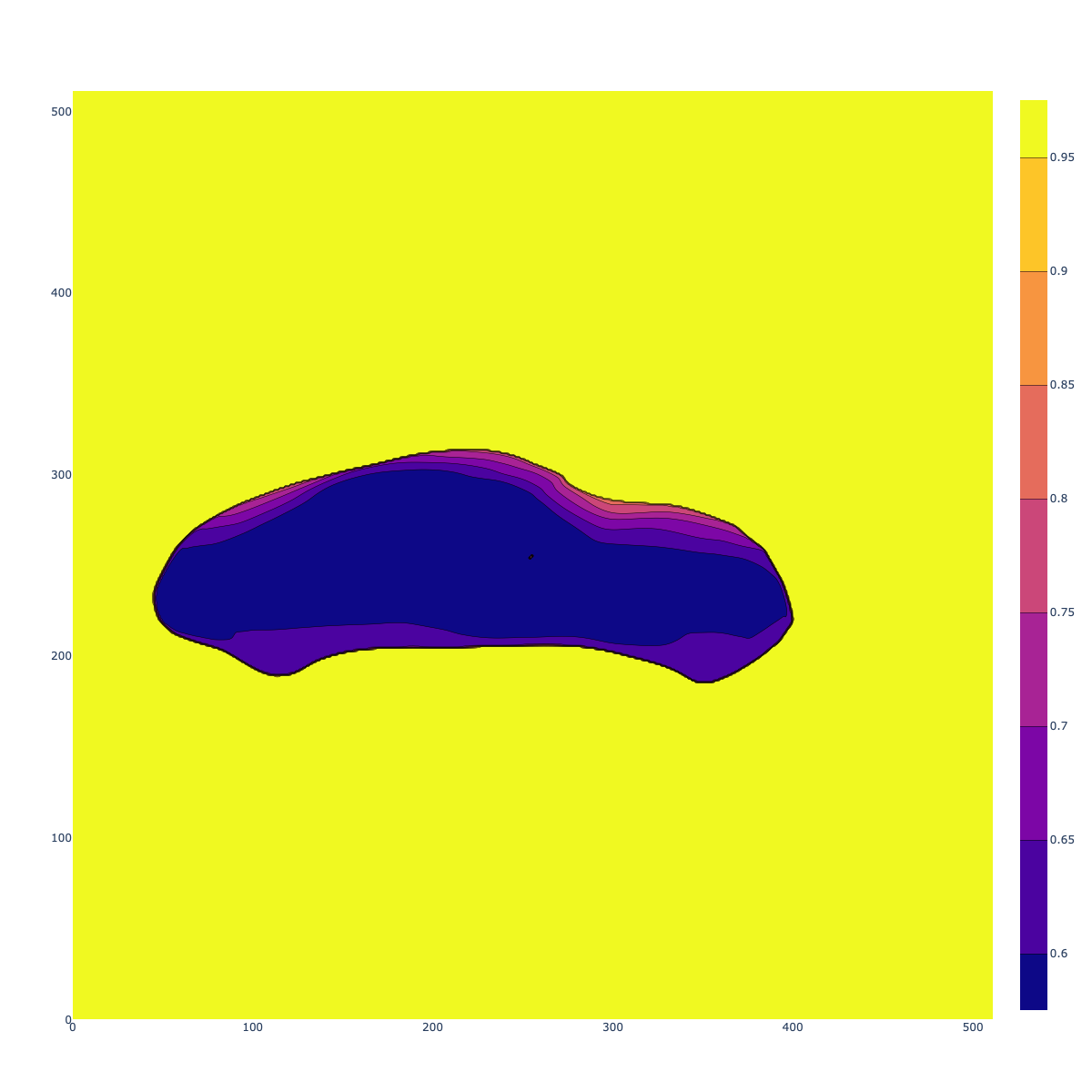}
\includegraphics[width=0.19\linewidth, trim=40mm 150mm 105mm 160mm, clip]{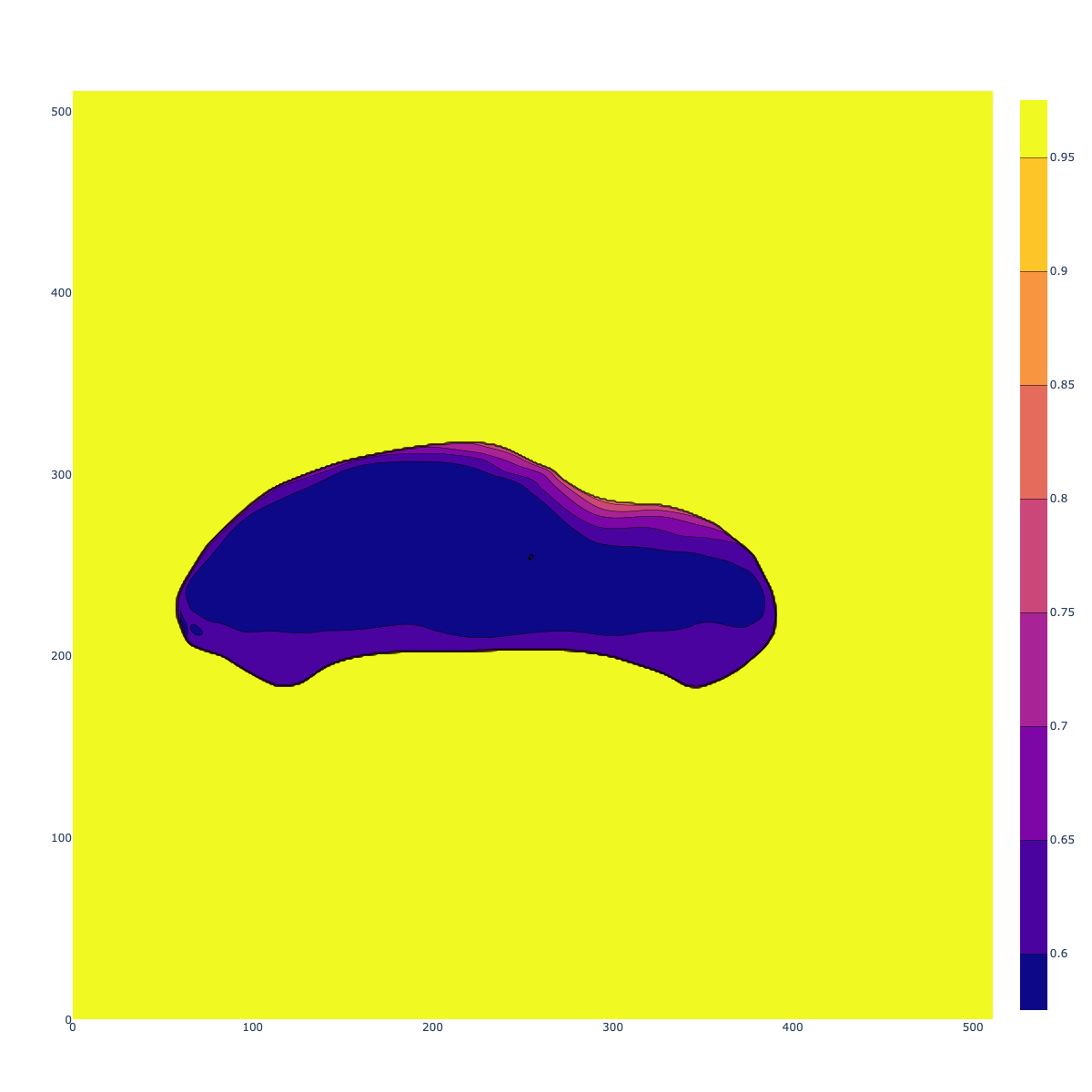}
\includegraphics[width=0.19\linewidth, trim={40mm 150mm 105mm 160mm}, clip]{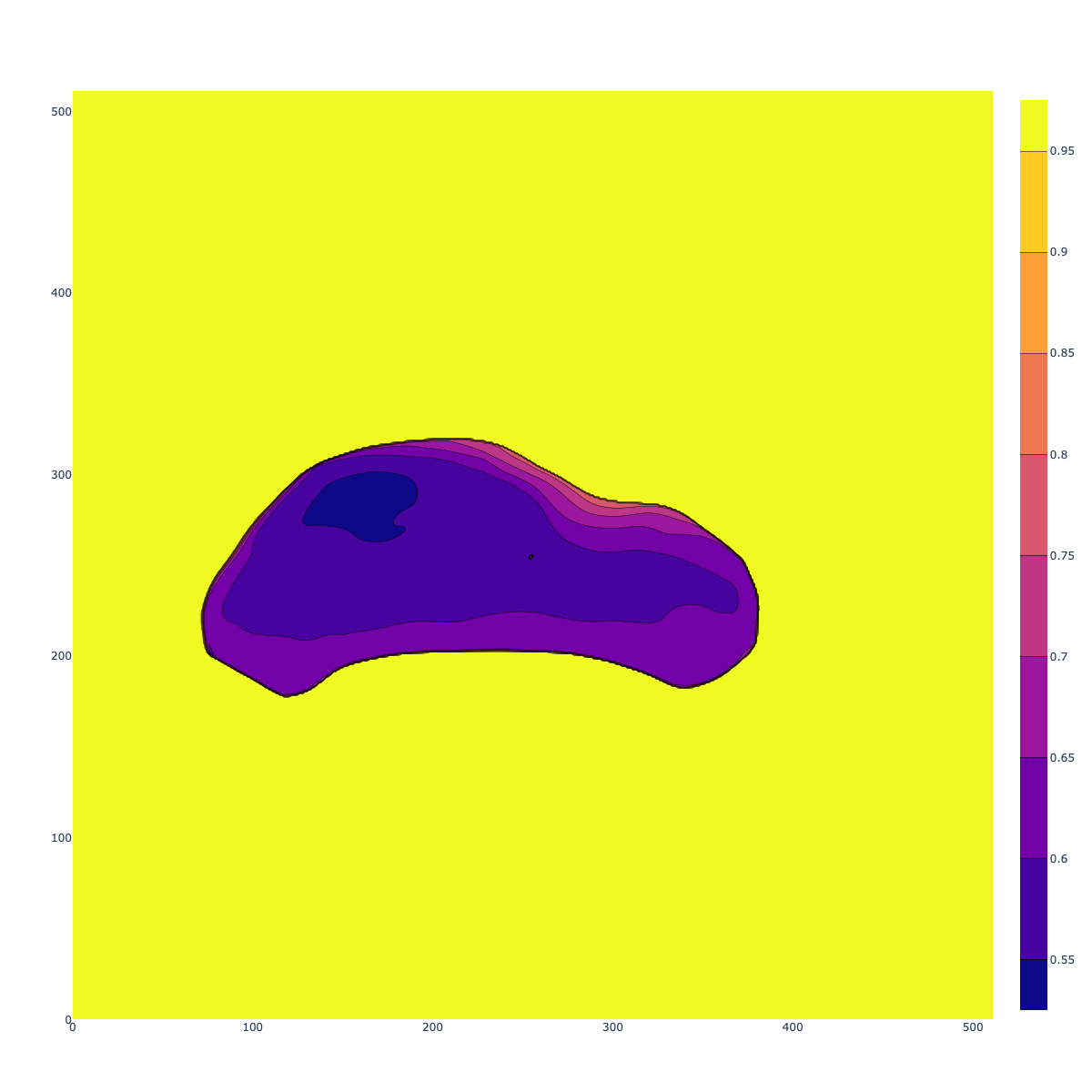}\\
\includegraphics[width=0.19\linewidth, trim=40mm 140mm 100mm 150mm, clip]{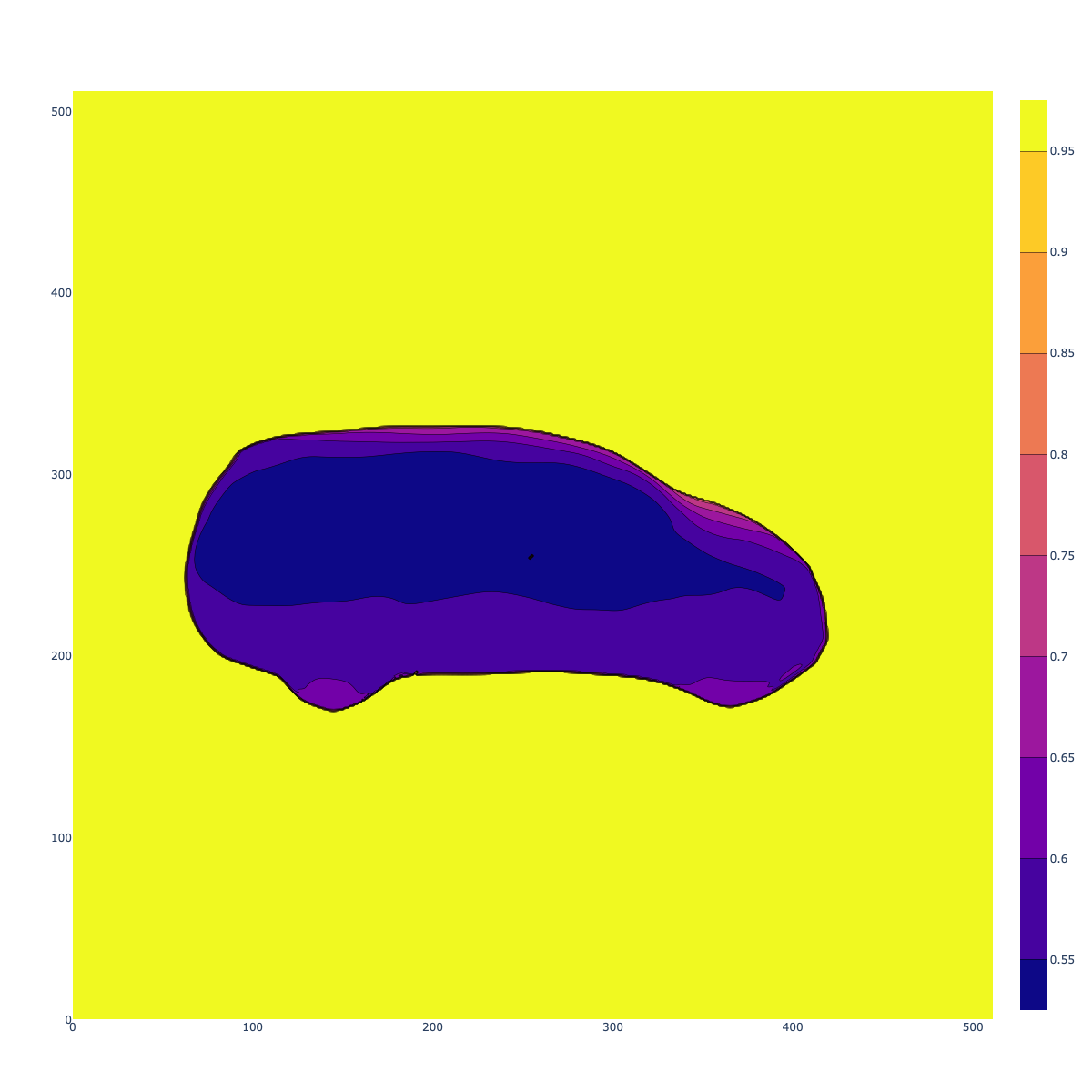}
\includegraphics[width=0.19\linewidth, trim=40mm 140mm 100mm 150mm, clip]{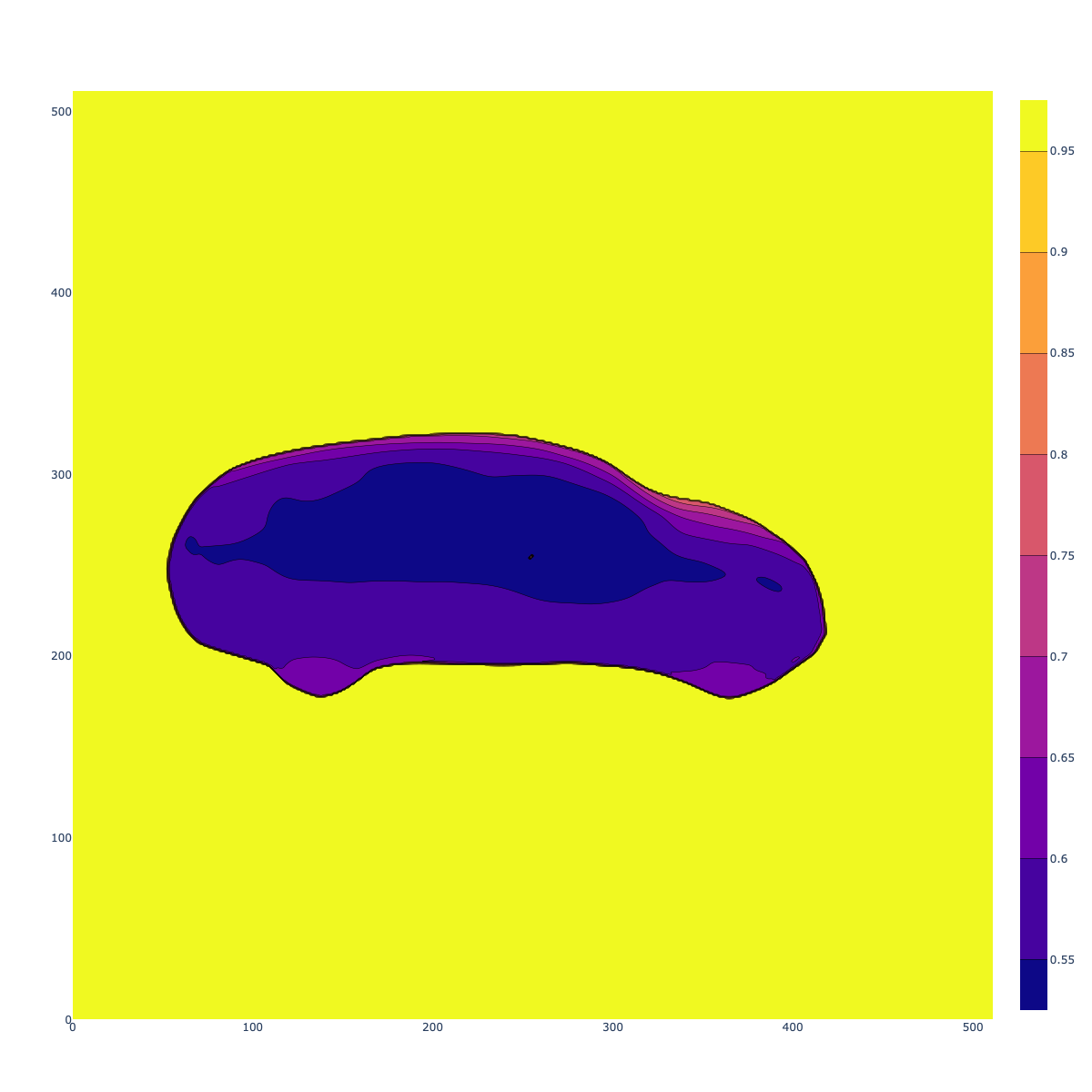}
\includegraphics[width=0.19\linewidth, trim=40mm 140mm 100mm 150mm, clip]{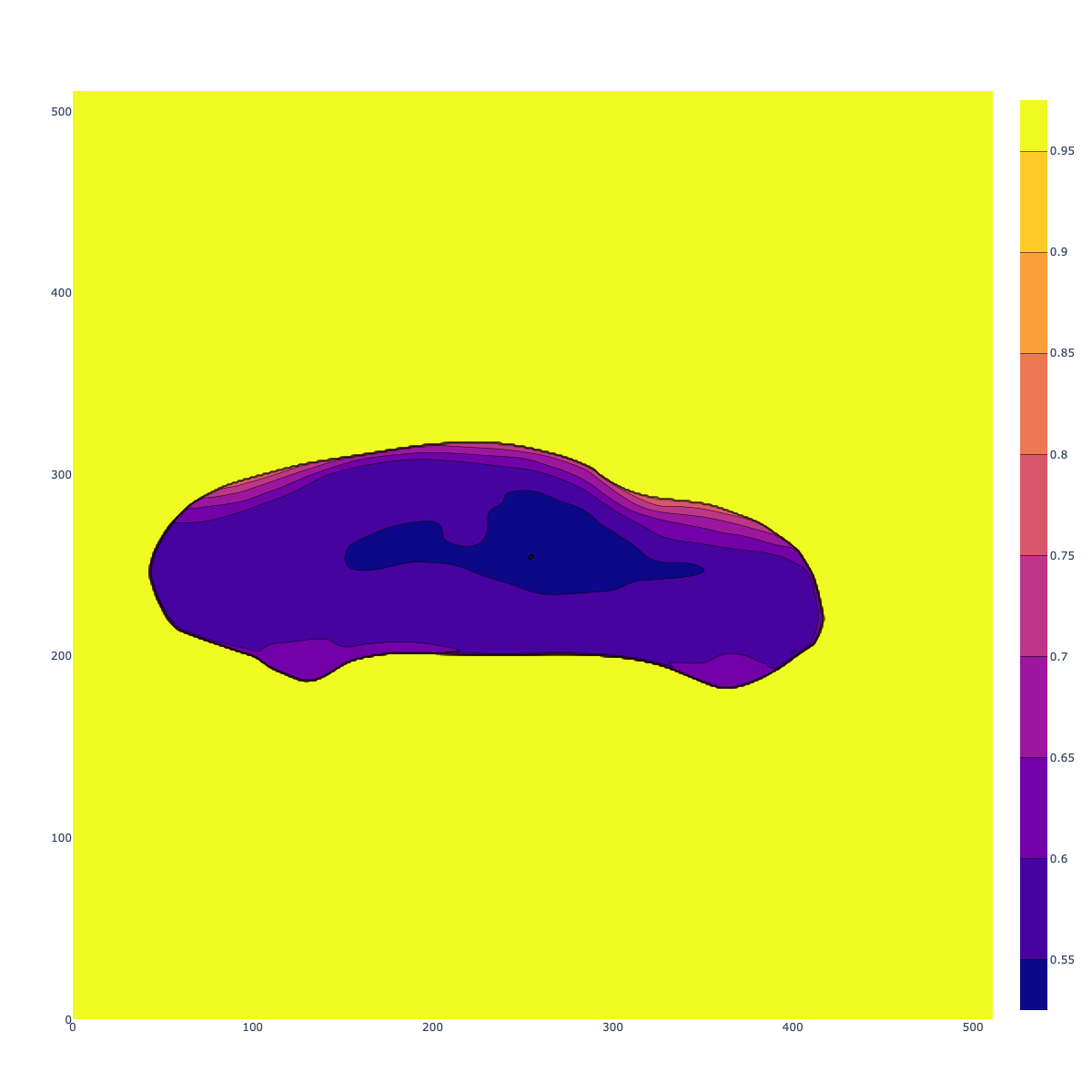}
\includegraphics[width=0.19\linewidth, trim=40mm 140mm 100mm 150mm, clip]{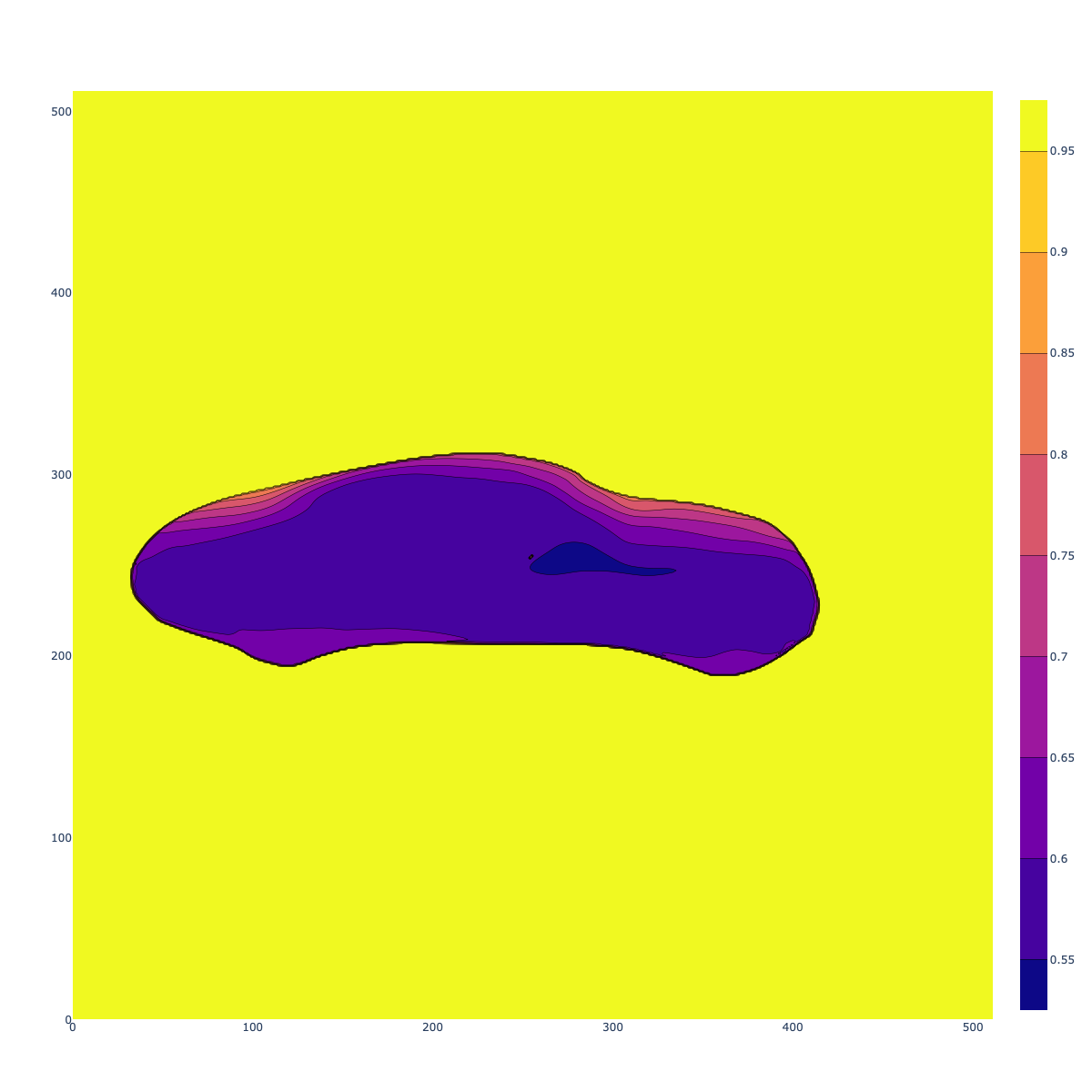}
\includegraphics[width=0.19\linewidth, trim=40mm 140mm 100mm 150mm, clip]{fig/3D/car/interpolation/EvalDepthImg500_phi90_theta22o5_15.png}
\end{minipage}}%
\caption{SDDF shape interpolation between two instances. The left-most and right-most columns show the SDDF output from the same view for two different instances from the training set. The three columns in the middle are generated by using a weighted average of the latent codes of the left-most and right-most instances as an input to the SDDF network. In each row, from left to right, the latent code weights with respect to the left-most instance are $1$, $0.75$, $0.5$, $0.25$, $0$, respectively. Note how the shapes transform smoothly from the left-most to the right-most instances with intermediate shapes looking like valid cars and airplanes. This demonstrates that the SDDF model represents the latent shape space continuously and meaningfully.}
\label{fig:interpolation}
\end{figure*}
%

\begin{figure*}[t]
\centering
\includegraphics[width=0.33\linewidth, trim={75mm 120mm 120mm 115mm}, clip]{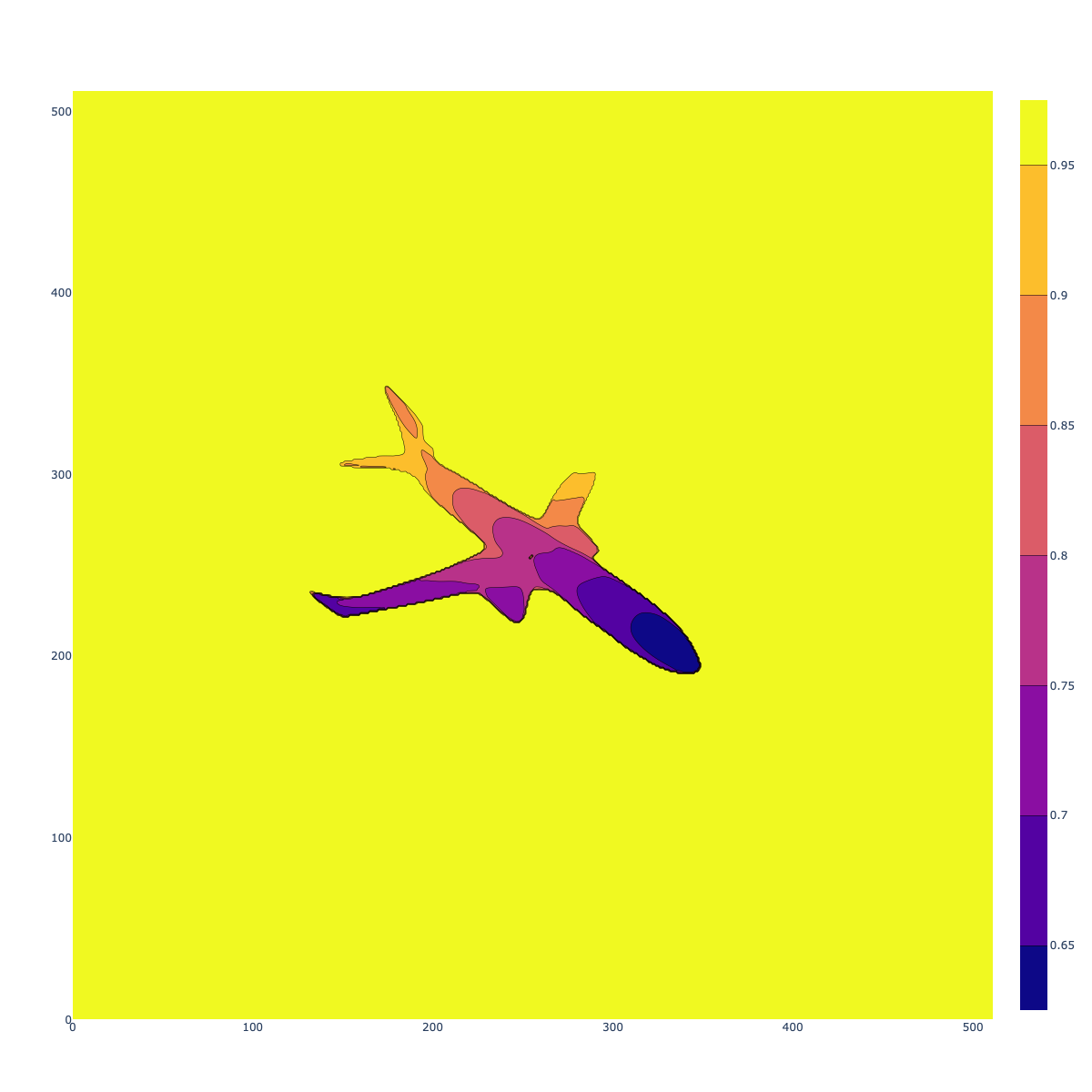}%
\hfill%
\includegraphics[width=0.33\linewidth, trim=0mm 0mm 0mm 0mm, clip]{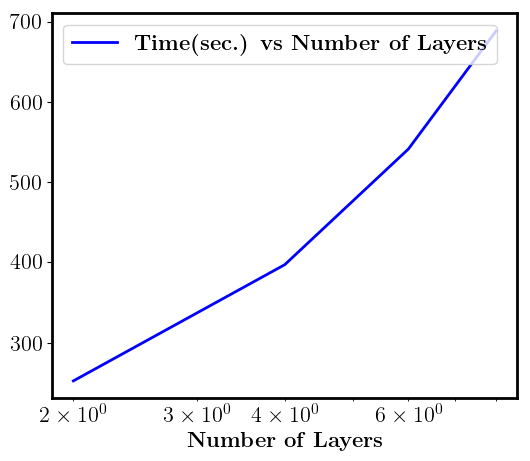}%
\hfill%
\includegraphics[width=0.33\linewidth, trim=0mm 0mm 0mm 0mm, clip]{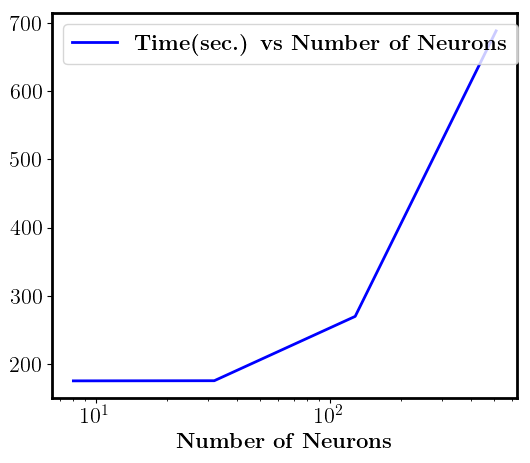}
\caption{Distance image (left) produced by our SDDF model with 8 layers, $512$ neurons per layer, and noiseless training data. The model training time is shown as a function of the number of layers (middle) and number of neurons per layer (right). Note that in the right plot the $x$-axis is in log scale and the number of neurons is equal in all layers. The error of the point cloud, obtained from $100$ distance images produced by our SDDF model at $100$ fixed poses, with respect to the ground-truth instance mesh in different settings: noisy distance data (left), changing number of network layers (middle), and changing numbers of neurons per layer (right). Note that in the right plot the $x$-axis is in log scale and the number of neurons is equal in all layers.}
\label{fig:times}
\end{figure*}
%

\begin{figure*}[t]
\begin{minipage}{\linewidth}
\centering
\includegraphics[width=0.24\linewidth, trim={75mm 120mm 120mm 115mm}, clip]{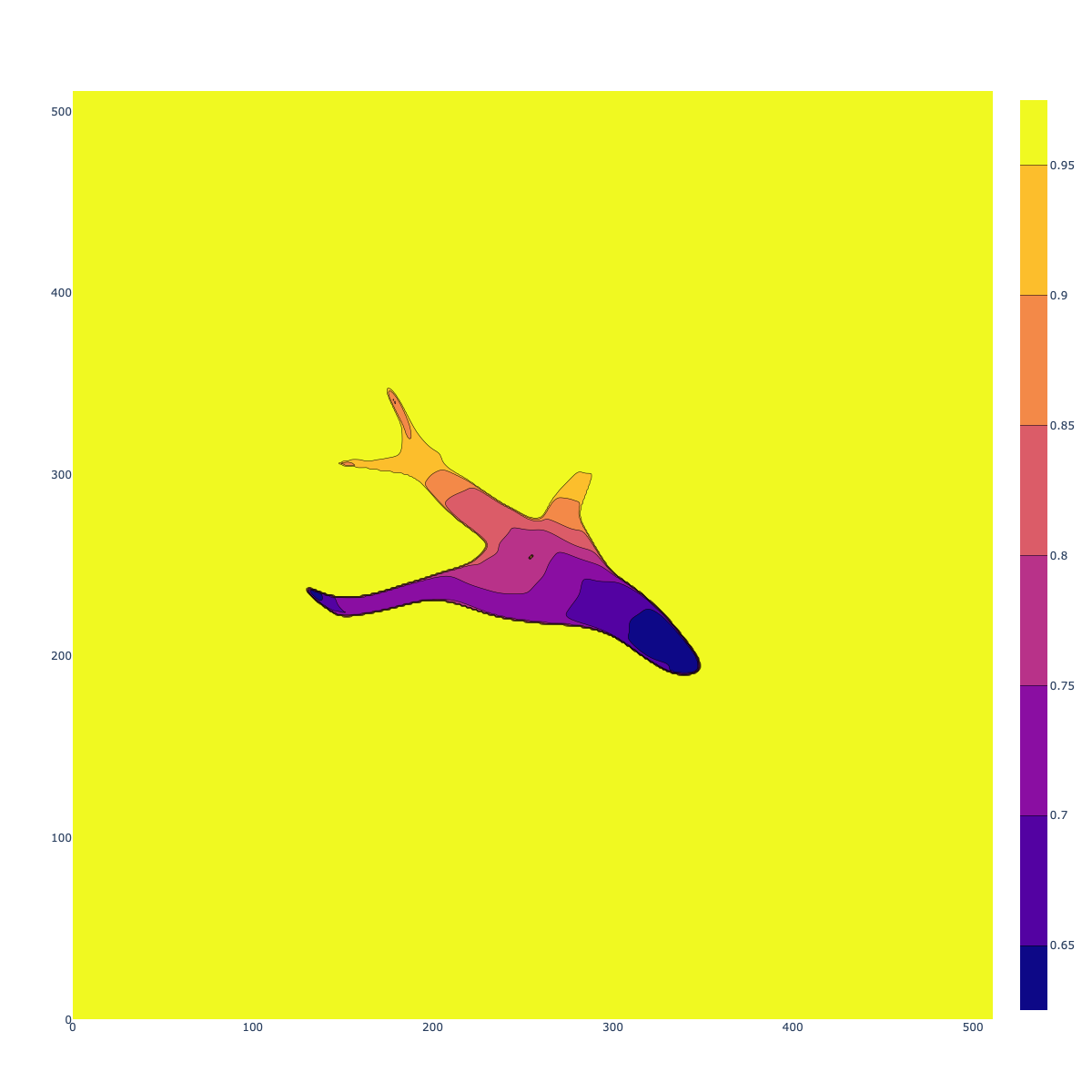}%
\hfill%
\includegraphics[width=0.24\linewidth, trim=75mm 120mm 120mm 115mm, clip]{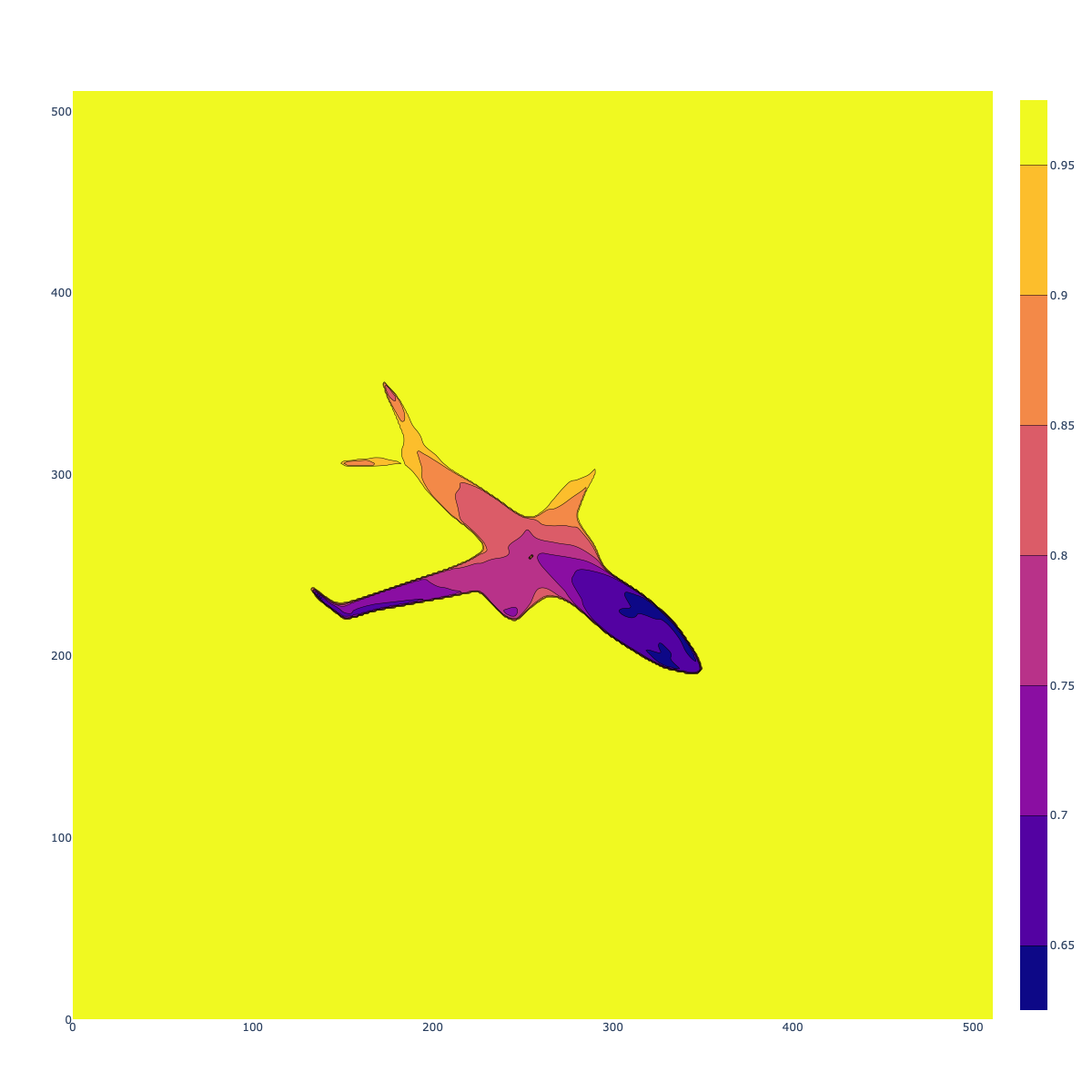}%
\hfill%
\includegraphics[width=0.24\linewidth, trim=75mm 120mm 120mm 115mm, clip]{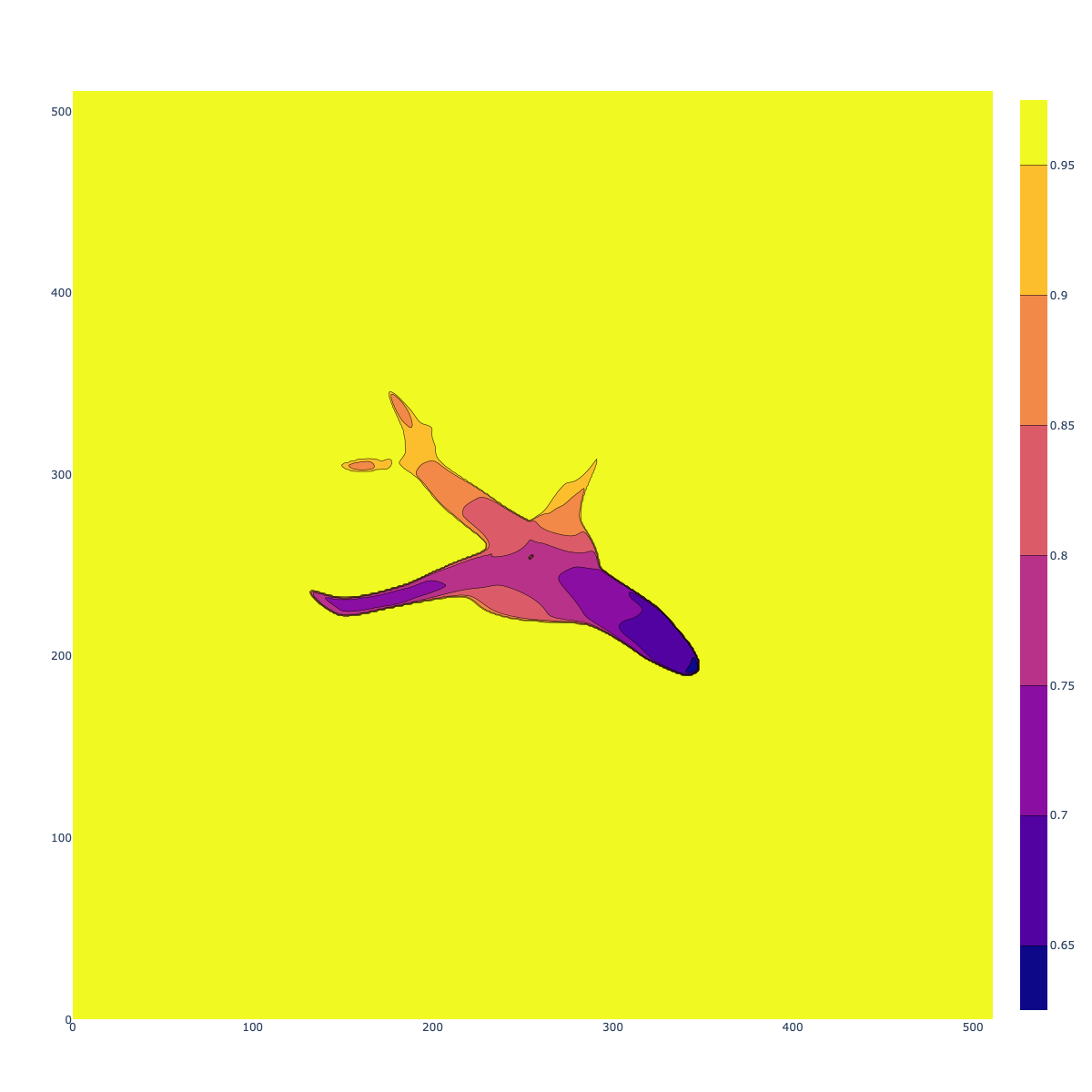}%
\hfill%
\includegraphics[width=0.24\linewidth, trim=75mm 120mm 120mm 115mm, clip]{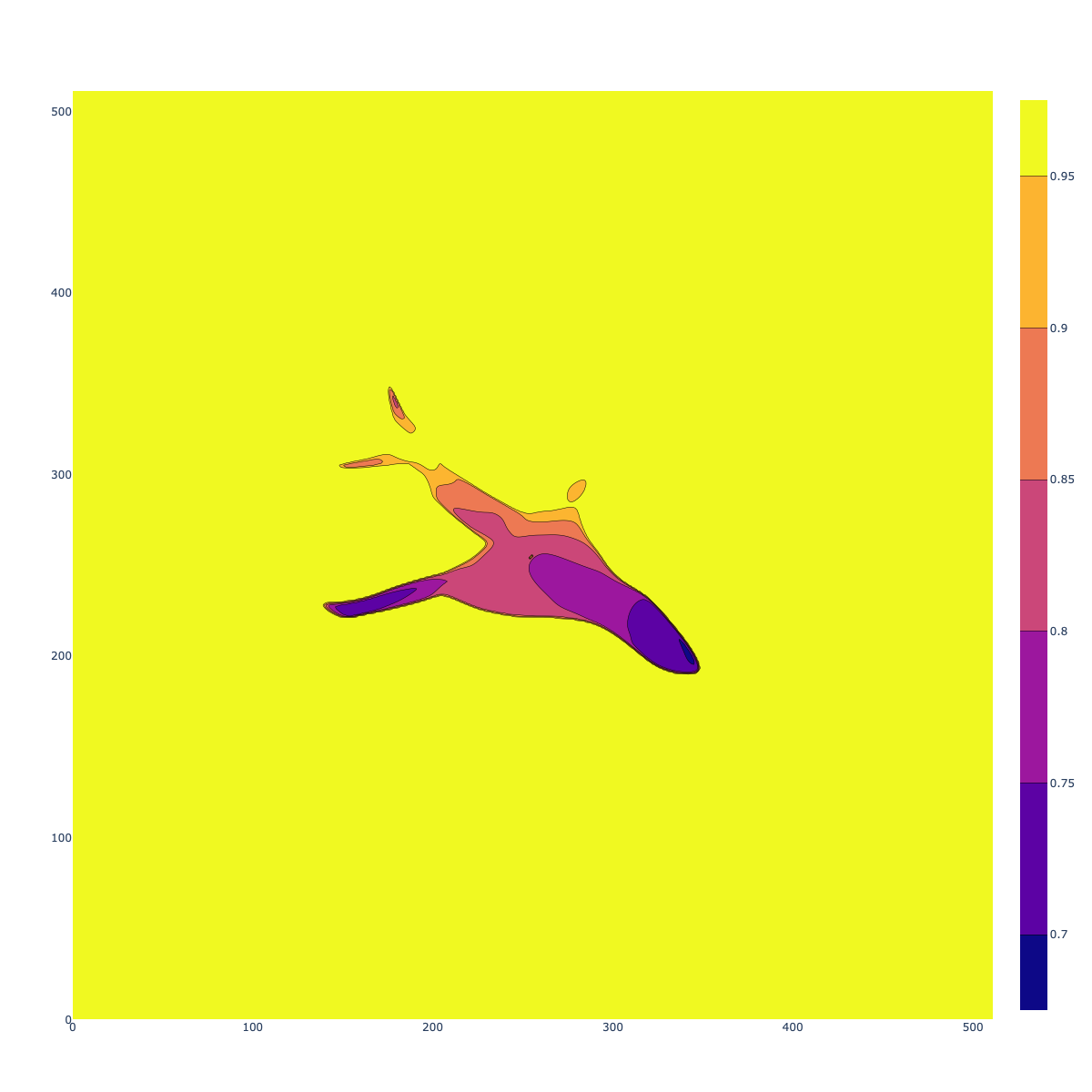}
\caption{Distance views synthesized by our SDDF model with default structure ($8$ layers and $512$ neurons per layer) trained on noisy data. The standard deviation of the Gaussian noise added to the distance data from left to right is $0.1$, $0.2$, $0.3$, $0.4$, respectively.}
\label{fig:noise}
\end{minipage}
\end{figure*}

%
%
%
%

\end{document}